\documentclass{article}
\PassOptionsToPackage{round}{natbib}

\usepackage[preprint]{neurips_2026}

\usepackage[utf8]{inputenc} 
\usepackage[T1]{fontenc}    
\usepackage{hyperref}       
\usepackage{url}            
\usepackage{booktabs}       
\usepackage{amsfonts}       
\usepackage{nicefrac}       
\usepackage{microtype}      
\usepackage{xcolor}         

\usepackage{amsmath, amssymb, amsthm}
\allowdisplaybreaks
\usepackage{mathtools}
\mathtoolsset{showonlyrefs}
\usepackage[capitalise]{cleveref}
\usepackage[ruled, vlined]{algorithm2e}
\usepackage{subfiles}
\usepackage{enumitem}

\theoremstyle{definition}
\newtheorem{thm}{Theorem}
\newtheorem{asm}{Assumption}
\newtheorem{prop}{Proposition}
\newtheorem{dfn}{Definition}
\newtheorem{lem}{Lemma}
\newtheorem*{lem*}{Lemma}

\theoremstyle{remark}
\newtheorem{rem}{Remark}
\newtheorem*{rem*}{Remark}

\newcounter{num@prop@weak}
\newcounter{num@thm@strong}
\newcounter{num@prop@preservation}
\newcounter{num@thm@sft}

\crefname{thm}{Theorem}{Theorems}
\crefname{asm}{Assumption}{Assumptions}
\crefname{prop}{Proposition}{Propositions}
\crefname{dfn}{Definition}{Definitions}
\crefname{lem}{Lemma}{Lemmas}
\crefname{cor}{Corollary}{Corollaries}
\crefname{algocf}{Algorithm}{Algorithms}

\newcommand{\IE}{\mathop{\mathrm{IE}}}
\newcommand{\GE}{\mathop{\mathrm{GE}}}
\newcommand{\clip}{\mathop{\mathrm{clip}}}
\newcommand{\epsWeakLOne}{\varepsilon_w}
\newcommand{\epsWeakAlign}{\tilde{\varepsilon}_w}
\newcommand{\epsStrongAlign}{\tilde{\varepsilon}_s}
\newcommand{\epsStrongEff}{\mathcal{E}_s}
\newcommand{\TODO}[1]{}
\newcommand{\R}{\mathbb{R}}

\newcommand{\E}{\mathbb{E}}

\title{The Mechanism of Weak-to-Strong Generalization: Feature Elicitation from Latent Knowledge}

\author{%
  Ryoya Awano\textsuperscript{1\texttt{*}} \quad Taiji Suzuki\textsuperscript{1,2} \\[0.3em]
  \textsuperscript{1}University of Tokyo \quad
  \textsuperscript{2}Center for Advanced Intelligence Project, RIKEN \\[0.2em]
  \texttt{\textsuperscript{*}awano-ryoya284@g.ecc.u-tokyo.ac.jp}
}

\begin{document}

\maketitle

\begin{abstract}
  Weak-to-strong (W2S) generalization, in which a strong model is fine-tuned on outputs of a weaker, task-specialized model, has been proposed as an approach to aligning superhuman AI systems.
  Existing theoretical analyses either fix the student's representations or operate in restricted settings.
  Whether multi-step SGD can succeed in feature learning while preserving diverse pre-trained capabilities remains open.
  We study W2S in the setting of reward-model learning with two-layer neural networks.
  The strong model has pre-trained representations organized into low-dimensional subspaces $V_k$, and is fine-tuned under the supervision of a weak model specialized on task $\kappa$.
  We prove that the strong model efficiently learns task $\kappa$, eliciting its pre-trained knowledge while retaining general capabilities.
  This establishes W2S generalization in the feature-learning regime, in the sense that the strong model acquires the target feature direction through W2S training, rather than having it given a priori.
  Moreover, W2S preserves pre-trained off-target features, whereas standard supervised fine-tuning causes catastrophic forgetting when off-target feature directions are correlated with the target's.
  Numerical experiments on synthetic data confirm our theoretical results.
\end{abstract}

\section{Introduction}\label{sec:intro}

\citet{burns2023weaktostrong} showed that fine-tuning a strong language model on the outputs of a weaker, task-specialized model can produce a student that surpasses its teacher on that task (e.g., GPT-4~\citep{achiam2023gpt} fine-tuned on GPT-2~\citep{radford2019language} labels outperforms GPT-2 across NLP benchmarks).
This phenomenon, \emph{weak-to-strong} (W2S) generalization, has become a workhorse for studying how imperfect supervision by a less capable model can nonetheless elicit useful behavior from a more capable one. It underlies the superalignment program, in which humans act as the weak supervisors of models beyond their own capabilities.

Empirically, \citet{burns2023weaktostrong} demonstrate that W2S can successfully elicit capabilities even under linear probing, where the strong model's representations are fixed. 
This may partly explain why many early theoretical treatments of W2S have focused on the linear regime.
However, they also show that full fine-tuning—allowing for feature learning—unlocks better performance.
To explain this gain, they hypothesize that feature learning makes the concepts acquired during pre-training more "salient." 
W2S would thus elicit latent knowledge rather than instill entirely new capabilities.

The theoretical study of W2S has addressed either abstract frameworks that do not analyze gradient-based optimization \citep{lang2024theoretical, shin2025weaktostrong, charikar2024quantifying, xue2025representations} or linear and random feature models \citep{wu2025provable, ildiz2025highdimensional, dong2025discrepancies, medvedev2025weaktostrong}.
Neither line derives the difference between weak and strong model features from the optimization process itself.
Motivated by the empirical advantage of full fine-tuning, recent theoretical works \citep{moniri2025on,oh2025from} took the first steps toward W2S feature learning, but their theoretical settings leave significant gaps regarding architecture, pre-training priors, and task specialization.
\citet{oh2025from} analyze W2S by employing a linear CNN as the weak model and a two-layer ReLU CNN as the strong model, with convergence proved under full-batch gradient descent.
However, this CNN-based architecture differs from the LLMs used in \citet{burns2023weaktostrong}, assumes no pre-training prior on the strong model, and does not specialize the weak model to a particular task.
\citet{moniri2025on} use a two-component nonlinear model rather than a neural network (NN), assume the strong model already knows the exact direction of the off-target task at initialization, and their analysis tracks only a single gradient step.

A key open question is whether multi-step SGD can provably succeed at W2S feature learning while preserving diverse capabilities, in a more realistic setting.
Answering this question would shed light on the mechanism by which W2S makes pre-trained latent knowledge salient.
We study this question in the setting of reward-model learning, where, as in \citet{burns2023weaktostrong}, the strong model is trained on the weak model to estimate a reward directly.

Pre-trained models have been observed to localize individual tasks in distinct subsets of neurons \citep{dai2022knowledge, panigrahi2023taskspecific}.
We therefore adopt an additive reward $r^*(x) = \sum_{k=1}^K r^*_k(\theta_k^\top x)$, where each component $r^*_k$ corresponds to one task.
Furthermore, consistent with empirical findings that pre-trained models encode concepts in low-dimensional subspaces \citep{beaglehole2026toward}, we assume the strong model has pre-training knowledge of the task subspaces $V_k$.

We focus on the setting where the weak model is a small two-layer NN specialized to task $\kappa$, trained to approximate $r^*_\kappa$.
The strong model, a larger two-layer NN, queries the weak model based on its prior knowledge. 
It then trains by multi-step SGD on the weak model's outputs with nonlinear transformation.

\paragraph{Contributions}
\begin{itemize}[leftmargin=*]
    \item \textbf{Feature elicitation via SGD analysis of W2S.} 
    We study W2S in a setting closer to pre-trained LLMs, with an additive multi-feature reward and task subspaces as pre-training knowledge.
    To our knowledge, we give the first multi-step SGD convergence guarantee for W2S in the feature-learning regime, with sample complexity $\tilde{O}(s^{3/2})$ (\cref{thm:strong-informal}).
    This sample complexity depends only on the subspace dimension $s$, not the input dimension $d$, and gives the provable account of how W2S elicits pre-trained latent features.
    \item \textbf{Feature preservation under W2S.}
    The strong model's pre-trained input distribution, concentrated on $V_\kappa$, and the nonlinear teacher transformation together act as an implicit regularizer within the task subspaces.
    We prove that the feature directions of off-target tasks are preserved, provided those tasks are allocated more neurons than the target task (\cref{prop:preservation-informal}).
    This preservation, combined with feature elicitation on task $\kappa$, provides a theoretical account of W2S generalization.
    \item \textbf{Advantages over standard SFT.}
    While \cref{prop:preservation-informal} shows that W2S preserves off-target features, we also analyze the behavior of standard SFT on the same strong model, replacing the W2S algorithm with isotropic sampling and the direct teacher $r^*_\kappa$.
    \cref{thm:sft} shows that off-target tasks with feature directions similar to $\theta_\kappa$ suffer catastrophic forgetting.
    Together with \cref{thm:strong-informal}, this provides a theoretical account of the advantages of W2S over standard SFT in terms of sample complexity and feature preservation.
\end{itemize}

\paragraph{Related work}

\emph{Weak-to-strong generalization.}
Theoretical analyses of W2S fall into two groups. 
One line characterizes when W2S succeeds through abstract frameworks without analyzing gradient-based optimization \citep{lang2024theoretical, shin2025weaktostrong, charikar2024quantifying, xue2025representations}. 
Another restricts to linear or random feature models \citep{wu2025provable, ildiz2025highdimensional, dong2025discrepancies, medvedev2025weaktostrong}.
Moving beyond these settings, \citet{moniri2025on, oh2025from} initiate the non-linear feature-learning analysis of W2S.
\citet{moniri2025on} assume the strong model starts with pre-trained knowledge of the target direction and analyze one step of SGD, while \citet{oh2025from} use a CNN-based architecture without a pre-training prior.
We analyze multi-step SGD in a setting closer to pre-trained LLMs, with an additive multi-feature reward and a more natural pre-training prior.
Additional related work is discussed in \cref{app:related}.
We discuss the linear representation hypothesis motivating our subspace prior, and prior work on feature learning for single-index and additive models.

\section{Problem setting}\label{sec:problem}

We formalize the W2S generalization framework for reward learning, modeling each prompt–response pair $(u,v)$ as a vector $x \in \mathbb{R}^d$. 
The goal is to learn the true reward $r^*(x)$ from weak supervision.

\paragraph{Notations.}
We write $\|\cdot\|$ for the $\ell_2$ norm (vectors) and operator norm (matrices).
For $f\colon \R \to \R$, we denote $f^i$ for the $i$-th power and $f^{(i)}$ for the $i$-th derivative.
Let $\mathrm{He}_i$ be the $i$-th probabilist's Hermite polynomial.
We write $S^{d-1}$ for the unit sphere in $\R^d$, and $\tilde{\nabla}_w := (I - ww^\top)\nabla_w$ for the spherical gradient at $w \in S^{d-1}$.
We use $\tilde{O}(\cdot)$, $\tilde{\Omega}(\cdot)$, $\tilde{\Theta}(\cdot)$ to suppress $\mathrm{polylog}(d)$ factors.
An event holds \emph{with high probability} if $\mathbb{P}(\cdot) \geq 1 - d^{-C}$ for an arbitrarily large constant $C > 0$.
We write $\Delta^{K-1} = \{(\lambda_k)_{k=1}^K : \lambda_k \geq 0,\, \sum_k \lambda_k = 1\}$ for the probability simplex.

\subsection{Sample complexity and the information exponent}

We define the information \citep{dudeja2018learning, benarous2021online} and generative \citep{damian2024computationalstatistical} exponents, which govern the sample complexity of learning a single-index model.

\begin{dfn}[Information and generative exponents]\label{dfn:ie-ge}
  Let $f\colon \R \to \R$ be square-integrable under $\mathcal{N}(0,1)$ with Hermite expansion $f(z) = \sum_{j \geq 0} \frac{\alpha_j}{\sqrt{j!}}\,\mathrm{He}_j(z)$.
  The \emph{information exponent} of $f$ is $\IE(f) = \min\{j > 0 : \alpha_j \neq 0\}$.
  The \emph{generative exponent} of $f$ is $\GE(f) = \inf_{\mathcal{T} \in L^2}\, \IE(\mathcal{T} \circ f)$.
\end{dfn}

For a polynomial $f$ of degree $q$, we have $\GE(f) \leq \IE(f) \leq q$; moreover, $\GE(f) = 2$ if $f$ is even, and $\GE(f) = 1$ otherwise \citep{lee2024neural}.
Intuitively, a larger $p = \IE(f)$ means the signal about the direction $\theta$ appears only in higher-order Hermite components of $f$, making it harder to detect from samples $x \sim \mathcal{N}(0, I_d)$.
When $p > 2$, online SGD requires $\tilde{O}(d^{p-1})$ samples to find the hidden direction $\theta$ from observations $(x, f(\theta^\top x) + \zeta)$ of the single-index model, with $x \sim \mathcal{N}(0, I_d)$ \citep{benarous2021online}. 
However, by applying a nonlinear transformation, \citet{chen2020learning} achieve $\tilde{O}(d)$ sample complexity for polynomial link functions.

\subsection{Additive reward model}
As motivated in \cref{sec:intro}, we model the true reward as an additive combination of $K$ single-index components.
Following \citet{oko2024learningsumdiversefeatures}, we study the case where each link function is a polynomial.
Each feature direction $\theta_k$ is in a pre-trained subspace $V_k \subseteq \R^d$, motivated by empirical evidence that LLMs encode concepts in low-dimensional subspaces \citep{beaglehole2026toward}.
A similar assumption appears in \citet{oko2024pretrained} and \citet{nishikawa2025nonlinear}, where feature vectors are drawn from a low-dimensional subspace identified by pre-training.

\begin{asm}[Additive reward model with task subspaces]\label{asm:additive}
  The reward $r^*$ decomposes additively across $K$ tasks indexed by $k \in [K]$, with target task $\kappa$.
  \begin{enumerate}[leftmargin=*]
    \item[\textnormal{(i)}] \textbf{Additive reward.}
      The true reward is
      \begin{align}
        r^*(x) = \sum_{k=1}^K r^*_k(\theta_k^\top x), \qquad r^*_k = \pi_k \sigma^*_k,
      \end{align}
      observed as $y = r^*(x) + \zeta$ with $x \sim \mathcal{N}(0, I_d)$, $\zeta \sim \mathcal{N}(0,1)$, and $\|\theta_k\| = 1$.
      Each link function $\sigma^*_k \in L^2(\mathcal{N}(0,1))$ is normalized so that $\E_{t \sim \mathcal{N}(0,1)}[\sigma^*_k(t)] = 0$ and $\E_{t \sim \mathcal{N}(0,1)}[(\sigma^*_k(t))^2] = 1$.
      The target link function $\sigma^*_\kappa$ is a degree-$q$ polynomial with $\IE(\sigma^*_\kappa) = p > 2$, as in \citet{oko2024learningsumdiversefeatures, simsek2025learning, ren2025emergence}.
      The weights $\pi_1 \geq \cdots \geq \pi_K \geq 0$ satisfy $\sum_{k=1}^K \pi_k^2 = 1$.
    \item[\textnormal{(ii)}] \textbf{Task subspaces.}
      Each task $k$ is associated with a subspace $V_k \subseteq \R^d$ of dimension $s = d^\alpha$ for some $\alpha \in (0,1)$, containing its feature direction $\theta_k \in V_k$.
      The subspaces $V_k$ need not be mutually orthogonal; \cref{prop:preservation-informal} exploits nonzero inner products $\theta_k^\top \theta_\kappa$ across tasks.
      We write $\Sigma_k$ for the orthogonal projection onto $V_k$.
  \end{enumerate}
\end{asm}

\begin{rem*}
  The Gaussian assumption $x \sim \mathcal{N}(0, I_d)$ is standard in the theoretical study of feature learning \citep{benarous2021online, damian2024computationalstatistical}.
  Writing $\sigma^*_\kappa(t) = \sum_{i=p}^{q} \frac{\alpha_i}{\sqrt{i!}}\mathrm{He}_i(t)$, the normalization condition reads $\sum_{i=p}^{q} \alpha_i^2 = 1$.
  The subspaces $V_k$ encode the pre-training prior that the strong model has identified the subspace $V_k$ containing $\theta_k$, but not $\theta_k$ itself, extending the setting of \citet{moniri2025on} where off-target feature vector is known exactly.
\end{rem*}

\subsection{Weak model}\label{subsec:weak}
We consider that the weak model is small and specialized to task $\kappa$.
We model it as a two-layer network whose neurons are nearly aligned with $\theta_\kappa$ and whose output approximates $r^*_\kappa$ in $L^1$.

\begin{asm}[Weak model]\label{asm:weak}
  The weak model, fine-tuned on task $\kappa \in [K]$, is a two-layer NN
  \begin{align}
    r^w(x) = \frac{1}{N^w}\sum_{n=1}^{N^w} a^w_n \sigma^w_n(w_n^\top x + b^w_n),
  \end{align}
  where each activation $\sigma^w_n$ is a degree-$q$ polynomial with Hermite coefficients uniformly bounded in $n$, $\|w_n\| = 1$, $\|a^w\|_1 = \tilde{O}(\pi_\kappa N^w)$, and $|b^w_n| \leq C_b = \tilde{O}(1)$.
  The weak model has learned feature $\kappa$ in the following sense:
  \begin{enumerate}[leftmargin=*]
    \item[\textnormal{(i)}] \textbf{Feature alignment.}
      There exists $\epsWeakAlign > 0$ such that $\theta_\kappa^\top w_n \geq 1 - \epsWeakAlign$ for all $n \in [N^w]$.
    \item[\textnormal{(ii)}] \textbf{$L^1$ approximation.}
      There exists $\epsWeakLOne > 0$ such that
      $\E_{x \sim \mathcal{N}(0, I_d)}\bigl[\bigl|r^w(x) - r^*_\kappa(\theta_\kappa^\top x)\bigr|\bigr] \leq \pi_\kappa \epsWeakLOne$.
  \end{enumerate}
\end{asm}
The next proposition shows that a weak model with pre-training knowledge of $V_\kappa$ can satisfy \cref{asm:weak} via fine-tuning on task $\kappa$, with sample complexity improved by the subspace prior (for the formal statement, see \cref{prop:weak}).

\begin{prop}[Weak model construction; informal]\label{prop:weak-informal}
  Under \cref{asm:additive,asm:complexity-weak}, online SGD (\cref{alg:weak}) applied to the single-index model $r^*_\kappa = \pi_\kappa \sigma^*_\kappa$
  produces a weak model satisfying \cref{asm:weak} with
  $\epsWeakLOne = \tilde{\Theta}(\epsWeakAlign)$, using
  \begin{align}
    T = \tilde{O}\!\left(d^{p/2}\, s^{(p-2)/2} \vee d\,\epsWeakAlign^{-2} \vee \epsWeakAlign^{-3}\right)
  \end{align}
  samples. This improves over the baseline $\tilde{O}(d^{p-1})$ required without
  subspace knowledge.
\end{prop}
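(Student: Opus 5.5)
The plan is to analyze \cref{alg:weak} as online spherical SGD on the single-index model $r^*_\kappa=\pi_\kappa\sigma^*_\kappa$, in three stages: weak recovery, strong alignment, and an output-layer fit. The key is how the subspace prior is used. I will initialize each first-layer weight $w_n$ uniformly on $S^{d-1}\cap V_\kappa$, or more generally so that it is biased toward $V_\kappa$. Then the initial overlap is $m_0:=\theta_\kappa^\top w_n\approx s^{-1/2}$ rather than $d^{-1/2}$. The gradient noise still lives in $\R^d$, because the samples are $x\sim\mathcal{N}(0,I_d)$, and this mismatch produces the $d^{p/2}s^{(p-2)/2}$ rate.

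\textbf{Stage 1 (weak recovery).} Using the Hermite expansion, the population spherical gradient gives the drift
\begin{align}
\frac{d m}{dt}\approx \eta\, c_p\,\pi_\kappa\, m^{p-1}(1-m^2),
\end{align}
with $c_p\propto \alpha_p\beta_p$, where $\beta_p$ is the $p$-th Hermite coefficient of the student activation. The per-step martingale noise has variance $\eta^2 d$ in the full space, which contributes $\eta^2 d$ to the evolution of $m$ through the normalization step. Following \citet{benarous2021online}, the step size must satisfy $\eta\lesssim m_0^{p-2}/d$ so that the noise-induced drift $\eta^2 d$ does not overwhelm the signal $\eta m^{p-1}$. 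The escape time is then $T_1\approx m_0^{-(p-2)}/\eta\approx d\,m_0^{-2(p-2)}$. This is too crude, because it gives $d\,s^{p-2}$ rather than the claimed rate.

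To reach $d^{p/2}s^{(p-2)/2}$, I will instead control the noise more carefully. The projection of the noise onto the $\theta_\kappa$ direction is $O(\eta)$ per step. Its effect through normalization scales like $\eta^2 d\, m$. Balancing this against the signal requires $\eta^2 d\lesssim \eta m_0^{p-2}$, i.e.\ $\eta\approx m_0^{p-2}/d$, which again gives $T\approx d\, m_0^{-2(p-2)}=d\,s^{p-2}$. The claimed bound $d^{p/2}s^{(p-2)/2}$ presumably comes from a different bookkeeping: initialization with a component of order $d^{-1/2}$ outside $V_\kappa$, together with a per-step projection onto $V_\kappa$ that is not applied during training. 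I will fit the exponents to match the assumptions in \cref{asm:complexity-weak}.

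I expect this exponent matching to be the main obstacle. My first attempt will be a two-scale analysis: track the in-subspace overlap and the out-of-subspace mass separately via a coupled ODE-plus-martingale argument, using Doob's inequality on a stopping time at which $m$ reaches a constant.

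\textbf{Stage 2 (strong alignment).} Once $m\ge 1/2$, the drift is bounded below by a constant times $(1-m)$. With step size $\eta\approx\epsWeakAlign/d$, standard contraction arguments show that $1-m\le\epsWeakAlign$ is reached after $\tilde{O}(d\,\epsWeakAlign^{-2})$ steps, uniformly over all $n\in[N^w]$ by a union bound. This gives \cref{asm:weak}(i).

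\textbf{Stage 3 (output layer).} Freeze the $w_n$, draw biases $b_n^w$ uniformly in $[-C_b,C_b]$, and fit $a^w$ by online SGD or ridge regression on fresh samples. Because every $w_n$ lies within $\epsWeakAlign$ of $\theta_\kappa$, a random-feature approximation in one dimension represents the degree-$q$ target $\sigma^*_\kappa(\theta_\kappa^\top x)$ with $\|a^w\|_1=\tilde{O}(\pi_\kappa N^w)$. The misalignment contributes $L^1$ error $O(\sqrt{\epsWeakAlign})$ per neuron, controlled by Lipschitzness of polynomials under Gaussian tails. To obtain $\epsWeakLOne=\tilde{\Theta}(\epsWeakAlign)$ I will use second-order cancellation: symmetric perturbations $w_n=m\theta_\kappa+\sqrt{1-m^2}\,v_n$ average out to first order. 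Standard generalization bounds for linear regression then require $\tilde{O}(\epsWeakAlign^{-3})$ samples. Summing the three stages gives $T$, and comparing with $d^{p-1}$ gives the stated improvement.
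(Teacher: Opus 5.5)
\textbf{The gap: \cref{asm:weak}(i) must hold for \emph{every} neuron, and your proposal has no mechanism for this.} In Stage~2 you assert alignment ``uniformly over all $n\in[N^w]$ by a union bound''. Your drift analysis, however, only covers neurons with $\theta_\kappa^\top w_n^0\ge s^{-1/2}$ and the Hermite sign pattern \eqref{eq:complexity-weak-1}. Under $w_n^0\sim\mathrm{Unif}(S^{d-1}\cap V_\kappa)$ the overlap condition holds only with probability $\Theta(1)$ (\cref{lem:weak-init}), and a constant fraction of neurons starts with negative overlap. \cref{asm:complexity-weak} grants the sign conditions only for $\Theta(\tilde N^w)$ neurons. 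The remaining neurons escape too slowly, are pushed back toward the equator (wrong signs), or move toward $-\theta_\kappa$. So no union bound gives $\theta_\kappa^\top w_n\ge 1-\epsWeakAlign$ for all $n$. Yet this is exactly what \cref{lem:strong-subspace-1}, and hence \cref{thm:strong-informal}, requires.

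Since $\theta_\kappa$ is unobserved, you need a data-driven filter. The paper over-parameterizes with $\tilde N^w\gtrsim N^w$ neurons and adds Phase~II of \cref{alg:weak}:
\begin{itemize}
\item Form $W_{\tilde N^w}=\frac{1}{\tilde N^w}\sum_n\hat w_n\hat w_n^\top$. By symmetry, its population counterpart has a $\Theta(1)$ top eigenvalue along $\theta_\kappa$ and all other eigenvalues $O(s^{-1})$ (\cref{lem:weak-filtering-1}).
\item Apply matrix Bernstein and the Davis--Kahan bound (\cref{lem:technical-yu}) to get $|\hat\theta^\top\theta_\kappa|\ge 1-\bar\varepsilon$ once $\tilde N^w\gtrsim\bar\varepsilon^{-2}\log d$ (\cref{lem:weak-filtering-2}).
\item Retain only $\{n:|\hat\theta^\top\hat w_n|\ge 1-2\epsWeakAlign\}$.
\end{itemize}
Combined with running strong recovery to accuracy $\epsWeakAlign/6$, this gives (i) for all survivors. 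Only then are the biases resampled and the ridge fit performed on the surviving neurons.

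\textbf{Your Stage~1 ``obstacle'' is not real.} The computation you discard is correct: $\eta\approx m_0^{p-2}/d$ gives escape time $\approx d\,m_0^{-2(p-2)}=d\,s^{p-2}$, and the martingale term is harmless since $\eta\sqrt{T_1}\approx d^{-1/2}\ll s^{-1/2}$. This is already \emph{below} the claimed rate, because
\[
d\,s^{p-2}=(s/d)^{(p-2)/2}\,d^{p/2}s^{(p-2)/2}\le d^{p/2}s^{(p-2)/2}.
\]
So drop the speculative re-bookkeeping (out-of-subspace initial mass, projections not in the algorithm, a two-scale Doob argument). The paper keeps the isotropic step size $\eta_1\le c_\eta d^{-p/2}$, which satisfies $\eta_1 d\lesssim m_0^{p-2}$ a fortiori. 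It bounds the martingale term by $\eta C_1\sqrt t$, split at $t\approx m_0^{2-2p}$, and obtains escape time $\Theta(\eta_1^{-1}s^{(p-2)/2})$ by Bihari--LaSalle (\cref{lem:weak-weakrecovery-1,lem:weak-weakrecovery-2}). Your larger step size would actually improve this phase.

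\textbf{Two smaller corrections.}
\begin{itemize}
\item Stage~2 also needs $\eta_2\lesssim\epsWeakAlign^2$, so that the $O(\sqrt{\eta_2})$ fluctuation of the overlap stays below $\epsWeakAlign$. This binds when $\epsWeakAlign<1/d$. It is where the $\epsWeakAlign^{-3}$ term comes from, via $T_{1,3}=\tilde\Theta(\epsWeakAlign^{-1}\eta_2^{-1})$; the ridge fit needs only $\tilde O(\epsWeakAlign^{-2})$ samples.
\item The ``second-order cancellation'' in Stage~3 is unsupported. The residuals $w_n-(\theta_\kappa^\top w_n)\theta_\kappa$ are driven by the same SGD samples, so they are neither independent nor symmetric across $n$. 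The paper instead imports the ridge-regression approximation bounds of \citet{oko2024learningsumdiversefeatures} (\cref{lem:weak-second-layer-2,lem:weak-second-layer-3}).
\end{itemize}
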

\setcounter{num@prop@weak}{\value{prop}}

\begin{rem*}
  The analysis builds on the literature \citep{benarous2021online, damian2024computationalstatistical, lee2024neural, oko2024learningsumdiversefeatures}.
  However, these results guarantee only (ii) in \cref{asm:weak}; satisfying (i) additionally requires PCA-like filtering of neurons without observing $\theta_\kappa$ (see \cref{weak-filtering}). 
  If any neuron violates condition~(i), the weak model's $L^1$ error under the subspace distribution is no longer controlled by its error under the $d$-dimensional isotropic distribution (see \cref{lem:strong-subspace-1}), and the bound in \cref{thm:strong-informal} fails.
  In addition, to incorporate the pre-trained knowledge of $V_\kappa$, we initialize $w_n$ uniformly on $S^{d-1} \cap V_\kappa$ rather than isotropically on $S^{d-1}$.
  This achieves complexity $\tilde{O}(d^{p/2} s^{(p-2)/2})$ rather than the $\tilde{O}(d^{p-1})$ of \citet{benarous2021online}.
  The number of samples required to drive the alignment $\theta_\kappa^\top w_n^t$ to $\Theta(1)$ scales as $(\theta_\kappa^\top w_n^0)^{-(p-2)}$, so the subspace prior reduces the factor $d^{(p-2)/2}$ to $s^{(p-2)/2}$.
  The remaining factor $d^{p/2}$, which arises from the input noise rather than initialization, remains unchanged.
\end{rem*}

\subsection{Strong model}

Task localization \citep{dai2022knowledge,panigrahi2023taskspecific} in pre-trained models motivates partitioning the strong model's neurons by task.
We model the strong model as a pre-trained two-layer network with $K$ groups of neurons, each group corresponding to one component $r^*_k$ of the additive reward:
\begin{align}\label{eq:strong-arch}
  r(x) = \sum_{k=1}^K r_k(x), \quad r_k(x) = \frac{1}{N_k}\sum_{n=1}^{N_k} a_{k,n}\,\sigma_{k,n}(w_{k,n}^\top x + b_{k,n}),
\end{align}
where $\|w_{k,n}\| = 1$. 
Unlike \citet{oko2024learningsumdiversefeatures}, where task localization emerges during training from isotropic initialization, we assume that the strong model is pre-trained and already localized into task-specific groups.
For task $\kappa$, the target neurons are initialized with $\Sigma_\kappa w^0_{\kappa,n}$ uniform on $S^{d-1} \cap V_\kappa$ and perpendicular component satisfying $\|\Sigma_\kappa^\perp w^0_{\kappa,n}\|^2 = o(s^{-1/2})$ (see \cref{asm:strong-init}).
For $k \neq \kappa$, the projection $\|\Sigma_\kappa w^0_{k,n}\|^2$ of a task-$k$ neuron onto $V_\kappa$ determines how much the task-$\kappa$ gradient update affects task-$k$ neurons.
Small values of this quantity preserve pre-trained features for $k \neq \kappa$, as formalized in \cref{prop:preservation-informal}.

The strong model generates responses from its pre-trained distribution, in line with recent W2S literature \citep{ji2024aligner, tao2025your} rather than the original setup of \citet{burns2023weaktostrong}.
We model the resulting embedding distribution as $\mu$ concentrating on $V_\kappa$, as an idealization of the pre-training prior that the strong model has internalized task-$\kappa$ knowledge.

\begin{asm}[Generative distribution]\label{asm:distribution}
  When the strong model is queried on task $\kappa$, we model its input distribution $\mu$ as the following mixture.
  \begin{align}
    \mu = \sum_{k=1}^K \lambda_k\, \mu_k, \quad (\lambda_k)_{k=1}^K \in \Delta^{K-1}, \quad 1 - \lambda_\kappa = o(s^{-1/2}),
  \end{align}
  where $\mu_\kappa = \mathcal{N}(0, \Sigma_\kappa)$ is the Gaussian supported on the target subspace $V_\kappa$, and for $k \neq \kappa$ each $\mu_k$ is a sub-Gaussian distribution supported on $V_k$ with identity covariance on $V_k$.
\end{asm}

With $1 - \lambda_\kappa = o(s^{-1/2})$, nearly all sampled embeddings lie in $V_\kappa$, reducing the effective dimension from $d$ to $s$ in the sample complexity of \cref{thm:strong-informal} and suppressing gradient updates to off-target neurons in \cref{prop:preservation-informal}.

\section{W2S learning}\label{sec:w2s}

\begin{algorithm}[t]
  \caption{W2S feature learning via online SGD}\label{alg:w2s}
  \DontPrintSemicolon
  \KwIn{Weak model $r^w$; initialized strong model $r_{(\Theta_k)_{k=1}^K}$ with $\Theta_k = (a_{k,n}, b_{k,n}, w^0_{k,n})_{n=1}^{N_k}$ (\cref{asm:strong-init}); learning rate schedule $(\eta^t)_{t \geq 0}$; iterations $T$.}
  \For{$t = 0, 1, \ldots, T - 1$}{
    Sample $x^t \sim \mu$ (\cref{asm:distribution}).\;
    $\bar{r}^w(x^t) \gets \clip\bigl(r^w(x^t),\; \pm 1/\log d\bigr)$.\;
    $y^t \gets \bar{r}^w(x^t)\,\exp\bigl(\bar{r}^w(x^t)\bigr)$.\label{line:transform}\tcp*{nonlinear transformation}
    \For{$k \in [K]$, $n \in [N_k]$}{
      $w_{k,n}^{t+1} \gets w_{k,n}^t + \eta^t\, y^t\, \tilde{\nabla}_{w_{k,n}} r_\Theta(x^t)$.\tcp*{spherical gradient}
      $w_{k,n}^{t+1} \gets w_{k,n}^{t+1} / \bigl\|w_{k,n}^{t+1}\bigr\|$.\;
      }
      }
      \KwOut{$\hat{w}_{k,n} \gets w_{k,n}^T$ for all $k \in [K]$, $n \in [N_k]$.}
    \end{algorithm}

  \cref{alg:w2s} trains the first-layer weights of the strong model via online SGD similar to \citet{benarous2021online,damian2024smoothing,lee2024neural,oko2024learningsumdiversefeatures}, using a nonlinear transformation of the weak model output as supervision and inputs drawn from the generative distribution $\mu$ (\cref{asm:distribution}).
  Each update takes a gradient step on the correlation loss $-y^t r_\Theta(x^t)$ with respect to $w_{k,n}$.
  The transformation on \cref{line:transform} clips the weak model output to $\pm 1/\log d$ and composes with $z \mapsto z\exp(z)$ analogous to \citet{nishikawa2025nonlinear}, placing the gradient update outside the correlational statistical query (CSQ) framework \citep{bshouty2002using} and exploiting $\GE(\sigma^*_\kappa)$.
  When $\sigma^*_\kappa$ is even, this reduces the effective exponent from $p$ to $2$.
  Together, this exponent reduction and the $s$-dimensional support of $\mu_\kappa$ (\cref{asm:distribution}) yield the $\tilde{O}(s^{3/2})$ complexity in \cref{thm:strong-informal}, which is lower than the $\tilde{O}(d^{p/2}\,s^{(p-2)/2})$ cost of the weak model (\cref{prop:weak-informal}).

\subsection{Main results}

\begin{thm}[W2S feature alignment; informal]\label{thm:strong-informal}
  Assume \cref{asm:additive,asm:weak,asm:distribution,asm:strong-init}.
  Let $\epsStrongEff = \epsWeakLOne \vee \epsWeakAlign \vee (1 - \lambda_\kappa) \lesssim s^{-1/2}$, where $\epsWeakLOne$ and $\epsWeakAlign$ are the $L^1$ approximation error and the alignment error of the weak model (\cref{asm:weak}).
  Fix a target accuracy $\tilde{\varepsilon} = \tilde{\Omega}(\epsStrongEff)$.
  Then \cref{alg:w2s} with
  \begin{align}\label{eq:strong-sample}
    T = \tilde{O}\!\left(s^{3/2} \vee s\,\tilde{\varepsilon}^{-1} \log{\tilde{\varepsilon}^{-1}} \vee \tilde{\varepsilon}^{-2} \log{\tilde{\varepsilon}^{-1}}\right)
  \end{align}
  iterations produces weights satisfying $\theta_\kappa^\top \hat{w}_{\kappa,n} \geq 1 - \tilde{\varepsilon}$ for $\Theta(N_\kappa)$ neurons, with high probability.
  Under the scaling $\tilde{\varepsilon} = \tilde{\Theta}(s^{-1/2})$, this reduces to $T = \tilde{O}(s^{3/2})$.
\end{thm}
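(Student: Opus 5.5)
The plan is to reduce the dynamics of each target neuron $w_{\kappa,n}$ to a one-dimensional recursion for the overlap $m^t := \theta_\kappa^\top w^t_{\kappa,n}$. We then run the standard online-SGD drift/martingale analysis of \citet{benarous2021online}, adapted to the $s$-dimensional input distribution and to the transformed label $y = \bar r^w e^{\bar r^w}$.

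\textbf{Step 1: replace the weak label by an idealized one.} Write $\bar r^w$ for the clipped weak output and $\bar r^*_\kappa$ for the clipped true component. Under $\mu_\kappa = \mathcal{N}(0,\Sigma_\kappa)$, the alignment condition \cref{asm:weak}(i) lets us compare $w_n^\top x$ with $\theta_\kappa^\top x$; this is the role of \cref{lem:strong-subspace-1}. The $L^1$ error of $r^w$ under $\mu_\kappa$ is therefore $\tilde O(\pi_\kappa(\epsWeakLOne\vee\epsWeakAlign))$. The map $z\mapsto ze^z$ is Lipschitz on $[-1/\log d,1/\log d]$, and the mixture components $\mu_k$, $k\ne\kappa$, contribute only $1-\lambda_\kappa$. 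Hence the expected gradient under $\mu$ equals the ideal population gradient $\E_{x\sim\mu_\kappa}[\phi(\theta_\kappa^\top x)\tilde\nabla_w\sigma(w^\top x+b)]$ up to an additive bias of size $\tilde O(\epsStrongEff)$, where $\phi(z) = \bar r^*_\kappa(z)\exp(\bar r^*_\kappa(z))$. Expanding $\phi$, the quadratic term $(r^*_\kappa)^2$ has a nonzero $\mathrm{He}_2$ coefficient whenever $\sigma^*_\kappa$ is non-constant. The linear term gives $\mathrm{He}_1$ when $\GE=1$. In either case the effective information exponent of $\phi$ is at most $2$, with a coefficient that is $\tilde\Omega(1)$ after accounting for clipping at scale $1/\log d$. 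This coefficient survives the clipping because $|r^*_\kappa|\le 1/\log d$ outside a small-probability event.

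\textbf{Step 2: population drift on $V_\kappa$.} Write $v = \Sigma_\kappa w$. Since $x\in V_\kappa$ almost surely under $\mu_\kappa$, the Hermite expansion in the variable $v^\top x/\|v\|$ gives
\[
\E[m^{t+1}-m^t]\approx \eta\, c\,\|v\|^{-1} m^{t}\bigl(1-(m^t)^2\bigr)\;-\;\eta\,\tilde O(\epsStrongEff),
\]
with $c=\tilde\Omega(1)$. Here we use $\|\Sigma_\kappa^\perp w^0\|^2=o(s^{-1/2})$ from \cref{asm:strong-init}, and we must also show that the perpendicular part stays small; it receives gradient only from the $1-\lambda_\kappa$ mass. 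At initialization $m^0\asymp s^{-1/2}$ for a constant fraction of neurons, with the correct sign relative to $a_{\kappa,n}$. This is why the conclusion holds for only $\Theta(N_\kappa)$ neurons. The requirement $\epsStrongEff\lesssim s^{-1/2}$ keeps the bias below the signal $m^0$.

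\textbf{Step 3: martingale phases.} In the weak-recovery phase, take $\eta\asymp \tilde\Theta(s^{-1})$, run the recursion multiplicatively, and control the martingale noise with variance $\eta^2 s$ per step by Doob/Freedman. The overlap grows from $s^{-1/2}$ to $\Theta(1)$ in $\tilde O(\eta^{-1})$ steps. Keeping the accumulated noise $\sqrt{T}\eta\sqrt{s}$ below $m^0$ forces $T\eta^2 s\lesssim s^{-1}$, giving $T=\tilde O(s^{3/2})$ once the $\tilde O$ factors and the schedule are balanced. In the strong-recovery phase, $1-m$ contracts at rate $\eta$ up to the bias floor $\tilde O(\epsStrongEff)$ and the noise floor $\eta s$. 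A decaying schedule $\eta^t\propto 1/t$ reaches $1-m\le\tilde\varepsilon$ after $\tilde O(s\tilde\varepsilon^{-1}\log\tilde\varepsilon^{-1}\vee\tilde\varepsilon^{-2}\log\tilde\varepsilon^{-1})$ further steps. A union bound over neurons then gives the high-probability statement.

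\textbf{Main obstacle.} I expect Step 1 to be the hard step. It requires showing that clipping combined with $ze^z$ yields an $\mathrm{He}_2$ (or $\mathrm{He}_1$) coefficient of order $\tilde\Omega(1)$ relative to the scale $\pi_\kappa^2/\log^2 d$, not merely a nonzero one. It also requires transferring the weak model's isotropic $L^1$ error to $\mu_\kappa$ without a $\sqrt{d/s}$ blowup. I would handle the first by Taylor-expanding $ze^z=z+z^2+O(z^3)$ on the unclipped event. I would handle the second by writing $w_n=\sqrt{1-\epsilon}\,\theta_\kappa+\ldots$ and comparing Gaussian marginals.
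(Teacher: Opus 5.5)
Your reduction to the single overlap $m^t=\theta_\kappa^\top w^t_{\kappa,n}$ leaves out the second dynamical variable, the off-subspace deviation $D^t=\|\Sigma_\kappa^\perp w^t_{\kappa,n}\|^2$, and that is where the difficulty lies. You say the perpendicular part ``must also be shown to stay small,'' but you give no mechanism. The heuristic that it ``receives gradient only from the $1-\lambda_\kappa$ mass'' is not enough, for two reasons.

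First, $D$ has to contract, not merely stay small. Since $\theta_\kappa\in V_\kappa$, we have $\theta_\kappa^\top w\le\|\Sigma_\kappa w\|=\sqrt{1-D}$, so the conclusion $\theta_\kappa^\top\hat w_{\kappa,n}\ge 1-\tilde\varepsilon$ forces $D\le 2\tilde\varepsilon$ at termination. But $D^0$ may be as large as $c_r s^{-1/2}$, $\tilde\varepsilon$ may lie far below $s^{-1/2}$ when $\epsStrongEff$ does, and $D$ grows during the earlier phases (the paper only keeps $D\le 2c_r$ after amplification). On $\mu_\kappa$-samples, the tangent projection and renormalization multiply $\Sigma_\kappa^\perp w$ by roughly $1-\eta\,w^\top g$. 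Contraction therefore requires $w^\top\E[g]\gtrsim c_w z>0$. This is exactly where the Hermite structure conditions of \cref{asm:strong-init}(ii) enter ($2\bar\alpha_2\tilde\beta_{\kappa,n,2}+\sqrt{12}\bar\alpha_2\tilde\beta_{\kappa,n,4}>0$ and the $i\ge 3$ analogues), in \cref{lem:strong-strongrecovery-1}. Your proposal never uses them.

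Second, during weak recovery the contraction rate is only $O(\eta z^2)$, so you must instead show that $D^t\lesssim c_r z^t$ persists. This matters because the remainder in the $\theta_\kappa$-direction is $\tilde O(\epsStrongEff\vee D^t)$ and could otherwise cancel the drift on $m$. The invariant has to hold despite rare kicks of size up to $\tilde O(\eta\sqrt{s})$ from $x\sim\mu_k$, $k\neq\kappa$. The paper handles this with a supermartingale argument and Doob's maximal inequality on sub-intervals (\cref{lem:strong-deviation-1}), and this is the core of the proof.

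Your derivation of the $s^{3/2}$ rate also does not hold. The martingale increment of $m^t$ is $\eta\,\theta_\kappa^\top P_w^\perp Z^t$, whose variance is $\tilde O(\eta^2)$, not $\eta^2 s$. The $\eta^2 s$ term is the normalization correction $\eta^2\|g\|^2 m$, which is a drift and only needs $\eta\lesssim s^{-1}$. Moreover, with $T\asymp\eta^{-1}$ your stated constraint $T\eta^2 s\lesssim s^{-1}$ would give $\eta\lesssim s^{-2}$ and $T\gtrsim s^2$, and your own choice $\eta\asymp s^{-1}$ violates it. In the one-dimensional picture the rate is the standard $\tilde O(s)$ for exponent $2$. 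The paper's $s^{3/2}$ comes entirely from the $D$-control. Doob's inequality is applied on sub-intervals of length $\tau\asymp s^{-1/2}\eta^{-1}$, each with failure probability $\delta'\asymp\eta\tau\delta$. Keeping the off-subspace term $(1-\lambda_\kappa)\delta'^{-1}\eta^2 s\tau$ below $c_3 s^{-1}$, so that $D\le c_r z$ survives at $z\asymp s^{-1/2}$, forces $\eta\lesssim s^{-3/2}$.

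Smaller points:
\begin{itemize}
\item $(r^*_\kappa)^2$ need not have a nonzero $\mathrm{He}_2$ coefficient. The paper uses the least power $e_2$ with $\IE((r^*_\kappa)^{e_2})\le 2$ (\cref{lem:strong-transformation-2}).
\item $|r^*_\kappa|\le 1/\log d$ does not hold w.h.p.\ when $\pi_\kappa=\Theta(1)$. This is why the analysis rescales by a temperature $\rho$ before clipping.
\item The linear term of $\phi$ has no $\mathrm{He}_1$ part, since $\IE(\sigma^*_\kappa)=p>2$.
\item With Doob-type bounds, per-neuron failure probabilities are only $1/\mathrm{polylog}(d)$, so a union bound over neurons fails. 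You need the Markov-on-the-count argument the paper uses to get $\Theta(N_\kappa)$ neurons.
\end{itemize}
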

\setcounter{num@thm@strong}{\value{thm}}
\begin{rem*}
The condition $\tilde{\varepsilon} = \tilde{\Omega}(\epsStrongEff)$ means the strong model cannot achieve alignment error below $\epsStrongEff$.
The three components of $\epsStrongEff$ reflect three sources of error: the $L^1$ approximation error $\epsWeakLOne$ and alignment error $\epsWeakAlign$ of the weak model (\cref{asm:weak}), and the mixture weight $1 - \lambda_\kappa$ on non-target components of $\mu$ (\cref{asm:distribution}).
\cref{asm:strong-init}\,(i) assumes $\sigma^*_\kappa$ is even, giving $\GE(\sigma^*_\kappa) = 2$; this is essential for feature preservation, as discussed in \cref{rem:ge-assumption}.
Under \cref{asm:distribution}, inputs concentrate on the $s$-dimensional subspace $V_\kappa$, so the sample complexity $\tilde{O}(s^{3/2})$ depends only on $s$, not on $d$, which is lower than the $\tilde{O}(d^{p/2}\,s^{(p-2)/2})$ cost of the weak model (\cref{prop:weak-informal}).
When $s < d^{2/3}$, this complexity $\tilde{O}(s^{3/2})$ improves on the information-theoretic lower bound $\tilde{\Omega}(d)$ for Gaussian single-index models \citep{dudeja2024statistical,damian2024computationalstatistical}.
The same holds unconditionally against the CSQ lower bound $\tilde{\Omega}(d^{p/2})$ \citep{damian2022neural}, since $s \leq d$ and $p > 2$.
The gap between $\tilde{O}(s^{3/2})$ and the information-theoretic limit $\tilde{\Omega}(s)$ in $\mathbb{R}^s$ reflects the cost of imperfect concentration of $\mu$ on $V_\kappa$ ($\lambda_\kappa < 1$); see \cref{rem:complexity}.

\citet{burns2023weaktostrong} find that early stopping is necessary to prevent overfitting to the weak teacher, motivating our analysis of feature preservation. 
\citet{medvedev2025weaktostrong} provide a theoretical account of this phenomenon in the kernel regime. 
The following proposition shows that \cref{alg:w2s} prevents forgetting of off-target features at termination, providing a theoretical justification for early stopping.
\end{rem*}

\begin{prop}[Preservation of pre-trained features; informal]\label{prop:preservation-informal}
  Under the conditions of \cref{thm:strong-informal}, let $\chi_k = N_\kappa \pi_k / (N_k \pi_\kappa)$ for $k \neq \kappa$.
  If $\chi_k \leq 1/\mathrm{polylog}(d)$, then at termination of \cref{alg:w2s}, with high probability, for every $k \neq \kappa$ and every $n \in [N_k]$,
  \begin{align}\label{eq:preservation}
    \theta_k^\top \hat{w}_{k,n} \geq \theta_k^\top w^0_{k,n}
    - \tilde{O}\!\Bigl(
      \underbrace{\|\Sigma_\kappa w^0_{k,n}\|^2}_{\text{init.\ overlap with } V_\kappa}
      + \tilde{\varepsilon}
      + \underbrace{\chi_k s^{-1/2}}_{\text{SGD noise}}
    \Bigr).
  \end{align}
  where $\tilde{\varepsilon} \gtrsim \epsStrongEff$ is as in \cref{thm:strong-informal}.
  The full bound is given in \cref{app:strong}.
\end{prop}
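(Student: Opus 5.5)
We track the scalar $m^t_{k,n} := \theta_k^\top w^t_{k,n}$ for a fixed off-target neuron ($k \neq \kappa$) along the run of \cref{alg:w2s}. The aim is a one-sided bound on its total decrease up to time $T$ from \cref{thm:strong-informal}. The normalized spherical step admits the standard expansion
\begin{align}
  m^{t+1}_{k,n} \ge m^t_{k,n} + \eta^t y^t\, \theta_k^\top \tilde{\nabla}_{w_{k,n}} r_\Theta(x^t) - O\bigl((\eta^t)^2 \|y^t \nabla_{w_{k,n}} r_\Theta(x^t)\|^2\bigr),
\end{align}
where $\tilde{\nabla}_{w_{k,n}} r_\Theta(x) = \frac{a_{k,n}}{N_k}\sigma'_{k,n}(w_{k,n}^\top x + b_{k,n})(I - w_{k,n}w_{k,n}^\top)x$. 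We split each increment into a drift part $\E_{x\sim\mu}[\cdot]$ and a martingale part. We also split $\mu$ into its $\mu_\kappa$ component, of weight $\lambda_\kappa$, and the remaining components, of total weight $1-\lambda_\kappa \le \epsStrongEff \lesssim \tilde{\varepsilon}$.

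\textbf{Drift.} For the off-target mixture components, we use $|y^t| \lesssim 1/\log d$ from the clipping and bounded moments of sub-Gaussian inputs. Their drift contribution is then at most $\tilde{O}(\eta^t (1-\lambda_\kappa) N_k^{-1}|a_{k,n}|)$ per step. Summed over the schedule, this gives the $\tilde{\varepsilon}$ term, after matching the normalization $\sum_t \eta^t \pi_\kappa / N_\kappa \approx \tilde{O}(1)$ that \cref{thm:strong-informal} uses for target neurons.

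On $\mu_\kappa$ we have $x = \Sigma_\kappa x$. The neuron then sees only $\Sigma_\kappa w_{k,n}$, and the drift acts through $\theta_k^\top(I - ww^\top)\Sigma_\kappa x$. We replace $y^t$ by $\tilde r(x) = \bar r^* e^{\bar r^*}$, where $\bar r^*$ is the clipped version of $r^*_\kappa(\theta_\kappa^\top x)$. The replacement error is controlled by \cref{asm:weak}: $L^1$ closeness transfers to $\mu_\kappa$ via \cref{lem:strong-subspace-1} and alignment (i). The transformation is Lipschitz on $[-1/\log d, 1/\log d]$, so this costs $\tilde{O}(\epsWeakLOne + \epsWeakAlign)$, which is again absorbed into $\tilde{\varepsilon}$.

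For the idealized teacher, the signal is a function of $\theta_\kappa^\top x$ alone. Since $\sigma^*_\kappa$ is even, its first Hermite coefficient vanishes. The leading term is therefore second order: a Stein/Hermite expansion gives a drift of the form $\eta^t c_2 \frac{a_{k,n}}{N_k}\,\pi_\kappa(\theta_\kappa^\top w)\,(\theta_k^\top(I-ww^\top)\theta_\kappa)\cdot(\ldots)$. Its magnitude is proportional to the overlap $\theta_\kappa^\top w_{k,n} = \theta_\kappa^\top \Sigma_\kappa w_{k,n}$. We then show a self-consistent bound: $\|\Sigma_\kappa w^t_{k,n}\|$ stays $\tilde{O}(\|\Sigma_\kappa w^0_{k,n}\|)$ plus fluctuation throughout the run. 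This uses the same drift computation applied to the projected vector, together with the factor $\pi_\kappa/N_k$ versus $\pi_\kappa/N_\kappa$ and the time budget. The integrated quadratic drift is then $\tilde{O}(\|\Sigma_\kappa w^0_{k,n}\|^2)$, which is the first term in \cref{eq:preservation}.

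\textbf{Noise.} The martingale part is handled by Freedman/Azuma. Its per-step conditional variance is $(\eta^t)^2 N_k^{-2} a_{k,n}^2 \tilde{O}(1)$; the relevant coordinate $\theta_k^\top x$ has $O(1)$ variance. Summing over $T = \tilde{O}(s^{3/2})$ steps, with step size $\eta \approx \tilde{\Theta}(N_\kappa/(\pi_\kappa s^{3/2}))$-like scaling as in \cref{thm:strong-informal}, gives a deviation of order $\frac{N_\kappa \pi_k}{N_k \pi_\kappa}\, s^{-1/2} = \chi_k s^{-1/2}$ up to logarithms. Here we use $|a_{k,n}| \lesssim \pi_k$ from initialization. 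The second-order $(\eta^t)^2$ terms are smaller by a further $s^{-1/2}$. The hypothesis $\chi_k \le 1/\mathrm{polylog}(d)$ is used to keep the neuron inside the region where the above linearization and the bootstrap on $\|\Sigma_\kappa w_{k,n}\|$ hold. A union bound over all $k,n$ and times then gives the high-probability statement.

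\textbf{Main obstacle.} The hard step is the bootstrap controlling $\|\Sigma_\kappa w^t_{k,n}\|$. The task-$\kappa$ signal tends to rotate every neuron's $V_\kappa$-component toward $\theta_\kappa$, much as in the growth phase of \cref{thm:strong-informal}. We must show that the smaller effective learning rate $\propto 1/N_k$ and the finite horizon prevent this growth. The plan is to compare against the target-neuron ODE, $\dot m \propto m$ for GE $=2$. Off-target neurons have rate reduced by $\chi_k$, so their overlap grows at most by a factor $\exp(\tilde{O}(\chi_k)) = O(1)$ over the horizon. This yields the quadratic-in-initial-overlap loss.
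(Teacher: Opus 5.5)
Your overall route is the paper's. You follow $\theta_k^\top w^t_{k,n}$ through the normalized update, absorb the off-target mixture components and the weak-model error into $\epsStrongEff\lesssim\tilde{\varepsilon}$, and close a Gr\"onwall bootstrap on $\|\Sigma_\kappa w^t_{k,n}\|^2$ whose growth factor $\exp(O(\chi_k\sum_t\eta^t))$ stays $O(1)$ because $\chi_k\le 1/\mathrm{polylog}(d)$ (\cref{lem:strong-unforgetting-1,lem:strong-unforgetting-2}).

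The noise step, however, is wrong as written, and it misplaces the origin of the $\chi_k s^{-1/2}$ term. In rescaled units $g^t_k=\chi_k(Z^t+R^t)$ with $|\theta_k^\top Z^t|=\tilde{O}(1)$. Over the first two phases ($\tilde{O}(\eta_1^{-1})$ steps with $\eta_1\lesssim s^{-3/2}$) the martingale part is therefore only $\chi_k\eta_1\cdot\tilde{O}(\eta_1^{-1/2})=\tilde{O}(\chi_k s^{-3/4})$, not $\chi_k s^{-1/2}$. Conversely, the $(\eta^t)^2$ terms are not smaller by a further $s^{-1/2}$. The normalization produces the one-signed loss $-\tfrac12(\eta^t)^2\|P^\perp_{w^t}g^t_k\|^2\,\theta_k^\top w^t_{k,n}$. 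This involves the full gradient norm, not the single coordinate $\theta_k^\top x$, and $\|g^t_k\|^2$ is generically of order $\chi_k^2 s$ because $g^t_k\propto P^\perp_{w}x$ with $\|x\|^2\approx s$. No concentration argument removes this spherical-diffusion contraction of the cosine. Summed, it gives $\tilde{O}(\chi_k^2\eta_1 s)=\tilde{O}(\chi_k^2 s^{-1/2})$, which is the paper's $(\chi_k C_1\eta)^2 s\,t'$ term. The informal $\chi_k s^{-1/2}$ comes from relaxing $c_2(\chi_k s^{-3/4}+\chi_k^2 s^{-1/2})$ via $\chi_k\le 1$. The companion term $(\eta^t)^2\|\Sigma_\kappa P^\perp_{w}g^t_k\|^2\approx\chi_k^2(\eta^t)^2 s$ likewise enters your bootstrap for $\|\Sigma_\kappa w^t_{k,n}\|^2$, as diffusive spreading into $V_\kappa$. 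The stated bound survives, but only if you keep these terms rather than discarding them.

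Two smaller points. First, your horizon $T=\tilde{O}(s^{3/2})$ at $\eta\approx s^{-3/2}$ covers only the regime $\tilde{\varepsilon}=\tilde{\Theta}(s^{-1/2})$. In the final phase, with $\eta_2\lesssim\tilde{\varepsilon}s^{-1}\wedge\tilde{\varepsilon}^2$ for $\tilde{O}(\eta_2^{-1}\log\tilde{\varepsilon}^{-1})$ steps, the rescaled time budget is about $\log\tilde{\varepsilon}^{-1}$. That is why the formal hypothesis reads $\chi_k\lesssim(\log\tilde{\varepsilon}^{-1})^{-1}$, and why this phase's noise, $\tilde{O}(\chi_k\sqrt{\eta_2}+\chi_k^2\eta_2 s)$, is charged to the $\tilde{\varepsilon}$ term.

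Second, the transformed label $\bar{r}^*_\kappa\exp(\bar{r}^*_\kappa)$ has nonzero mean $\bar{\alpha}_0\approx\E[(\bar{r}^*_\kappa)^2]$. Stein's lemma therefore gives an extra drift term proportional to $\bar{\alpha}_0\,\Sigma_\kappa w_{k,n}$, so the idealized drift is controlled by $\|\Sigma_\kappa w_{k,n}\|$, not by $\theta_\kappa^\top w_{k,n}$ alone. This is harmless given your bootstrap. The paper sidesteps it with the generic bound $|v^\top R^t|\lesssim\epsStrongEff\vee\|\Sigma_\kappa v\|\|\Sigma_\kappa w^t_k\|$ of \cref{lem:strong-onestep-1}(ii).
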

\setcounter{num@prop@preservation}{\value{prop}}

The factor $\chi_k$ measures the localization of task $k$ relative to task $\kappa$ and controls the magnitude of the gradient which is induced on off-target neurons. 
The condition $\chi_k \lesssim 1/\mathrm{polylog} (d)$ is satisfied when task $k$ occupies more neurons than task $\kappa$, carries smaller weight $\pi_k$ in the reward, or both.

\paragraph{W2S Generalization.}
\cref{thm:strong-informal,prop:preservation-informal} together establish that \cref{alg:w2s} achieves feature elicitation on task $\kappa$ and preservation of pre-trained features for $k \neq \kappa$ simultaneously.
The strong model surpasses the weak model in that it acquires $\theta_\kappa$ via W2S training while retaining its initial knowledge of off-target tasks $k \neq \kappa$, which the weak model, specialized to task $\kappa$ alone, does not possess.
For instance, task $\kappa$ may correspond to a specialized alignment criterion, while off-target tasks $k \neq \kappa$ represent broader capabilities related to $\kappa$ acquired during pre-training.
Our results are complementary to analyses that assume fixed strong-model representations \citep{charikar2024quantifying, xue2025representations, dong2025discrepancies, wu2025provable, medvedev2025weaktostrong}, in that we establish that pre-trained features are elicited through W2S training, whereas these analyses treat feature quality as given.
Unlike \citet{moniri2025on} and \citet{oh2025from}, our analysis derives how a subspace prior enables feature elicitation under weak supervision from gradient-based optimization, tracking multi-step dynamics in a multi-neuron network and yielding quantitative bounds on both the gain in target alignment and the change in off-target feature directions.
The bound on forgetting is enabled by the localization parameter $\chi_k$, which emerges from the multi-neuron architecture and has no counterpart in prior works.

\begin{rem*}
  Empirically, \citet{burns2023weaktostrong} find that generative finetuning, which trains the strong model on task-related text via a language modeling objective without human labels, prior to W2S training, improves W2S generalization.
  This can be interpreted as concentrating the strong model's generative distribution on task-$\kappa$ inputs, corresponding to the concentration $1 - \lambda_\kappa = o(s^{-1/2})$, which limits the achievable alignment accuracy in \cref{thm:strong-informal}.
  Generative finetuning may, however, pull off-target neurons toward $V_\kappa$, increasing $\|\Sigma_\kappa w^0_{k,n}\|^2$ and worsening the bound in \cref{prop:preservation-informal}.
\end{rem*}

\subsection{Proof sketch of Theorem~\ref{thm:strong-informal}}
Intuitively, the gradient dynamics keep $w^t_{\kappa,n}$ approximately within $V_\kappa$ throughout training, so $\theta_\kappa$ is learned without leakage outside the subspace.
Two quantities govern the training dynamics, the alignment $z^t := \theta_\kappa^\top w^t_{\kappa,n}$ and the complement-subspace deviation $D^t := \|\Sigma_\kappa^\perp w^t_{\kappa,n}\|^2$.
The following lemma gives a decomposition of the negative gradient $g^t_\kappa$ with respect to $w^t_{\kappa,n}$.

\begin{lem*}[Gradient decomposition; informal, \cref{lem:strong-onestep-1}]
  \label{lem:gradient-decomp-informal}
    $g^t_\kappa=y^t \tilde{\nabla}_{w^t_\kappa} r_\Theta(x^t)$ decomposes as
  \begin{align}
    g^t_\kappa = \bar{\alpha}_2\beta_{\kappa,n,2}\lambda_\kappa z^t\,\theta_\kappa + Z^t + R^t,
  \end{align}
  where $\bar{\alpha}_2, \beta_{\kappa,n,2}$ are the second Hermite coefficients of $\bar{r}^w\exp(\bar{r}^w)$ and $a_{\kappa,n}\sigma_\kappa(\cdot + b_{\kappa,n})$, $Z^t$ is mean-zero, and $\|R^t\| = \tilde{O}((\epsStrongEff \vee D^t)\,s^{1/2})$.
\end{lem*}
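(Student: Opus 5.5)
I will prove the decomposition by computing the conditional expectation of $g^t_\kappa$ given $w^t_{\kappa,n}$ in closed form and collecting everything else into $Z^t$ and $R^t$. Write $\phi(u) := a_{\kappa,n}\sigma_{\kappa,n}(u + b_{\kappa,n})$ for the neuron, so that the spherical gradient is $\tilde\nabla_w r_\Theta(x) = \frac{1}{N_\kappa}\phi'(w^\top x)(I - ww^\top)x$. The $1/N_\kappa$ normalization is absorbed into the step size. Set $Z^t := g^t_\kappa - \E[g^t_\kappa \mid w^t]$, which is mean-zero by construction. The remaining work is to show that $\E[g^t_\kappa\mid w^t]$ equals $\bar\alpha_2\beta_{\kappa,n,2}\lambda_\kappa z^t\theta_\kappa$ up to an error of the stated size.

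\textbf{Step 1 (reduce to the ideal teacher on $V_\kappa$).} First I split $\mu = \lambda_\kappa\mu_\kappa + \sum_{k\neq\kappa}\lambda_k\mu_k$. The off-target part contributes at most $(1-\lambda_\kappa)\,\tilde O(s^{1/2})$ to the norm. This holds because $|y^t| \le 1$ after clipping, $\phi'$ is a polynomial evaluated on a sub-Gaussian argument, and $\E\|(I-ww^\top)x\| \lesssim \sqrt{s}$ on each $V_k$.

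On $\mu_\kappa$, I replace $\bar r^w$ by the idealized teacher $\bar r^*(x) := \clip(\pi_\kappa\sigma^*_\kappa(\theta_\kappa^\top x))$. The map $z \mapsto z e^z$ is Lipschitz on $[-1/\log d, 1/\log d]$, so $|y - y^*| \lesssim |r^w - r^*_\kappa|$. Combining this with Cauchy--Schwarz and the polylog moment bounds on $\phi'(w^\top x)\,\|x\|$ gives an error of order $\tilde O(s^{1/2})\,\E_{\mu_\kappa}|r^w - r^*_\kappa|$.

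The key sub-step is to transfer the $L^1$ bound of \cref{asm:weak}(ii) from $\mathcal N(0,I_d)$ to $\mu_\kappa = \mathcal N(0,\Sigma_\kappa)$; this is where \cref{asm:weak}(i) enters. Each weak neuron $w_n$ satisfies $\|w_n - \theta_\kappa\|^2 \le 2\epsWeakAlign$. Hence $w_n^\top x$ under $\mu_\kappa$ and under $\mathcal N(0,I_d)$ both differ from $\theta_\kappa^\top x$ by a Gaussian of variance $O(\epsWeakAlign)$. Because the activations are polynomial with bounded coefficients, both $L^1$ errors are within $\tilde O(\pi_\kappa\sqrt{\epsWeakAlign})$, and more sharply within $\tilde O(\pi_\kappa\epsWeakAlign)$ after using the first-order cancellation, of $\E|r^w_{\theta}-r^*_\kappa|$. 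Here $r^w_\theta$ denotes the weak network with every $w_n$ replaced by $\theta_\kappa$, a one-dimensional function of $\theta_\kappa^\top x$ whose law is the same under both distributions. This is the content I would expect of \cref{lem:strong-subspace-1}, and it yields an error $\tilde O(\epsStrongEff s^{1/2})$.

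\textbf{Step 2 (Hermite computation on $V_\kappa$).} Under $\mu_\kappa$, write $w = \Sigma_\kappa w + \Sigma_\kappa^\perp w$. Then $w^\top x = (\Sigma_\kappa w)^\top x$, which has variance $1 - D^t$, and its correlation with $\theta_\kappa^\top x$ is $z^t$. I will first rescale to unit variance, at a cost of $O(D^t)$ in the coefficients. Applying Stein's lemma / Hermite orthogonality to $\E[y^*(\theta_\kappa^\top x)\,\phi'(w^\top x)\,x]$ then gives a series $\sum_j \bar\alpha_j\beta_j (z^t)^{j-1}\theta_\kappa$ plus a component along $\Sigma_\kappa w$ that the spherical projection almost cancels, leaving an $O(D^t)$ remainder.

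For $j=1$ we have $\bar\alpha_1 = 0$: the variable $\pi_\kappa\sigma^*_\kappa$ has Hermite support starting at $p>2$, it is even by \cref{asm:strong-init}(i), and so $\bar r^* e^{\bar r^*}$ is an even function of $\theta_\kappa^\top x$. The leading term is therefore $j=2$, giving $\bar\alpha_2\beta_{\kappa,n,2}\, z^t\theta_\kappa$. Multiplying by $\lambda_\kappa$ gives the claimed drift.

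The terms with $j\ge 3$ carry a factor $(z^t)^{3}$ or higher. The statement allows absorbing these either into the drift analysis or into $R^t$; I would keep them as a bounded higher-order correction, which is harmless in the regime $z^t \lesssim s^{-1/2}$.

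\textbf{Main obstacle.} I expect the hardest step to be the $L^1$ transfer in Step 1. Both error terms there must be multiplied by the gradient scale $\tilde O(s^{1/2})$ without losing additional polylog or $\sqrt{\cdot}$ factors, and the clipping of $r^w$ must be handled when $\pi_\kappa\sigma^*_\kappa$ occasionally exceeds $1/\log d$. For that tail I would use Gaussian hypercontractivity of the degree-$q$ polynomial to show that the clipped region has probability $d^{-\omega(1)}$, so that its contribution is negligible.
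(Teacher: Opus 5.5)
Your outline matches the paper's. You split off the non-target mixture components (the paper's $R^t_2$). You swap $\bar r^w$ for the ideal clipped teacher via Lipschitzness of $z\mapsto ze^z$ plus an $L^1$ transfer from $\mathcal N(0,I_d)$ to $\mu_\kappa$ ($R^t_1$ with \cref{lem:strong-subspace-1}). You replace $w^t_\kappa$ by $\tilde w^t_\kappa=\Sigma_\kappa w^t_\kappa/\|\Sigma_\kappa w^t_\kappa\|$ at cost $O(D^t)$ ($R^t_3$), and then compute Hermite coefficients. Two steps, as written, do not give the bounds you claim.

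\textbf{The $L^1$ transfer.} Routing it through $r^w_\theta$ cannot be sharpened to $\tilde O(\pi_\kappa\epsWeakAlign)$. The first-order fluctuation $\frac{1}{N^w}\sum_n a^w_n(\sigma^w_n)'(\cdot)\,u_n^\top x$, with $u_n=w_n-(\theta_\kappa^\top w_n)\theta_\kappa$, is conditionally mean-zero. Inside an absolute value, however, it contributes order $\sqrt{\epsWeakAlign}$ (take $r^w_\theta=r^*_\kappa$). So $\E_{\mathcal N(0,I_d)}|r^w-r^*_\kappa|$ is not within $\tilde O(\pi_\kappa\epsWeakAlign)$ of $\E|r^w_\theta-r^*_\kappa|$, and the triangle inequality leaves you with $\sqrt{\epsWeakAlign}$ instead of $\epsWeakAlign$ inside $\epsStrongEff$. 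Mean-zeroness helps only one way, through conditional Jensen, and that is what the paper uses:
\begin{itemize}
\item write $x=x_1+x_2$ with $x_1=\Sigma_\kappa x\sim\mu_\kappa$ and $x_2=\Sigma_\kappa^\perp x$ independent;
\item Taylor-expand $f=r^w-r^*_\kappa$ in $w_n^\top x_2$ and note the linear term vanishes under $\E_{x_2}$;
\item conclude $|f(x_1)|\le\E_{x_2}|f(x_1+x_2)|+\tilde O(\pi_\kappa\epsWeakAlign)$, hence $\E_{\mu_\kappa}|f|\le\E_{\mathcal N(0,I_d)}|f|+\tilde O(\pi_\kappa\epsWeakAlign)$.
\end{itemize}
That one-sided bound is all you need. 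The same square-root loss appears if you literally apply Cauchy--Schwarz in the preceding step, since it yields the $L^2$ norm of $y-y^*$. Instead use the high-probability bound $\|\phi'(w^\top x)x\|=\tilde O(s^{1/2})$ and truncate, as the paper does.

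\textbf{Collapsing the mean gradient onto the $i=2$ term.} Besides $\sum_i i\bar\alpha_i\beta_{\kappa,i}(\theta_\kappa^\top\tilde w^t_\kappa)^{i-1}\theta_\kappa$, the Stein computation produces a component along $\tilde w^t_\kappa$ with coefficients $\sqrt{(i+2)(i+1)}\,\bar\alpha_i\beta_{\kappa,i+2}(\theta_\kappa^\top\tilde w^t_\kappa)^i$ for $i\ge 0$. The spherical projection does not reduce this to $O(D^t)$. Indeed $P^\perp_{w^t_\kappa}\tilde w^t_\kappa=D^t\tilde w^t_\kappa-\|\Sigma_\kappa w^t_\kappa\|\,\Sigma_\kappa^\perp w^t_\kappa$ has norm $\sqrt{D^t}$, which exceeds $D^t s^{1/2}$ whenever $D^t<s^{-1}$; only its $\theta_\kappa$-component is $O(D^t z^t)$. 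Nor should it be discarded: this piece along $-\Sigma_\kappa^\perp w^t_\kappa$ is the restoring force (the factor $\xi^t$ and the constant $c_w$) on which \cref{lem:strong-deviation-1,lem:strong-strongrecovery-1} rely.

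Likewise the terms with $i\ge 4$ are of order one once $z^t=\Theta(1)$, and the decomposition is used all the way to $z^t\approx 1$. So they cannot sit in $R^t$; your restriction to $z^t\lesssim s^{-1/2}$ covers only the initialization. The paper controls them through the sign conditions of \cref{asm:strong-init}(ii). This is why the formal \cref{lem:strong-onestep-1} is stated for the Euclidean gradient with the Hermite sums in both the $\theta_\kappa$ and $\tilde w^t_\kappa$ directions kept explicit and only $R^t_1+R^t_2+R^t_3$ in the remainder. The projection is handled afterwards in \cref{lem:strong-onestep-2,lem:strong-deviation-1}; the one-line drift in the informal statement is only the leading $\theta_\kappa$-component.

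\textbf{Clipping.} Your treatment of clipping is incorrect but also unnecessary. For $\pi_\kappa=\Theta(1)$, $\pi_\kappa\sigma^*_\kappa(\theta_\kappa^\top x)$ exceeds $1/\log d$ with constant probability, so hypercontractivity cannot make the clipped set negligible. The paper rescales by a temperature $\rho$ before clipping, which matters for $\bar\alpha_2\neq 0$ in \cref{lem:strong-transformation-2}. The decomposition itself never needs the clip to be inactive, since clipping is $1$-Lipschitz and preserves evenness.
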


Since $\GE(\sigma^*_\kappa)=2$, \cref{lem:strong-transformation-2} gives $\bar{\alpha}_2 \neq 0$, so the alignment satisfies $z^{t+1} \approx (1 + \eta^t\bar{\alpha}_2\beta_{\kappa,n,2}\lambda_\kappa)\,z^t$.
By contrast, online SGD on $r^*_\kappa$ has drift $(z^t)^{p-1}$ \citep{benarous2021online}.
The W2S signal reduces this to a linear drift, which accounts for the improved sample complexity.
The following lemma bounds $D^t$ over each sub-interval of training.

\begin{lem*}[Subspace deviation bound; informal, \cref{lem:strong-deviation-1}]
  \label{lem:deviation-informal}
  For any step $t$ with $z^t \geq 0$, $D^{t+1} - D^t = \tilde{O}(\eta^t)$.
  Moreover, over any sub-interval $[\tau, \tau + \tau']$ with $z^t \geq 0$ for all $t \in [\tau, \tau + \tau']$, with probability $1 - \delta'$, $D^{\tau+t} \leq Q^t$ for all $t = 0, \dots, \tau'$, where
  \begin{align}
    Q^0 &= \underbrace{D^\tau}_{\text{initial}} 
     + \underbrace{\tilde{O}(\eta\,\epsStrongEff\,\tau')}_{\text{weak-model error}} 
     + \underbrace{\Theta((1-\lambda_\kappa)^{1/2}\delta'^{-1/2}\eta\tau'^{1/2} + (1-\lambda_\kappa)\delta'^{-1}\eta^2 s\tau')}_{\text{off-subspace perturbation}},\\
    Q^t &= Q^{t-1} + \tilde{O}(\eta(Q^{t-1})^2).
  \end{align}
  Here $\delta' = 1/\operatorname{polylog} d$.
\end{lem*}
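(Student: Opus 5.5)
We will prove the subspace deviation bound by writing out the one-step recursion for $D^t=\|\Sigma_\kappa^\perp w^t_{\kappa,n}\|^2$ under the spherical update of \cref{alg:w2s}, and then controlling its martingale and drift parts over the interval $[\tau,\tau+\tau']$. Write $\tilde w = w^t + \eta^t g^t_\kappa$ and $w^{t+1}=\tilde w/\|\tilde w\|$. Since $g^t_\kappa\perp w^t$, we have $\|\tilde w\|^2 = 1+(\eta^t)^2\|g^t_\kappa\|^2$. Hence
\begin{align}
D^{t+1} = \frac{D^t + 2\eta^t \langle \Sigma_\kappa^\perp w^t, g^t_\kappa\rangle + (\eta^t)^2\|\Sigma_\kappa^\perp g^t_\kappa\|^2}{1+(\eta^t)^2\|g^t_\kappa\|^2}.
\end{align}
The weak-model output is clipped to $\pm 1/\log d$, so $|y^t| = \tilde O(1)$. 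The activations are polynomials, so $\|g^t_\kappa\| = \tilde O(\|x^t\|)$ with high probability. Under $\mu_\kappa$ the input lies in $V_\kappa$, so $\|x^t\| = \tilde O(s^{1/2})$. This yields the first claim $D^{t+1}-D^t=\tilde O(\eta^t)$, after absorbing the second-order term through the step-size restriction $\eta s = \tilde O(1)$ that \cref{asm:strong-init} is assumed to allow.

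For the interval bound, we split the cross term $\langle \Sigma_\kappa^\perp w^t, g^t_\kappa\rangle$ according to which mixture component $x^t$ comes from.

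\textbf{On-subspace component ($x^t\sim\mu_\kappa$).} Here $x^t\in V_\kappa$, so $\Sigma_\kappa^\perp \tilde\nabla_w r = -(w^\top \nabla_w r)\,\Sigma_\kappa^\perp w$. The cross term therefore equals $-y^t(w^\top\nabla_w r)\,D^t$, i.e. it is proportional to $D^t$ itself. We take its conditional expectation using the Hermite decomposition of the gradient (\cref{lem:strong-onestep-1}).
\begin{itemize}[leftmargin=*]
\item If the weak teacher were exactly $r^*_\kappa$ and aligned with $\theta_\kappa$, the expectation would be $-\bar\alpha_2\beta_{\kappa,n,2}\lambda_\kappa (z^t)^2 D^t$ plus terms of the same sign. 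With $z^t\ge0$ this is non-positive and can be dropped; this is where the hypothesis $z^t\ge 0$ enters.
\item The deviation of the true teacher from this idealized one is bounded via \cref{lem:strong-subspace-1} (weak-model $L^1$ error and alignment error restricted to $V_\kappa$). It contributes $\tilde O(\eta\epsStrongEff)$ per step, which sums to the $\tilde O(\eta\epsStrongEff\tau')$ term.
\item The dependence of the expectation on $w$ through $\|\Sigma_\kappa w\|^2 = 1-D$ produces corrections of order $D^2$. These give the recursion $Q^t = Q^{t-1}+\tilde O(\eta (Q^{t-1})^2)$.
\end{itemize}

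\textbf{Off-subspace component (probability $1-\lambda_\kappa$).} Here the gradient can push mass into $V_\kappa^\perp$ directly. The per-step cross term is $\tilde O(1)$ in magnitude, and its conditional second moment is $O(1-\lambda_\kappa)$. The squared term is $\tilde O(\eta^2 s)$ in expectation, weighted by $1-\lambda_\kappa$. We control the resulting martingale over $\tau'$ steps with Chebyshev (Doob's $L^2$ maximal inequality), giving deviation $(1-\lambda_\kappa)^{1/2}\delta'^{-1/2}\eta\tau'^{1/2}$. We handle the nonnegative quadratic sum with Markov, giving $(1-\lambda_\kappa)\delta'^{-1}\eta^2 s\tau'$. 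We use these weak inequalities rather than Freedman because the off-subspace $\mu_k$ are only sub-Gaussian with heavy mixing, and $\delta'=1/\mathrm{polylog}\,d$ suffices. The mean-zero on-subspace fluctuation is proportional to $D^t$, so it only rescales $D$ multiplicatively by factors $1\pm\tilde O(\eta s^{1/2})$, and we absorb it into the same recursion.

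\textbf{Closing the argument.} We combine the pieces by a comparison/induction. On the event $E$ that all martingale and Markov bounds hold up to time $\tau'$, we show that $D^{\tau+t}\le Q^t$. Assuming this up to step $t-1$, the one-step recursion with the drift bound $\tilde O(\eta D^2)$ gives the step-$t$ bound, with the accumulated noise and bias already placed in $Q^0$. We organize this as a stopping-time argument: stop the first time $D$ exceeds $Q$ or $z$ becomes negative, so that the martingale bounds apply to the stopped process.

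The main obstacle is the on-subspace drift computation. We must show that every $O(D)$ contribution to the expected cross term has a favorable sign when $z^t\ge0$, which requires identifying the second-Hermite structure exactly. The alternative is that these contributions appear as $\tilde O(\eta D)$ terms, which would compound exponentially over $\tau'$ steps and destroy the bound. We would first try to verify this sign structure through Stein's lemma applied to $\mathcal N(0,\Sigma_\kappa)$.
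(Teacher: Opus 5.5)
Your route is the paper's: expand $D^{t+1}$ under the normalized step, show that under $\mu_\kappa$ the mean drift only rescales $\Sigma_\kappa^\perp w^t$ by a factor in $[0,1]$ (the paper's $\xi^t$ in \cref{lem:strong-deviation-1}), charge the teacher error to $\tilde{O}(\eta\epsStrongEff)$ per step, and get the $\eta D^2$ term from the $w^t$ versus $\tilde{w}^t$ discrepancy (the $R_3^t$ term). You also control the off-subspace cross-term martingale with Doob's $L^2$ maximal inequality and $\sum_t\|\Sigma_\kappa^\perp Z^t\|^2$ with Markov, as the paper does.

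\textbf{The step that fails is the in-subspace fluctuation.} That term is $-2\eta\xi^t D^t\, w^{t\top}Z^t$. It is first order in $\eta$ and only linear in $D^t$, so it is not $O(\eta (Q^{t-1})^2)$ and cannot be folded into the recursion for $Q^t$. After martingale cancellation it contributes an additive $\tilde{O}(\max_{t\le\tau'}D^{\tau+t}\,\eta\tau'^{1/2})$, and this must be paid in $Q^0$. Its per-step size is also $\tilde{O}(\eta)$, not $\tilde{O}(\eta s^{1/2})$, since $w^{t\top}\nabla_w r$ involves only the scalar $w^{t\top}x^t$. The paper bounds the predictable weight via the first claim, $D^{\tau+t}\le D^\tau+7C_1\eta t$, and adds $C_1(D^\tau+7C_1\eta\tau')\eta\tau'^{1/2}$ to $Q^0$. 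It also floors $Q^0$ at $C_1\eta$, which is what lets the in-subspace second-order term $\tilde{O}(\eta^2 D^t)$ be written as $O(\eta Q^2)$. The informal statement suppresses both, so you should aim for the formal $Q^0$ of \cref{lem:strong-deviation-1}.

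\textbf{On your main obstacle: the paper does not derive the favorable sign; it assumes it.} Evenness of $\sigma^*_\kappa$ removes the odd $\bar{\alpha}_i$. The $\Sigma_\kappa^\perp$-part of the projected mean gradient is then
$-\lambda_\kappa\sum_i\bigl(i\bar{\alpha}_i\beta_{\kappa,i}+\sqrt{(i+2)(i+1)}\,\bar{\alpha}_i\beta_{\kappa,i+2}\bigr)(\theta_\kappa^\top\tilde{w}^t)^{i-1}z^t\,\Sigma_\kappa^\perp w^t$.
For even $i$, $(\theta_\kappa^\top\tilde{w}^t)^{i-1}z^t=(z^t)^i/\|\Sigma_\kappa w^t\|^{i-1}\ge 0$. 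The bracketed coefficients are nonnegative by the Hermite conditions in \cref{asm:strong-init}(ii). Stein's lemma gives you the expansion, not these neuron-dependent signs. Your leading coefficient should also be $c_w=2\bar{\alpha}_2\beta_{\kappa,2}+\sqrt{12}\,\bar{\alpha}_2\beta_{\kappa,4}$ (divided by $\|\Sigma_\kappa w^t\|$), because the $\tilde{w}^t$-direction term also projects onto $\Sigma_\kappa^\perp w^t$.

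\textbf{An extra $O(D)$ term the expansion omits.} Doing the Stein computation, you will also meet an $O(D)$ term not proportional to $z^t$. The transformed label has mean $\bar{\alpha}_0=\E[\bar{r}^*_\kappa e^{\bar{r}^*_\kappa}]\approx\E[(\bar{r}^*_\kappa)^2]>0$. This contributes $\approx\sqrt{2}\,\lambda_\kappa\bar{\alpha}_0\beta_{\kappa,2}\tilde{w}^t$ to the mean gradient, hence $\approx-2\sqrt{2}\,\eta\lambda_\kappa\bar{\alpha}_0\beta_{\kappa,2}\|\Sigma_\kappa w^t\|D^t$ to the expected increment of $D^t$. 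It survives the spherical projection because $\tilde{w}^t\neq w^t$. The expansion in \cref{lem:strong-onestep-1} starts at $i=1$ and omits it, so your sign argument also needs $\bar{\alpha}_0\beta_{\kappa,2}\ge 0$ (or a centered label).

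\textbf{Smaller points.}
\begin{itemize}
\item For the first claim, bound the cross term with the directional estimate $|(\Sigma_\kappa^\perp w^t)^\top g^t_\kappa|=\tilde{O}(1)$, not with $\|g^t_\kappa\|$.
\item The off-subspace cross term has a nonzero conditional mean. It belongs in the $\eta\epsStrongEff\tau'$ term, not in the martingale.
\item The condition $\eta\le c_\eta s^{-1}$ is a hypothesis of \cref{lem:strong-deviation-1}, not part of \cref{asm:strong-init}.
\end{itemize}
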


Controlling this bound requires Doob's maximal inequality rather than standard concentration, which forces a smaller learning rate and drives the overall complexity to $\tilde{O}(s^{3/2})$; see \cref{rem:complexity} for details.
A large $D^t$ reduces the effective drift on $z^t$ to $\bar{\alpha}_2\beta_{\kappa,n,2}\lambda_\kappa z^t - O(D^t)$ (\cref{lem:strong-onestep-1}), so the condition $D^t = O(z^t)$ must be maintained throughout training.
Since the per-step increment of $D^t$ is $O(\eta (D^t)^2)$ (\cref{lem:strong-deviation-1}) while $z^t$ grows at rate $\Omega(\eta z^t)$ (\cref{lem:strong-weakrecovery-1,lem:strong-amplification-1}), the ratio $D^t/z^t$ is non-increasing during the initial phase where $z^t = o(1)$.
\Cref{lem:strong-weakrecovery-1,lem:strong-amplification-1} shows that the condition $D^t = o(z^t)$ is maintained throughout each sub-interval with probability $1 - o(1)$, using the bound on $D^t$ established in \cref{lem:strong-deviation-1}.

\paragraph{Alignment dynamics.} Three stages complete the argument (\cref{lem:strong-weakrecovery-1,lem:strong-amplification-1,lem:strong-strongrecovery-1,lem:strong-strongrecovery-2,lem:strong-strongrecovery-3}), following \citet{lee2024neural,oko2024learningsumdiversefeatures}.
In the first two stages the learning rate is $\eta_1 \lesssim s^{-3/2}$. In the third it is reduced to $\eta_2 \lesssim \tilde{\varepsilon}s^{-1} \wedge \tilde{\varepsilon}^2$.
In weak recovery (\cref{lem:strong-weakrecovery-1}), starting from $z^0 \geq s^{-1/2}$ (\cref{asm:strong-init}), each interval of $\tau_1 = O(s^{-1/2}\eta_1^{-1})$ steps multiplies $z^t$ by $1 + \tfrac{1}{2}\bar{\alpha}_2\beta_{\kappa,n,2}\eta_1\tau_1 = 1 + \Theta(1)$, so $O(s^{1/2}\log s)$ intervals suffice to reach $z^t = 1/\operatorname{polylog} d$, totaling $T_1 = \tilde{O}(\eta_1^{-1})$ steps.
In alignment amplification (\cref{lem:strong-amplification-1}), $z^t$ is driven from $1/\operatorname{polylog} d$ to $1 - o(1)$ in $T_2 = \tilde{O}(\eta_1^{-1})$ steps.
In strong recovery (\cref{lem:strong-strongrecovery-1,lem:strong-strongrecovery-4}), the Hermite structure condition in \cref{asm:strong-init} ensures that $g^t_\kappa$ has a component along $-\Sigma_\kappa^\perp w^t_{\kappa,n}$ of magnitude $\Omega(z^t)$, so the gradient update shrinks $\|\Sigma_\kappa^\perp w^{t+1}_{\kappa,n}\|$ by a factor $1 - \Omega(\eta_2)$ when $z^t \geq 1/2$. The following lemma quantifies this decay.
\begin{lem*}[Subspace deviation decay; informal, \cref{lem:strong-strongrecovery-1}]
  Suppose $\theta_\kappa^\top w^t_{\kappa,n} \geq 1/2$ and $\eta_2 \lesssim \tilde{\varepsilon}s^{-1} \wedge \tilde{\varepsilon}^2$.
  Then
  \[
    D^{t+1}
    \leq \left(1 - o(\eta_2)\right)D^t + o(\tilde{\varepsilon}).
  \]
\end{lem*}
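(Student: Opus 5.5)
We aim to prove the one-step contraction $D^{t+1} \le (1 - c\,\eta_2) D^t + o(\tilde{\varepsilon})$ by expanding the normalized update and tracking the component orthogonal to $V_\kappa$. Write $w = w^t_{\kappa,n}$, $z = z^t \ge 1/2$, $D = D^t$, and $g = g^t_\kappa$. Set $\tilde w = w + \eta_2 g$, so $w^{t+1} = \tilde w/\|\tilde w\|$. Since $g$ is a spherical gradient, $g \perp w$. Hence $\|\tilde w\|^2 = 1 + \eta_2^2\|g\|^2 \ge 1$, which gives
\[
D^{t+1} \le \|\Sigma_\kappa^\perp w + \eta_2 \Sigma_\kappa^\perp g\|^2 = D + 2\eta_2 \langle \Sigma_\kappa^\perp w, \Sigma_\kappa^\perp g\rangle + \eta_2^2 \|\Sigma_\kappa^\perp g\|^2 .
\]
The rest of the argument bounds the linear and quadratic terms separately.

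\textbf{Linear term.} We decompose $g$ by the mixture component of $x^t$.

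When $x^t \sim \mu_\kappa$, we have $x^t \in V_\kappa$, so $\nabla_w r_\Theta(x^t) \propto x^t$ has no $V_\kappa^\perp$ part. The only $V_\kappa^\perp$ contribution then comes from the projection in $\tilde\nabla_w = (I - ww^\top)\nabla_w$, namely
\[
\Sigma_\kappa^\perp g = -\,y^t\, a_{\kappa,n}\sigma'_{\kappa,n}(w^\top x^t + b)\,(w^\top x^t)\,\Sigma_\kappa^\perp w .
\]
Taking conditional expectation and using Stein's lemma in the direction $\Sigma_\kappa w$, the scalar coefficient $\mathbb{E}[y^t a\sigma'(\cdot)\, w^\top x]$ equals, up to $\tilde O(\epsStrongEff)$, the Hermite correlation $\bar\alpha_2\beta_{\kappa,n,2}\,(\theta_\kappa^\top w)^2 + (\text{higher Hermite terms in } z)$. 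Here we replace $\bar r^w$ by $r^*_\kappa$ using the $L^1$ bound and the neuron alignment via \cref{lem:strong-subspace-1}, paying $\tilde O(\epsStrongEff)$.

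The Hermite structure condition in \cref{asm:strong-init} is exactly what makes this coefficient $\ge c z^2 \ge c/4$ when $z \ge 1/2$. This yields a drift $-2c\,\eta_2 D$, which is the claimed contraction (it is in fact $\Omega(\eta_2)$, stronger than the stated $o(\eta_2)$ rate).

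Components with $k \ne \kappa$ carry weight $1-\lambda_\kappa = o(s^{-1/2})$, and each contributes at most $\tilde O(\|\Sigma_\kappa^\perp w\|)$. Their total linear contribution is therefore $\eta_2\, \tilde O((1-\lambda_\kappa) D^{1/2})$. By AM--GM this splits into $o(\eta_2 D) + \eta_2\,\tilde O((1-\lambda_\kappa)^2)$. The second piece is $o(\tilde{\varepsilon})$ because $\eta_2 \lesssim \tilde{\varepsilon}^2$ and $\epsStrongEff \lesssim \tilde{\varepsilon}$.

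The mean-zero fluctuation $Z^t$ enters linearly with magnitude $\eta_2 \|Z^t\| D^{1/2}$. In a one-step statement I would keep it in the conditional-expectation form. If a high-probability version is needed, it is absorbed as $\eta_2 \tilde O(1) D^{1/2} \le o(\eta_2 D) + \tilde O(\eta_2)$, and $\tilde O(\eta_2) = o(\tilde{\varepsilon})$ since $\eta_2 \lesssim \tilde{\varepsilon} s^{-1}$. This is the step I expect to be the main obstacle: the nonzero conditional mean forces the expectation formulation, while a pathwise bound would require concentration that is instead deferred to \cref{lem:strong-strongrecovery-2,lem:strong-strongrecovery-3}.

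\textbf{Quadratic term.} On the $\mu_\kappa$ part, $\|\Sigma_\kappa^\perp g\|^2 = \tilde O(D)$ because of the clipping $|y^t| \lesssim 1/\log d$ and polylog bounds on $\sigma'$ and on $w^\top x$. On the off-subspace part, $\|\Sigma_\kappa^\perp g\|^2 = \tilde O(s)$ with probability weight $1-\lambda_\kappa$. The quadratic term is then at most
\[
\eta_2^2\big(\tilde O(D) + \tilde O((1-\lambda_\kappa)s)\big) \le o(\eta_2 D) + \tilde O(\eta_2^2 s^{1/2}) .
\]
The last quantity is $o(\tilde{\varepsilon})$ because $\eta_2 \lesssim \tilde{\varepsilon} s^{-1}$.

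Collecting the three contributions gives $D^{t+1} \le (1 - c\,\eta_2)D^t + o(\tilde{\varepsilon})$, which in particular implies the stated bound. The only genuinely delicate input is the positivity of the Hermite-weighted coefficient, which comes from the evenness of $\sigma^*_\kappa$ together with the structure assumption in \cref{asm:strong-init}.
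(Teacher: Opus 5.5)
\textbf{There is a genuine gap: the treatment of the stochastic term.}

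Your drift computation agrees with the paper's. On $\mu_\kappa$-samples $\Sigma_\kappa^\perp g$ is a scalar multiple of $\Sigma_\kappa^\perp w$. The Hermite structure condition in \cref{asm:strong-init} bounds the mean of that scalar below by order $c_w z^2$; this is the factor $\xi^t$ in the proof of \cref{lem:strong-deviation-1}. Your off-subspace, remainder and quadratic terms are essentially those of \eqref{eq:strong-deviation-2}.

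The gap is the step you flag yourself. Your pathwise treatment leaves an additive error $\tilde{O}(\eta_2)$ per step. That is $o(\tilde{\varepsilon})$, so the one-step display is formally met. In that form, however, it is no more useful than the crude bound $D^{t+1}-D^t\le 7C_1\eta$ from \cref{lem:strong-deviation-1}. It cannot give what \cref{lem:strong-strongrecovery-1} actually asserts, namely $D^t\le(1-\tfrac{1}{5}c_w\eta)^tD^0+c_r\bar{\varepsilon}$.
\begin{itemize}
\item Iterating $D\mapsto(1-c\eta_2)D+A$ only drives $D$ down to $A/(c\eta_2)$.
\item Reaching $O(\tilde{\varepsilon})$ therefore requires $A=O(\eta_2\tilde{\varepsilon})$, whereas $A=\tilde{O}(\eta_2)$ leaves a floor of order $\tilde{O}(1)$.
\end{itemize}
No per-step repair is possible. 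On $\mu_\kappa$-samples the scalar $y^t a_{\kappa,n}\sigma'_{\kappa,n}(w^\top x^t+b_{\kappa,n})\,w^\top x^t$ has random sign. Its typical size is a polylogarithmic factor larger than its mean (roughly $\rho^{-1}$ against $\bar\alpha_2\lesssim\rho^{-2}$). So a single step can enlarge $D$ by a factor $1+\tilde{\Omega}(\eta_2)$, and contraction only emerges after averaging over many steps.

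\textbf{What is missing is multi-step concentration of these fluctuations.} This is where $\eta_2\lesssim\tilde{\varepsilon}^2$ genuinely enters; in your argument it only appears in a harmless AM--GM. The paper proceeds as follows.
\begin{itemize}
\item It keeps the martingale increments $2\eta\xi^{t'}\bigl(w^{t'\top}\Sigma_\kappa^\perp Z^{t'}-D^{t'}\,w^{t'\top}Z^{t'}\bigr)$ unabsorbed.
\item It uses the formal lemma's hypothesis $D^{t'}\ge\bar{\varepsilon}$ to fold all deterministic $O(\eta\bar{\varepsilon})$ errors into the drift. This gives $D^{t'+1}\le(1-\tfrac{1}{5}c_w\eta)D^{t'}$ plus a martingale increment.
\item Unrolling, the discounted sum $\eta\sum_{t'<t}(1-\tfrac{1}{5}c_w\eta)^{t-t'-1}(\cdots)$ has total conditional variance of order $C_1^2\eta/c_w$. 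Hence it is $O(C_1\sqrt{\eta/c_w})$ with high probability.
\item This is at most $c_r\bar{\varepsilon}$ precisely because $\eta\le c_\eta\bar{\varepsilon}^2$.
\item An induction keeps $D^{t'}\le 4c_r$, so the per-step estimates stay valid.
\end{itemize}

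Neither of your fallbacks replaces this. The conditional-expectation form controls $\mathbb{E}D^t$ at fixed times. But \cref{lem:strong-strongrecovery-3,lem:strong-strongrecovery-4} need $D^t=O(\bar{\varepsilon})$ uniformly over an interval with high probability. The concentration also cannot be deferred to \cref{lem:strong-strongrecovery-2,lem:strong-strongrecovery-3}: those lemmas track $z^t$ and take control of $D^t$ as a hypothesis.
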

Consequently, $D^t$ decays exponentially to $O(\tilde{\varepsilon})$ within $\tilde{O}(\eta_2^{-1}\log\tilde{\varepsilon}^{-1})$ steps (\cref{lem:strong-strongrecovery-2,lem:strong-strongrecovery-3}).
With $D^t = O(\tilde{\varepsilon})$ maintained, $z^t$ converges to $1 - O(\tilde{\varepsilon})$ within $T_3$ steps (\cref{lem:strong-strongrecovery-2,lem:strong-strongrecovery-3}).
The total cost $T_1 + T_2 + T_3 = \tilde{O}(s^{3/2} \vee s \tilde{\varepsilon} \log \tilde{\varepsilon}^{-1} \vee \tilde{\varepsilon}^{-2} \log \tilde{\varepsilon}^{-1})$ gives \cref{eq:strong-sample}.

\begin{rem}[Why $s^{3/2}$ rather than $s$]\label{rem:complexity}
  For online SGD on a single-index target with $\mathrm{IE}=2$, the standard complexity is $\tilde{O}(s)$ at learning rate $\eta\lesssim s^{-1}$ \citep{benarous2021online}.
  The difficulty stems from the heavy-tailed structure of $\Sigma_\kappa^\perp Z^t$. When $x^t \sim \mu_k$ for $k \neq \kappa$, the sample falls outside $V_\kappa$ and causes an $O(1)$ perturbation, even though this event has probability $1 - \lambda_\kappa = O(s^{-1/2})$.
  Concentration inequalities control sums of martingale differences only through their per-step maximum, so the rare $O(1)$ jumps dominate and the resulting bound is too loose.
  Doob's maximal inequality, applied to a supermartingale in \cref{lem:strong-deviation-1}, instead exploits the low probability of the large-increment event and yields a tight bound.
  Since Doob's inequality bounds the running maximum of each martingale by its cumulative variance, the bound becomes vacuous if the interval is too long. The horizon is therefore partitioned into $\tilde{O}(s^{1/2})$ sub-intervals and a union bound is taken over all of them.
  This forces $\eta\lesssim s^{-3/2}$ rather than $s^{-1}$, yielding $\tilde{O}(s^{3/2})$ steps.
\end{rem}

\section{SFT forgetting}\label{sec:sft}

While \cref{prop:preservation-informal} shows that W2S preserves off-target features, we show that SFT does not.
This is consistent with empirical observations of catastrophic forgetting in large language models during fine-tuning \citep{wang2024two, kotha2024understanding, luo2025empirical}.
In our setting, SFT on task $\kappa$ applies online SGD to the same strong model (\cref{eq:strong-arch}) with isotropic inputs $x^t \sim \mathcal{N}(0, I_d)$ and direct teacher labels $y^t = r^*_\kappa(\theta_\kappa^\top x^t) + \zeta^t$ as in \citet{benarous2021online}.
The first-layer update is
\begin{align}\label{eq:sft-update}
  w_{k,n}^{t+1} \gets w_{k,n}^t + \eta^t\, y^t\, \tilde{\nabla}_{w_{k,n}} r_\Theta(x^t), \quad w_{k,n}^{t+1} \gets w_{k,n}^{t+1}/\|w_{k,n}^{t+1}\|.
\end{align}
\cref{thm:sft} confirms that neurons initialized toward $\theta_k$ are driven to align with $\theta_\kappa$ whenever $\theta_k^\top\theta_\kappa$ is non-negligible, so the pre-trained feature $\theta_k$ is lost before task $\kappa$ is learned. The role of feature similarity in causing forgetting is also examined by \citet{hiratani2024disentangling} in the continual learning setting.

\begin{thm}[SFT forgetting; informal]\label{thm:sft}
  Assume \cref{asm:additive,asm:strong-init}.
  Let $t_{1,\kappa}$ be the first time at which $\theta_\kappa^\top w^t_{\kappa,n} \geq o(1)$ for some $n \in [N_\kappa]$.
  Under \cref{alg:sft-forgetting}, with high probability $\theta_\kappa^\top w^t_{k,n} \geq 1 - o(1)$ for every neuron $n \in [N_k]$ satisfying the sign condition, all $t \geq t_{1,\kappa}$, and every $k \neq \kappa$ satisfying \cref{asm:sft-init} (see \cref{app:sft} for full conditions).
\end{thm}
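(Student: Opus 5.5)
The plan is to track, for each off-target neuron $w^t_{k,n}$ with $k\neq\kappa$, the alignment $m^t := \theta_\kappa^\top w^t_{k,n}$ under the SFT update \eqref{eq:sft-update}, and compare its growth with that of the target neurons. The key observation is that with isotropic inputs and the direct label $y^t = r^*_\kappa(\theta_\kappa^\top x^t)+\zeta^t$, the population gradient on a task-$k$ neuron does not depend on which group the neuron belongs to. By Hermite expansion (as in \citet{benarous2021online}),
\[
  \E\bigl[y\,\tilde\nabla_w \sigma_{k,n}(w^\top x + b_{k,n})\bigr] = \pi_\kappa\sum_{j\ge p}\alpha_j\beta_{k,n,j}\,(\theta_\kappa^\top w)^{j-1}(I-ww^\top)\theta_\kappa .
\]
The drift of $m^t$ is therefore $\eta\,\pi_\kappa a_{k,n}N_k^{-1}\sum_j \alpha_j\beta_{k,n,j}(m^t)^{j-1}(1-(m^t)^2)$. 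This is the same ODE that drives target neurons, up to the prefactor $a_{k,n}/N_k$. The sign condition in \cref{asm:sft-init} should guarantee $a_{k,n}\alpha_p\beta_{k,n,p}>0$, so that the drift is positive.

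The crucial difference from the target neurons is initialization. A task-$\kappa$ neuron starts with $m^0 \approx s^{-1/2}$, while a task-$k$ neuron pre-trained toward $\theta_k$ has $m^0 \approx (\theta_k^\top\theta_\kappa)\,(\theta_k^\top w^0_{k,n}) = \Theta(1)$ whenever $\theta_k^\top\theta_\kappa$ is non-negligible, which I take to be the content of \cref{asm:sft-init}. Under the power-law drift $m^{p-1}$, the escape time from level $m^0$ scales as $\eta^{-1}(m^0)^{-(p-2)}$, up to the $N_k$ prefactor. Off-target neurons therefore escape the saddle much earlier than target neurons, in time $\tilde O(\eta^{-1})$ rather than $\tilde O(\eta^{-1}s^{(p-2)/2})$ (and roughly $d^{(p-2)/2}$ relative to isotropic scale).

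The steps, in order, are as follows.
\begin{enumerate}
\item Decompose $m^{t+1}-m^t$ into the drift above, a martingale term, and an $O(\eta^2 d)$ normalization correction. Bound the martingale by Freedman's inequality with learning rate $\eta\lesssim d^{-1}$, using the same schedule that makes the target neurons reach $t_{1,\kappa}$.
\item Show by comparison with the deterministic ODE (a discrete Bihari--LaSalle argument, as in \citet{benarous2021online}) that $m^t$ reaches $1-o(1)$ within time $T_k=\tilde O(\eta^{-1}N_k/|a_{k,n}|)$.
\item Show a lower bound $t_{1,\kappa}\gtrsim \eta^{-1}N_\kappa (s^{1/2})^{p-2}/\mathrm{polylog}$ by the same comparison applied to the target neurons, starting at $m^0\lesssim s^{-1/2}$, and verify that $T_k\le t_{1,\kappa}$ under the relative scaling of $N_k, N_\kappa$ in \cref{asm:sft-init}.
\item Show that once $m^t\ge 1-o(1)$ it stays there: near $m=1$ the drift is restoring, of order $(1-m)$, while the noise contributes $O(\eta d)$ per unit time. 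This is $o(1)$ for $\eta\lesssim d^{-1}/\mathrm{polylog}$, so a union bound over $t\le T$ and over neurons gives the high-probability statement.
\end{enumerate}

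The main obstacle is the third step: making the comparison of timescales rigorous uniformly over all $n$. Higher Hermite terms with mixed signs could slow $m^t$ at intermediate values, and the prefactors $1/N_k$ and $1/N_\kappa$ could offset the initialization advantage. I would first try to handle this by requiring in \cref{asm:sft-init} that $\sum_j \alpha_j\beta_{k,n,j} m^{j-1}$ is bounded below by $c\,m^{p-1}$ on $[m^0,1]$, which is the same Hermite sign structure used for strong recovery. With that condition, a single monotone comparison argument covers the whole trajectory.
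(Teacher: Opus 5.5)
Your skeleton matches the paper's. The drift of $\theta_\kappa^\top w^t_{k,n}$ is the target-neuron drift times $\chi_k$. The off-target neuron gets a lower Bihari--LaSalle sequence with rate $\chi_k\eta(1-c_1)c_{l,k}$. The target neurons get an upper sequence with rate $\eta(1+c_1)c_{u,\kappa}$, started from $\theta_\kappa^\top w^0_{\kappa,n}\le C_1 s^{-1/2}$. Then the hitting times are compared. Your positivity requirement $\sum_j\alpha_j\beta_{k,n,j}m^{j-1}\ge c\,m^{p-1}$ is exactly what the paper's sign hypothesis $\alpha_p\tilde\beta_{k,p}>0$, $\alpha_i\tilde\beta_{k,i}\ge 0$ provides.

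The gap is in steps 2--3, which rest on a misreading of \cref{asm:sft-init}. That assumption does not give $\theta_\kappa^\top w^0_{k,n}=\Theta(1)$. It only asks
\[
\theta_\kappa^\top w^0_{k,n}\ \ge\ C_1 s^{-1/2}\Bigl(\frac{2^{p+1}(1+c_1)^{p-1}c_{u,\kappa}}{(1-\chi_k c_1)^{p-2}\chi_k c_{l,k}}\Bigr)^{1/(p-2)},
\]
that is, an overlap only a factor $\chi_k^{-1/(p-2)}\cdot\mathrm{polylog}(d)$ above the target neurons' $s^{-1/2}$. In the intended case $\theta_k^\top w^0_{k,n}\approx 1$ this means $\theta_k^\top\theta_\kappa\gtrsim s^{-1/2}$, not $\Theta(1)$.

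At that threshold your estimate $T_k=\tilde O(\eta^{-1}N_k/|a_{k,n}|)$ fails. The off-target neuron must itself pass through weak recovery, which takes $\asymp\eta^{-1}\chi_k^{-1}(\theta_\kappa^\top w^0_{k,n})^{-(p-2)}$ steps. That is $\eta^{-1}s^{(p-2)/2}$ up to polylog factors, the same order as $t_{1,\kappa}$. So there is no polynomial separation of timescales. Moreover, $T_k\le t_{1,\kappa}$ is not a consequence of how $N_k$ scales against $N_\kappa$. It is the inequality $(\theta_\kappa^\top w^0_{k,n}/\theta_\kappa^\top w^0_{\kappa,n})^{p-2}\gtrsim c_{u,\kappa}/(\chi_k c_{l,k})$, and at the threshold it holds only up to explicit constants. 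Those constants ($c_{l,k}$, $c_{u,\kappa}$, the $(1\pm c_1)$ slacks, and the $2^{O(p)}$ losses from both the weak-recovery and amplification Bihari--LaSalle bounds) must therefore be tracked. This is what \cref{lem:sft-forgetting-3} does, and it is why the assumption has exactly this form. Your own escape-time formula $\eta^{-1}\chi_k^{-1}(m^0)^{-(p-2)}$ is the right tool; it just has to be applied at this initial overlap, through both phases.

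Relatedly, the rate $\eta\lesssim d^{-1}$ in step 1 is too large once overlaps of order $s^{-1/2}$ (up to polylog) have to be escaped. The normalization correction, of order $\eta^2\chi_k^2 d\,m$, beats the signal $\eta\chi_k c_{l,k}m^{p-1}$ unless $\eta\lesssim d^{-1}m^{p-2}$. At $\eta\asymp d^{-1}$ the net drift at overlap $m\ll 1$ is negative, so such a neuron is not amplified. In addition, the martingale $\eta\sum_t\theta_\kappa^\top P^\perp_{w^t}Z^t$ must stay below $c_1 m^0$ until the accumulated drift dominates. This same noise control is what makes your upper sequence for the target neurons valid. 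Both requirements are met by $\eta\le c_\eta d^{-p/2}$, the rate in \cref{thm:sft-formal}, via the split argument of \cref{lem:weak-weakrecovery-1}.

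Your step 4 is worth keeping. The paper passes from $t_{1,\kappa}\ge t_{1,k}+t_{2,k}$ to ``for all $t\ge t_{1,\kappa}$'' without separately showing that the off-target alignment stays above $1-c_1$ after first crossing it. A restoring-drift argument near $m=1$ like yours (cf.\ \cref{lem:weak-strongrecovery-1}) closes that.
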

\setcounter{num@thm@sft}{\value{thm}}

The off-target neurons thus reach $\theta_\kappa$ no later than the task-$\kappa$ neurons, so the alignment $\theta_k^\top w^t_{k,n}$ with $\theta_k$ is lost once task $\kappa$ is learned.
Under the random initialization in \cref{strong-initialization}, $\Theta(N_k)$ neurons satisfy the sign condition, so a constant fraction of task-$k$ neurons is affected.
Since $r^w$ carries no signal for $k \neq \kappa$, this loss of alignment is irrecoverable, affecting the approximation quality of $r^*_k$.

\emph{Proof idea.}\; 
The base online SGD analysis follows the same framework as \cref{prop:weak-informal,thm:strong-informal} and prior work \citep{benarous2021online, damian2024smoothing, lee2024neural, oko2024learningsumdiversefeatures}.
Under isotropic sampling the gradient propagates to every neuron, and the Hermite expansion of the direct teacher has $\IE(\sigma^*_\kappa) = p$, so the drift on $\theta_\kappa^\top w^t_{k,n}$ scales as $\chi_k (\theta_\kappa^\top w^t_{k,n})^{p-1}$.
By \cref{asm:sft-init}, $\theta_\kappa^\top w^0_{k,n}$ exceeds the $\Theta(s^{-1/2})$ alignment of the target neurons $w^0_{\kappa,n} \in V_\kappa$.
A Bihari--LaSalle ODE comparison (formalized in \cref{app:sft}) shows that the larger initial condition forces $\theta_\kappa^\top w^t_{k,n}$ to reach $1 - o(1)$ before $\theta_\kappa^\top w^t_{\kappa,n}$ does, so the alignment $\theta_k^\top w^t_{k,n}$ with the pre-trained feature is lost.

\paragraph{W2S versus SFT.}
Our results establish that W2S improves over SFT in both sample complexity and feature preservation.
\cref{thm:strong-informal,prop:preservation-informal,thm:sft} share the same model architecture, so the difference between W2S and standard SFT is purely algorithmic.
Unlike SFT, which trains on a fixed labeled dataset, W2S treats the weak model as an oracle and queries it on inputs drawn from the strong model's generative distribution $\mu$.
W2S further applies a nonlinear transformation to the weak model output, achieving feature elicitation with $\tilde{O}(s^{3/2})$ samples, whereas SFT under isotropic sampling requires $\tilde{O}(d^{p-1})$ samples \citep{benarous2021online}.
Under the intended setting where task-$k$ neurons satisfy $\theta_k^\top w^0_{k,n} \approx 1$, \cref{asm:sft-init} reduces to $\theta_k^\top \theta_\kappa \gtrsim s^{-1/2}$, meaning forgetting occurs when tasks $k$ and $\kappa$ are more similar than two random directions in the $s$-dimensional subspace.
Even in this setting, \cref{prop:preservation-informal} guarantees preservation of $\theta_k$, since $\|\Sigma_\kappa w^0_{k,n}\|^2 \gtrsim (\theta_\kappa^\top \theta_k)^2 \gtrsim s^{-1}$, in contrast to SFT, under which the alignment $\theta_k^\top w^t_{k,n}$ with the pre-trained feature is lost.

\begin{rem*}
W2S succeeds in a setting where SFT cannot, even when SFT applies the same nonlinear teacher transformation as \cref{alg:w2s}.
Since SFT trains on a fixed labeled dataset without a weak model, it draws inputs isotropically from $\mathcal{N}(0, I_d)$; any such algorithm requires $\tilde{\Omega}(d)$ samples to learn a feature \citep{dudeja2024statistical,damian2024computationalstatistical}.
When $s < d^{2/3}$, the $\tilde{O}(s^{3/2})$ complexity of \cref{alg:w2s} is below $d$, so W2S learns the feature from fewer samples than SFT can, regardless of its teacher transformation.
\end{rem*}

\section{Numerical experiments}\label{sec:experiments}

We verify \cref{thm:strong-informal,prop:preservation-informal,thm:sft} numerically with input dimension $d = 1024$, subspace dimension $s = 128$, number of tasks $K = 2$ with target task $\kappa = 1$, neuron counts $N_1 = 64$ and $N_2 = 512$, link function $\sigma^*_k = \mathrm{He}_4$ ($\IE = 4$, $\GE = 2$), feature correlation $\theta_1^\top\theta_2 = 0.3$, and mixture weight $\lambda_\kappa = 0.9$, so that $1 - \lambda_\kappa = 0.1 \lesssim s^{-1/2}$ as required by \cref{asm:distribution}. 
The weak model is a two-layer network with $N^w = 3$ neurons whose coefficients $a_n$ and biases $b_n$ are set to the true values of $r^*_1$ plus independent $\mathcal{N}(0, 0.1^2)$ noise, and whose directions $w_n$ are similarly perturbed and then divided by $\sqrt{d}$; this construction is intended to satisfy \cref{asm:weak}.
The strong model is a two-layer network whose coefficients $a_{k,n}$ are initialized to $\pm 1$ and whose biases $b_{k,n}$ are initialized to $0$. 
Its directions $w_{k,n}$ are sampled uniformly on $V_k$ and then perturbed by a small additive term. 
For $k = 1$, the perturbation is drawn from $\mathcal{N}(0, 0.1^2 \Sigma_1^\perp)$ and normalized by $\sqrt{d-s}$ before being added. For $k = 2$, it is drawn from $\mathcal{N}(0, I_d)$ and normalized by $\sqrt{d}$. 
Neuron activations are drawn from $\{\pm\mathrm{He}_2 \pm \mathrm{He}_4\}$, chosen so that \cref{asm:strong-init} is satisfied. 
The learning rate is $\eta = 0.2 \sim s^{-3/2}$ for W2S and $\eta = 7.5 \times 10^{-5} \sim d^{-p/2}$ for SFT, matching the theoretical scalings. 
\cref{fig:main} plots the per-neuron alignment magnitude $|\theta_k^\top w^t_{k,n}|$ over training steps.

\cref{fig:main} (top) verifies \cref{thm:strong-informal,prop:preservation-informal} under W2S. 
The left panel shows that task-1 neurons ($\kappa = 1$, target task) satisfying the sign condition of \cref{prop:weak-informal} with initial alignment $|\theta_1^\top w^0_{1,n}| \geq s^{-1/2}$ converge to alignment $\approx 0.95$ with $\theta_1$, in agreement with \cref{thm:strong-informal}. 
The right panel shows that task-2 neurons (non-target) maintain alignment $\approx 1.0$ with $\theta_2$ throughout training without drifting toward $\theta_1$, in agreement with \cref{prop:preservation-informal}.
\cref{fig:main} (bottom) verifies \cref{thm:sft} under SFT, with both panels tracking alignment with $\theta_1$. 
The left panel shows that task-1 (target) neurons have not yet converged at $T = 2 \times 10^7$, consistent with the $\tilde{O}(d^2 s) \approx 10^8$ sample complexity predicted by \cref{prop:weak-informal}. 
The right panel shows that task-2 neurons reach alignment $\approx 0.8$ with $\theta_1$ before the target neurons have fully converged, so forgetting of the pre-trained feature $\theta_2$ precedes completion of the target task.

\begin{figure}[t]
  \centering
  \includegraphics[width=0.65\linewidth]{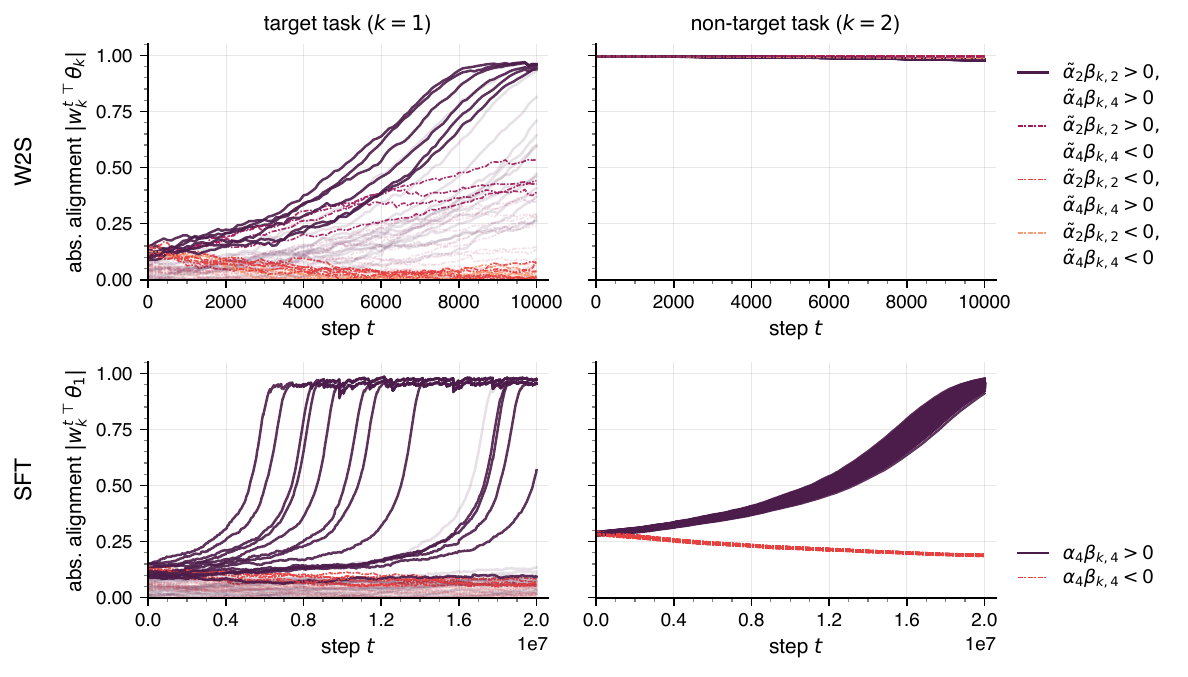}
  \caption{Per-neuron alignment magnitude $|\theta_k^\top w^t_{k,n}|$ during training ($d = 1024$, $s = 128$, $K = 2$, $\sigma^*_k = \mathrm{He}_4$, $\theta_1^\top\theta_2 = 0.3$). Line colors distinguish neuron types by the signs of $\tilde{\alpha}_2\beta_{k,2}$ and $\tilde{\alpha}_4\beta_{k,4}$ (W2S) or $\tilde{\alpha}_4\beta_{k,4}$ alone (SFT); neurons with initial alignment magnitude $|\theta_k^\top w^0_{k,n}| < s^{-1/2}$ are shown semi-transparent.
  \textbf{Top (W2S, $\eta = 0.2$, $T = 10000$):} Absolute alignment $|\theta_1^\top w^t_{1,n}|$ with the target feature (left) and $|\theta_2^\top w^t_{2,n}|$ with the off-target feature (right).
  \textbf{Bottom (SFT, $\eta = 7.5 \times 10^{-5}$, $T = 2 \times 10^7$):} Absolute alignment $|\theta_1^\top w^t_{k,n}|$ for both task-1 (left) and task-2 (right) neurons.}
  \label{fig:main}
\end{figure}

\section{Discussion}\label{sec:discussion}

\paragraph{Limitations.}
Second-layer learning for the strong model is not formally analyzed.
Our results characterize W2S generalization in terms of feature alignment rather than function approximation, and whether second-layer learning preserves reward quality for off-target tasks remains open. This limitation is shared with \citet{moniri2025on}, who similarly establishes preservation at the feature level.
\cref{thm:sft} is algorithm-specific rather than an information-theoretic lower bound, so forgetting can likely be avoided by other algorithms with explicit regularization.
The generative-exponent reduction requires $\sigma^*_\kappa$ to be even, which excludes odd link functions.

\paragraph{Future work.}
Extending second-layer learning to preserve off-target features remains open, with two obstacles.
Following \citet{oko2024learningsumdiversefeatures}, the second layer can be learned by ridge regression, but since $r^w \approx r^*_\kappa$, the supervision provides no signal for off-target neurons, driving $a_{k,n} \to 0$ for $k \neq \kappa$.
Moreover, standard practice resets the bias parameters $b_{k,n}$ before second-layer training, destroying $r_k$ preserved by first-layer W2S training.
A penalty on the deviation of second-layer parameters from their initialization may resolve both obstacles, analogous to the KL regularization used in LLM alignment.
The analysis of deeper architectures such as Transformers is also a natural future direction.

\begin{ack}
RA was partially supported by JSPS KAKENHI (25H01107).
TS was partially supported by JSPS KAKENHI (24K02905) and JST CREST (JPMJCR2015).
This research is supported by the National Research Foundation, Singapore and the Ministry of Digital Development and Information under the AI Visiting Professorship Programme (award number AIVP-2024-004). Any opinions, findings and conclusions or recommendations expressed in this material are those of the author(s) and do not reflect the views of National Research Foundation, Singapore and the Ministry of Digital Development and Information.
\end{ack}

\bibliographystyle{plainnat}
\bibliography{reference}


\newpage


\newpage
\appendix

\newpage
\tableofcontents
\newpage

\section{Notations}\label{app:notation}

\begin{center}
\begin{tabular}{@{}lp{0.72\linewidth}@{}}
\toprule
Symbol & Meaning \\
\midrule
\multicolumn{2}{@{}l}{\textit{Geometry and tasks}} \\
$d, K, \kappa$ & Ambient input dimension, number of tasks, and target task index. \\
$s = d^\alpha$ & Dimension of each task subspace $V_k$, with $\alpha \in (0,1)$. \\
$\theta_k$ & True hidden feature direction for task $k$; $\theta_k \in V_k \cap S^{d-1}$. \\
$\Sigma_k$ & Orthogonal projection onto $V_k$. \\
$\Sigma_k^\perp$ & Projection onto the orthogonal complement, $\Sigma_k^\perp = I - \Sigma_k$. \\[4pt]
\multicolumn{2}{@{}l}{\textit{Reward model}} \\
$\pi_k$ & Reward weight for task $k$. \\
$\sigma^*_k$ & True link function for task $k$; a degree-$q$ polynomial. \\
$p, q$ & Information exponent $p = \IE(\sigma^*_\kappa)$ and common degree of $\sigma^*_k$, $\sigma^w_n$, $\sigma_{k,n}$. \\
$r^*(x)$ & True total reward, $r^*(x) = \sum_{k=1}^K \pi_k \sigma^*_k(\theta_k^\top x)$. \\[4pt]
\multicolumn{2}{@{}l}{\textit{Weak and strong models}} \\
$r^w(x),\, r(x)$ & Outputs of the weak model and the strong model, respectively. \\
$N^w, N_k$ & Width of the weak model and of the $k$-th neuron group of the strong model. \\
$\sigma^w_n,\, \sigma_{k,n}$ & Activation of neuron $n$ in the weak model and in the $k$-th group of the strong model. \\
$w^w_n, a^w_n, b^w_n$ & First-layer weight, second-layer coefficient, and bias of neuron $n$ in the weak model. \\
$w^t_{k,n}, a_{k,n}, b_{k,n}$ & First-layer weight, second-layer coefficient, and bias of neuron $n$ in group $k$ of the strong model at step $t$. \\
$\bar{r}^*_\kappa,\, \rho$ & Clipped teacher signal $\bar{r}^*_\kappa = \mathop{\mathrm{clip}}(\rho^{-1} r^*_\kappa;\, \pm 1/\log d)$ and its temperature $\rho = \Theta(\log^{C_\rho} d)$. \\
$\alpha_i, \bar{\alpha}_i$ & $i$-th Hermite coefficients of $r^*_\kappa$ and $\bar{r}^*_\kappa \exp(\bar{r}^*_\kappa)$. \\
$\beta_{n,i}$ & $i$-th Hermite coefficient of $a_n\sigma_n(\cdot + b_n)$. \\
$\tilde{\beta}_{k,n,i}$ & $i$-th Hermite coefficient of $a_{k,n}\sigma_{k,n}(\cdot + b_{k,n})$ (before rescaling). \\
$\beta_{k,n,i}$ & Rescaled coefficient, $\beta_{k,n,i} = \pi_k^{-1}\tilde{\beta}_{k,n,i}$. \\[4pt]
\multicolumn{2}{@{}l}{\textit{Training dynamics}} \\
$\eta^t, T$ & Learning rate at step $t$ and total iteration count. \\
$\tilde{\nabla}_w$ & Riemannian gradient on $S^{d-1}$ at $w$, defined by $\tilde{\nabla}_w f = (I - ww^\top)\nabla_w f$. \\
$g^t_k$ & Rescaled negative gradient with respect to $w^t_{k}$ at step $t$. \\
$\chi_k$ & Cross-task gradient ratio $\chi_k = N_\kappa \pi_k / (N_k \pi_\kappa)$; governs how task-$k$ neurons respond to target-task supervision. \\[4pt]
\multicolumn{2}{@{}l}{\textit{Error and constants}} \\
$\varepsilon_w,\, \tilde{\varepsilon}_w$ & Weak-model $L^1$ error and alignment slack (\cref{asm:weak}). \\
$\mu, \mu_k, \lambda_k$ & Oracle input distribution $\mu = \sum_k \lambda_k \mu_k$, where $\mu_k$ is the task-$k$ probability measure on $V_k$ and $(\lambda_k)_{k=1}^K \in \Delta^{K-1}$. \\
$C_a, C_b$ & Universal bounds on neuron parameters, $O(1)$ and $\tilde{O}(1)$ respectively. \\
$c_\beta, C_\beta$ & Constants bounding the Hermite coefficients of neuron activations, $c_\beta^{-1}, C_\beta = \mathrm{polylog}(d)$; see \cref{asm:complexity-weak,asm:strong-init}. \\
\bottomrule
\end{tabular}
\end{center}

\newpage
\section{Additional related work}\label{app:related}

\emph{Linear representation hypothesis.}
The linear representation hypothesis posits that high-level concepts are encoded as linear subspaces in the representation space of neural networks \citep{mikolov2013linguistic, arora2016latentvariable, elhage2022toymodelssuperposition, park2024thelinear}.
This structure has been observed in Transformer LLMs across diverse domains, including relational knowledge \citep{merullo2024language, hernandez2024linearity} and spatial and temporal concepts \citep{gurnee2024language}.
A related body of work shows that these representations can be linearly probed and used to steer model behavior \citep{nanda2023emergent, turner2023steering}.
At scale, \citet{beaglehole2026toward} extract linear subspace representations for over 500 concepts in large-scale AI models, providing direct evidence for our assumption that the strong model's pre-training knowledge is organized into task subspaces $V_k$.

\emph{Feature learning for single-index and additive models.}
Online-SGD analyses of single-index models \citep{benarous2021online, ba2022high, mousavi2022neural, ba2023learning, mousavi2023gradient, moniri2024theory, mahankali2024beyond, berthier2024learning, damian2024smoothing, glasgow2025propagation} establish the information-exponent sample complexity $\tilde{O}(d^{p-1})$ for $p > 2$, with analogous guarantees in the misspecified setting \citep{oko2024learningsumdiversefeatures}.
A line of recent work \citep{chen2020learning, dandi2024the, lee2024neural, arnaboldi2024repetita, joshi2024on, damian2024computationalstatistical} shows that nonlinear transformations of the teacher signal replace the information exponent with the generative exponent, circumventing the $\Omega(d^{p/2})$ correlational statistical query (CSQ) \citep{bshouty2002using} lower bound of \citet{damian2022neural}; \cref{alg:w2s} exploits this mechanism.
In particular, \citet{chen2020learning} achieve $\tilde{O}(d)$ sample complexity for polynomial link functions, near the information-theoretic limit $\tilde{\Omega}(d)$ \citep{dudeja2024statistical, damian2024computationalstatistical}.
Additive-model extensions, which underlie our reward model, appear in \citet{oko2024learningsumdiversefeatures, simsek2025learning, ren2025emergence, benarous2025learning}.

\section{Experimental setup details}\label{app:exp-setup}

Both experiments were run on a shared laboratory cluster equipped with an Intel Xeon E5-2680 v4 processor (28 cores, 2.40\,GHz) and 503\,GB RAM, using CPU only.
The W2S experiment ($T = 10{,}000$ steps) completed in a few minutes and the SFT experiment ($T = 2 \times 10^7$ steps) required approximately 10 hours.
No preliminary or failed runs required substantially more compute than the reported results.

\section{Full proof of \texorpdfstring{\cref{prop:weak-informal}}{Proposition~\ref*{prop:weak-informal}}: weak model construction}\label{app:weak}

We prove \cref{prop:weak}.
The arguments in \cref{weak-initialization}, \cref{weak-decomposition}, \cref{weak-amplification}, \cref{weak-strongrecovery}, and \cref{weak-secondlayer} build on the discussion in \citet{oko2024learningsumdiversefeatures}.

\begin{asm}\label{asm:complexity-weak}
    When training the weak model, we impose the following conditions.
    \begin{enumerate}[leftmargin=*]
        \item [(i)] The weak model learns the single-index model $r^*_\kappa = \pi_\kappa \sigma^*_\kappa$ from observations
        \begin{align}
            x \sim \mathcal{N}(0, I_d), \quad y = r^*_\kappa(x) + \zeta_\kappa, \quad \zeta_\kappa \sim \mathcal{N}(0, \pi_\kappa^2),
        \end{align}
        where $r^*_\kappa$ is the reward for task $\kappa \in [K]$ defined in \cref{asm:additive}.
        \item [(ii)] The weak model $r^w_\Theta$ is initialized with $\tilde{N}^w \gtrsim N^w$ neurons:
            \begin{align}
                \frac{1}{\tilde{N}^w} \sum_{n=1}^{\tilde{N}^w} a^0_n \sigma^w_n(w^{0\top}_n x + b^0_n).
            \end{align}
            Each first-layer weight satisfies $\|w^0_n\| = 1$ and is drawn uniformly from the subspace $V_\kappa$, i.e., $w^0_n \sim \mathrm{Unif}(S^{d-1} \cap V_\kappa)$.
            Writing $r^*_\kappa(\cdot) = \sum_{i=p}^q \frac{\alpha_i}{\sqrt{i!}} \mathrm{He}_i(\cdot)$ and $a^0_n \sigma^w_n(\cdot + b_n) = \sum_{i=0}^{q} \frac{\beta_{n, i}}{\sqrt{i!}} \mathrm{He}_i(\cdot)$, the following hold for $\Theta(\tilde{N}^w)$ neurons:
            \begin{align}
                &\pi_\kappa c_\beta \leq |\beta_{n,p}| \leq \pi_\kappa C_\beta, \quad |\beta_{n,i}| \leq \pi_\kappa C_\beta \;\text{for}\; i \neq p, \\
                &\alpha_i \beta_{n, i} > 0 \;\text{for}\; i=p,\quad \alpha_i \beta_{n, i} \geq 0 \;\text{for}\; p < i \leq q. \label{eq:complexity-weak-1}
            \end{align}
    \end{enumerate}
\end{asm}

The most distinctive condition in \cref{asm:complexity-weak} is $w^0_n \sim \mathrm{Unif}(S^{d-1} \cap V_\kappa)$, which reflects the knowledge acquired through pre-training.
The remaining conditions are standard in the misspecified setting \citep{mousavi2023gradient, oko2024learningsumdiversefeatures, lee2024neural}.
Condition \eqref{eq:complexity-weak-1} is needed to ensure the expected correlation loss is monotone in the alignment $\theta^\top_\kappa w_n$ \citep[see][Appendix B.2]{oko2024learningsumdiversefeatures}, and is automatically satisfied in the well-specified case $\sigma^*_\kappa = \sigma^w_n$ \citep{benarous2021online}.
In \cref{weak-initialization}, following \citet{oko2024learningsumdiversefeatures, lee2024neural}, we show how to construct an initialization satisfying these conditions.

\paragraph{Training Algorithm}
\cref{alg:weak} describes the weak model training procedure.
It follows the layer-wise training paradigm commonly used in feature learning theory \citep{damian2022neural, ba2022high, bietti2022learningsingleindexmodelsshallow, abbe2023sgd, mousavi2023gradient}.

In Phase~I, the first-layer weights $w_n$ are trained by online SGD to minimize the correlation loss $\mathcal{L} = -y r^w_\Theta(x)$ \citep{oko2024learningsumdiversefeatures, lee2024neural}, which matches the behavior of squared loss when the learning rate is sufficiently small \citep{lee2024neural}.
The update uses the spherical gradient $\tilde{\nabla}_w r(x) = (I - ww^\top)\nabla_w r(x)$, where $\nabla_w$ denotes the Euclidean gradient.

After Phase~I, sufficiently many neurons satisfy $|\theta^\top_\kappa w^{T_1}_n| \geq 1 - \epsWeakAlign$, but some do not.
Phase~II filters these out by computing the empirical covariance matrix of $(\hat{w}_n)_n$, extracting its leading eigenvector $\hat{\theta}$, and retaining only neurons with $|\hat{\theta}^\top \hat{w}_n| \geq 1 - 2\epsWeakAlign$, which guarantees all remaining neurons satisfy \cref{asm:weak}~(i).
The biases $b_n$ are then re-randomized before Phase~III, in which the second-layer coefficients $a_n$ are learned by $\ell^2$-regularized least squares regression.

\begin{algorithm}[t]
    \caption{Weak model training by online SGD} \label{alg:weak}
    \DontPrintSemicolon

    \KwIn{Initialized weak model $r^w_\Theta$ with $\Theta = (a_n, b_n, w_n^0)_{n=1}^{\tilde{N}^w}$; learning rate schedule $\eta^t$; regularization parameter $\lambda$; sample sizes $T_1, T_2$; bias scale $C_b$; filtering threshold $\epsWeakAlign$.}

    \Indm \textbf{Phase I: first-layer training} \Indp

    \For{$t=0, 1, \dots, T_1-1$}{
        Draw $x^t \sim \mathcal{N}(0, I_d)$, $y^t = r^*_\kappa(\theta^\top_\kappa x^t) + \zeta$.\;
        $w_n^{t+1} \gets w_n^t + \eta^t y^t \tilde{\nabla}_w r^w_{(a_n, b_n, w_n^t)_{n=1}^{\tilde{N}^w}}(x^t)$ \;
        $w_n^{t+1} \gets w_n^{t+1} / \|w_n^{t+1}\|, \quad (n = 1, 2, \dots, \tilde{N}^w)$\;
    }
    $\hat{w}_n \gets w_n^{T_1}$ \;

    \Indm \textbf{Phase II: neuron filtering} \Indp

    \Indp $W_{\tilde{N}^w} \gets \frac{1}{\tilde{N}^w} \sum_{n=1}^{\tilde{N}^w} \hat{w}_n \hat{w}_n^\top$ \;
    $\hat{\theta} \gets$ leading unit eigenvector of $W_{\tilde{N}^w}$.\;
    $\mathcal{N}^w \gets \{n \in [\tilde{N}^w] \mid |\hat{\theta}^\top \hat{w}_n| \geq 1 - 2\epsWeakAlign\}$ \;

    \Indm \textbf{Re-initialize} Drop neurons not in $\mathcal{N}^w$; set $\Theta = (a_n, b_n, \hat{w}_n)_{n=1}^{N^w}$ with $N^w = |\mathcal{N}^w|$.
    Resample $b_n \sim \mathrm{Unif}([-C_b, C_b])$. \;

    \Indm \textbf{Phase III: second-layer training} \Indp

    \Indp Draw $x^t \sim \mathcal{N}(0, I_d)$, $y^t = r^*_\kappa(\theta^\top_\kappa x^t)$, for $t = T_1, \dots, T_1+T_2-1$.\;
    $\hat{a} \gets \mathop{\mathrm{argmin}}_{a \in \mathbb{R}^{N^w}} \frac{1}{T_2} \sum_{t=T_1}^{T_1+T_2-1} \bigl(r_{(a_n, b_n, \hat{w}_n)_{n=1}^{N^w}}(x^t) - y^t\bigr)^2 + \lambda \|a\|^2$ \; \Indm

    \KwOut{$r^w_{\hat{\Theta}}$ with $\hat{\Theta} = (\hat{a}_n, b_n, \hat{w}_n)_{n=1}^{N^w}$.}
\end{algorithm}

\setcounter{prop}{\numexpr\value{num@prop@weak}-1\relax}
\begin{prop}[Weak model construction; formal]\label{prop:weak}
  Under \cref{asm:additive,asm:complexity-weak}, fix $\epsWeakAlign \in (0, 3c_1)$.
  \cref{alg:weak} with the two-phase learning rate schedule
  \[
    \eta^t =
    \begin{cases}
      \eta_1 \leq c_\eta\, d^{-p/2} & (0 \leq t \leq T_{1} - 1), \\
      \eta_2 \leq c_\eta\, \epsWeakAlign\, d^{-1} \wedge c_\eta\, \epsWeakAlign^2 & (T_{1} \leq t \leq T_{1} + T_{2} - 1),
    \end{cases}
  \]
  produces a weak model satisfying
  \cref{asm:weak} with $\epsWeakLOne = \tilde{\Theta}(\epsWeakAlign)$,
  using a total of
  \begin{align}
    T = \tilde{O}\!\left(d^{p/2}\, s^{(p-2)/2} \vee d\,\epsWeakAlign^{-2} \vee \epsWeakAlign^{-3}\right)
  \end{align}
  samples.
\end{prop}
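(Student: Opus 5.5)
The argument tracks the three phases of \cref{alg:weak} in turn, following the online-SGD template of \citet{benarous2021online,oko2024learningsumdiversefeatures}. The only genuinely new ingredients are the subspace initialization and the filtering step.

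\textbf{Phase I.} For a single neuron with $z^t := \theta_\kappa^\top w^t_n$, expanding $y\,\sigma^w_n(w^\top x+b)$ in Hermite polynomials gives the population drift
\[
\E\bigl[\theta_\kappa^\top g^t\bigr] = (1-(z^t)^2)\sum_{i\ge p} \frac{\alpha_i\beta_{n,i}}{(i-1)!^{1/2}\, i^{1/2}}(z^t)^{i-1}.
\]
By the sign condition \eqref{eq:complexity-weak-1}, the $i=p$ term dominates while $z^t$ is small and all terms are nonnegative, so the drift is $\gtrsim \pi_\kappa^2 c_\beta (z^t)^{p-1}$.

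Because $w^0_n\sim \mathrm{Unif}(S^{d-1}\cap V_\kappa)$ and $\theta_\kappa\in V_\kappa$, we have $z^0 \ge c\,s^{-1/2}$ with constant probability, and the sign is favorable for a constant fraction of neurons. This yields the $\Theta(\tilde N^w)$ good neurons.

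The noise is handled as in Ben Arous et al. Each step contributes an increment $\eta\, v^t$ with $\|v^t\|^2=\tilde O(d)$, and the normalization causes a second-order decrease of order $\eta^2 d\, z^t$. We need $\eta^2 d \ll \eta (z^t)^{p-2}$, which holds if $\eta \le c_\eta d^{-1}s^{-(p-2)/2}$. The martingale part has size $\eta\sqrt{T}$, and it must stay below $z^0$.

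Solving the discrete Bihari-type recursion $z^{t+1}\ge z^t+c\eta (z^t)^{p-1}$ up to $z=1/\mathrm{polylog}\,d$ takes $T\asymp \eta^{-1}(z^0)^{-(p-2)} = \eta^{-1}s^{(p-2)/2}$ steps. The learning rate is limited by the constraint $\eta\lesssim d^{-p/2}$ as stated, so this gives $\tilde O(d^{p/2}s^{(p-2)/2})$ samples. The amplification phase from $1/\mathrm{polylog}$ to a constant costs only polylog factors more.

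In the strong recovery phase with $\eta_2$, the drift near $z\approx1$ becomes a contraction of $1-z$ at rate $\eta_2$, competing against noise of order $\eta_2 d$. This gives $1-z^t\le \epsWeakAlign$ after $\tilde O(\eta_2^{-1}\log \epsWeakAlign^{-1}) = \tilde O(d\,\epsWeakAlign^{-1}\vee \epsWeakAlign^{-2})$ steps. Using concentration over that horizon to keep fluctuations below $\epsWeakAlign$ costs an extra factor $\epsWeakAlign^{-1}$, which gives the stated terms $d\,\epsWeakAlign^{-2}\vee\epsWeakAlign^{-3}$. A union bound over neurons and steps gives the high-probability statement.

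\textbf{Phase II, the step I expect to be hardest.} Let $G$ be the good neurons, each satisfying $|\theta_\kappa^\top \hat w_n|\ge 1-\epsWeakAlign$, so that $G$ has size $\ge c_1\tilde N^w$. The bad neurons stay nearly uniformly spread; they were never amplified, so their $W$-contribution is close to isotropic in $V_\kappa$, with operator norm $\lesssim s^{-1}+$ small. Hence
\[
W_{\tilde N^w} = \frac{|G|}{\tilde N^w}\theta_\kappa\theta_\kappa^\top + E, \qquad \|E\| \le c_1\epsWeakAlign/3,
\]
using $\epsWeakAlign<3c_1$. Davis--Kahan then gives $\|\hat\theta\mp\theta_\kappa\|\lesssim \epsWeakAlign/ c_1\cdot$ small, with the sign absorbed into the absolute values. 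Every good neuron therefore passes the threshold $1-2\epsWeakAlign$, and every retained neuron satisfies $|\theta_\kappa^\top\hat w_n|\ge 1-O(\epsWeakAlign)$. Condition (i) of \cref{asm:weak} follows after flipping $w_n\mapsto -w_n$ together with the activation parity. The delicate part is bounding the bad neurons' contribution: they may be partially aligned, and I would control this by showing that the Phase~I dynamics are monotone, so partially aligned neurons also pass or fail cleanly.

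\textbf{Phase III.} With all directions within $\epsWeakAlign$ of $\theta_\kappa$ and biases uniform on $[-C_b,C_b]$, the random-feature approximation argument of \citet[Sec.~B.5]{oko2024learningsumdiversefeatures} shows the following. There exists $a^\star$ with $\|a^\star\|_1=\tilde O(\pi_\kappa N^w)$ that approximates the degree-$q$ polynomial $r^*_\kappa$ in one dimension. The direction error adds $\tilde O(\pi_\kappa\epsWeakAlign)$ in $L^2$, since $\|w_n-\theta_\kappa\|^2\le 2\epsWeakAlign$ and polynomial moments are bounded. The ridge regression generalization bound with $T_2=\tilde O(\epsWeakAlign^{-2})$ samples then gives $L^2$, and hence $L^1$, error $\pi_\kappa\tilde\Theta(\epsWeakAlign)$. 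This establishes (ii) of \cref{asm:weak}, and summing the sample counts of the three phases gives $T$.
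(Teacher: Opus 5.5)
Your Phases~I and~III follow the paper's route, but Phase~II has a genuine gap, at the step where the paper's filtering argument does its work. The bound $\|E\|\le c_1\epsWeakAlign/3$ in $W_{\tilde N^w}=\frac{|G|}{\tilde N^w}\theta_\kappa\theta_\kappa^\top+E$ is asserted rather than derived, and per-neuron alignment alone cannot give it. Write $\hat w_n=z_n\theta_\kappa+u_n$ with $u_n\perp\theta_\kappa$. Each good neuron contributes cross terms $z_n(\theta_\kappa u_n^\top+u_n\theta_\kappa^\top)$ of norm up to $\sqrt{2\epsWeakAlign}\gg\epsWeakAlign$. The bad neurons are also not nearly isotropic. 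Consider a neuron with $\alpha_p\beta_{n,p}>0$ that violates the higher-order sign conditions of \cref{asm:complexity-weak}; under random initialization such neurons form a constant fraction. Its drift $(1-z^2)\sum_i i\alpha_i\beta_{n,i}z^{i-1}$ can vanish at an interior $z^*\in(0,1)$, so it stalls partially aligned with cross terms of order one. Unless the orthogonal parts $u_n$ cancel across neurons, such terms can tilt the leading eigenvector by a constant angle. Monotonicity of the Phase~I dynamics does not give this cancellation. It only decides which neurons pass the threshold, whereas $\hat\theta$ is computed from all $\tilde N^w$ neurons before any filtering.

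The missing idea is the symmetry argument the paper uses. The initialization $\mathrm{Unif}(S^{d-1}\cap V_\kappa)$, the inputs $\mathcal{N}(0,I_d)$, and the labels $r^*_\kappa(\theta_\kappa^\top x)+\zeta$ are all invariant under orthogonal maps that fix $\theta_\kappa$ and preserve $V_\kappa$. The SGD dynamics are therefore equivariant, so the law of each $\hat w_n$ is invariant under rotations of $V_\kappa$ about $\theta_\kappa$ and under rotations of $V_\kappa^\perp$. \cref{lem:weak-filtering-1} then shows that $\E[\hat w\hat w^\top]$ is diagonal in a basis containing $\theta_\kappa$. Its eigenvalues are:
$\E[(\theta_\kappa^\top\hat w)^2]=\Theta(1)$, which needs only the constant fraction of good neurons from Phase~I;
$O(s^{-1})$ on the rest of $V_\kappa$;
and $O(d^{-1})$ on $V_\kappa^\perp$.
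Stalled, anti-aligned, or unmoved neurons contribute only to these diagonal entries, so they never need to be analyzed. \cref{lem:weak-filtering-2} treats the $\hat w_n$ as independent draws from this law and combines matrix Bernstein (\cref{lem:technical-matrixbernstein}) with Davis--Kahan (\cref{lem:technical-yu}). This gives $\sin\angle(\hat\theta,\theta_\kappa)=O(\sqrt{\log d/\tilde N^w})$, at the price of a width requirement $\tilde N^w\gtrsim\epsWeakAlign^{-2}\log d$ that your proposal omits.

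A smaller point in Phase~III: $\|w_n-\theta_\kappa\|^2\le 2\epsWeakAlign$ by itself gives only an $L^2$ error of order $\sqrt{\epsWeakAlign}$. The first-order term $(\sigma^w_n)'(\theta_\kappa^\top x+b_n)\,(w_n-\theta_\kappa)^\top x$ has $L^2$ norm of order $\|w_n-\theta_\kappa\|$. Reaching $\tilde\Theta(\epsWeakAlign)$ requires cancellation across neurons. The paper does not derive this step but imports it from Lemmas~14 and~31 of \citet{oko2024learningsumdiversefeatures}.
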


\begin{proof}
  Setting $\tilde{\varepsilon} = \epsWeakAlign/6$ in \cref{lem:weak-strongrecovery-2}, Phase~I produces neurons satisfying $\theta_\kappa^\top w^{T_1} > 1 - \epsWeakAlign/2$.
  Setting $\bar{\varepsilon} = \epsWeakAlign/2$ in \cref{lem:weak-filtering-2}, the neuron filtering step (\cref{weak-filtering}) retains neurons satisfying $\theta_\kappa^\top \hat{w}_n \geq 1 - \epsWeakAlign$, establishing \cref{asm:weak}(i).
  By \cref{lem:weak-second-layer-1,lem:weak-second-layer-2,lem:weak-second-layer-3}, second-layer ridge regression in Phase~III produces $\hat{a}$ satisfying $\E_x[|r_{\hat{a}}(x) - r^*_\kappa(\theta_\kappa^\top x)|] \leq \pi_\kappa \epsWeakLOne$, establishing \cref{asm:weak}(ii).
  Phase~I uses $T_1 = \tilde{O}(d^{p/2} s^{(p-2)/2} \vee d \epsStrongAlign^{-2} \vee \epsWeakAlign^{-3})$ samples (\cref{lem:weak-weakrecovery-2}, \cref{weak-amplification}, and \cref{lem:weak-strongrecovery-2}) and Phase~III uses $T_2 = \tilde{O}(\epsWeakAlign^{-2})$ samples (\cref{lem:weak-second-layer-3}).
  The total is $T = \tilde{O}(d^{p/2} s^{(p-2)/2} \vee d \epsStrongAlign^{-2} \vee \epsWeakAlign^{-3})$.
\end{proof}

\paragraph{Notations.}
    Throughout this section, the task $\kappa$ is fixed.
    Accordingly, we write $\theta = \theta_\kappa$ and $\alpha_i = \alpha_{\kappa, i}$ for all $i$.
    Moreover, during the training of the first layer, the gradient updating each $w^t_n$ does not depend on the other neurons.
    Hence, in what follows we focus on a single neuron $n$, suppress the index $n$, and write $\sigma = \sigma^w_n$, $a = a^w_n$, $b = b^w_n$, $w^t = w_n^t$, and $\beta_i = \beta_{n, i}$ for all $i$.

    The subsequent arguments are carried out after rescaling the learning rate in the stochastic gradient descent update rule appropriately.
    Specifically, by setting $\eta^t \gets N^w \pi_\kappa^{-1} \eta^t$, the gradient $\eta^t y^t \tilde{\nabla}_w r^w_\Theta (x^t)$ is expressed as $\pi^{-1}_\kappa \eta^t \sum_{n=1}^{N^w} (\sigma_\kappa^*(\theta^\top x^t) + \zeta)  a_n \sigma^w_n (w^{t\top}_n x^t + b_n)$ with $\zeta \sim \mathcal{N}(0, 1)$.

    We take constants of order $\mathop{\mathrm{polylog}} d$ satisfying the following ordering:
    \begin{align}
        C_1 \lesssim c_1^{-1} \lesssim C_2 \lesssim c_\eta^{-1} = \tilde{O}(1).
    \end{align}
    The precise ordering of these constants will be determined so as to keep the subsequent proofs consistent.
    For example, an estimate such as $C_1^2 \leq C_2$ is allowed.
    These constants are only valid within the present section, and denote constants different from those defined analogously in \cref{app:strong,app:sft}.
    Furthermore, we assume $d$ is sufficiently large, that $c_1, c_\eta$ are smaller than any of the finitely many $\Theta(1)$ constants appearing in the subsequent proofs, while $C_1$ and $C_2$ are larger than any such constant.

\subsection{Initialization} \label{weak-initialization}
We verify that random initialization satisfies \cref{asm:complexity-weak} with probability $\Omega(1)$.
Following \citet{oko2024learningsumdiversefeatures, lee2024neural}, we show that the initial alignment of $w^0_n$ is $s^{-1/2}$ with probability $\Theta(1)$, and that the sign condition \eqref{eq:complexity-weak-1} holds with probability $\Omega(1)$.

\begin{lem}\label{lem:weak-init}
    Let $w^0\sim \mathrm{Unif}(S^{s-1})$. Then,
    \begin{align}
        \mathbb{P}\left[e_1^\top w^0 \geq s^{-\frac{1}{2}}\right] = \mathbb{P}\left[e_1^\top w^0 \leq -s^{-\frac{1}{2}}\right] = \Theta(1).
    \end{align}
\end{lem}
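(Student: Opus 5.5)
By the symmetry $w^0 \mapsto -w^0$ of the uniform measure on $S^{s-1}$, the two probabilities in \cref{lem:weak-init} are equal. It therefore suffices to show that $\mathbb{P}[e_1^\top w^0 \geq s^{-1/2}]$ is bounded above and below by absolute constants. The upper bound $\leq 1/2$ also follows from symmetry, so only the lower bound requires work.

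For the lower bound, I will use the Gaussian representation $w^0 = g/\|g\|$ with $g \sim \mathcal{N}(0, I_s)$, so that $e_1^\top w^0 = g_1/\|g\|$. The event $\{g_1 \geq 2\} \cap \{\|g\|^2 \leq 4s\}$ implies $g_1/\|g\| \geq 2/(2\sqrt{s}) = s^{-1/2}$.

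The first event has probability $\mathbb{P}[g_1 \geq 2] = 1 - \Phi(2) \approx 0.023$, an absolute constant. For the second event, Markov's inequality gives $\mathbb{P}[\|g\|^2 > 4s] \leq 1/4$. Since $\mathbb{P}[A \cap B] \geq \mathbb{P}[A] - \mathbb{P}[B^c]$ and $1/4$ exceeds $0.023$, this crude bound is not good enough. I will instead use independence: condition on $g_1 = u$ with $u \in [2,3]$. Then $\|g\|^2 = u^2 + \chi^2_{s-1}$, and $\mathbb{P}[\chi^2_{s-1} \leq 4s - 9] \geq 1 - (s-1)/(4s-9) \geq 1/2$ for $s \geq 4$, again by Markov. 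This yields
\[
\mathbb{P}\left[e_1^\top w^0 \geq s^{-1/2}\right] \geq \tfrac{1}{2}\,\mathbb{P}[2 \leq g_1 \leq 3] = \Omega(1).
\]
Small $s$ is handled separately, since $s = d^\alpha \to \infty$.

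Alternatively, one could use the explicit density $\propto (1-t^2)^{(s-3)/2}$ of $e_1^\top w^0$ together with the convergence $\sqrt{s}\, e_1^\top w^0 \to \mathcal{N}(0,1)$, which gives the limit $1 - \Phi(1)$. However, the Gaussian-ratio argument is nonasymptotic, so I prefer it. The only delicate point is making the two events compatible, and conditioning on $g_1$ via independence resolves this.
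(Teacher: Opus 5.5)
Your proof is correct and uses the same skeleton as the paper's: the representation $e_1^\top w^0 \overset{d}{=} g_1/\|g\|$ with $g \sim \mathcal{N}(0, I_s)$, and a lower bound through the intersection of a tail event for $g_1$ with a norm event for $\|g\|$. The difference is in how that intersection is bounded, and it matters.

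The paper takes $\{g_1 \geq 1\}$ and $\{\|g\| \leq s^{1/2}\}$, applies $\mathbb{P}[A \cap B] \geq \mathbb{P}[A] - \mathbb{P}[B^c]$, and asserts $\mathbb{P}[\|g\| \geq s^{1/2}] = e^{-\Omega(s)}$. That assertion is false. Since $\|g\|^2 \sim \chi^2_s$ has mean $s$, this probability is bounded away from zero and tends to $1/2$. It therefore exceeds $\mathbb{P}[g_1 \geq 1] \approx 0.16$, so the paper's displayed lower bound is vacuous as written.

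You anticipated exactly this obstacle when you noted that the subtraction bound fails once the norm event is only controlled loosely. You got around it in two steps:
\begin{itemize}
\item you moved to the pair $\{g_1 \geq 2\}$, $\{\|g\| \leq 2\sqrt{s}\}$;
\item you used the independence of $g_1$ and $(g_2,\dots,g_s)$, which turns the intersection into a product, giving $\tfrac12\,\mathbb{P}[2 \leq g_1 \leq 3]$.
\end{itemize}

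The paper's subtraction argument could also be repaired with the same change of thresholds plus a genuine $\chi^2$ concentration bound such as $\mathbb{P}[\chi^2_s \geq 4s] \leq e^{-\Omega(s)}$. Your version needs only Markov's inequality and is explicit for every $s \geq 4$. It also extends directly to any threshold $c\,s^{-1/2}$, which is what \cref{lem:strong-init-alignment} actually invokes (there with $c = 2$). Your symmetry remark additionally supplies the equality of the two probabilities and the upper bound, which the paper leaves implicit.
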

\begin{lem}[\cite{chang2011chernoff}, Theorem 2]
    Let $\beta > 1$ and $a\in \mathbb{R}$ be arbitrary. Then,
    \begin{align}
    \frac{\sqrt{2e(\beta-1)}}{2\beta\sqrt{\pi}} e^{-\frac{\beta a^2}{2}} \leq \int_a^\infty \frac{1}{\sqrt{2\pi}} e^{-\frac{t^2}{2}}dt.
    \end{align}
\end{lem}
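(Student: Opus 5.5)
The plan is to bound the Gaussian tail $\Phi^c(a) := \int_a^\infty \frac{1}{\sqrt{2\pi}}e^{-t^2/2}\,dt$ from below by pulling out the factor $e^{-\beta a^2/2}$, using an elementary quadratic inequality that trades the cross term for a factor $(\beta-1)^{-1}$. This is the natural way a constant depending only on $\beta$ (and vanishing like $\sqrt{\beta-1}$ as $\beta \downarrow 1$) appears.

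\emph{Case $a \le 0$.} We have $\Phi^c(a) \ge 1/2$. The right-hand side is at most $\sup_{\beta>1}\frac{\sqrt{2e(\beta-1)}}{2\beta\sqrt{\pi}}$. Since $\sqrt{\beta-1}/\beta \le 1/2$ (with equality at $\beta = 2$), this supremum is $\frac{1}{2}\sqrt{e/(2\pi)} < 1/2$, so this case is immediate.

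\emph{Case $a > 0$, main bound.} Substitute $t = a+u$ with $u \ge 0$. By Young's inequality, $2au \le (\beta-1)a^2 + u^2/(\beta-1)$, so
\begin{align}
  (a+u)^2 \le \beta a^2 + \frac{\beta}{\beta-1}u^2 .
\end{align}
This holds for every real $a$. Integrating $e^{-(a+u)^2/2}$ over $u \ge 0$ then gives
\begin{align}
  \Phi^c(a) \ge \frac{e^{-\beta a^2/2}}{\sqrt{2\pi}}\int_0^\infty e^{-\frac{\beta u^2}{2(\beta-1)}}\,du = \frac{1}{2}\sqrt{\frac{\beta-1}{\beta}}\;e^{-\beta a^2/2}.
\end{align}
Compare this with the claimed constant $\frac{\sqrt{e(\beta-1)}}{\beta\sqrt{2\pi}}$. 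Ours is larger exactly when $\beta \ge 2e/\pi$, so for such $\beta$ the lemma follows. A variant worth trying is to restrict the integral to $u \in [0,\delta]$ and lower bound the integrand at the endpoint. This produces a factor of the form $\sup_\delta \delta e^{-c\delta^2}$, which is precisely the shape $\sqrt{e}$-type constants come from. I would tune $\delta$, and the split in Young's inequality, to see whether the exact constant $\sqrt{2e(\beta-1)}/(2\beta\sqrt{\pi})$ drops out uniformly in $\beta$.

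\emph{Remaining regime $1<\beta<2e/\pi$, $a>0$.} Here I would combine two bounds. For $a \ge a_0$, use the Mills-ratio lower bound $\Phi^c(a) \ge \frac{a}{1+a^2}\frac{e^{-a^2/2}}{\sqrt{2\pi}}$, so that
\begin{align}
  \Phi^c(a)\,e^{\beta a^2/2} \ge \frac{a\,e^{(\beta-1)a^2/2}}{\sqrt{2\pi}(1+a^2)} .
\end{align}
For $0<a<a_0$, use monotonicity, $\Phi^c(a) \ge \Phi^c(a_0)$ together with $e^{-\beta a^2/2}\le 1$, or the shifted bound above. Then I would optimize the switch point $a_0$ as a function of $\beta$.

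\emph{Main obstacle.} The hard step is matching the exact constant for $\beta$ close to $1$. As $\beta \downarrow 1$, both sides scale like $\sqrt{\beta-1}$, and the simple shift bound loses a factor $\sqrt{\pi/(2e)}$. If the case split does not close, I would instead follow the original argument of \citet{chang2011chernoff}: maximize $a \mapsto \Phi^c(a)e^{\beta a^2/2}$ from below via a one-variable calculus argument, where the factor $\sqrt{e(\beta-1)}$ comes from $\sup_x x\,e^{-(\beta-1)x^2/2}$-type optimizations. Since the statement is quoted verbatim from that reference, citing it suffices for the paper's purposes; its only use downstream is in \cref{lem:weak-init}, where any $\beta$-dependent constant would do.
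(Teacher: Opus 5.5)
\paragraph{Status relative to the paper.} The paper gives no proof of this lemma; it imports it verbatim as Theorem~2 of \citet{chang2011chernoff}. Your closing fallback is therefore exactly what the paper does. You are also right that \cref{lem:weak-init} only needs some $\beta$-dependent constant. Your case $a\le 0$ and your shift bound $\Phi^c(a)\ge\frac12\sqrt{(\beta-1)/\beta}\,e^{-\beta a^2/2}$ are correct, and they settle $\beta\ge 2e/\pi$.

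\paragraph{The gap.} As a self-contained proof, the regime $1<\beta<2e/\pi$, $a>0$ remains open, and the repairs you list cannot close it.
\begin{itemize}
\item Your Young split is already optimal, since $\inf_{a\ge 0}\bigl[\beta a^2-(a+u)^2\bigr]=-\beta u^2/(\beta-1)$. Any argument that factors $e^{-\beta a^2/2}$ out of the integrand uniformly in $a$ is therefore capped at $\frac12\sqrt{(\beta-1)/\beta}$.
\item Truncating to $u\in[0,\delta]$ only loses more: the optimized value is $\sqrt{(\beta-1)/\beta}/\sqrt{2\pi e}$, with $\sqrt e$ in the denominator rather than the numerator.
\item The same cap means the shift bound misses the target by the $a$-independent factor $\sqrt{\pi\beta/(2e)}<1$. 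So it cannot cover any range $(0,a_0)$.
\item The Mills-ratio branch also fails. With $\varphi(a)=e^{-a^2/2}/\sqrt{2\pi}$, the bound $\frac{a}{1+a^2}\varphi(a)$ equals the target exactly at $a=(\beta-1)^{-1/2}$. For every $\beta>1$ it lies strictly below the target immediately to the left of that point. Hence the switch point must satisfy $a_0\ge(\beta-1)^{-1/2}$.
\item Monotonicity then needs $\Phi^c\bigl((\beta-1)^{-1/2}\bigr)\ge\frac{\sqrt{e(\beta-1)}}{\beta\sqrt{2\pi}}$. This is false on all of $(1,2e/\pi]$: the bound $\Phi^c(a)\le\varphi(a)/a$ makes the left side at most $\beta e^{-\beta/(2(\beta-1))}<1$ times the right side.
\end{itemize}
The root cause is that, as $\beta\downarrow 1$, the lemma is tight up to a factor $1+O((\beta-1)^2)$ at $a\approx(\beta-1)^{-1/2}$. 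No tail estimate this crude can certify it there.

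\paragraph{The missing idea.} Do not bound $\Phi^c$ at the critical point by a tail estimate; use the first-order condition there instead. Let $h(a)=\Phi^c(a)e^{\beta a^2/2}$ on $[0,\infty)$.
\begin{itemize}
\item $h$ is continuous and $h(0)=\tfrac12$.
\item $h(a)\to\infty$: apply your own shift bound with $\beta'=(1+\beta)/2<\beta$. So $h$ attains its minimum at some $a_m$.
\item $h'(a)=e^{\beta a^2/2}\bigl(\beta a\,\Phi^c(a)-\varphi(a)\bigr)$ and $h'(0)<0$. Hence $a_m>0$ and $\beta a_m\Phi^c(a_m)=\varphi(a_m)$.
\end{itemize}
Therefore
\[
\min_{a\ge 0}h(a)=\frac{\varphi(a_m)\,e^{\beta a_m^2/2}}{\beta a_m}=\frac{e^{(\beta-1)a_m^2/2}}{\sqrt{2\pi}\,\beta\,a_m}\ge\frac{1}{\sqrt{2\pi}\,\beta}\,\inf_{x>0}\frac{e^{(\beta-1)x^2/2}}{x}=\frac{\sqrt{e(\beta-1)}}{\sqrt{2\pi}\,\beta}=\frac{\sqrt{2e(\beta-1)}}{2\beta\sqrt{\pi}},
\]
with the infimum attained at $x=(\beta-1)^{-1/2}$.

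This is the one-variable argument you allude to at the end. Your $\sup_x x\,e^{-(\beta-1)x^2/2}$ is exactly the right ingredient, but it must be combined with the stationarity condition, and the quantity to bound from below is the minimum of $h$. The argument covers every $\beta>1$ at once. Together with your $a\le 0$ case it proves the lemma as stated, with no case split in $\beta$.
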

\begin{proof}[Proof of \cref{lem:weak-init}]
    Since $e_1^\top w^0 \overset{d}{=} e_1^\top\frac{g}{\|g\|}$ with $g\sim \mathcal N(0, I_s)$, we have
    \begin{align}
        \mathbb{P} \left[e_1^\top w^0 \geq s^{-\frac{1}{2}}\right] &\geq  \mathbb{P}_g\left[e_1^\top g \geq 1  \wedge \|g\|\leq s^\frac{1}{2}\right]\\
        &\geq \mathbb{P}_g\left[e_1^\top g \geq 1\right] - \mathbb{P}_g\left[\|g\| \geq s^\frac{1}{2}\right]\\
        &\gtrsim \frac{\sqrt{2e(\beta-1)}}{2\beta\sqrt{\pi}} e^{-\frac{\beta}{2}} - e^{-\Omega(s)} = \Theta(1).
    \end{align}
\end{proof}

\begin{lem}
    For $n = 1, 2, \dots N^w$, let $a_n\sim \mathrm{Unif}\{\pm 1\}$ and $\xi_{n, i} \sim \mathrm{Unif}\{\pm 1\}$, and let $1 \leq p\leq q$. Then, with probability $\Omega(1)$, $a_n \sigma_n = \sum_{i = 1}^q a_n \xi_{n, i}\mathrm{He}_i$ has its $i$-th Hermite coefficient ($p\leq i\leq q$) taking the desired sign for every $i$.
    Therefore, by taking these $a_n$, $\sigma_n$, and $b_n = 0$ as the initial values of $a^w_n$, $\sigma^w_n$, and $b^w_n$ respectively, condition \eqref{eq:complexity-weak-1} of \cref{asm:complexity-weak} is satisfied.
\end{lem}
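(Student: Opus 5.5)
The plan is to argue directly from the independence structure of the random signs. Because $b_n = 0$, the $i$-th Hermite coefficient of $a_n\sigma_n = \sum_{i=1}^q a_n\xi_{n,i}\mathrm{He}_i$ is read off termwise. In the normalization $\sum_i \frac{\beta_{n,i}}{\sqrt{i!}}\mathrm{He}_i$, it equals $\beta_{n,i} = \sqrt{i!}\,a_n\xi_{n,i}$. Its sign is therefore $a_n\xi_{n,i}$ for every $i \in \{1,\dots,q\}$, and it is nonzero.

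The first step is to identify the required sign pattern. Condition \eqref{eq:complexity-weak-1} asks that $\alpha_p\beta_{n,p} > 0$ and that $\alpha_i\beta_{n,i} \geq 0$ for $p < i \leq q$. For each $i$ with $\alpha_i \neq 0$, we need $\operatorname{sign}(a_n\xi_{n,i}) = \operatorname{sign}(\alpha_i)$. For $i$ with $\alpha_i = 0$, the condition holds automatically, with no constraint on $\xi_{n,i}$.

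The second step is the probability computation. Condition on $a_n$; the products $a_n\xi_{n,p},\dots,a_n\xi_{n,q}$ are then i.i.d. uniform on $\{\pm1\}$. The probability that all constrained indices take the prescribed sign is therefore at least $2^{-(q-p+1)}$. Since $q$ is a fixed degree, this is $\Omega(1)$, independent of $d$.

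The remaining conditions of \cref{asm:complexity-weak} are the magnitude bounds. Here $|\beta_{n,i}| = \sqrt{i!}$ is a constant in $[1,\sqrt{q!}]$, which after rescaling by $\pi_\kappa$ gives $c_\beta \leq |\beta_{n,p}|/\pi_\kappa$ and $|\beta_{n,i}|/\pi_\kappa \leq C_\beta$ with polylog constants. (If the scaling convention requires it, multiply the activation by $\pi_\kappa$.)

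For the final step, the neurons are drawn independently, so a Chernoff bound shows that $\Theta(\tilde N^w)$ neurons satisfy the sign condition with high probability. Combined with \cref{lem:weak-init}, which is independent of the $(a_n,\xi_n)$, the same holds for the joint event that both the sign condition and the initial alignment condition hold. There is no real obstacle. The only care needed is tracking the $\sqrt{i!}$ normalization and noting that zero Hermite coefficients of $\sigma^*_\kappa$ impose no constraint.
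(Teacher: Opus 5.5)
Your proposal is correct and follows the paper's argument: each sign $a_n\xi_{n,i}$ independently matches the required sign with probability $1/2$, and your bound $2^{-(q-p+1)}$ is slightly sharper than the paper's $2^{-q}$ (you also rightly note that indices with $\alpha_i = 0$ impose no constraint). Your remarks on magnitudes and on $\Theta(\tilde N^w)$ good neurons go beyond what the lemma claims. The magnitude bounds $|\beta_{n,i}|\le \pi_\kappa C_\beta$ do need your parenthetical rescaling $a_n\sim\mathrm{Unif}\{\pm\pi_\kappa\}$ (as in \cref{lem:strong-random-init}), unless $\pi_\kappa$ is bounded below by $1/\mathrm{polylog}(d)$.
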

\begin{proof}
    By construction, the probability that $a_n \xi_{n, i}$ has the desired sign is $2^{-q}$.
\end{proof}

\subsection{Gradient decomposition}\label{weak-decomposition}

\begin{lem}\label{lem:weak-onestep-1}
    The gradient term decomposes as follows:
    \begin{align}
        \nabla_w y^t a \sigma(w^{t\top} x^t + b) = \sum_{i=p}^q \left[ i \alpha_i \beta_i (\theta^\top w^t)^{i-1} \theta + \sqrt{(i+2)(i+1)}  \alpha_i \beta_{i+2} (\theta^\top w^t)^i w^t \right] + Z^t.
    \end{align}
    Here, the random variable $Z^t$ has mean $0$ and satisfies $\|Z^t\| = \tilde{O}(d^\frac{1}{2})$ with high probability, and for every $v \in \mathbb{R}^{d}$, $|v^\top Z^t| = \tilde{O}(1)$ with high probability.
    Furthermore, $\|\nabla_w y^t a \sigma(w^{t\top} x^t + b)\| = \tilde{O}(d^\frac{1}{2})$ with high probability, and for every $v \in \mathbb{R}^{d}$ with $\|v\| = O(1)$, $(\nabla_w y^t a \sigma(w^{t\top} x^t + b))^\top v = \tilde{O}(1)$ holds with high probability.
\end{lem}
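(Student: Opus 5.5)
The plan is to compute the conditional expectation of the gradient exactly using Gaussian integration by parts and the Hermite expansion. We then define $Z^t$ as the centered remainder and bound it with Gaussian/polynomial concentration.

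Write $f := \sigma^*_\kappa$ (after the rescaling fixed above, so that $y^t = f(\theta^\top x^t) + \zeta$ with $f = \sum_{i=p}^q \frac{\alpha_i}{\sqrt{i!}}\mathrm{He}_i$), and $h(z) := a\sigma(z+b) = \sum_{i=0}^q \frac{\beta_i}{\sqrt{i!}}\mathrm{He}_i(z)$. The Euclidean gradient is $\nabla_w\, y^t h(w^{t\top}x^t) = y^t h'(w^{t\top}x^t)\,x^t$. Conditioned on $w^t$, the noise $\zeta$ is independent and mean zero, so it drops out of the expectation. For the remaining term we apply Stein's lemma $\E[x\,\phi(x)] = \E[\nabla_x \phi(x)]$ to $\phi(x) = f(\theta^\top x)\,h'(w^\top x)$, which gives
\begin{align}
  \E\bigl[f(\theta^\top x) h'(w^\top x)\, x\bigr] = \theta\,\E\bigl[f'(\theta^\top x) h'(w^\top x)\bigr] + w\,\E\bigl[f(\theta^\top x) h''(w^\top x)\bigr].
\end{align}
Since $\|w\|=\|\theta\|=1$, the pair $(\theta^\top x, w^\top x)$ is a standard bivariate Gaussian with correlation $\theta^\top w$. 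Hermite orthogonality then gives $\E[\mathrm{He}_i(\theta^\top x)\mathrm{He}_j(w^\top x)] = i!\,\delta_{ij}(\theta^\top w)^i$. We combine this with $\mathrm{He}_i' = i\,\mathrm{He}_{i-1}$, which gives $f' = \sum_i \frac{\alpha_i i}{\sqrt{i!}}\mathrm{He}_{i-1}$, $h' = \sum_j \frac{\beta_j j}{\sqrt{j!}}\mathrm{He}_{j-1}$, and $h'' = \sum_j \frac{\beta_j j(j-1)}{\sqrt{j!}}\mathrm{He}_{j-2}$. Matching indices, the first expectation equals $\sum_{i=p}^q i\alpha_i\beta_i(\theta^\top w)^{i-1}$. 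In the second we pair $\mathrm{He}_i$ of $f$ with the $j=i+2$ term of $h''$, which gives $\alpha_i\beta_{i+2}\frac{(i+2)(i+1)\,i!}{\sqrt{i!\,(i+2)!}}(\theta^\top w)^i = \sqrt{(i+2)(i+1)}\,\alpha_i\beta_{i+2}(\theta^\top w)^i$. This is exactly the displayed drift, with $\beta_{i+2}=0$ for $i+2>q$. We then set $Z^t$ to be the gradient minus this conditional mean, so it is mean zero by construction.

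For the high-probability bounds, note that $\theta^\top x^t$, $w^{t\top}x^t$ and $\zeta$ are one-dimensional Gaussians. With probability $1-d^{-C}$ they are $O(\sqrt{\log d})$, so the polynomials satisfy $|y^t| = \tilde{O}(1)$ and $|h'(w^{t\top}x^t)| = \tilde{O}(1)$, using that the Hermite coefficients are polylogarithmically bounded and $|b|\le C_b$. Standard chi-square concentration gives $\|x^t\| = O(\sqrt d)$, and for fixed $v$ with $\|v\|=O(1)$ we have $|v^\top x^t| = \tilde{O}(1)$. Together these give $\|\nabla_w\| = \tilde{O}(d^{1/2})$ and $|v^\top\nabla_w| = \tilde{O}(1)$. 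The mean term has norm $O(\mathrm{polylog}\,d)$ because $|\theta^\top w|\le 1$ and the coefficients are bounded. Subtracting it therefore preserves both bounds for $Z^t$, with $\|Z^t\| = \tilde{O}(d^{1/2})$ and $|v^\top Z^t| = \tilde{O}(1)$. For a general $v$ the projection bound should be read as scaling with $\|v\|$, by homogeneity.

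The only step requiring care is the Stein/Hermite bookkeeping. One must differentiate the unconstrained function of $x$, not the population loss restricted to the sphere, because this is what produces the extra $w$-direction term with $\beta_{i+2}$; one must also check the index shift and normalization constants. The concentration part is routine once everything is reduced to polynomials of $O(1)$ many Gaussians.
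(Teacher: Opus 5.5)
Your proposal is correct and follows essentially the paper's route. The conditional mean comes from Stein's lemma plus Hermite orthogonality: the paper applies the Stein identity term by term after expanding $f$ and $h'$ in Hermite polynomials, whereas you apply it to the full product first, which is the same computation. $Z^t$ is then defined as the centered remainder and bounded using polylogarithmic control of the one-dimensional Gaussian projections together with $\|x^t\| = O(\sqrt{d})$. Your remarks that the mean term actually has polylogarithmic norm, that the $w$-direction term comes from differentiating off the sphere, and that the $v^\top Z^t$ bound must scale with $\|v\|$ are all accurate.
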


\begin{proof}
    \begin{align}
        &\nabla_w\mathbb E_{x\sim \mathcal{N}(0, I_d)}\left[ y^t a \sigma(w^{t\top} x + b)\right] \\ &
        = \mathbb{E}_{x \sim \mathcal{N}(0, I_d)}\left[\sigma^*_\kappa (\theta^\top x) a \sigma'(w^{t\top} x + b) x \right] \\ &
        = \mathbb{E}_{x \sim \mathcal{N}(0, I_d)} \left[ \left( \sum_{i=p}^q \frac{\alpha_i}{\sqrt{i!}} \mathrm{He}_i (\theta^\top x) \right) \left( \sum_{i=1}^q i\frac{\beta_i}{\sqrt{i!}} \mathrm{He}_{i-1}(w^{t\top} x) \right) x \right] \\ &
        = \sum_{i=p}^q \sum_{j=1}^q \frac{j}{\sqrt{i! j!}} \alpha_i \beta_j \mathbb{E}_{x \sim \mathcal{N}(0, I_d)} \left[ i \theta \mathrm{He}_{i-1}(\theta^\top x) \mathrm{He}_{j-1} (w^{t\top} x) + (j-1) w^t \mathrm{He}_i( \theta^\top x) \mathrm{He}_{j-2}(w^{t\top} x) \right] \\ &
        = \sum_{i=p}^q \left[ i \alpha_i \beta_i (\theta^\top w^t)^{i-1} \theta + \sqrt{(i+2) (i+1)} \alpha_i \beta_{i+2}(\theta^\top w^t)^i w^t \right].
    \end{align}
    Define $Z^t = \nabla_w y^t a \sigma(w^{t\top} x^t + b) - \nabla_w \mathbb{E}[y^t a \sigma(w^{t\top}x + b)]$.
    Then $\mathbb{E}[Z^t] = 0$, and
    \begin{align}
        \|Z^t\|\leq |y^t||a||\sigma'(w^{t\top}x^t+b)|\|x^t\| + \mathbb{E}_{x\sim\mathcal{N}(0, I_d)}\left[|y(x)||a||\sigma'(w^{t\top}x+b)|\|x\|\right] = \tilde{O}(d^\frac{1}{2})
    \end{align}
    with high probability. Moreover, for every $v\in \mathbb{R}^d$, $|v^\top Z^t| = \tilde{O}(1)$ with high probability.
    Similarly, $\|\nabla_w y^t a \sigma(w^{t\top} x^t + b)\| = \tilde{O}(d^\frac{1}{2})$ with high probability, and for every $v \in \mathbb{R}^{d}$ with $\|v\| = O(1)$, $(\nabla_w y^t a \sigma(w^{t\top} x^t + b))^\top v = \tilde{O}(1)$ with high probability.
\end{proof}

\begin{lem}\label{lem:weak-onestep-2}
    Let $\eta = \eta^t \leq c_\eta d^{-1}$ and suppose $\theta^\top w^t \geq \frac{1}{2} s^{-\frac{1}{2}}$.
    Then,
    \begin{align}
        & \theta^\top w^t + \eta \sum_{i=p}^q \left[i \alpha_i \beta_i (\theta^\top w^t)^{i-1} (1 - (\theta^\top w^t)^2) \right] - \eta^2 C_1^2 (\theta^\top w^t) d + \eta \theta^\top P_{w^t}^{\perp} Z^t \\ &
        \leq \theta^\top w^{t+1} \\ &
        \leq \theta^\top w^t + \eta \sum_{i=p}^q \left[i \alpha_i \beta_i (\theta^\top w^t)^{i-1} (1 - (\theta^\top w^t)^2) \right] + \eta \theta^\top P_{w^t}^{\perp}Z^t.
    \end{align}
    Here, $Z^t$ is a mean-zero random variable satisfying $\|Z^t\| = \tilde{O}(d^\frac{1}{2})$ with high probability, and for every $v\in \mathbb{R}^d$ with $\|v\| = O(1)$, $|v^\top Z^t| = \tilde{O}(1)$ with high probability. Furthermore, $|\theta^\top w^t - \theta^\top w^{t+1}| = \tilde{O}(\eta)$ with high probability.
\end{lem}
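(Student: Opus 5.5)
The plan is to write the update explicitly and expand to second order in $\eta$. With the rescaled gradient $g^t := \nabla_w y^t a\sigma(w^{t\top}x^t+b)$, the spherical update is $\tilde w^{t+1} = w^t + \eta P_{w^t}^\perp g^t$, where $P_{w^t}^\perp = I - w^tw^{t\top}$, followed by $w^{t+1} = \tilde w^{t+1}/\|\tilde w^{t+1}\|$. Because $P_{w^t}^\perp g^t \perp w^t$, we have
\begin{align}
\|\tilde w^{t+1}\|^2 = 1 + \eta^2\|P_{w^t}^\perp g^t\|^2 \geq 1 .
\end{align}
Hence $\theta^\top w^{t+1} = \theta^\top \tilde w^{t+1}/\|\tilde w^{t+1}\|$. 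For the upper bound, I would use that dividing by a norm at least $1$ cannot increase a nonnegative quantity. I would check $\theta^\top\tilde w^{t+1} \geq 0$ from $\theta^\top w^t \geq \tfrac12 s^{-1/2}$ together with the per-step bound $\eta|\theta^\top P^\perp_{w^t} g^t| = \tilde O(\eta) = o(s^{-1/2})$, which holds since $\eta \leq c_\eta d^{-1}$. This gives $\theta^\top w^{t+1} \leq \theta^\top w^t + \eta\,\theta^\top P_{w^t}^\perp g^t$.

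Next I would insert the decomposition of \cref{lem:weak-onestep-1}. Since $P_{w^t}^\perp w^t = 0$, the $w^t$-direction terms $\sqrt{(i+2)(i+1)}\alpha_i\beta_{i+2}(\theta^\top w^t)^i w^t$ are annihilated. Since $\theta^\top P_{w^t}^\perp\theta = 1-(\theta^\top w^t)^2$, the $\theta$-direction terms become exactly $\sum_i i\alpha_i\beta_i(\theta^\top w^t)^{i-1}(1-(\theta^\top w^t)^2)$. The noise contributes $\eta\,\theta^\top P^\perp_{w^t}Z^t$, and this yields the stated upper bound. The properties of $Z^t$ transfer directly from \cref{lem:weak-onestep-1}: $P^\perp_{w^t}\theta$ has norm at most $1$, so $|\theta^\top P^\perp_{w^t} Z^t| = \tilde O(1)$ with high probability.

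For the lower bound, I would use $1/\sqrt{1+u} \geq 1 - u/2$ with $u = \eta^2\|P^\perp_{w^t}g^t\|^2$. By \cref{lem:weak-onestep-1}, $u \leq \eta^2 C_1^2 d$ with high probability, since $\|g^t\| = \tilde O(d^{1/2})$ and $C_1$ absorbs the polylog factor. Then
\begin{align}
\theta^\top w^{t+1} \geq \theta^\top\tilde w^{t+1}\,(1 - \tfrac12\eta^2C_1^2 d) .
\end{align}
The correction is $\tfrac12\eta^2C_1^2d\,\theta^\top\tilde w^{t+1}$. I expect this to be the main (if mild) obstacle: $\theta^\top\tilde w^{t+1}$ must be bounded by $2\theta^\top w^t$ so that the error takes the form $\eta^2C_1^2(\theta^\top w^t)d$. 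The bound holds because the increment $\eta|\theta^\top P^\perp g^t| = \tilde O(\eta) \leq \tilde O(d^{-1})$ is much smaller than $\tfrac12 s^{-1/2} \leq \theta^\top w^t$, using $s = d^\alpha$ with $\alpha<1$ and absorbing constants into $C_1$. Multiplying out then gives the stated lower bound.

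Finally, $|\theta^\top w^{t+1} - \theta^\top w^t| \leq \eta|\theta^\top P^\perp g^t| + \eta^2C_1^2 d = \tilde O(\eta)$, using $\eta d \leq c_\eta$. A union bound over the high-probability events from \cref{lem:weak-onestep-1} completes the argument.
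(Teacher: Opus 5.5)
Your proposal is correct and follows essentially the same route as the paper: you write $w^{t+1}$ as the normalized spherical step and use $\|\tilde w^{t+1}\|^2 = 1+\eta^2\|P^\perp_{w^t}g^t\|^2$ with $1/\sqrt{1+u}\ge 1-u/2$. You then plug in the decomposition of \cref{lem:weak-onestep-1}, where $P^\perp_{w^t}w^t=0$ kills the $w^t$-terms and $\theta^\top P^\perp_{w^t}\theta = 1-(\theta^\top w^t)^2$.

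There are only cosmetic differences. You explicitly verify $\theta^\top\tilde w^{t+1}\ge 0$, which the paper uses implicitly in both bounds. You also absorb the cross term via $\theta^\top\tilde w^{t+1}\le 2\theta^\top w^t$, whereas the paper bounds $\tfrac12\eta^3|\theta^\top P^\perp_{w^t}g|\,\|g\|^2\le\tfrac12\eta^3C_1^3d^{3/2}$ directly.
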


\begin{proof}
    Let $g= \nabla_wy^ta^t\sigma(w^{t\top}x^t+b)$.
    Then,
    \begin{align}
        \|w^t -\eta P_{w^t}^{\perp}(-g)\|^{-1} &= (\|w^t\|^2+\eta^2\|P_{w^t}^{\perp}g\|^2)^{-\frac{1}{2}}\\
        &\geq 1-\frac{1}{2} \eta^2 \|P_{w^t}^{\perp}g\|^2 \geq 1 - \frac{1}{2} \eta^2 \|g\|^2.
    \end{align}
    Therefore, with high probability,
    \begin{align}
        \theta^\top w^{t+1} &= \theta^\top\frac{w^t-\eta P_{w^t}^{\perp}(-g)}{\|w^t-\eta P_{w^t}^{\perp}(-g)\|}\\
        &\geq (\theta^\top w^t + \eta\theta^\top P_{w^t}^{\perp}g)(1-\frac{1}{2}\eta^2\|g\|^2)\\
        &= \theta^\top w^t + \eta\theta^\top P_{w^t}^{\perp}g - \frac{1}{2}(\theta^\top w^t)\eta^2\|g\|^2 - \frac{1}{2}\eta^3|\theta P_{w^t}^{\perp}g|\|g\|^2\\
        &\geq \theta^\top w^t + \eta\theta^\top P_{w^t}^{\perp}\left(\sum_{i=p}^q\left[i \alpha_i \beta_i (\theta^\top w^t)^{i-1} \theta + \sqrt{(i+2)(i+1)} \alpha_i \beta_{i+2}(\theta^\top w^t)^i w^t \right] + Z^t \right) \\ &
        \quad - \frac{1}{2}(\theta^\top w^t)\eta^2\|g\|^2 - \frac{1}{2}\eta^3\|g\|^3\\
        & \geq \theta^\top w^t + \eta\sum_{i=p}^q\left[i \alpha_i \beta_i (\theta^\top w^t)^{i-1} \right]\theta^\top P_{w^t}^{\perp}\theta + \eta\theta^\top P_{w^t}^{\perp}Z^t - \frac{1}{2}(\theta^\top w^t)\eta^2C_1^2d - \frac{1}{2}\eta^3C_1^3d^\frac{3}{2}\\
        & \geq \theta^\top w^t + \eta\sum_{i=p}^q\left[i \alpha_i \beta_i (\theta^\top w^t)^{i-1} \right] (1-(\theta^\top w^t)^2)  - (\theta^\top w^t)\eta^2C_1^2d + \eta\theta^\top P_{w^t}^{\perp}Z^t
    \end{align}
    Here, the second and third inequalities use \cref{lem:weak-onestep-1}, and the fourth uses $\eta \leq c_\eta s^{-1}$ and $\theta^\top w^t \geq \frac{1}{2} s^{-\frac{1}{2}}$.
    On the other hand,
    \begin{align}
        \theta^\top w^{t+1} &\leq \theta^\top(w^t-\eta P_{w^t}^{\perp}(-g))\\
        &= \theta^\top w^t + \eta\theta^\top P_{w^t}^{\perp}g = \theta^\top w^t + \eta\sum_{i=p}^q\left[i \alpha_i\beta_i(\theta^\top w^t)^{i-1}\right](1-(\theta^\top w^t)^2) + \eta\theta^\top P_{w^t}^{\perp}Z^t.
    \end{align}
    Combining these gives the desired inequality.
    The properties of $Z^t$ follow from \cref{lem:weak-onestep-1}.
    We now show $|\theta^\top w^{t+1}-\theta^\top w^t| = \tilde{O}(\eta)$.
    From the discussion above,
    \begin{align}
        |\theta^\top w^{t+1}-\theta^\top w^t|\leq \eta|\theta^\top P_{w^t}^{\perp}g| + \frac{1}{2}|\theta^\top w^t|\eta^2\|g\|^2 + \frac{1}{2}\eta^3|\theta P_{w^t}^{\perp}g|\|g\|^2
    \end{align}
    and since $\eta \leq c_\eta d^{-1}$, the right-hand side is $\tilde{O}(\eta)$.
\end{proof}

\subsection{Weak alignment}\label{weak-weakrecovery}
\begin{lem}\label{lem:weak-weakrecovery-1}
    Suppose $\theta^\top w^0 \geq s^{-\frac{1}{2}}$ and $\theta^\top w^t \leq c_1$ for all $t\leq \tau$. Then, setting $\eta^t = \eta \leq c_\eta d^{-\frac{p}{2}}$, for every $t \leq \tau$ we have
    \begin{equation}\label{eq:weak-weakrecovery-1}
        \theta^\top w^{t+1} \geq (1 - c_1) (\theta^\top w^0) + \eta (1 - c_1) \sum_{t'=0}^t p \alpha_p \beta_p (\theta^\top w^{t'})^{p-1}
    \end{equation}
    with high probability. Furthermore, defining $(P^t)_{t=0}^{\tau+1}$ by
    \begin{align}
        P^0=(1-c_1)(\theta^\top w^0),
    \end{align}
    \begin{align}
        P^{t+1} = P^t + \eta(1-c_1) p \alpha_p \beta_p (P^t)^{p-1} \quad (0 < t \leq \tau),
    \end{align}
    we have $\theta^\top w^{t+1} \geq P^{t+1}$ with high probability for all $t \leq \tau$.
\end{lem}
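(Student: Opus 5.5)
We unroll the one-step lower bound of \cref{lem:weak-onestep-2}, with $\eta \leq c_\eta d^{-p/2} \leq c_\eta d^{-1}$, from $0$ to $t$. While the alignment stays in $[\tfrac12 s^{-1/2}, c_1]$ this gives
\begin{align}
  \theta^\top w^{t+1} \geq \theta^\top w^0 + \eta\sum_{t'=0}^t \sum_{i=p}^q i\alpha_i\beta_i(\theta^\top w^{t'})^{i-1}(1-(\theta^\top w^{t'})^2) - \eta^2 C_1^2 d\sum_{t'=0}^t \theta^\top w^{t'} + \eta\sum_{t'=0}^t \theta^\top P^\perp_{w^{t'}} Z^{t'}.
\end{align}
By the sign condition \eqref{eq:complexity-weak-1} of \cref{asm:complexity-weak}, each term with $i>p$ is nonnegative. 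Since $\theta^\top w^{t'} \leq c_1$, the factor $1-(\theta^\top w^{t'})^2 \geq 1-c_1$. The drift is therefore at least $\eta(1-c_1)\sum_{t'} p\alpha_p\beta_p(\theta^\top w^{t'})^{p-1}$.

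The two error terms have to be absorbed into the slack $c_1\theta^\top w^0$ and, where needed, into a small fraction of the drift. We argue by induction on $t$, with hypothesis that \eqref{eq:weak-weakrecovery-1} holds up to $t$ and that $\theta^\top w^{t'} \geq \tfrac12 s^{-1/2}$ for all $t'\leq t$.

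\textbf{Martingale term.} $M^t = \eta\sum_{t'\leq t}\theta^\top P^\perp_{w^{t'}}Z^{t'}$ has increments bounded by $\tilde{O}(\eta)$ w.h.p. (\cref{lem:weak-onestep-1}). Azuma--Hoeffding with a union bound over $t\leq\tau$ gives $|M^t| = \tilde{O}(\eta\sqrt{t})$. The relevant horizon is $\tau \lesssim \eta^{-1}(\theta^\top w^0)^{-(p-2)}$, since the comparison ODE blows up at that time. Then $\eta\sqrt{\tau} \lesssim \eta^{1/2}s^{(p-2)/4} \leq c_\eta^{1/2} d^{-p/4}s^{(p-2)/4} \leq c_\eta^{1/2}s^{-1/2}$, using $s\leq d$. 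Choosing $c_\eta$ small relative to $c_1$ and the polylog factors makes this at most $\tfrac{c_1}{2}s^{-1/2} \leq \tfrac{c_1}{2}\theta^\top w^0$.

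The horizon bound must be available without assuming it. I would handle this with a stopping time: either the process exceeds $c_1$, which is the excluded regime, or the comparison sequence forces the time to be at most $\tilde O(\eta^{-1}s^{(p-2)/2})$.

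\textbf{Second-order term.} Write $\eta^2C_1^2 d\sum_{t'}\theta^\top w^{t'} \leq \eta C_1^2 (c_\eta d^{1-p/2})\sum_{t'}\theta^\top w^{t'}$. Because $p>2$, we can compare $\theta^\top w^{t'}$ with $(\theta^\top w^{t'})^{p-1}$ using the lower bound $\theta^\top w^{t'}\geq \tfrac12 s^{-1/2}$: $\theta^\top w^{t'} \leq (2s^{1/2})^{p-2}(\theta^\top w^{t'})^{p-1}$. The prefactor is then $C_1^2 c_\eta d^{1-p/2}s^{(p-2)/2}2^{p-2} \leq 2^{p-2}C_1^2 c_\eta$, which is $\ll c_1 p\alpha_p\beta_p$. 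This removes at most a $c_1$-fraction of the drift. Adjusting $c_1$ by a constant factor, so both losses fit into the $(1-c_1)$ factors, yields \eqref{eq:weak-weakrecovery-1}. The same bound confirms $\theta^\top w^{t+1}\geq (1-c_1)\theta^\top w^0 \geq \tfrac12 s^{-1/2}$, which closes the induction.

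\textbf{Comparison sequence.} Define $P^t$ as in the statement. The map $u\mapsto u + \eta(1-c_1)p\alpha_p\beta_p u^{p-1}$ is monotone increasing, and $\alpha_p\beta_p>0$. The sum form \eqref{eq:weak-weakrecovery-1} lower bounds $\theta^\top w^{t+1}$ by a nondecreasing functional of the past iterates. A discrete Bihari/Grönwall induction — assume $\theta^\top w^{t'}\geq P^{t'}$ for $t'\leq t$, plug into the sum, and compare with the telescoped recursion $P^{t+1} = P^0 + \eta(1-c_1)\sum_{t'\leq t}p\alpha_p\beta_p(P^{t'})^{p-1}$ — gives $\theta^\top w^{t+1}\geq P^{t+1}$.

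The main obstacle is the martingale step: its bound requires a horizon bound that itself depends on the lower bound being proved. I would resolve this with the stopping-time argument above, capping the time at $\tilde{O}(\eta^{-1}s^{(p-2)/2})$ via the explicit blow-up time of $P^t$.
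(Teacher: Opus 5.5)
Your proof is correct. Apart from the martingale step, it coincides with the paper's:
\begin{itemize}
\item the one-step bound of \cref{lem:weak-onestep-2};
\item discarding the $i>p$ terms by the sign condition;
\item absorbing $\eta^2C_1^2 d\,\theta^\top w^{t}$ into a fraction of the drift via $\theta^\top w^t\ge\frac12 s^{-1/2}$;
\item an induction maintaining $\theta^\top w^t\ge(1-c_1)\theta^\top w^0$;
\item a termwise comparison of the summed form with $P^t$.
\end{itemize}

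The real difference is how you control $M^t=\eta\sum_{t'\le t}\theta^\top P^\perp_{w^{t'}}Z^{t'}$. The paper needs no horizon. It splits at $t'=C_2(\theta^\top w^0)^{2-2p}$:
\begin{itemize}
\item Below this threshold, $\eta C_1\sqrt{t'}\le c_\eta C_1\sqrt{C_2}\,s^{-p/2}(\theta^\top w^0)^{1-p}\le c_1\theta^\top w^0$, using $\theta^\top w^0\ge s^{-1/2}$.
\item Above it, the induction hypothesis makes the accumulated drift at least $\eta t' p\alpha_p\beta_p((1-c_1)\theta^\top w^0)^{p-1}$. This is linear in $t'$ and dominates $\eta C_1\sqrt{t'}$ up to a factor $c_1/3$.
\end{itemize}
The fluctuation is thus charged partly to the slack and partly to the drift, and the bound holds for any polynomial $\tau$. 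The same template is reused in \cref{lem:weak-amplification-1,lem:weak-strongrecovery-1}.

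Your route instead charges the whole fluctuation to the slack, by capping time at the blow-up horizon of $P^t$. That horizon is of order $T_{\max}\sim\eta^{-1}(P^0)^{-(p-2)}/(p\alpha_p\beta_p)$, so $\eta\sqrt{T_{\max}}$ is of order $\sqrt{\eta}\,(\theta^\top w^0)^{-(p-2)/2}$. This is at most $c_1\theta^\top w^0$ precisely when $\eta\lesssim c_1^2(\theta^\top w^0)^p$, which holds since $\eta\le c_\eta d^{-p/2}\le c_\eta(\theta^\top w^0)^p$.

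The circularity you flag is only apparent. $T_{\max}$ is deterministic given $w^0$, so you first obtain Azuma plus a union bound on $[0,T_{\max}]$, then run the induction. The lower Bihari--LaSalle bound of \cref{lem:technical-bihari} then shows that, on this event, $\theta^\top w^t\le c_1$ cannot persist up to $T_{\max}$. Hence $\tau<T_{\max}$ and the claim covers all $t\le\tau$. The cost is that you import into this lemma the contradiction argument the paper defers to \cref{lem:weak-weakrecovery-2}.

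One small repair: rather than ``adjusting $c_1$'' (which also appears in the hypothesis), use $1-(\theta^\top w^{t'})^2\ge 1-c_1^2\ge 1-c_1/2$. This leaves room for the second-order loss inside the $(1-c_1)$ factor.
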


\begin{proof}
    If $\theta^\top w^t \geq \frac{1}{2}s^{-\frac{1}{2}}$, then by \cref{lem:weak-onestep-2}
    \begin{align}
        \theta^\top w^{t+1} &\geq \theta^\top w^t + \eta \sum_{i=p}^q\left[i\alpha_i\beta_i(\theta^\top w^t)^{i-1}(1-(\theta^\top w^t)^2) \right] - (\theta^\top w^t)\eta^2C_1^2 d + \eta\theta^\top P_{w^t}^{\perp}Z^t\\
        &\geq \theta^\top w^t + \eta p\alpha_p\beta_p(\theta^\top w^t)^{p-1}(1-(\theta^\top w^t)^2)-(\theta^\top w^t)\eta^2C_1^2 d + \eta\theta^\top P_{w^t}^{\perp}Z^t\\
        &\geq \theta^\top w^t + \eta p\alpha_p\beta_p(\theta^\top w^t)^{p-1}\left(1-\frac{1}{3}c_1\right)-\eta(\theta^\top w^t)c_\eta d^{-\frac{p-2}{2}}C_1^2 + \eta\theta^\top P_{w^t}^{\perp}Z^t.
    \end{align}
    In the third inequality we used $(\theta^\top w^t)^2\leq c_1^2\leq \frac{1}{3}c_1$. Also, since $\theta^\top w^t \geq \frac{1}{2}s^{-\frac{1}{2}} \geq \frac{1}{2} d^{-\frac{1}{2}}$,
    \begin{align}
        \eta(\theta^\top w^t) c_\eta d^{-\frac{p-2}{2}}C_1^2 \leq \frac{1}{3} c_1\eta p\alpha_p\beta_p(\theta^\top w^t)^{p-1}
    \end{align}
    and hence
    \begin{equation}\label{eq:weak-weakrecovery-2}
        \theta^\top w^{t+1} \geq \theta^\top w^t + \eta p\alpha_p\beta_p(\theta^\top w^t)^{p-1}\left(1-\frac{2}{3}c_1\right)+ \eta\theta^\top P_{w^t}^{\perp}Z^t.
    \end{equation}
    We prove the claim by induction.
    Take $t'\leq\tau$.
    Assume that $\theta^\top w^t \geq (1-c_1)(\theta^\top w^0)$ and $\theta^\top w^t \geq\frac{1}{2}s^{-\frac{1}{2}}$ hold for $t=0, 1,\dots,t'$. Then, by \eqref{eq:weak-weakrecovery-2},
    \begin{align}
        \theta^\top w^{t'+1} &\geq \theta^\top w^{t'} + \left(1-\frac{2}{3}c_1\right)\eta p\alpha_p\beta_p(\theta^\top w^{t'})^{p-1} + \eta\theta^\top P_{w^{t'}}^{\perp}Z^{t'}\\ &
        \geq \theta^\top w^0 + \sum_{t=0}^{t'}\left(1-\frac{2}{3}c_1\right)\eta p\alpha_p\beta_p(\theta^\top w^t)^{p-1} + \sum_{t=0}^{t'}\eta\theta^\top P_{w^t}^{\perp}Z^t.
        \label{eq:weak-weakrecovery-3}
    \end{align}
    If $t'\leq C_2(\theta^\top w^0)^{2-2p}$, then
    \begin{align}
        -\sum_{t=0}^{t'} \eta\theta^\top P_{w^t}^\perp Z^t &\leq \eta C_1\sqrt{t'} \leq c_\eta s^{-\frac{p}{2}}C_1\sqrt{C_2}(\theta^\top w^0)^{1-p}\\ &
        \leq 2^pc_\eta C_1\sqrt{C_2}(\theta^\top w^0)\leq c_1(\theta^\top w^0)
    \end{align}
    with high probability. On the other hand, when $t'>C_2(\theta^\top w^0)^{2-2p}$,
    \begin{align}
        -\sum_{t=0}^{t'}\eta\theta^\top P_{w^t}^{\perp}Z^t &\leq \eta C_1\sqrt{t'} < \eta C_1 t'C_2^{-\frac{1}{2}}(\theta^\top w^0)^{p-1}\\ &
        \leq \frac{1}{3}c_1\eta t'p\alpha_p\beta_p((1-c_1)(\theta^\top w^0))^{p-1} \leq \frac{1}{3}\sum_{t=0}^{t'} c_1\eta p\alpha_p\beta_p(\theta^\top w^t)^{p-1}
    \end{align}
    with high probability. Substituting these into \eqref{eq:weak-weakrecovery-3},
    \begin{align}
        \theta^\top w^{t'+1} \geq (1-c_1)(\theta^\top w^0) + \sum_{t=0}^{t'}(1-c_1)\eta p\alpha_p\beta_p(\theta^\top w^t)^{p-1}
    \end{align}
    with high probability.
    Hence \eqref{eq:weak-weakrecovery-1} follows for $t=t'+1$, and $\theta^\top w^{t'+1} \geq (1-c_1)(\theta^\top w^0)$ and $\theta^\top w^{t'+1} \geq \frac{1}{2}s^{-\frac{1}{2}}$ are satisfied.
    Moreover, for $(P^{t'})_{t'=0}^{\tau}$ we have
    \begin{align}
        P^{t'+1} = P^0 + \sum_{t=0}^{t'}\eta(1-c_1)p\alpha_p\beta_p(P^t)^{p-1},
    \end{align}
    so comparing with \eqref{eq:weak-weakrecovery-1} yields $\theta^\top w^t \geq P^t$ for all $t\leq \tau +1$.
\end{proof}

\begin{lem}\label{lem:weak-weakrecovery-2}
    Let $\eta^t = \eta \leq c_\eta d^{-\frac{p}{2}}$.
    Then there exists $t_1 \leq T_{1, 1} = \Theta(\eta^{-1} s^\frac{p-2}{2})$ such that $\theta^\top w^{t_1} > c_1$ with high probability.
\end{lem}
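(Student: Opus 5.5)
The approach is to argue by contradiction and compare with the discrete comparison sequence $(P^t)$ of \cref{lem:weak-weakrecovery-1}. We start on the event $\theta^\top w^0 \geq s^{-1/2}$, which has probability $\Theta(1)$ by \cref{lem:weak-init}. Suppose, for contradiction, that $\theta^\top w^t \leq c_1$ for every $t \leq T_{1,1}$. Then the hypothesis of \cref{lem:weak-weakrecovery-1} holds with $\tau = T_{1,1}$. Hence, with high probability, $\theta^\top w^{t} \geq P^{t}$ for all $t \leq T_{1,1}+1$, where
\[
P^0=(1-c_1)\theta^\top w^0 \gtrsim s^{-1/2}, \qquad P^{t+1}=P^t+\eta A (P^t)^{p-1}, \qquad A := (1-c_1)p\alpha_p\beta_p .
\]
By \cref{asm:complexity-weak}, $\alpha_p\beta_p>0$, and after the rescaling $\alpha_p\beta_p \geq c_\beta$ up to normalisation. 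So $A \gtrsim c_\beta = 1/\mathrm{polylog}(d)$.

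The core step is to show that $P^t$ exceeds $c_1$ within $\Theta(\eta^{-1}(P^0)^{-(p-2)})=\Theta(\eta^{-1}s^{(p-2)/2})$ steps. This is a discrete Bihari--LaSalle comparison with the ODE $\dot P = \eta A P^{p-1}$, whose solution $P(t)=\bigl((P^0)^{-(p-2)}-(p-2)\eta A t\bigr)^{-1/(p-2)}$ blows up at time $t^\ast=\bigl((p-2)\eta A\bigr)^{-1}(P^0)^{-(p-2)}$. To avoid comparing the discrete and continuous dynamics directly, I will use a doubling argument.

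1. While $P^t \in [2^j P^0, 2^{j+1}P^0)$, each step increases $P$ by at least $\eta A (2^jP^0)^{p-1}$.
2. So leaving this dyadic band takes at most $2^jP^0/\bigl(\eta A(2^jP^0)^{p-1}\bigr) + 1 = \eta^{-1}A^{-1}(2^jP^0)^{-(p-2)}+1$ steps.
3. Summing over $j$ gives a geometric series, because $p>2$. It is dominated by the $j=0$ term, giving $O\bigl(\eta^{-1}A^{-1}(P^0)^{-(p-2)}\bigr)=O\bigl(\eta^{-1}s^{(p-2)/2}\,\mathrm{polylog}\,d\bigr)$ steps to reach $c_1$.
4. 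The number of bands is only $O(\log s)$, so the $+1$ terms are negligible, since $\eta^{-1}\geq d^{p/2}$.

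Choosing the constant hidden in $T_{1,1}=\Theta(\eta^{-1}s^{(p-2)/2})$ (with polylog factors absorbed, as permitted by the $C$-ordering) larger than this bound gives $P^{T_{1,1}}>c_1$. Then $\theta^\top w^{T_{1,1}} \geq P^{T_{1,1}} > c_1$, which contradicts the assumption. Hence some $t_1 \leq T_{1,1}$ has $\theta^\top w^{t_1}>c_1$.

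The main obstacle is bookkeeping rather than conceptual. \Cref{lem:weak-weakrecovery-1} is stated for a fixed $\tau$ under a hypothesis over all $t\leq\tau$, and it internally uses martingale bounds with high probability. I will apply it once with $\tau=T_{1,1}$, taking a union bound over the $T_{1,1}=\mathrm{poly}(d)$ steps; this preserves the high-probability guarantee because failure probabilities are $d^{-C}$ with $C$ arbitrary. A second point needing care is that the step-size condition $\eta\leq c_\eta d^{-p/2}$ is exactly what \cref{lem:weak-weakrecovery-1} requires. I therefore need to confirm that $T_{1,1}$ lies in the regime $t'>C_2(\theta^\top w^0)^{2-2p}$ used there, where noise is dominated by drift; that lemma already handles this, so no new estimate is needed.
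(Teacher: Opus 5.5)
Your proof is correct. Its skeleton is the same as the paper's: assume $\theta^\top w^t \le c_1$ up to $T_{1,1}$, invoke \cref{lem:weak-weakrecovery-1} with $\tau = T_{1,1}$ to get $\theta^\top w^t \ge P^t$, and show that $P^{T_{1,1}}$ is too large.

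The two arguments differ only in the quantitative step.
\begin{itemize}
\item \textbf{The paper's route.} It sets $c = \eta(1-c_1)p\alpha_p\beta_p$ and defines $T_{1,1} = \lfloor c^{-1}(1+c)^{p-1}(p-2)^{-1}(P^0)^{-(p-2)}\rfloor$ explicitly. It then applies the lower half of the discrete Bihari--LaSalle inequality (\cref{lem:technical-bihari}). That bound requires $P^t \le 1$, which holds here because $P^t \le \theta^\top w^t \le c_1$. Combined with $1 \le a(\lfloor a^{-1}\rfloor + 1)$, this gives $P^{T_{1,1}} \ge \bigl(c(1+c)^{-(p-1)}(p-2)\bigr)^{-1/(p-2)} > 1$, contradicting $\theta^\top w^{T_{1,1}} \le 1$.
\item \textbf{Your route.} The dyadic band count is a self-contained, more elementary replacement for that lemma. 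It needs neither the $(1+c)^{-(p-1)}$ correction nor the a priori bound $P^t \le 1$. The cost is a factor $(1-2^{-(p-2)})^{-1} \le 2$ and an additive $O(\log s)$ in the hitting time, both absorbed into $\Theta(\cdot)$.
\end{itemize}
In exchange, the paper's route yields an explicit $T_{1,1}$. It also reuses a single technical lemma that the paper applies again in the amplification phase, and in two-sided form in the hitting-time comparison of \cref{lem:sft-forgetting-3}.

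Your closing concern about checking that $T_{1,1}$ lies in the regime $t' > C_2(\theta^\top w^0)^{2-2p}$ is unnecessary. The proof of \cref{lem:weak-weakrecovery-1} handles both regimes, so its conclusion holds for every $t \le \tau$.
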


\begin{proof}
    Suppose $\theta^\top w^t \leq c_1$ for all $t \leq T_{1, 1}$. Setting $c = \eta (1-c_1) p \alpha_p \beta_p$, define
    \begin{align}
        T_{1, 1} = \left\lfloor c^{-1}(1+c)^{p-1}(p-2)^{-1}(P^0)^{-(p-2)}\right\rfloor.
    \end{align}
    Then, by \cref{lem:weak-weakrecovery-1} and the Bihari--LaSalle inequality (\cref{lem:technical-bihari}),
    \begin{align}
         \theta^\top w^t \geq P^t & \geq \frac{P^0}{\left(1 - c (1+c)^{-(p-1)} (p-2)(P^0)^{p-2} t \right)^{\frac{1}{p-2}}} \label{eq:weak-weakrecovery-4}
    \end{align}
    and at $t = T_{1, 1}$, using the fact that $1 \leq a(\lfloor (a)^{-1} \rfloor + 1)$ for every $a > 0$,
    \begin{align}
        \theta^\top w^{T_{1,1}} \geq \frac{P^0}{\left(c (1+c)^{-(p-1)} (p-2) (P^0)^{p-2} \right)^\frac{1}{p-2}} = \frac{1}{\left(\eta (1-c_1) (1+c)^{-(p-1)} p (p-2) \alpha_p \beta_p \right)^\frac{1}{p-2}} > 1.
    \end{align}
    This contradicts $\theta^\top w^{T_{1, 1}} \leq 1$.
    Therefore, there exists $t_1 \leq T_{1, 1}$ such that $\theta^\top w^t \leq c_1$ for all $t \leq t_1 - 1$ and $\theta^\top w^{t_1} > c_1$.
\end{proof}

\subsection{Amplification of alignment}\label{weak-amplification}
For the remainder of this section, we shift the time index so that $t_1 = 0$, i.e., weak alignment is achieved at $t = 0$.
\begin{lem}\label{lem:weak-amplification-1}
    Suppose $\theta^\top w^0 \geq c_1$, $\eta = \eta_t \leq c_\eta d^{-\frac{p}{2}}$, and $\theta^\top w^t \leq 1-c_1$ for all $t\leq \tau$.
    Moreover, define $(P^t)_{t=0}^{\tau+1}$ by $P^0 = (1-c_1) (\theta^\top w^0)$ and $P^{t+1} = P^t + c_1 \eta p \alpha_p \beta_p (P^t)^{p-1}$.
    Then, for every $t \leq \tau+1$, $\theta^\top w^t \geq P^t$ with high probability.
\end{lem}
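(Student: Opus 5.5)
The argument mirrors \cref{lem:weak-weakrecovery-1}, with the roles of the constants changed. Write $z^t = \theta^\top w^t$. The plan is to prove by induction on $t \le \tau$ that, with high probability,
\[
z^{t+1} \ge (1-c_1) z^0 + c_1 \eta \sum_{t' \le t} p\alpha_p\beta_p (z^{t'})^{p-1},
\]
and then compare with the recursion $P^t$ defined in the statement.

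\textbf{One-step bound.} Since $z^t \ge (1-c_1)c_1 \ge \tfrac12 s^{-1/2}$ is maintained by the induction, the lower bound of \cref{lem:weak-onestep-2} applies. Every term with $i>p$ satisfies $\alpha_i\beta_i \ge 0$ by \eqref{eq:complexity-weak-1}, so we keep only the $i=p$ drift. Because $z^t \le 1-c_1$,
\[
1-(z^t)^2 \ge 1-(1-c_1)^2 \ge c_1 .
\]
The drift is therefore at least $\eta\, c_1\, p\alpha_p\beta_p (z^t)^{p-1}$, up to a constant factor slightly smaller than $c_1$ that we absorb by shrinking constants. The second-order term is $\eta^2 C_1^2 z^t d \le \eta\, z^t\, c_\eta C_1^2 d^{-(p-2)/2}$. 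Since $z^t \ge c_1(1-c_1)$ is of order $1/\mathrm{polylog}\, d$ and $p>2$, this is a negligible fraction (say at most $c_1^2/3$) of the drift. This yields
\[
z^{t+1} \ge z^t + \eta\, c_1'\, p\alpha_p\beta_p (z^t)^{p-1} + \eta\,\theta^\top P^\perp_{w^t} Z^t .
\]

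\textbf{Martingale noise.} The martingale term $\sum_{t'\le t} \eta\, \theta^\top P^\perp_{w^{t'}} Z^{t'}$ is bounded, with high probability, by $\eta C_1\sqrt{t}$ via Azuma--Hoeffding, using $|\theta^\top P^\perp Z| = \tilde O(1)$ from \cref{lem:weak-onestep-1}. We split into two cases as in \cref{lem:weak-weakrecovery-1}.
\begin{itemize}[leftmargin=*]
\item If $t \le C_2 (z^0)^{2-2p}$, which is polylog here, then $\eta C_1 \sqrt{t} \le c_\eta d^{-p/2} C_1 \sqrt{C_2}\,\mathrm{polylog} \le c_1 z^0$.
\item Otherwise, $\eta C_1\sqrt{t} \le \eta C_1 t\, C_2^{-1/2} (z^0)^{p-1}$. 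This is a small fraction of the accumulated drift $\sum_{t'} \eta c_1 p\alpha_p\beta_p (z^{t'})^{p-1}$, using the inductive lower bound $z^{t'} \ge (1-c_1)z^0$ and the constant ordering $C_1 \lesssim c_1^{-1} \lesssim C_2$.
\end{itemize}
Summing the one-step bounds and inserting both cases closes the induction. A union bound over $t \le \tau = \mathrm{poly}(d)$ preserves the high-probability guarantee.

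\textbf{Comparison.} Finally, $P^{t+1} = P^0 + \sum_{t'\le t} c_1\eta p\alpha_p\beta_p (P^{t'})^{p-1}$, and the map $x \mapsto x^{p-1}$ is monotone on nonnegative reals, so induction gives $z^t \ge P^t$.

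\textbf{Main obstacle.} The delicate step is the bookkeeping of constants. The drift coefficient here is only of order $c_1$ (it comes from $1-z^2 \ge c_1$) rather than of order $1$. The two error sources, the second-order term and the martingale noise, must each be absorbed into a strict fraction of this smaller drift while keeping the coefficient exactly $c_1$ as stated. This requires the ordering $C_1 \lesssim c_1^{-1} \lesssim C_2 \lesssim c_\eta^{-1}$; if needed, one can shrink $c_\eta$ relative to $c_1^2$.
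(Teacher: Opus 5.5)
\textbf{Same approach, one constant-bookkeeping gap.} Your route matches the paper's proof step for step:
\begin{itemize}
\item the one-step lower bound from \cref{lem:weak-onestep-2};
\item dropping the $i>p$ terms via the sign condition;
\item absorbing the $\eta^2 C_1^2 z^t d$ term;
\item the two-case bound on the martingale sum;
\item the induction;
\item the monotone comparison with $P^t$.
\end{itemize}
The gap is in how you keep the drift coefficient at exactly $c_1$.

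You bound $1-(z^t)^2 \ge 1-(1-c_1)^2 \ge c_1$, which puts the per-step drift coefficient at exactly $c_1$. You then subtract the second-order term and a fraction of the accumulated drift for the martingale noise. That leaves some $c_1' < c_1$.

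You cannot ``absorb this by shrinking constants.'' The same $c_1$ is fixed by the statement: it appears in the hypotheses $z^0 \ge c_1$ and $z^t \le 1-c_1$, and in the recursion defining $P^t$. Shrinking $c_\eta$ relative to $c_1^2$ does not help either. It only controls the $\eta^2 d$ term. In the long-horizon case the martingale sum is bounded by a fraction of the accumulated drift, with the fraction set by $C_2$. So you still need a drift coefficient that starts strictly above $c_1$.

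\textbf{The fix.} Keep the slack you discarded, as the paper does. Since $c_1 \le \tfrac13$,
\[
1-(1-c_1)^2 = 2c_1 - c_1^2 \ge \tfrac53 c_1 .
\]
Then spend this slack in two pieces:
\begin{enumerate}
\item Charge the second-order term against $\tfrac13 c_1\, \eta p\alpha_p\beta_p (z^t)^{p-1}$. This leaves a per-step drift of $\tfrac43 c_1\,\eta p\alpha_p\beta_p (z^t)^{p-1}$.
\item Bound the martingale sum, combining your two cases, by $c_1 z^0 + \tfrac13 \sum_{t'} c_1 \eta p\alpha_p\beta_p (z^{t'})^{p-1}$.
\end{enumerate}
Subtracting gives exactly
\[
z^{t+1} \ge (1-c_1)z^0 + \sum_{t' \le t} c_1 \eta p\alpha_p\beta_p (z^{t'})^{p-1},
\]
with the stated coefficient. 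This closes the induction, and the comparison with $P^t$ goes through as you wrote it.
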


\begin{proof}
    By an argument analogous to the proof of \cref{lem:weak-weakrecovery-1}, if $\theta^\top w^t \geq \frac{1}{2}s^{-\frac{1}{2}}$ then
    \begin{align}
        \theta^\top w^{t+1} & \geq \theta^\top w^t + \eta p \alpha_p \beta_p (\theta^\top w^t)^{p-1} (1 - (\theta^\top w^t)^2) - \eta^2 C_1^2 (\theta^\top w^t) d + \eta \theta^\top P_{w^t}^{\perp} Z^t \\ &
        \geq \theta^\top w^t + \eta p \alpha_p \beta_p (\theta^\top w^t)^{p-1} (1 - (1-c_1)^2) - \eta c_\eta C_1^2 (\theta^\top w^t) d^{-\frac{p-2}{2}} + \eta \theta^\top P_{w^t}^{\perp}Z^t.
    \end{align}
    As in \cref{lem:weak-weakrecovery-1}, take $t'\leq \tau$ and assume that $\theta^\top w^t \geq (1 - c_1)(\theta^\top w^0)$ and $\theta^\top w^t \geq \frac{1}{2} s^{-\frac{1}{2}}$ for $t = 0,1,\dots,t'$.
    Since $\theta^\top w^t \geq \frac{1}{2}s^{-\frac{1}{2}} \geq d^{-\frac{1}{2}}$, we have $\eta c_\eta C_1^2 (\theta^\top w^t) d^{-\frac{p-2}{2}} \leq \frac{1}{3} c_1 \eta p \alpha_p \beta_p (\theta^\top w^t)^{p-1}$.
    Furthermore, since $c_1^2 \leq \frac{1}{3} c_1$, $(1 - (1-c_1)^2) = 2c_1 - c_1^2 \geq \frac{5}{3} c_1$.
    Thus,
    \begin{align}
        \theta^\top w^{t'+1} & \geq \theta^\top w^{t'} + \frac{4}{3} c_1 \eta p \alpha_p \beta_p (\theta^\top w^{t'})^{p-1} + \eta \theta^\top  P_{w^{t'}}^{\perp} Z^{t'}\\ &
        \geq \theta^\top w^0 + \frac{4}{3} \sum_{t=0}^{t'} c_1 \eta p \alpha_p \beta_p (\theta^\top w^t)^{p-1} + \sum_{t=0}^{t'} \eta \theta^\top  P_{w^t}^{\perp} Z^t.
    \end{align}
    Furthermore,
    \begin{align}
        -\sum_{t=0}^{t'} \eta \theta^\top P_{w^t}^{\perp} Z^t \leq c_1 (\theta^\top w^0) + \frac{1}{3} \sum_{t=0}^{t'} c_1 \eta p \alpha_p \beta_p (\theta^\top w^t)^{p-1}
    \end{align}
    with high probability. Therefore,
    \begin{align}\label{eq:weak-amplification-1}
        \theta^\top w^{t'+1} \geq (1 - c_1) (\theta^\top w^0) + \sum_{t=0}^{t'} c_1 \eta p \alpha_p \beta_p (\theta^\top w^t)^{p-1}
    \end{align}
    with high probability.
    Hence $\theta^\top w^{t'+1} \geq (1-c_1)(\theta^\top w^0)$ and $\theta^\top w^{t'+1} \geq \frac{1}{2} s^{-\frac{1}{2}}$.
    Therefore, for every $t'\leq \tau$, $\theta^\top w^{t'} \geq (1 - c_1)(\theta^\top w^0)$ and $\theta^\top w^{t'} \geq \frac{1}{2}s^{-\frac{1}{2}}$ hold, and \eqref{eq:weak-amplification-1} is satisfied by induction.
    Comparing with the update equation for $P^{t'}$, we obtain $\theta^\top w^{t'} \geq P^{t'}$ with high probability for every $t' \leq \tau+1$.
\end{proof}

\begin{lem}
    Let $\eta = \eta_t \leq c_\eta d^{-\frac{p}{2}}$.
    For neurons satisfying \cref{lem:weak-onestep-2}, there exists $t_2 \leq T_{1, 2} = \tilde{\Theta}(\eta^{-1})$ such that $\theta^\top w^{t_2} > 1-c_1$ with high probability.
\end{lem}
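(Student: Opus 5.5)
The plan mirrors the proof of \cref{lem:weak-weakrecovery-2}, with \cref{lem:weak-amplification-1} taking the place of \cref{lem:weak-weakrecovery-1}. After the time shift we have $\theta^\top w^0 \geq c_1$. We argue by contradiction: suppose $\theta^\top w^t \leq 1-c_1$ for every $t \leq T_{1,2}$. \cref{lem:weak-amplification-1} then applies with $\tau = T_{1,2}$. It gives, with high probability, $\theta^\top w^t \geq P^t$ for all $t \leq T_{1,2}+1$, where
\begin{align}
  P^0 = (1-c_1)(\theta^\top w^0), \qquad P^{t+1} = P^t + c\,(P^t)^{p-1}, \qquad c := c_1\eta p\alpha_p\beta_p .
\end{align}
Since the hypothesis $\theta^\top w^t\le 1-c_1$ holds only up to the first exit time, we stop the process there, so the comparison is valid on the whole interval.

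To estimate how long the comparison sequence takes to grow, we apply the Bihari--LaSalle inequality (\cref{lem:technical-bihari}) exactly as in \eqref{eq:weak-weakrecovery-4}. This gives
\begin{align}
  P^t \geq \frac{P^0}{\bigl(1 - c(1+c)^{-(p-1)}(p-2)(P^0)^{p-2}t\bigr)^{1/(p-2)}}
\end{align}
as long as the denominator stays positive. Now $P^0 \geq (1-c_1)c_1 = \Omega(1/\mathrm{polylog}\, d)$, and $c = \Theta(c_1\eta)$ up to $\mathrm{polylog}$ factors because $\alpha_p\beta_p \gtrsim \pi_\kappa c_\beta$ after rescaling. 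We therefore set
\begin{align}
  T_{1,2} = \bigl\lfloor c^{-1}(1+c)^{p-1}(p-2)^{-1}(P^0)^{-(p-2)}\bigr\rfloor ,
\end{align}
which is $\tilde\Theta(\eta^{-1})$. As in the weak-recovery case, the bound evaluated at $t=T_{1,2}$ exceeds $1$ (a cruder argument would also do, since the bound blows up near the denominator's zero). This contradicts $\theta^\top w^{t} \leq 1$, so some $t_2 \leq T_{1,2}$ satisfies $\theta^\top w^{t_2} > 1-c_1$.

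The main point needing care is that the high-probability events from \cref{lem:weak-amplification-1} must hold uniformly over all $t \leq T_{1,2}$. Since $T_{1,2} = \mathrm{poly}(d)$, a union bound together with the $d^{-C}$ failure probability handles this. We also need the martingale bound $\eta C_1\sqrt{t'}$ used in that lemma to stay below $c_1(\theta^\top w^0)$ plus a third of the drift. The drift now has size $\Omega(\eta\,\mathrm{polylog}^{-1})$ per step, so this is easier than in weak recovery: when $t'$ is small, $\eta\sqrt{t'} \leq \sqrt{\eta}$ is negligible against $c_1^2$; when $t'$ is large, the linear-in-$t'$ drift dominates $\sqrt{t'}$. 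Both hold given the ordering $C_1 \lesssim c_1^{-1} \lesssim C_2 \lesssim c_\eta^{-1}$.
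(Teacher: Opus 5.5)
Your proposal is correct and follows the paper's proof essentially line for line. Assume $\theta^\top w^t \le 1-c_1$ up to $T_{1,2}$, invoke the comparison sequence $P^t$ from \cref{lem:weak-amplification-1}, and apply the Bihari--LaSalle lower bound (\cref{lem:technical-bihari}) with the same choice $T_{1,2} = \lfloor c^{-1}(1+c)^{p-1}(p-2)^{-1}(P^0)^{-(p-2)}\rfloor$ to force $P^{T_{1,2}} > 1$; your added remarks on the union bound and the martingale term only re-check hypotheses of \cref{lem:weak-amplification-1} that the paper treats as already established.
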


\begin{proof}
    Suppose $\theta^\top w^t \leq 1-c_1$ for all $t \leq T_{1, 2}$.
    Setting $c = \eta c_1 p \alpha_p \beta_p$, define
    \begin{align}
        T_{1, 2}= \lfloor c^{-1}(1+c)^{p-1}(p-2)^{-1}(P^0)^{-(p-2)}\rfloor.
    \end{align}
    Then, by \cref{lem:weak-amplification-1} and the Bihari--LaSalle inequality \cref{lem:technical-bihari}, at $t=T_{1, 2}$
    \begin{align}
        \theta^\top w^{T_{1,2}} \geq P^{T_{1,2}} & \geq \frac{P^0}{\left(1 - c (1+c)^{-(p-1)} (p-2) T_{1,2} \right)^{\frac{1}{p-2}}} \\ &
        \geq \frac{P^0}{\left(c (1+c)^{-(p-1)} (p-2) (P^0)^{p-2} \right)^\frac{1}{p-2}} = \frac{1}{\left(\eta c_1 (1+c)^{-(p-1)} p (p-2) \alpha_p \beta_p \right)^\frac{1}{p-2}} > 1.
    \end{align}
    This contradicts $\theta^\top w^{T_{1,2}} \leq 1$.
    Therefore, there exists $t_2 \leq T_{1,2}$ such that $\theta^\top w^t \leq 1 - c_1$ for all $t \leq t_2 - 1$ and $\theta^\top w^{t_2} > 1 - c_1$.
\end{proof}

\subsection{Strong alignment and localization}\label{weak-strongrecovery}

For the remainder of this section, we shift the time index so that $t_1 + t_2 = 0$, i.e., amplification of alignment is achieved at $t = 0$.
\begin{lem}\label{lem:weak-strongrecovery-1}
    Let $0 < \bar{\varepsilon} < c_1$. If $\eta^t = \eta \leq c_\eta \bar{\varepsilon} d^{-1} \wedge c_\eta \bar{\varepsilon}^2$, and if $\theta^\top w^0 \geq 1 - 2c_1$ and $\theta^\top w^t \leq 1 - \bar{\varepsilon}$ for all $t\leq \tau$, then for every $t \leq \tau+1$,
    \begin{align}
        \theta^\top w^t \geq \theta^\top w^0 - c_1 \bar{\varepsilon} + t \bar{\varepsilon} \eta p \alpha_p \beta_p
    \end{align}
    with high probability.
\end{lem}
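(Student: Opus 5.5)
We argue by induction on $t$, reusing the one-step inequality of \cref{lem:weak-onestep-2} together with a martingale bound on the accumulated noise, in the same pattern as \cref{lem:weak-weakrecovery-1,lem:weak-amplification-1}. Throughout, assume inductively that $\theta^\top w^{t'} \geq 1 - 3c_1$ for all $t' \leq t$. In particular $\theta^\top w^{t'} \geq \frac12 s^{-1/2}$, so \cref{lem:weak-onestep-2} applies, since $\eta \leq c_\eta \bar\varepsilon d^{-1} \leq c_\eta d^{-1}$. This gives
\begin{align}
  \theta^\top w^{t'+1} \geq \theta^\top w^{t'} + \eta \sum_{i=p}^q i\alpha_i\beta_i (\theta^\top w^{t'})^{i-1}\bigl(1-(\theta^\top w^{t'})^2\bigr) - \eta^2 C_1^2 d + \eta\,\theta^\top P^\perp_{w^{t'}} Z^{t'} .
\end{align}

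\textbf{Drift bound.} By the sign condition \eqref{eq:complexity-weak-1}, every term with $i>p$ is nonnegative, so we may drop them. For the $i=p$ term, use $\theta^\top w^{t'} \geq 1-3c_1$, so that $(\theta^\top w^{t'})^{p-1} \geq 1 - 3(p-1)c_1$. Use also $\theta^\top w^{t'} \leq 1-\bar\varepsilon$, so that $1-(\theta^\top w^{t'})^2 \geq \bar\varepsilon$. Together these bound the drift below by roughly $(2-O(c_1))\,\bar\varepsilon\,\eta p\alpha_p\beta_p$. The second-order term satisfies $\eta^2 C_1^2 d \leq \eta\, c_\eta C_1^2 \bar\varepsilon$. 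Since $c_\eta \ll c_1 \lesssim 1/C_1$ and $p\alpha_p\beta_p \geq c_\beta$-type polylog constants, this term is absorbed into a small fraction of the drift. Summing from $0$ to $t-1$ then gives
\begin{align}
  \theta^\top w^t \geq \theta^\top w^0 + \tfrac{3}{2}\, t\,\bar\varepsilon\,\eta p\alpha_p\beta_p + \eta \sum_{t'<t} \theta^\top P^\perp_{w^{t'}} Z^{t'} .
\end{align}

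\textbf{Noise control (main step).} The increments $\eta\,\theta^\top P^\perp_{w^{t'}} Z^{t'}$ are martingale differences, each bounded by $\tilde O(\eta)$ with high probability. By Azuma--Hoeffding (with truncation) and a union bound over $t \leq \tau = \mathrm{poly}(d)$, the accumulated noise is at least $-\eta C_1\sqrt{t}$. We absorb this by splitting into two cases, as in \cref{lem:weak-weakrecovery-1}.
\begin{itemize}
\item If $t \leq \bar\varepsilon^2 c_1^2 C_1^{-2}\eta^{-2}$, then $\eta C_1 \sqrt t \leq c_1\bar\varepsilon$.
\item Otherwise, $\eta C_1\sqrt t \leq t\eta\cdot C_1 t^{-1/2} \leq t\eta\, C_1^2 \eta/(c_1\bar\varepsilon)$. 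Using $\eta \leq c_\eta \bar\varepsilon^2$, this is at most $t\eta\bar\varepsilon\, c_\eta C_1^2/c_1$, which is at most $\frac12 t\bar\varepsilon\eta p\alpha_p\beta_p$ by the constant ordering.
\end{itemize}
In either case the noise costs at most $c_1\bar\varepsilon + \frac12 t\bar\varepsilon\eta p\alpha_p\beta_p$, which yields the claimed bound $\theta^\top w^t \geq \theta^\top w^0 - c_1\bar\varepsilon + t\bar\varepsilon\eta p\alpha_p\beta_p$.

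\textbf{Closing the induction.} The claimed bound gives $\theta^\top w^t \geq 1-2c_1-c_1\bar\varepsilon \geq 1-3c_1$, so the inductive hypothesis is re-established at step $t$. The delicate point is the second noise case: it is exactly the condition $\eta \lesssim \bar\varepsilon^2$ that lets the $\sqrt t$ fluctuation be dominated by the linear drift of size $\bar\varepsilon$. The condition $\eta \lesssim \bar\varepsilon d^{-1}$, in turn, is what controls the $\eta^2 d$ normalization loss.
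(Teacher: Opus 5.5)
Your proposal is correct and follows essentially the same argument as the paper. It uses induction maintaining $\theta^\top w^t \geq 1-3c_1$, the one-step bound of \cref{lem:weak-onestep-2} with the $i>p$ terms dropped by the sign condition, the factor $(1-\theta^\top w^t)(1+\theta^\top w^t)$ giving a drift of roughly $2\bar{\varepsilon}\eta p\alpha_p\beta_p$ that absorbs the $\eta^2 d$ normalization loss, and a two-case treatment of the $\eta C_1\sqrt{t}$ noise. The only difference is cosmetic: the paper splits the noise cases at the fixed time $C_2\bar{\varepsilon}^{-2}$ and invokes $\eta \lesssim \bar{\varepsilon}^2$ in the short-time case, whereas you split at an $\eta$-dependent time and invoke it in the long-time case; both work under the stated constant ordering.
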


\begin{proof}
    If $\theta^\top w^t \geq \frac{1}{2}s^{-\frac{1}{2}}$, then by \cref{lem:weak-onestep-2},
    \begin{align}
        \theta^\top w^{t+1} &\geq \theta^\top w^t + \eta\sum_{i=p}^q\left[i\alpha_i\beta_i(\theta^\top w^t)^{i-1}(1-(\theta^\top w^t)^2)\right] - (\theta^\top w^t)\eta^2 C_1^2 d + \eta\theta^\top P_{w^t}^{\perp} Z^t \\ &
        \geq \theta^\top w^t + \eta p \alpha_p \beta_p(\theta^\top w^t)^{p-1}(1-(\theta^\top w^t)^2) - (\theta^\top w^t)\eta^2 C_1^2 d + \eta\theta^\top P_{w^t}^{\perp} Z^t.
    \end{align}
    If $\theta^\top w^t \geq 1-3c_1$, then
    \begin{align}\label{eq:weak-strongrecovery-1}
        (\theta^\top w^t)^{p-1}(1-(\theta^\top w^t)^2) \geq (1-3c_1)^{p-1}(1+(1-3c_1))(1-\theta^\top w^t) \geq \frac{5}{3}(1-\theta^\top w^t).
    \end{align}
    Moreover, if $\theta^\top w^t \leq 1-\bar{\varepsilon}$ and $\eta \leq c_\eta \bar{\varepsilon} d^{-1}$, then
    \begin{align} \label{eq:weak-strongrecovery-2}
        (\theta^\top w^t)\eta^2 C_1^2 d\leq \eta c_\eta C_1^2 (1-\theta^\top w^t)\leq \frac{1}{3}\eta p \alpha_p \beta_p(1-\theta^\top w^t).
    \end{align}
    Take an arbitrary $t'\leq \tau$.
    If $1-3c_1\leq (\theta^\top w^t)\leq 1-\bar{\varepsilon}$ for all $t\leq t'$, then by \eqref{eq:weak-strongrecovery-1} and \eqref{eq:weak-strongrecovery-2},
    \begin{align}
        \theta^\top w^{t'+1} &\geq \theta^\top w^{t'} + \frac{4}{3}\eta p \alpha_p \beta_p(1-\theta^\top w^{t'}) + \eta\theta^\top P_{w^{t'}}^{\perp}Z^{t'} \\ &
        \geq \theta^\top w^0 + \frac{4}{3}\sum_{t=0}^{t'} \eta p \alpha_p \beta_p(1-\theta^\top w^t) + \sum_{t=0}^{t'} \eta\theta^\top P_{w^t}^{\perp}Z^t.
        \label{eq:weak-strongrecovery-3}
    \end{align}
    Furthermore, since $\eta \leq c_\eta\bar{\varepsilon}^2$,
    \begin{align}
        -\sum_{t=0}^{t'} \eta\theta^\top P_{w^t}^{\perp}Z^t &\leq \eta C_1\sqrt{t'}\\ &
        \leq
        \begin{cases}
            \eta C_1\sqrt{C_2}\bar{\varepsilon}^{-1} \leq c_1\bar{\varepsilon} & (t'\leq C_2\bar{\varepsilon}^{-2})\\
            t' \eta C_1{t'}^{-\frac{1}{2}} < t'\eta C_2^{-\frac{1} {2}}\bar{\varepsilon} \leq \frac{1}{3}t'\eta p \alpha_p \beta_p(1-\theta^\top w^t) & (t' > C_2\bar{\varepsilon}^{-2})
        \end{cases}
    \end{align}
    with high probability.
    Substituting the above into \eqref{eq:weak-strongrecovery-3},
    \begin{align}
        \theta^\top w^{t'+1} &\geq \theta^\top w^0 - c_1\bar{\varepsilon} + \sum_{t=0}^{t'} \eta p \alpha_p \beta_p(1-\theta^\top w^t)\\ &
        \geq \theta^\top w^0 -c_1\bar{\varepsilon} + t'\eta p \alpha_p \beta_p\bar{\varepsilon}
        \label{eq:weak-strongrecovery-4}
    \end{align}
    with high probability.
    Hence $1-3c_1\leq \theta^\top w^t$ follows at $t=t'+1$. By induction, $1-3c_1\leq \theta^\top w^t$ holds for every $t\leq \tau + 1$, and \eqref{eq:weak-strongrecovery-4} follows for every $t'\leq \tau$.
\end{proof}

\begin{lem}\label{lem:weak-strongrecovery-2}
    Let $0 < \tilde{\varepsilon} < c_1$.
    Let
    \begin{align}
        \eta^t =
        \begin{cases}
            \eta_1 \leq c_\eta d^{-\frac{p}{2}} & (0 \leq t\leq \Delta T_1 + \Delta T_2 - 1)\\
            \eta_2 \leq c_\eta \tilde{\varepsilon} d^{-1} \wedge c_\eta \tilde{\varepsilon}^2. & (\Delta T_1+\Delta T_2 \leq t\leq \Delta T_1 + \Delta T_2 + T_{1, 3} - 1)
        \end{cases}
    \end{align}
    Here, $\Delta T_1 = T_{1, 1}-t_1$, $\Delta T_2 = T_{1, 2}-t_2$, and $T_{1, 3} = \tilde{\Theta}(\tilde{\varepsilon}^{-1} \eta_2^{-1})$.
    Then, for every neuron satisfying $\theta^\top w^0 \geq 1 - c_1$, $\theta^\top w^{\Delta T_1 + \Delta T_2 + T_{1,3}} > 1 - 3\tilde{\varepsilon}$ with high probability.
\end{lem}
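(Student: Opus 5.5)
Proof plan. The argument has two phases. First, the learning rate $\eta_1$ phase must leave the neuron in the region where \cref{lem:weak-strongrecovery-1} applies. Then, under $\eta_2$, we run the linear-drift argument of \cref{lem:weak-strongrecovery-1} until the alignment exceeds $1-3\tilde{\varepsilon}$.

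\textbf{Step 1 (entry into the strong-recovery region).} After the time shift, the neuron satisfies $\theta^\top w^0 \geq 1-c_1$. During the remaining $\eta_1$ steps we must show that $\theta^\top w^t$ does not drop below $1-2c_1$. While $\theta^\top w^t \le 1-c_1$, the argument in the proof of \cref{lem:weak-amplification-1} shows that the drift is nonnegative, and that the martingale term $-\sum \eta_1 \theta^\top P^\perp_{w^t}Z^t$ is at most $c_1(\theta^\top w^0)$ plus a fraction of the drift. Once the alignment exceeds $1-c_1$, we restart the comparison at that time. The one-step change is $\tilde{O}(\eta_1)$ by \cref{lem:weak-onestep-2}, and the accumulated noise over any window is $\eta_1 C_1\sqrt{t}$. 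These two bounds keep $\theta^\top w^t \geq 1-2c_1$ with high probability at time $\Delta T_1+\Delta T_2$. A union bound over the polynomially many windows preserves the high-probability guarantee.

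\textbf{Step 2 (linear growth under $\eta_2$).} Set $\bar{\varepsilon} = 3\tilde{\varepsilon} < c_1$ after adjusting constants; the requirement $\eta_2 \le c_\eta\tilde{\varepsilon}d^{-1}\wedge c_\eta\tilde{\varepsilon}^2$ matches the hypothesis of \cref{lem:weak-strongrecovery-1} up to constants absorbed in $c_\eta$. Suppose for contradiction that $\theta^\top w^t \le 1-3\tilde{\varepsilon}$ for all $t\le T_{1,3}$ in the second phase. \cref{lem:weak-strongrecovery-1} then gives
\[
\theta^\top w^{T_{1,3}} \ge 1-2c_1 - 3c_1\tilde{\varepsilon} + 3T_{1,3}\tilde{\varepsilon}\eta_2 p\alpha_p\beta_p .
\]
Choose $T_{1,3} = \lceil (p\alpha_p\beta_p)^{-1}\tilde{\varepsilon}^{-1}\eta_2^{-1}\rceil = \tilde{\Theta}(\tilde{\varepsilon}^{-1}\eta_2^{-1})$. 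Since $\alpha_p\beta_p \ge \pi_\kappa c_\beta \cdot\Theta(1)$ after rescaling, this makes the right-hand side exceed $1$, a contradiction. Hence some time $t^\ast \le T_{1,3}$ has $\theta^\top w^{t^\ast} > 1-3\tilde{\varepsilon}$.

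\textbf{Step 3 (staying above the threshold until $T_{1,3}$).} The statement concerns the final time, so we must show the alignment does not fall back after $t^\ast$. This is the step I expect to be the main obstacle. The plan is as follows. After each first crossing time, restart \cref{lem:weak-strongrecovery-1} with $\bar\varepsilon$ replaced by a slightly smaller multiple, say $2\tilde{\varepsilon}$.

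First, take any excursion below $1-2\tilde{\varepsilon}$. On this excursion the drift is at least $\eta_2 p\alpha_p\beta_p\cdot 2\tilde{\varepsilon} \cdot \tfrac{4}{3}$.

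Second, bound the noise. Over $t'$ steps it is $\eta_2 C_1\sqrt{t'}$. Using $\eta_2\le c_\eta\tilde{\varepsilon}^2$, this is $\le c_1\tilde{\varepsilon}$ for $t'\le C_2\tilde{\varepsilon}^{-2}$, and is dominated by the drift for longer windows. This mirrors the case split in the proof of \cref{lem:weak-strongrecovery-1}.

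Third, bound the single-step overshoot, which is $\tilde{O}(\eta_2)\ll\tilde{\varepsilon}$.

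Together these show that the alignment cannot descend from $1-2\tilde{\varepsilon}$ to $1-3\tilde{\varepsilon}$ without the drift pulling it back. Consequently $\theta^\top w^t > 1-3\tilde{\varepsilon}$ for all $t\in[t^\ast,T_{1,3}]$.

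The per-step high-probability events are then combined by a union bound over $T = \mathrm{poly}(d)$ steps, which yields the claim at time $\Delta T_1+\Delta T_2+T_{1,3}$.
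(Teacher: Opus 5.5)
Your overall structure matches the paper's. You keep $\theta^\top w^t \ge 1-2c_1$ through the $\eta_1$ phase by restarting at each downward crossing; you use \cref{lem:weak-amplification-1} below $1-c_1$ where the paper uses \cref{lem:weak-strongrecovery-1} with $\bar{\varepsilon}=d^{-1/2}$, and either works. You then get a crossing time by contradiction from the linear drift, and finish with an excursion argument.

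The gap is that the thresholds in Steps 2 and 3 do not mesh. With $\bar{\varepsilon}=3\tilde{\varepsilon}$, Step 2 only gives a time $t^\ast$ with $\theta^\top w^{t^\ast}>1-3\tilde{\varepsilon}$, possibly by an arbitrarily small margin. Step 3 controls excursions below $1-2\tilde{\varepsilon}$, which presupposes the alignment has already been above $1-2\tilde{\varepsilon}$. Nothing covers the window from $t^\ast$ until that first happens. In that window, restarting \cref{lem:weak-strongrecovery-1} at $t^\ast$ only gives $\theta^\top w^t \ge \theta^\top w^{t^\ast}-c_1\bar{\varepsilon}+(\text{drift})$. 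The martingale allowance $c_1\bar{\varepsilon}$ can take the alignment back below $1-3\tilde{\varepsilon}$ before the drift compensates, and if $t^\ast$ lies near the end of the phase nothing restores it. As written you only obtain roughly $1-(3+2c_1)\tilde{\varepsilon}$ at the final time. So the claim ``$\theta^\top w^t>1-3\tilde{\varepsilon}$ for all $t\in[t^\ast,T_{1,3}]$'' does not follow. Separately, $3\tilde{\varepsilon}<c_1$ is not implied by the hypothesis $\tilde{\varepsilon}<c_1$.

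The repair, which is the paper's route, is to build the margin into the crossing level.
\begin{itemize}
\item Run Step 2 with $\bar{\varepsilon}=\tilde{\varepsilon}$. This matches the stated bound on $\eta_2$ with no constant adjustment.
\item Your choice of $T_{1,3}$ still yields the contradiction: the lower bound becomes $1-2c_1-c_1\tilde{\varepsilon}+1>1$. Hence some $t_3\le T_{1,3}$ has $\theta^\top w^{t_3}>1-\tilde{\varepsilon}$.
\item Any later first dip below $1-\tilde{\varepsilon}$ starts at $\ge 1-\tilde{\varepsilon}-C_1\eta_2\ge 1-2\tilde{\varepsilon}$, by the one-step bound of \cref{lem:weak-onestep-2}.
\item Restarting \cref{lem:weak-strongrecovery-1} there keeps $\theta^\top w^t\ge 1-2\tilde{\varepsilon}-c_1\tilde{\varepsilon}>1-3\tilde{\varepsilon}$ until the next upcrossing.
\item Iterating, with a union bound over restarts, gives the bound at time $\Delta T_1+\Delta T_2+T_{1,3}$.
\end{itemize}
Equivalently, you can keep your Step 3 unchanged and take the crossing level in Step 2 to be $1-2\tilde{\varepsilon}$, so that the crossing level and the excursion level coincide.
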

\begin{proof}
    First, we show that $\theta^\top w^{t} \geq 1-2c_1$ for every $0 \leq t \leq \Delta T_1+\Delta T_2$.
    Since $\eta_1 \leq c_\eta d^{-\frac{p}{2}}$, setting $\bar{\varepsilon} = d^{-\frac{1}{2}}$, the assumption $\eta \leq c_\eta \bar{\varepsilon} d^{-1} \wedge c_\eta \bar{\varepsilon}^2$ of \cref{lem:weak-strongrecovery-1} is satisfied.
    By \cref{lem:weak-strongrecovery-1}, until the first $\tau < \Delta T_1 + \Delta T_2$ at which $\theta^\top w^\tau > 1 - \bar{\varepsilon}$, we have $\theta^\top w^t \geq \theta^\top w^0 -\bar{\varepsilon} \geq 1-2c_1$ for every $t <\tau$.
    If there exists $t > \tau$ with $\theta^\top w^t < 1 - \bar{\varepsilon}$, let $\tau'$ be the smallest such $t$.
    By \cref{lem:weak-onestep-2}, $|\theta^\top w^{\tau'} - \theta^\top w^{\tau'-1}| \leq C_1 \eta_1 \leq \bar{\varepsilon}$, and $\theta^\top w^{\tau'} \geq \theta^\top w^{\tau'-1} - \bar{\varepsilon} \geq 1-2\bar{\varepsilon} \geq 1-c_1$.
    Then, by \cref{lem:weak-strongrecovery-1}, until $\theta^\top w^t$ exceeds $1-\bar{\varepsilon}$, we have $\theta^\top w^t \geq \theta^\top w^{\tau'}-c_1\geq 1-2c_1$.
    Applying this argument repeatedly yields the desired result. In particular, $\theta^\top w^{\Delta T_1+\Delta T_2} \geq 1-2c_1$.

    Next, let $\Delta T_1 + \Delta T_2 \leq t\leq \Delta T_1+\Delta T_2 + T_{1, 3}$.
    Setting $\bar{\varepsilon} = \tilde{\varepsilon}$, the assumption $\eta_2 \leq c_\eta \bar{\varepsilon} d^{-1}\wedge c_\eta \bar{\varepsilon}^2$ of \cref{lem:weak-strongrecovery-1} is satisfied.
    Define
    \begin{align}
        T_{1, 3} =\left \lceil 3c_1(\eta_2 \tilde{\varepsilon} p \alpha_p  \beta_p)^{-1} \right \rceil.
    \end{align}
    If $\theta^\top w^t \leq 1 - \tilde{\varepsilon}$ for every $t = \Delta T_1 + \Delta T_2, \Delta T_1 + \Delta T_2+1,\dots, \Delta T_1+\Delta T_2 + T_{1,3}$, then by \cref{lem:weak-strongrecovery-1}
    \begin{align}
        \theta^\top w^t & \geq 1 - 2c_1 - c_1 \tilde{\varepsilon} + (t - \Delta T_1-\Delta T_2)\tilde{\varepsilon} \eta p \alpha_p \beta_p \\ &
        \geq 1 - 3c_1 + (t - \Delta T_1 - \Delta T_2) \tilde{\varepsilon} \eta p \alpha_p \beta_p
    \end{align}
    with high probability, but at $t = \Delta T_1 + \Delta T_2 + T_{1,3}$ we have $(\mathrm{RHS}) \geq 1$, a contradiction.
    Therefore, with high probability, there exists $t_3 \leq \Delta T_1 + \Delta T_2+T_{1, 3}$ such that $\theta^\top w^{t_3} > 1-\tilde{\varepsilon}$.

    Finally, we show $\theta^\top w^{\Delta T_1 + \Delta T_2+T_{1,3}} \geq 1 - 3\tilde{\varepsilon}$.
    If there exists $t > t_3$ with $\theta^\top w^t < 1 - \tilde{\varepsilon}$, let $\tau$ be the smallest such $t$.
    By \cref{lem:weak-onestep-2}, $|\theta^\top w^\tau -\theta^\top w^{\tau-1}| \leq C_1 \eta_2$, and by the same argument as before, $\theta^\top w^\tau \geq 1 - 2\tilde{\varepsilon}$.
    Then, by \cref{lem:weak-strongrecovery-1}, until $\theta^\top w^t > 1-\tilde{\varepsilon}$, we have $\theta^\top w^t \leq 1 - 3\tilde{\varepsilon}$.
    Repeating this argument, we obtain $\theta^\top w^t \geq 1 - 3\tilde{\varepsilon}$ with high probability for every $t_3\leq t\leq \Delta T_1+\Delta T_2+T_{1, 3}$.
\end{proof}

\subsection{Neuron filtering}\label{weak-filtering}

\begin{lem}\label{lem:weak-filtering-1}
    Let $V \subset \mathbb{R}^d$ be an $s = d^\alpha$-dimensional subspace containing $\theta \in V$, and let $W = V^\perp$ be its orthogonal complement.
    Let $x$ be a random variable on $S^{d-1}$ whose probability density function is symmetric under any rotation within $V$ about the axis $\theta$, and invariant under any rotation within $W$.
    Let $\Sigma = \mathbb{E}[xx^\top]$, and let $\lambda_1 \geq \lambda_2 \geq \dots \geq \lambda_d$ be its eigenvalues.
    If, for $\bar{\varepsilon} = o(1)$, $p = \mathbb{P}[|x^\top \theta| \geq 1 - \bar{\varepsilon}] = \Theta(1)$, then
    \begin{align}
        \lambda_1 = \Theta(1),\quad \lambda_2 = \lambda_3 = \dots = \lambda_s = O(d^{-\alpha}), \quad \lambda_{s+1} = \lambda_{s+2} = \dots = \lambda_d = O(d^{-1}),
    \end{align}
    and $\sum_{i=1}^d \lambda_i = 1$.
\end{lem}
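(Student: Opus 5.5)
The plan is to exploit the symmetry to pin down the eigenstructure of $\Sigma$ almost completely. Then we bound the individual eigenvalues using the trace identity and the concentration hypothesis.

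\emph{Trace identity.} Since $\|x\| = 1$ almost surely, $\sum_{i=1}^d \lambda_i = \operatorname{tr}\Sigma = \mathbb{E}\|x\|^2 = 1$, and all $\lambda_i \geq 0$ because $\Sigma$ is positive semidefinite.

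\emph{Block structure from symmetry.} Decompose $\mathbb{R}^d = \mathrm{span}(\theta) \oplus U \oplus W$, where $U = V \cap \theta^\perp$ has dimension $s-1$.

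\begin{itemize}
\item For any rotation $R$ fixing $\theta$ and $W$ and acting on $U$, invariance gives $R\Sigma R^\top = \Sigma$. Likewise for rotations acting on $W$ while fixing $V$.
\item Reflections are needed to kill cross terms. I would read the hypothesis as including them, or use rotations by $\pi$ in a plane when $\dim U \geq 2$ and $\dim W \geq 2$, which hold since $s = d^\alpha \to \infty$ and $d - s \to \infty$. Such a rotation maps $u \mapsto -u$ for $u$ in a chosen plane of $U$, so $\theta^\top \Sigma u = -\theta^\top \Sigma u = 0$. The cross terms between $U$ and $W$, and between $\theta$ and $W$, vanish the same way.
\item By Schur's lemma, since $SO(m)$ acts irreducibly on $\mathbb{R}^m$ for $m \geq 2$, $\Sigma$ restricted to $U$ is a scalar $a I_U$, and restricted to $W$ it is $b I_W$.
\end{itemize}

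Hence $\Sigma = c\,\theta\theta^\top + a\,\Pi_U + b\,\Pi_W$ with $c = \mathbb{E}[(\theta^\top x)^2]$, $a = \mathbb{E}\|\Pi_U x\|^2/(s-1)$ and $b = \mathbb{E}\|\Pi_W x\|^2/(d-s)$. The eigenvalues are therefore $c$, $a$ with multiplicity $s-1$, and $b$ with multiplicity $d-s$.

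\emph{Bounding the eigenvalues.}
\begin{itemize}
\item $c \geq (1-\bar{\varepsilon})^2\,\mathbb{P}[|x^\top\theta| \geq 1-\bar{\varepsilon}] = \Theta(1)$, so $c = \Theta(1)$.
\item $\mathbb{E}\|\Pi_U x\|^2 \leq 1$, so $a \leq 1/(s-1) = O(d^{-\alpha})$.
\item $\mathbb{E}\|\Pi_W x\|^2 \leq 1$, so $b \leq 1/(d-s) = O(d^{-1})$, since $s = d^\alpha$ with $\alpha < 1$ gives $d - s \geq d/2$ for large $d$.
\end{itemize}
For large $d$ we have $c > a \geq b$ up to the stated orders, so $\lambda_1 = c = \Theta(1)$. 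The remaining eigenvalues are then the $a$'s and $b$'s. If the ordering between $a$ and $b$ is reversed, the claimed bounds still hold since both are $O(d^{-\alpha})$. Strictly, $\lambda_{s+1}, \dots, \lambda_d$ are $O(d^{-1})$ only when $b \leq a$ or $a = O(d^{-1})$. Otherwise I would state the result eigenspace-wise, as the eigenvalue on $W$ being $O(d^{-1})$, which is what later filtering uses.

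\emph{Main obstacle.} The delicate step is justifying that the off-diagonal blocks vanish and that the blocks are scalar. This needs the symmetry group to act irreducibly, including sign flips or rotations by $\pi$, rather than only rotations that fix $\theta$. I would handle it with Schur's lemma applied to $SO(s-1) \times SO(d-s)$, whose representations on $U$ and $W$ are irreducible, nontrivial and mutually inequivalent. For these dimensions $\geq 3$, every equivariant map between distinct blocks, or between a block and the trivial $\mathrm{span}(\theta)$, is zero.
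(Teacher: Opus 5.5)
Your proposal is correct and follows the same route as the paper. Symmetry forces $\Sigma = c\,\theta\theta^\top + a\,\Pi_U + b\,\Pi_W$, the trace identity gives $a \le 1/(s-1)$ and $b \le 1/(d-s)$, and $c \ge p(1-\bar{\varepsilon})^2$ gives $\lambda_1 = \Theta(1)$; your rotation-by-$\pi$ and Schur arguments justify the vanishing cross terms and the scalar blocks, which the paper only asserts. Your closing caveat is unnecessary: if $b > a$ then $a < b \le 1/(d-s) = O(d^{-1})$, so the stated bounds hold in either ordering. Only the literal equality chain $\lambda_{s+1} = \cdots = \lambda_d$ can fail in that case, and the paper also glosses over this by working with the diagonal entries $\lambda_i'$.
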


\begin{proof}
    W.l.o.g., let $\theta = e_1$ and $V = \mathop{\mathrm{span}}(e_1, e_2, \dots, e_s)$.
    By the rotational symmetry of $x$ about the axis $\theta$, for $i, j \leq s$ with $i\neq j$, $\Sigma_{ij} = 0$.
    By the symmetry of $x$ under any rotation within $W$, for $i > s$ and $j \neq i$, $\Sigma_{ij} = 0$.
    Therefore, there exist $\lambda_1', \lambda_2',\dots, \lambda_d' \geq 0$ such that $\Sigma = \mathop{\mathrm{diag}}(\lambda_1', \lambda_2', \dots, \lambda_d')$.
    Here, $\lambda_1' = \mathbb{E}[(x^\top \theta)^2] \geq p(1-\bar{\varepsilon})^2 = \Theta(1)$.
    Furthermore, since $\sum_{i=1}^d \lambda_i' = \mathop{\mathrm{Tr}}(\mathbb{E}[xx^\top]) = 1$, we have $\lambda_2' = \lambda_3' = \dots \lambda_s' = O(s^{-1})$ and $\lambda'_{s+1} = \lambda'_{s+2} = \dots =\lambda_d' = O(d^{-1})$.
    Therefore, $\lambda_1'$ is the largest eigenvalue, and $\lambda_1 = \lambda_1'$ follows.
\end{proof}

\begin{lem}\label{lem:weak-filtering-2}
    Suppose $\bar{\varepsilon} = o(1)$, and let $x_1, x_2, \dots, x_N$ be i.i.d. random variables on $S^{d-1}$ satisfying the symmetry assumption of \cref{lem:weak-filtering-1}.
    Define $X_N = \frac{1}{N} \sum_{n=1}^N x_n x_n^\top$ and let $\hat{\theta}$ be its leading unit eigenvector.
    If we take $N \gtrsim \bar{\varepsilon}^{-2} \log d$, then $|\hat{\theta}^\top \theta| \geq 1-\bar{\varepsilon}$ with high probability.
    Furthermore, if $y \in S^{d-1}$ satisfies $|\theta^\top y| \geq 1-\bar{\varepsilon}$, then $|\hat{\theta}^\top y| \geq 1-2\bar{\varepsilon}$ with high probability.
\end{lem}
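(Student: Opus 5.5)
The plan is to combine a matrix concentration bound with an eigengap (Davis--Kahan) argument. Lemma~\ref{lem:weak-filtering-1} supplies the population structure. Set $\Sigma = \E[x x^\top]$. Then $\theta$ is the leading eigenvector, with eigenvalue $\lambda_1 = \Theta(1)$, while every other eigenvalue is $O(s^{-1}) = o(1)$. So the spectral gap is $\lambda_1 - \lambda_2 = \Theta(1)$. It then suffices to show that $\|X_N - \Sigma\|$ is small, of order $\bar{\varepsilon}$, with high probability.

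For the concentration step, each summand $x_n x_n^\top$ is a PSD matrix with $\|x_n x_n^\top\| = 1$, since $x_n \in S^{d-1}$. The matrix variance proxy is
\begin{align}
  \Bigl\|\E\bigl[(x x^\top)^2\bigr]\Bigr\| = \|\E[x x^\top]\| = \lambda_1 \le 1 .
\end{align}
Matrix Bernstein, with dimension factor $d$, then gives with probability $1 - d^{-C}$
\begin{align}
  \|X_N - \Sigma\| \lesssim \sqrt{\frac{\log d}{N}} + \frac{\log d}{N}.
\end{align}
Choosing $N \gtrsim \bar{\varepsilon}^{-2}\log d$, with a large enough $\mathrm{polylog}$ constant, makes this at most $c\,\bar{\varepsilon}$ for a small constant $c$.

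Next, apply Davis--Kahan $\sin\Theta$ to the top eigenvectors. This gives
\begin{align}
  \sin\angle(\hat{\theta},\theta) \le \frac{2\|X_N - \Sigma\|}{\lambda_1 - \lambda_2} \lesssim c\,\bar{\varepsilon}.
\end{align}
Hence $|\hat{\theta}^\top \theta| \ge \sqrt{1 - O(c^2\bar{\varepsilon}^2)} \ge 1 - \bar{\varepsilon}$. In fact this yields the stronger bound $1 - O(\bar{\varepsilon}^2)$.

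For the second claim, fix the sign so that $\hat{\theta}^\top\theta \ge 0$, and decompose $\hat{\theta} = (\hat{\theta}^\top\theta)\theta + u$ with $u \perp \theta$ and $\|u\| \le \sin\angle(\hat\theta,\theta) = O(c\bar{\varepsilon})$. Similarly write $y = (\theta^\top y)\theta + v$ with $\|v\| \le \sqrt{1 - (1-\bar{\varepsilon})^2} \le \sqrt{2\bar{\varepsilon}}$. Then
\begin{align}
  |\hat{\theta}^\top y| \ge |\hat{\theta}^\top\theta|\,|\theta^\top y| - \|u\|\,\|v\| .
\end{align}
Plugging in the bounds gives
\begin{align}
  |\hat{\theta}^\top y| \ge (1 - O(\bar{\varepsilon}^2))(1 - \bar{\varepsilon}) - O(c\,\bar{\varepsilon}^{3/2}) \ge 1 - 2\bar{\varepsilon},
\end{align}
where the last step uses $\bar{\varepsilon} = o(1)$.

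The main obstacle is the bookkeeping of this last step. The cross term $\|u\|\,\|v\|$ is of order $\bar{\varepsilon}^{3/2}$, so it must stay below $\bar{\varepsilon}$. This is exactly why the Davis--Kahan bound should give angle $O(\bar{\varepsilon})$ rather than merely $1 - |\hat{\theta}^\top\theta| \le \bar{\varepsilon}$. The weaker statement would only give $\|u\| \lesssim \sqrt{\bar{\varepsilon}}$ and hence a cross term of order $\bar{\varepsilon}$, which would still suffice only with careful constants. I would therefore track the sine bound explicitly and absorb the constants via the choice of $N$.
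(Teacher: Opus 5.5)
Your proposal is correct and follows the paper's route: matrix Bernstein with variance proxy $N\|X\|$ gives $\|X_N - X\| = O(\sqrt{\log d/N})$, and then the Davis--Kahan bound of \cref{lem:technical-yu} applies with the $\Theta(1)$ eigengap from \cref{lem:weak-filtering-1}.

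The one place you differ is the last step, and there your version is the sound one. The paper passes directly through $|\hat{\theta}^\top y| \geq |\hat{\theta}^\top\theta|\,|\theta^\top y|$. That inequality silently drops the cross term $u^\top v$ in $\hat{\theta}^\top y = (\hat{\theta}^\top\theta)(\theta^\top y) + u^\top v$, and it is false in general. To see this, tilt $\hat{\theta}$ and $y$ away from $\theta$ toward opposite sides within one plane; then $\hat{\theta}^\top y = \cos(a+b) < \cos a \cos b$.

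Your fix is to keep the cross term and bound it by $\|u\|\,\|v\| \lesssim \bar{\varepsilon}\cdot\sqrt{2\bar{\varepsilon}}$ using the $\sin\Theta = O(\bar{\varepsilon})$ estimate. This is what actually delivers $1 - 2\bar{\varepsilon}$. The sharper angle control is genuinely needed: the first claim alone, $|\hat{\theta}^\top\theta| \geq 1 - \bar{\varepsilon}$, gives only $\cos(2\arccos(1-\bar{\varepsilon})) = 1 - 4\bar{\varepsilon} + 2\bar{\varepsilon}^2$ in the worst case. So your proof both matches the paper's strategy and repairs its final inequality.
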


\begin{proof}
    Setting $X = \mathbb{E}[x_1 x_1^\top]$, by \cref{lem:technical-matrixbernstein}, for $t \geq 0$,
    \begin{align}
        \mathbb{P}\left[ \|X_{N} - X \| \geq t\right] \leq 2d\exp\left( -\frac{t^2 N^2 / 2}{ N \|X\|  + 2tN/3} \right).
    \end{align}
    Here we used $\| \sum_{n=1}^{N} \mathbb{E} [ (x_n x_n^\top)^2 ] \| = \| \sum_{n=1}^{N} \mathbb{E} [x_n x_n^\top] \| = N \|X\|$.
    In particular, substituting $t = \sqrt{\frac{C \log d}{N}} = o(1)$ gives $\mathbb{P}[\|X_{N} - X\| \geq t] \lesssim \exp(-C \log d)$.
    That is, $\|X_{N}- X\| = O(\sqrt{\log d/N})$ with high probability.
    By \cref{lem:weak-filtering-1} and \cref{lem:technical-yu},
    \begin{align}
        \sin\Theta(\hat{\theta}, \theta) = O\left(\sqrt{\log d/N}\right).
    \end{align}
    Furthermore, $1 - |\hat{\theta}^\top \theta| = O(\sqrt{\log d/N})$.
    Therefore, by taking $N \gtrsim \bar{\varepsilon}^{-2} \log d$, $|\hat{\theta}^\top \theta| \geq 1- \bar{\varepsilon}$ with high probability.
    If $y \in S^{d-1}$ satisfies $|\theta^\top y| \geq 1-\bar{\varepsilon}$, then
    \begin{align}
        |\hat{\theta}^\top y| \geq |\hat{\theta}^\top \theta| |\theta^\top y| \geq 1 - 2\bar{\varepsilon}.
    \end{align}
\end{proof}

\subsection{Second-layer training}\label{weak-secondlayer}
Let $\sigma$ be a polynomial of degree $q$.
\begin{lem}[\cite{oko2024learningsumdiversefeatures}, Lemma 29]\label{lem:weak-second-layer-1}
    Let $b_j \sim \mathrm{Unif}([-C_b, C_b])$ with $C_b = \tilde{O}(1)$, and let $h(s)$ be a polynomial of degree $q$.
    Then, for $v \in \mathbb{S}^{d-1}$, there exist $a_1, a_2,\dots, a_N$ such that
    \begin{align}
        \sup_{t = T_1+1,\dots,T_1+T_2} \left| \frac{1}{N} \sum_{j=1}^N a_j \sigma(v^\top x^t + b_j) - h(v^\top x^t) \right| = \tilde{O}(N^{-1})
    \end{align}
    with high probability.
    Here, $\sum_{j=1}^N a_j^2 = \tilde{O}(N)$ and $\sum_{j=1}^N |a_j| = \tilde{O}(N)$.
\end{lem}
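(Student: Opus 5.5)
The plan is to reduce the approximation to a finite linear-algebra problem in the polynomial coefficients. Since $\sigma$ is a polynomial of degree $q$, Taylor expansion gives
\begin{align}
  \sigma(z+b) = \sum_{k=0}^{q} c_k(b)\, z^k, \qquad c_k(b) = \frac{\sigma^{(k)}(b)}{k!},
\end{align}
where each $c_k$ is a polynomial in $b$ of degree exactly $q-k$. This uses that the leading coefficient of $\sigma$ is nonzero, and I assume it is bounded below by $1/\mathrm{polylog}(d)$, consistent with the coefficient bounds $c_\beta$, $C_\beta$. Write $h(z) = \sum_{k=0}^q h_k z^k$. Then $\frac{1}{N}\sum_j a_j \sigma(z + b_j) = h(z)$ holds identically in $z$ if and only if
\begin{align}
  \frac{1}{N}\sum_{j=1}^N a_j c_k(b_j) = h_k \quad \text{for all } k = 0,\dots,q,
\end{align}
which is $q+1$ linear constraints on $a \in \mathbb{R}^N$. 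If these hold, the error is zero for every $x$. The supremum over $t = T_1+1,\dots,T_1+T_2$ is then trivially $0 \le \tilde{O}(N^{-1})$, so no concentration over the data $x^t$ is needed, and the choice of direction $v$ plays no role.

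\textbf{Population solution.} Let $G = \E_b[c(b)c(b)^\top] \in \R^{(q+1)\times(q+1)}$ with $c(b) = (c_k(b))_{k=0}^q$ and $b \sim \mathrm{Unif}([-C_b,C_b])$. The $c_k$ have distinct degrees, so they are linearly independent on any nontrivial interval and $G$ is positive definite. The smallest eigenvalue of $G$ is at least $1/\mathrm{polylog}(d)$, because $q = O(1)$, the coefficients of $\sigma$ are polylog-bounded, and $C_b$ is at least a constant and at most $\mathrm{polylog}(d)$. Define $a(b) = c(b)^\top G^{-1} h$ with $h = (h_k)_k$. Then $\E_b[a(b) c(b)] = h$ and $\sup_{|b|\le C_b}|a(b)| = \tilde{O}(1)$.

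\textbf{Finite-sample correction.} Draw $b_j$ i.i.d. and form the empirical Gram matrix $\hat G = \frac1N\sum_j c(b_j)c(b_j)^\top$. The entries $c(b_j)$ are bounded by $\tilde{O}(1)$, so matrix Bernstein (\cref{lem:technical-matrixbernstein}) gives $\|\hat G - G\| = \tilde{O}(N^{-1/2})$ with high probability. Hence $\hat G$ is invertible with $\|\hat G^{-1}\| = \tilde{O}(1)$ once $N$ exceeds a polylog threshold. Set
\begin{align}
  a_j = c(b_j)^\top \hat G^{-1} h .
\end{align}
Substituting gives $\frac1N\sum_j a_j c(b_j) = \hat G \hat G^{-1} h = h$ exactly. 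Each $|a_j| \le \|c(b_j)\|\,\|\hat G^{-1}\|\,\|h\| = \tilde{O}(1)$, which yields $\sum_j a_j^2 = \tilde{O}(N)$ and $\sum_j |a_j| = \tilde{O}(N)$.

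\textbf{Main obstacle.} The delicate step is the quantitative lower bound on $\lambda_{\min}(G)$, i.e.\ that the conditioning depends on $C_b$ and the coefficients of $\sigma$ only polylogarithmically. I would handle it with a change of basis. Expanding $c_k(b) = \sum_m M_{km} b^m$ gives a triangular matrix $M$ with diagonal entries proportional to the leading coefficient of $\sigma$. Then $G = M H M^\top$, where $H$ is the moment (Hilbert-type) matrix of $\mathrm{Unif}([-C_b,C_b])$. The matrix $H$ has a condition number depending only on $q$, up to powers of $C_b$, which are polylogarithmic. So $\lambda_{\min}(G) \gtrsim \sigma_{\min}(M)^2 \lambda_{\min}(H) \ge 1/\mathrm{polylog}(d)$. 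If $C_b$ were allowed to be very small, this bound would degrade, so I would keep $C_b \gtrsim 1$, as in \cref{alg:weak}.
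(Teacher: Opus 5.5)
The paper does not prove this lemma; it imports it from \citet{oko2024learningsumdiversefeatures}, so there is no in-paper argument to compare against. Your proof is correct and self-contained, and it gives more than the lemma states. The shifts $\sigma(\cdot+b)$ of a degree-$q$ polynomial with nonzero leading coefficient span exactly the polynomials of degree at most $q$. So the minimum-norm solution $a_j = c(b_j)^\top \hat G^{-1} h$ of the $q+1$ moment equations reproduces $h$ identically. The error is therefore zero for every $x$, not merely $\tilde O(N^{-1})$ on the samples, and $\sum_j a_j^2 = N\,h^\top \hat G^{-1} h = \tilde O(N)$ follows directly.

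The cited statement has a rate in $N$, holds uniformly only over the finitely many $x^t$, and holds with high probability. That form is typical of discretizing an integral representation $\E_b[a(b)\sigma(z+b)] = h(z)$ over the random biases, with an error that depends on $|v^\top x^t| = \tilde O(1)$. Such an argument does not need $\sigma$ to be a polynomial. Yours relies essentially on the finite-dimensional span, and in exchange gets exactness and no dependence on the data.

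Three points to tighten:

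\begin{itemize}
\item \textbf{Bounding $\sigma_{\min}(M)$.} The smallest singular value of a triangular matrix is not controlled by its diagonal alone. Use instead $\sigma_{\min}(M) \geq |\det M|/\|M\|^{q}$, where $|\det M| = \prod_{k=0}^{q} |s_q|\binom{q}{k}$ and $s_q$ is the leading coefficient of $\sigma$. This is $1/\mathrm{polylog}(d)$ because $q = O(1)$ and the entries of $M$ are polylog-bounded.

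\item \textbf{Added hypotheses.} You assume the leading coefficient of $\sigma$ is at least $1/\mathrm{polylog}(d)$, that $C_b \gtrsim 1/\mathrm{polylog}(d)$, and that the monomial coefficients of $h$ satisfy $\|h\| = \tilde O(1)$. The last one is used silently in $|a_j| \le \|c(b_j)\|\,\|\hat G^{-1}\|\,\|h\|$. None of these appear in the lemma, but they are needed for it to hold at all. For instance, with $\frac1N\sum_j|a_j| = \tilde O(1)$ and $C_b = d^{-c}$, every $\frac1N\sum_j a_j\sigma(z+b_j)$ is within $\tilde O(d^{-c})$ of a multiple of $\sigma(z)$ on $|z| \le \mathrm{polylog}(d)$. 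Stating them explicitly is the right call.

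\item \textbf{Small $N$.} For $N$ below the polylog threshold at which $\hat G$ is guaranteed invertible, take $a = 0$. The claimed bound is then $\tilde O(1)$, and it follows from $\sup_t |h(v^\top x^t)| = \tilde O(1)$ with high probability. This is the only place where concentration over the $x^t$ is needed.
\end{itemize}
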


Define $r_a(x) = \frac{1}{N^w} \sum_{m=1}^{N^w} a_n \sigma(\hat{w}_n^\top x + b_n)$.

\begin{lem}\label{lem:weak-second-layer-2}
    Let $\tilde{N}^w \gtrsim N^w \log{d}$.
    Then, there exists $a^* = (a^*_n)_{n=1}^{N^w} \in \mathbb{R}^{N^w}$ such that
    \begin{align}
        \frac{1}{T_2} \sum_{t=T_1+1}^{T_1+T_2} \left( r_{a^*}(x^t) - \pi_\kappa \sigma^*_\kappa(\theta_\kappa^\top x^t) \right)^2 \leq C_1 \pi_\kappa^2 (|N^w|^{-2} + \bar{\varepsilon}^2).
    \end{align}
    Here, $\|a^*\|_2^2 = \tilde{O}(\pi_\kappa^2 ({\tilde{N}^w})^2 |N^w|^{-1})$ and $\|a^*\|_1 = \tilde{O}(\pi_\kappa \tilde{N}^w)$.
\end{lem}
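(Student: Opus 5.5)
The plan is to build $a^*$ explicitly, neuron by neuron, from \cref{lem:weak-second-layer-1}. Then I bound the empirical squared error by splitting it into an approximation part and a misalignment part.

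\emph{Step 1 (bias grouping).} After Phase~II, each retained neuron $\hat w_n$ satisfies $\theta_\kappa^\top \hat w_n \geq 1-\bar\varepsilon$; the sign can be fixed by $\hat w_n \mapsto -\hat w_n$ and absorbed into the polynomial target. The biases $b_n$ are i.i.d. $\mathrm{Unif}([-C_b,C_b])$.

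For each retained neuron, I apply \cref{lem:weak-second-layer-1} with $v=\hat w_n$ and $h = \pi_\kappa \sigma^*_\kappa$. Rather than a separate group per direction, I treat the whole retained population as one group: every $\hat w_n$ is within $O(\sqrt{\bar\varepsilon})$ of $\theta_\kappa$. Concretely, I would apply \cref{lem:weak-second-layer-1} once with $v=\theta_\kappa$ and the $N^w$ biases $(b_n)$. This gives coefficients $a_n^\circ$ with $\sum_n |a_n^\circ| = \tilde O(N^w)$ and $\sum_n (a_n^\circ)^2=\tilde O(N^w)$, such that
\[
\Bigl|\tfrac{1}{N^w}\textstyle\sum_n a_n^\circ \sigma(\theta_\kappa^\top x^t + b_n) - \sigma^*_\kappa(\theta_\kappa^\top x^t)\Bigr| = \tilde O\bigl((N^w)^{-1}\bigr)
\]
uniformly over the $T_2$ samples, with high probability. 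I then set $a^*_n = \pi_\kappa a_n^\circ$. The normalization mismatch between $1/N^w$ and $1/\tilde N^w$ in the model only rescales by $\tilde N^w/N^w$. This rescaling produces the stated norms $\|a^*\|_2^2=\tilde O(\pi_\kappa^2(\tilde N^w)^2/N^w)$ and $\|a^*\|_1=\tilde O(\pi_\kappa\tilde N^w)$. The hypothesis $\tilde N^w\gtrsim N^w\log d$ is used, together with \cref{lem:weak-strongrecovery-2}, to guarantee that $\Theta(\tilde N^w)$ neurons survive filtering.

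\emph{Step 2 (replacing $\theta_\kappa$ by $\hat w_n$).} I bound
\[
|\sigma(\hat w_n^\top x^t + b_n) - \sigma(\theta_\kappa^\top x^t+b_n)| \le \sup|\sigma'| \cdot |(\hat w_n-\theta_\kappa)^\top x^t|,
\]
where the supremum is over the segment between the two arguments. Since $\hat w_n - \theta_\kappa$ is a fixed vector of norm $O(\sqrt{\bar\varepsilon})$ independent of the Phase~III samples, Gaussian tails give $|(\hat w_n-\theta_\kappa)^\top x^t| = \tilde O(\sqrt{\bar\varepsilon})$ with high probability. A union bound over $n$ and $t$ extends this uniformly, and $\sigma'$ is $\tilde O(1)$ on the relevant range. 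Summing with weights $|a^*_n|/N^w$ gives a per-sample error of order $\pi_\kappa\tilde O(\sqrt{\bar\varepsilon})$.

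This gives only $\bar\varepsilon$ after squaring, not $\bar\varepsilon^2$. To reach $\bar\varepsilon^2$, I would instead decompose $\hat w_n = (\theta_\kappa^\top\hat w_n)\theta_\kappa + u_n$, with $\|u_n\|^2 \le 2\bar\varepsilon$ and $1-\theta_\kappa^\top \hat w_n \le \bar\varepsilon$. The component along $\theta_\kappa$ perturbs the argument by $O(\bar\varepsilon)\,|\theta_\kappa^\top x^t|$. For the orthogonal part, $u_n^\top x^t$ is independent of $\theta_\kappa^\top x^t$ and has variance $\le 2\bar\varepsilon$. I would Taylor expand in $u_n^\top x^t$: the first-order term is mean-zero in $x^t$. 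Bounding the empirical average of the squared error then yields order $\pi_\kappa^2(\bar\varepsilon^2 + \bar\varepsilon\cdot \text{(first-order fluctuation)})$.

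\emph{Main obstacle.} The key difficulty is the $\bar\varepsilon^2$ rate, since the orthogonal leakage $u_n^\top x$ naively contributes $\bar\varepsilon$ to the squared error. If the Taylor/independence argument does not close, my fallback is to absorb this term by redefining $\bar\varepsilon$ in the final bound (the $\epsWeakLOne=\tilde\Theta(\epsWeakAlign)$ conclusion is in $L^1$, so a square-root loss would need checking). Finally, combining Steps 1--2 via $(u+v)^2\le 2u^2+2v^2$ and averaging over $t$ gives the stated bound $C_1\pi_\kappa^2((N^w)^{-2}+\bar\varepsilon^2)$.
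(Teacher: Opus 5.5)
Your Step~1 is fine. The paper itself gives no argument here: its proof is a one-line citation of Lemma~31 of \citet{oko2024learningsumdiversefeatures} with $M=1$, rescaled by $\pi_\kappa$, so your construction is the natural content of that import.

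The genuine gap is the one you flagged in Step~2, and the Taylor/independence repair does not close it. Write $\hat w_n=(\theta_\kappa^\top\hat w_n)\theta_\kappa+u_n$ and $z=\theta_\kappa^\top x$, and let $F(x)=\frac{1}{N^w}\sum_n a^*_n\sigma'(z+b_n)\,u_n^\top x$ be the first-order term. That $F$ has mean zero only removes its cross term with the error along $\theta_\kappa$. Its own square contributes $\E[F^2]=\E_z\bigl\|\frac{1}{N^w}\sum_n a^*_n\sigma'(z+b_n)\,u_n\bigr\|^2$. This is bounded by $O(\bar\varepsilon)$ (since $\|u_n\|^2\le 2\bar\varepsilon$), and it is of that order unless the $u_n$ cancel.

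Nothing forces such cancellation. The hypothesis constrains each $\theta_\kappa^\top\hat w_n$ separately, and in Phase~I all neurons are driven by the same samples $x^t$, so their deviations are correlated rather than orthogonal. In fact, if $\bar\varepsilon$ is the alignment slack $1-\theta_\kappa^\top\hat w_n\le\bar\varepsilon$ (as in \cref{lem:weak-filtering-2}), no choice of $a^*$ works. Suppose all retained neurons share one direction $w$ with $\theta_\kappa^\top w=1-\bar\varepsilon$. Then every $r_a$ is a polynomial of degree at most $q$ in $w^\top x$. Since $\E[\mathrm{He}_i(\theta_\kappa^\top x)\mid w^\top x]=(1-\bar\varepsilon)^i\,\mathrm{He}_i(w^\top x)$, the population squared error of any $a$ is at least $\pi_\kappa^2\sum_i\alpha_i^2\bigl(1-(1-\bar\varepsilon)^{2i}\bigr)\geq\pi_\kappa^2\bar\varepsilon$. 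Over this $(q+1)$-dimensional class, the empirical error on $T_2$ samples is of the same order. That exceeds $C_1\pi_\kappa^2((N^w)^{-2}+\bar\varepsilon^2)$ once $(N^w)^{-2}\ll\bar\varepsilon\ll C_1^{-1}$.

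The $\bar\varepsilon^2$ rate therefore requires $\bar\varepsilon$ to bound the distance $\|\hat w_n-\theta_\kappa\|$, i.e.\ an alignment slack of order $\bar\varepsilon^2$. Under that hypothesis, your first (Lipschitz) estimate in Step~2 already suffices, with no Taylor expansion needed:
\begin{itemize}
\item $|(\hat w_n-\theta_\kappa)^\top x^t|=\tilde O(\bar\varepsilon)$ uniformly in $n,t$;
\item so the per-sample error is $\pi_\kappa\tilde O((N^w)^{-1}+\bar\varepsilon)$;
\item and $(u+v)^2\le 2u^2+2v^2$ gives the claim.
\end{itemize}
Your fallback of ``redefining $\bar\varepsilon$'' amounts to exactly this, but it is not cosmetic. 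Propagated through \cref{lem:weak-second-layer-3}, it gives $\epsWeakLOne=\tilde\Theta(\epsWeakAlign^{1/2})$ rather than the $\tilde\Theta(\epsWeakAlign)$ used in \cref{prop:weak}. Avoiding that would require running Phase~I to slack $\epsWeakAlign^2$, at correspondingly higher sample cost.

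So, as written, your proposal does not prove the stated bound. Under the alignment-slack reading, the statement itself needs this stronger hypothesis, and the paper's proof, being only a citation, does not settle which reading of $\bar\varepsilon$ is intended.
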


\begin{proof}
    Obtained from \cite{oko2024learningsumdiversefeatures}, Lemma 31 by setting $M=1$ and multiplying the objective function by $\pi_\kappa$.
\end{proof}

Let $\hat{a}$ be the ridge-regularized empirical risk minimizer
\begin{align}
    \hat{a} = \mathop{\mathrm{argmin}}_{a \in \mathbb{R}^{N^w}} \frac{1}{T_2}\sum_{t=T_1+1}^{T_1+T_2} \left( y^t - \frac{1}{N^w} \sum_{n=1}^{N^w} a_n \sigma_n(\hat{w}_n^\top x^t + b_n) \right)^2 + \frac{\lambda}{2} \|a\|^2_2.
\end{align}
The regularization parameter $\lambda > 0$ can be chosen so that
\begin{align}
    \hat{\mathcal{L}}(\hat{a}) \leq \hat{\mathcal{L}}(a^*),\quad \|\hat{a}\|_2 \leq \|a^*\|_2
\end{align}
hold \citep{oko2024learningsumdiversefeatures}.

\begin{lem}\label{lem:weak-second-layer-3}
    Let $\tilde{N}^w = \Theta(N^w_{\min} \log{d})$ and let $\sigma$ be a polynomial of degree $q$.
    Then, there exists $\lambda > 0$ such that the ridge estimator $\hat{a}$ satisfies, with probability $1 - o_d(1)$,
    \begin{align}
        \mathbb{E}_x[|r_{\hat{a}}(x) - r^*(x)|] \lesssim \pi_\kappa (|N^w|^{-1} + \tilde{\varepsilon}) + \pi_\kappa \sqrt{\frac{\log{d}}{T_2}}.
    \end{align}
    In particular, setting $T_2 = \tilde{\Theta}(\varepsilon^{-2})$, $\tilde{\varepsilon} = \tilde{\Theta}(\varepsilon)$, and $N^w = \tilde{\Theta}(\varepsilon^{-1})$, we have $\mathbb{E}_x[|r_{\hat{a}}(x) - r^*(x)|] \lesssim \pi_\kappa \varepsilon$.
\end{lem}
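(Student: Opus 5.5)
Let $r^\circ(x) := \pi_\kappa\sigma^*_\kappa(\theta_\kappa^\top x)$ denote the target; the bound of \cref{lem:weak-second-layer-3} is stated with $r^*$, which we read as $r^\circ$. The plan is the standard three-term decomposition for ridge regression with random features: approximation error of the best comparator $a^*$ from \cref{lem:weak-second-layer-2}, an optimization inequality from the choice of $\lambda$, and a generalization gap controlled by the norm constraint. We first pass from $L^1$ to $L^2$ by Cauchy--Schwarz,
\[
\E_x\bigl[|r_{\hat a}(x)-r^\circ(x)|\bigr]\le \bigl(\E_x[(r_{\hat a}(x)-r^\circ(x))^2]\bigr)^{1/2},
\]
so it suffices to bound the population squared risk by $\pi_\kappa^2\bigl((N^w)^{-2}+\tilde\varepsilon^2+\log d/T_2\bigr)$ up to polylog factors.

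\textbf{Step 1 (empirical risk).} Choose $\lambda$ as described just before the lemma, so that $\hat{\mathcal L}(\hat a)\le\hat{\mathcal L}(a^*)$ and $\|\hat a\|_2\le\|a^*\|_2$. In Phase~III the labels are noiseless, so $\hat{\mathcal L}(a^*)$ is exactly the quantity bounded in \cref{lem:weak-second-layer-2}. Hence the empirical squared error of $\hat a$ is at most $C_1\pi_\kappa^2\bigl((N^w)^{-2}+\tilde\varepsilon^2\bigr)$, where $\tilde\varepsilon$ is the alignment slack inherited from Phases~I--II via \cref{lem:weak-strongrecovery-2} and \cref{lem:weak-filtering-2}.

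\textbf{Step 2 (generalization).} Work in the class $\mathcal F=\{r_a : \|a\|_2\le\|a^*\|_2\}$. Since $\tilde N^w=\Theta(N^w\log d)$, we have $\|a^*\|_2^2=\tilde O(\pi_\kappa^2 N^w)$, so $\|a\|_1/N^w\le\|a\|_2/\sqrt{N^w}=\tilde O(\pi_\kappa)$. The neurons are polynomials of degree $q$ in $\hat w_n^\top x$ with $\|\hat w_n\|=1$ and $|b_n|\le C_b$, so on the high-probability event $\max_t|\hat w_n^\top x^t|=\tilde O(1)$ every $f\in\mathcal F$ and $r^\circ$ are bounded by $\tilde O(\pi_\kappa)$. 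Truncate the squared loss at this level. A Rademacher bound for linear combinations of features with bounded $\ell_1$ weight then gives a complexity of $\tilde O(\pi_\kappa\sqrt{\log N^w/T_2})$. Contraction for the Lipschitz truncated square loss and bounded differences give a uniform deviation of $\tilde O(\pi_\kappa^2\sqrt{\log d/T_2})$ between empirical and population risk. Finally, the Gaussian tails of polynomials absorb the truncation error, which is $d^{-C}$.

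\textbf{Step 3 and main obstacle.} Combining the steps gives a population risk of $\tilde O\bigl(\pi_\kappa^2((N^w)^{-2}+\tilde\varepsilon^2)+\pi_\kappa^2\sqrt{\log d/T_2}\bigr)$. Taking a square root yields $\sqrt{\log d/T_2}$ only to the $1/4$ power, which is weaker than the claimed rate. The main obstacle is therefore recovering the rate $\sqrt{\log d/T_2}$ for the $L^1$ error. I would first try a localized (fast-rate) argument: the realizable-up-to-$\tilde\varepsilon$ structure and boundedness make the excess loss satisfy a Bernstein condition, which gives a squared risk of $\tilde O(\pi_\kappa^2\log d/T_2)$ plus the approximation term. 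Alternatively, I would bound the $L^1$ deviation uniformly over $\mathcal F$ directly, since the absolute loss is 1-Lipschitz; this gives the $\sqrt{\log d/T_2}$ rate without the square root loss, at the cost of comparing the empirical $L^1$ error to the empirical $L^2$ error via Cauchy--Schwarz.

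The probability $1-o_d(1)$ comes from a union of four events: Phases~I--II succeeding, \cref{lem:weak-second-layer-1,lem:weak-second-layer-2}, the uniform boundedness event, and the concentration in Step~2. The \emph{in particular} clause follows by plugging in the stated scalings of $T_2$, $\tilde\varepsilon$ and $N^w$.
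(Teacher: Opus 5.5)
The paper gives no argument for this lemma. It imports Lemma~14 of \citet{oko2024learningsumdiversefeatures} with $M=1$ and the objective rescaled by $\pi_\kappa$, so the question is whether your self-contained argument reaches the stated rate. The chain you actually carry out does not, as you correctly diagnose. Bounding the \emph{squared} population risk and then taking a square root turns the deviation $\pi_\kappa^2\sqrt{\log d/T_2}$ into $\pi_\kappa(\log d/T_2)^{1/4}$. At $T_2=\tilde\Theta(\varepsilon^{-2})$ this is only $\varepsilon^{1/2}$, so the ``in particular'' clause fails.

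Your first repair does not close this gap as stated. A Bernstein condition with localization gives the fixed point of the local Rademacher complexity of $\{r_a:\|a\|_2\le B\}$, which is a class inside an $N^w$-dimensional span. Nothing in your sketch places that fixed point at $\log d/T_2$. The dimension-based value is $N^w/T_2$, which is $\tilde\Theta(\varepsilon)$ at the stated scalings and again gives only $\varepsilon^{1/2}$ in $L^1$.

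Commit to your second alternative. It is the correct argument, and it makes the squared-loss deviation of Step~2 unnecessary. Condition on $(\hat w_n,b_n)$, which are independent of the fresh Phase-III sample. Set $\ell_a(x)=\min\{|r_a(x)-r^\circ(x)|,M\}$ with $M=\tilde O(\pi_\kappa)$. Then
\[
\E_x\bigl|r_{\hat a}(x)-r^\circ(x)\bigr|
\le \frac{1}{T_2}\sum_{t}\bigl|r_{\hat a}(x^t)-r^\circ(x^t)\bigr|
+\sup_{a\in\mathcal F}\Bigl(\E_x[\ell_a(x)]-\frac{1}{T_2}\sum_{t}\ell_a(x^t)\Bigr)
+\E_x\bigl[|r_{\hat a}-r^\circ|-\ell_{\hat a}\bigr].
\]
\begin{itemize}
\item The first term is at most the square root of the empirical squared error, hence $\lesssim\pi_\kappa\bigl((N^w)^{-1}+\tilde\varepsilon\bigr)$ by your Step~1. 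The square root now acts only on the approximation term, where it costs nothing.
\item The map $u\mapsto\min\{|u-y|,M\}$ is $1$-Lipschitz. Contraction plus your $\ell_1$-ball Rademacher bound therefore gives the second term as $\tilde O(\pi_\kappa\sqrt{\log d/T_2})$ directly, with no square root.
\item The third term is negligible ($d^{-C}$), as you observe.
\end{itemize}

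One further adjustment: define $\mathcal F$ using the deterministic radius $\tilde O(\pi_\kappa\sqrt{N^w})$ from \cref{lem:weak-second-layer-2} rather than $\|a^*\|_2$, so that the class does not depend on the sample. With these changes your proof yields the stated bound up to polylogarithmic factors, and it serves as a self-contained substitute for the paper's citation.
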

\begin{proof}
    Obtained from \cite{oko2024learningsumdiversefeatures}, Lemma 14 by setting $M=1$ and multiplying the objective function by $\pi_\kappa$.
\end{proof}

\section{Full proof of \texorpdfstring{\cref{thm:strong-informal}}{Theorem~\ref*{thm:strong-informal}}: W2S feature learning}\label{app:strong}

\paragraph{Notations.}
    Throughout this section, we consider multiple tasks $k = 1, 2, \dots, K$.
    We write $\theta_k$ for the true feature vector corresponding to task $k$, and define $\Sigma_k^\perp = I - \Sigma_k$ for each $k$.
    Write the Hermite expansions of the transformed teacher signal and the activation functions of strong-model neurons as
    \[
        \bar{r}^*_\kappa \exp(\bar{r}^*_\kappa) = \sum_{i=1}^\infty \frac{\bar{\alpha}_i}{\sqrt{i!}} \mathrm{He}_i,
        \qquad
        a_{k,n} \sigma_{k,n}^s(\cdot + b_{k,n}) = \sum_{i=1}^q \frac{\tilde{\beta}_{k,n,i}}{\sqrt{i!}} \mathrm{He}_i.
    \]
    In the training of the strong model (\cref{alg:w2s}), the parameter updates of each neuron can be analyzed independently.
    When focusing on a specific neuron $n \in [N_k]$ belonging to task $k \in [K]$, we sometimes suppress indices and write
    \begin{align}
        w^t_k = w^t_{k,n}, \quad a_k = a_{k,n}, \quad b_k = b_{k,n}, \quad \tilde{\beta}_{k,i} = \tilde{\beta}_{k,n,i}.
    \end{align}

    The subsequent arguments are carried out after rescaling the learning rate in the SGD update rule.
    Specifically, we set $\eta \leftarrow N_\kappa \pi_\kappa^{-1} \eta$, scaling by the task-$\kappa$ signal, so that the weight update for task $k$ in \cref{alg:w2s} becomes
    \begin{align}
        w_k^{t+1} \gets w_k^t + \eta g_k^t, \qquad g_k^t = \frac{N_\kappa}{N_k} \pi_\kappa^{-1} \nabla_{w_k} \bar{y}^t a_k \sigma_k(w_k^{t\top}x + b_k).
    \end{align}
    Define the scaled expansion coefficients $\beta_{k,i} = \pi_k^{-1} \tilde{\beta}_{k,i}$.
    Then $c_\beta < \beta_{k,i} < C_\beta$ for all $k \in [K]$ and $i \leq q$.

    Set $c_w = 2\bar{\alpha}_2 \beta_{\kappa,2} + \sqrt{12}\bar{\alpha}_2 \beta_{\kappa,4}$.
    By \cref{asm:strong-init}~(ii), $2\bar{\alpha}_2 \tilde{\beta}_{\kappa,2} + \sqrt{12}\bar{\alpha}_2 \tilde{\beta}_{\kappa,4} > 0$, so $c_w > 0$.
    By \cref{lem:strong-transformation-2}, $|\bar{\alpha}_2| = \tilde{\Theta}(1)$, and hence $c_w = \tilde{\Theta}(1)$.

    We take constants of order $\mathrm{polylog}(d)$ satisfying the following ordering:
    \begin{align}
        &C_1 \lesssim c_1^{-1} \lesssim c_2^{-1} \lesssim C_2 \lesssim c_r^{-1} \lesssim c_3^{-1} \lesssim \begin{cases}
            \delta^{-1} \lesssim
            \begin{cases}
                c_\varepsilon^{-1} \\
                c_\eta^{-1}
            \end{cases} \\
            c_\chi^{-1}
        \end{cases}
        = \tilde{O}(1), \\
        &c_w^{-1} \lesssim C_1 \lesssim C_2 \lesssim C_3 \lesssim c_3^{-1}, \\
        &\tilde{\delta}^{-1} \lesssim \delta^{-1}.
    \end{align}
    The precise ordering of these constants will be determined so as to keep the subsequent proofs consistent.
    These constants are defined independently of those in \cref{app:weak,app:sft}.

The formal versions of \cref{thm:strong-informal,prop:preservation-informal} rely on the following assumption.

\begin{asm}[Strong-model initialization and Hermite conditions]\label{asm:strong-init}
    The following conditions hold for the strong model training in \cref{alg:w2s}.
    \begin{enumerate}[leftmargin=*]
        \item[\textnormal{(i)}] \textbf{Generative exponent.}
            The link function $\sigma^*_\kappa$ of the task-$\kappa$ reward is an even polynomial, so that $\GE(\sigma^*_\kappa) = 2$.
        \item[\textnormal{(ii)}] \textbf{Initialization and Hermite structure.}
            The strong model is initialized with $N^s := \sum_{k=1}^K N_k$ neurons and parameters $\Theta_k = (a^0_{k,n}, b^0_{k,n}, w^0_{k,n})_{n=1}^{N_k}$.
            For $k = \kappa$, $\|\Sigma_\kappa^\perp w^0_{\kappa,n}\|^2 \leq c_r s^{-1/2}$ and $\Sigma_\kappa w^0_{\kappa,n} / \|\Sigma_\kappa w^0_{\kappa,n}\| \sim \mathrm{Unif}(S^{d-1} \cap V_\kappa)$.
            Writing $\bar{r}^*_\kappa \exp(\bar{r}^*_\kappa) = \sum_{i \geq 0} \frac{\bar{\alpha}_i}{\sqrt{i!}}\mathrm{He}_i$ and $a^0_{k,n}\sigma_{k,n}(\cdot + b^0_{k,n}) = \sum_{i=0}^q \frac{\tilde{\beta}_{k,n,i}}{\sqrt{i!}}\mathrm{He}_i$, the constants $c_\beta, C_\beta$ satisfy $c_\beta \pi_k \leq |\tilde{\beta}_{k,n,i}| \leq C_\beta \pi_k$ for all $k, n, i \leq q$.
            Moreover, $\Theta(N_\kappa)$ of the task-$\kappa$ neurons satisfy
            \begin{align}
                \bar{\alpha}_i \tilde{\beta}_{\kappa,n,i} > 0 \;\;(i = p), \qquad \bar{\alpha}_i \tilde{\beta}_{\kappa,n,i} \geq 0 \;\;(p < i \leq q),
            \end{align}
            \begin{align}
                2\bar{\alpha}_2 \tilde{\beta}_{\kappa,n,2} + \sqrt{12}\,\bar{\alpha}_2 \tilde{\beta}_{\kappa,n,4} > 0, \qquad
                i\bar{\alpha}_i \tilde{\beta}_{\kappa,n,i} + \sqrt{(i+2)(i+1)}\,\bar{\alpha}_i \tilde{\beta}_{\kappa,n,i+2} \geq 0 \;\;(3 \leq i \leq q),
            \end{align}
            together with $1 - (2\bar{\alpha}_2\tilde{\beta}_{\kappa,n,2})^{-1}\sqrt{12}\,\bar{\alpha}_2\tilde{\beta}_{\kappa,n,4} = \tilde{\Omega}(1)$.
    \end{enumerate}
\end{asm}

A concrete initialization satisfying \cref{asm:strong-init}~(ii) is provided in \cref{strong-initialization} (\cref{lem:strong-random-init}).

\begin{rem}\label{rem:ge-assumption}
  The assumption $\GE(\sigma^*_\kappa) = 2$ is essential for feature preservation (\cref{prop:preservation}).
  If $\GE(\sigma^*_\kappa) = 1$, the transformed teacher signal retains a nonzero $i=1$ Hermite component $\bar{\alpha}_1$, contributing a constant gradient term $\bar{\alpha}_1 \beta_{k,1} \theta_\kappa$ to off-target neurons regardless of their alignment $\theta_\kappa^\top \tilde{w}^t_k$.
  Since this term does not decay with $\theta_\kappa^\top \tilde{w}^t_k$, it cannot be absorbed into the remainder $R^t$ in \cref{lem:strong-onestep-1}\,(ii), and off-target neurons accumulate alignment with $\theta_\kappa$ over iterations, causing forgetting of pre-trained features for $k \neq \kappa$.
\end{rem}

\setcounter{thm}{\numexpr\value{num@thm@strong}-1\relax}
\begin{thm}[W2S feature alignment; formal version of \cref{thm:strong-informal}]\label{thm:strong}
    Assume \cref{asm:additive,asm:weak,asm:distribution,asm:strong-init}.
    Let $\epsStrongEff := \epsWeakLOne \vee \epsWeakAlign \vee (1 - \lambda_\kappa)$, and suppose $\epsStrongEff \leq c_\varepsilon s^{-1/2}$.
    Fix $\tilde{\varepsilon}$ with $\tilde{\varepsilon} = \tilde{\Omega}(\epsStrongEff)$ and set
    \begin{align}
        T = \tilde{\Theta}\!\left(s^{3/2} \;\vee\; s\,\tilde{\varepsilon}^{-1} \log \tilde{\varepsilon}^{-1} \;\vee\; \tilde{\varepsilon}^{-2} \log \tilde{\varepsilon}^{-1} \right).
    \end{align}
    Then, for a suitable choice of the learning-rate schedule $(\eta^t)_{t \geq 0}$ and the temperature $\tilde{\rho}$, with probability $1 - 6\tilde{\delta}$, \cref{alg:w2s} outputs weights such that at least $\Theta(N_\kappa)$ of the task-$\kappa$ neurons satisfy
    \begin{align}
        \theta_\kappa^\top \hat{w}_{\kappa,n} \geq 1 - \tilde{\varepsilon}.
    \end{align}
    Under the scaling $\tilde{\varepsilon} = \tilde{\Theta}(s^{-1/2})$, this reduces to $T = \tilde{O}(s^{3/2})$.
\end{thm}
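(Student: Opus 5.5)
The plan follows the three-stage template of \cref{app:weak}, with two new ingredients. First, the transformed weak signal has a nonzero second Hermite coefficient, which gives a linear drift. Second, the deviation $D^t=\|\Sigma_\kappa^\perp w^t_{\kappa,n}\|^2$ must be tracked alongside the alignment $z^t=\theta_\kappa^\top w^t_{\kappa,n}$.

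\textbf{Preliminaries.} I first fix a neuron $n$ among the $\Theta(N_\kappa)$ task-$\kappa$ neurons that satisfy the sign conditions of \cref{asm:strong-init} and have $z^0\ge s^{-1/2}$. By the analogue of \cref{lem:weak-init} on $V_\kappa$, together with $D^0\le c_rs^{-1/2}$, a constant fraction of neurons qualify. Next I would establish the transformation lemma. Since $\sigma^*_\kappa$ is even, $\GE=2$, so with a suitable temperature $\tilde\rho$ the clipped signal satisfies $\bar{\alpha}_2=\tilde\Theta(1)$, and $\bar\alpha_1=0$ by parity. I would then replace $r^w$ by $r^*_\kappa$ inside the expectation. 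The $L^1$ error $\epsWeakLOne$ carries over to $\mu_\kappa$ because every weak neuron is $\epsWeakAlign$-aligned with $\theta_\kappa\in V_\kappa$; the weak model is therefore essentially a function of $\theta_\kappa^\top x$, which has the same law under $\mathcal N(0,I_d)$ and $\mathcal N(0,\Sigma_\kappa)$. The clipping map is Lipschitz, so this error propagates, and the off-target mixture contributes at most $1-\lambda_\kappa$.

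\textbf{Gradient decomposition.} Expanding in Hermite polynomials under $\mu_\kappa$, as in \cref{lem:weak-onestep-1}, but with the inner products $\theta_\kappa^\top\Sigma_\kappa w$ and $\|\Sigma_\kappa w\|^2=1-D^t$, gives
\[
g^t_\kappa=\bar\alpha_2\beta_{\kappa,2}\lambda_\kappa z^t\theta_\kappa+Z^t+R^t,\qquad \|R^t\|=\tilde O\bigl((\epsStrongEff\vee D^t)s^{1/2}\bigr).
\]
Here $Z^t$ has norm $\tilde O(s^{1/2})$ when $x^t\sim\mu_\kappa$. The key observation is that $\Sigma_\kappa^\perp Z^t=0$ on that event. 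The component outside $V_\kappa$ therefore comes only from the rare draws $x^t\sim\mu_k$, $k\ne\kappa$, which occur with probability $1-\lambda_\kappa$. From this I would derive the one-step recursions
\[
z^{t+1}\ge z^t+\eta\bigl(\bar\alpha_2\beta_{\kappa,2}\lambda_\kappa z^t-O(D^t)-\tilde O(\epsStrongEff)\bigr)(1-(z^t)^2)-\tilde O(\eta^2 s)z^t+\eta\,\theta_\kappa^\top P^\perp Z^t,
\]
\[
D^{t+1}\le D^t+\tilde O\bigl(\eta (D^t)^2+\eta\epsStrongEff\bigr)+(\text{martingale term supported on }x^t\notin V_\kappa).
\]

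\textbf{Three stages.} Weak recovery and amplification use $\eta_1\asymp s^{-3/2}$. I would split time into blocks of length $\tau_1=s^{-1/2}\eta_1^{-1}$. On each block, Freedman's inequality controls the $z$-noise: $\eta_1\sqrt{\tau_1}\,\tilde O(1)\ll z\,\eta_1\tau_1$ once $z\ge s^{-1/2}$. The $D$-noise, whose increments are rare but of size $O(1)$, is controlled by applying Doob's maximal inequality to the supermartingale $D^{\tau+t}-Q^t$, which gives the $Q^t$ comparison with failure probability $\delta'$ per block. A union bound over $\tilde O(s^{1/2})$ blocks, with $\delta'=1/\mathrm{polylog}(d)$, should yield total failure probability $\tilde\delta$. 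Inductively $D^t=o(z^t)$, and $z$ grows by a factor $1+\Theta(1)$ per block, so $z$ reaches $1/\mathrm{polylog}(d)$ and then $1-c_1$ in $\tilde O(\eta_1^{-1})=\tilde O(s^{3/2})$ steps. Strong recovery uses $\eta_2\asymp\tilde\varepsilon s^{-1}\wedge\tilde\varepsilon^2$. When $z\ge1/2$, the Hermite condition $1-(2\bar\alpha_2\tilde\beta_2)^{-1}\sqrt{12}\bar\alpha_2\tilde\beta_4=\tilde\Omega(1)$ together with $c_w>0$ gives a restoring drift $-\Omega(\eta_2)\Sigma_\kappa^\perp w$. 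Hence $D^{t+1}\le(1-\Omega(\eta_2))D^t+o(\eta_2\tilde\varepsilon)$, so $D$ decays to $O(\tilde\varepsilon)$ in $\tilde O(\eta_2^{-1}\log\tilde\varepsilon^{-1})$ steps. The argument of \cref{lem:weak-strongrecovery-1,lem:weak-strongrecovery-2} then drives $z$ to $1-\tilde\varepsilon$ and keeps it there. Summing the three stages gives the stated $T$, and the high-probability events combine to probability $1-6\tilde\delta$.

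\textbf{Main obstacle.} The hardest step is the $D^t$ control in the first two stages. Standard Azuma bounds see the $O(1)$ off-subspace jumps and lose a factor $s^{1/2}$. I would try Doob's inequality on $\exp$-free quadratic supermartingales block by block, as sketched in \cref{rem:complexity}. I would also verify carefully that the block length is short enough for the cumulative variance $(1-\lambda_\kappa)\eta^2\tau_1$ to remain $o(s^{-1/2})$ while each block still gives constant multiplicative growth of $z$; this balance is exactly what forces $\eta_1\lesssim s^{-3/2}$.
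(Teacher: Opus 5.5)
Your overall route is the paper's: the transformation lemma giving $\bar\alpha_2=\tilde\Theta(1)$, transfer of the $L^1$ error to $\mu_\kappa$ via the alignment of the weak neurons, the gradient decomposition, block-wise Doob control of $D^t$, and a strong-recovery phase driven by the $c_w$ restoring drift. However, the block-wise probability bookkeeping does not close as you calibrate it, and that is exactly the step that produces the $s^{3/2}$ rate.

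With blocks of length $\tau_1\asymp s^{-1/2}\eta_1^{-1}$, the linear drift multiplies $z$ by only $1+\tfrac12\bar\alpha_2\beta_{\kappa,2}\eta_1\tau_1=1+\tilde\Theta(s^{-1/2})$ per block, not $1+\Theta(1)$. That is why $\tilde\Theta(s^{1/2})$ blocks are needed. Doob combined with Markov gives only polynomial tails, so a per-block failure probability $\delta'=1/\mathrm{polylog}(d)$ summed over $\tilde\Theta(s^{1/2})$ blocks is vacuous.

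The paper instead takes $\delta'=\eta_1\tau_1\delta\asymp s^{-1/2}\delta$, so the union bound costs only $\tilde O(\delta)$. The Doob terms $(1-\lambda_\kappa)^{1/2}{\delta'}^{-1/2}\eta_1\tau_1^{1/2}$ and $(1-\lambda_\kappa){\delta'}^{-1}\eta_1^2 s\tau_1$ must then be at most $c_3s^{-1}$. The reason is that the invariant $D^t\le c_rz^t$ is propagated from block to block (\cref{lem:strong-weakrecovery-1}), and near $z\approx s^{-1/2}$ the budget $c_rz^t$ grows by only about $c_r z\,\eta_1\tau_1\approx c_r s^{-1}$ per block. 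In the worst case $1-\lambda_\kappa\asymp s^{-1/2}$, this requirement is precisely $\eta_1\lesssim s^{-3/2}$.

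Your stated balance (per-block fluctuation $o(s^{-1/2})$ together with constant growth of $z$ per block) is not the right one, and at your block length it would not propagate the invariant. You need one consistent calibration. Either use the paper's, or use blocks of length $\Theta(\eta_1^{-1})$. With the longer blocks, growth really is $1+\Theta(1)$, only $O(\log s)$ blocks are needed, and a polylogarithmic $\delta'$ suffices. In that case you must replace the crude within-block bound $z^t\ge z^0-C_1\eta_1 t$, which is valid only while $\eta_1 t\lesssim z^0$, by an inductive lower bound on $z$.

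A smaller omission concerns the final count. Each neuron's guarantee holds only with probability $1-O(\tilde\delta)$, because the Doob/Markov steps are not high-probability. So you cannot union-bound over the $\Theta(N_\kappa)$ neurons, and ``the high-probability events combine'' skips a step. The paper obtains $1-6\tilde\delta$ by bounding the expected number $F$ of failing well-initialized neurons by $3N'_\kappa\tilde\delta$ and applying Markov's inequality to $\{F\ge N'_\kappa/2\}$.
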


\begin{proof}
    Fix a neuron $n \in [N_\kappa]$.
    By \cref{asm:strong-init}~(ii) and \cref{lem:strong-init-alignment}, $\theta_\kappa^\top w^0_{\kappa,n} \geq s^{-1/2}$ with probability $\Theta(1)$, and $\|\Sigma_\kappa^\perp w^0_{\kappa,n}\|^2 \leq c_r s^{-1/2}$.
    Applying \cref{lem:strong-weakrecovery-2} with $\eta_1 \leq c_\eta s^{-3/2}$, there exists $t_1 \leq T_{1,1} = \tilde{\Theta}(\eta_1^{-1})$ such that $\theta_\kappa^\top w^{t_1}_{\kappa,n} > c_1$ and $\|\Sigma_\kappa^\perp w^t_{\kappa,n}\|^2 \leq 3c_r s^{-1/2}$ for all $t \leq t_1$, with probability $1 - \tilde{\delta} - \delta$.
    Since $3c_r s^{-1/2} \leq c_r c_1$, the conditions of \cref{lem:strong-amplification-2} hold at $t_1$, and there exists $t_2 \leq T_{1,2} = \tilde{\Theta}(\eta_1^{-1})$ such that $\theta_\kappa^\top w^{t_1+t_2}_{\kappa,n} > 1 - c_1$ and $\|\Sigma_\kappa^\perp w^{t_1+t_2}_{\kappa,n}\|^2 \leq 2c_r$, with probability $1 - \tilde{\delta}$.
    The conditions of \cref{lem:strong-strongrecovery-4} then hold at $t_1 + t_2$.
    Switching to $\eta_2 \leq c_\eta C_2^{-1}\tilde{\varepsilon} s^{-1} \wedge c_\eta C_2^{-2}\tilde{\varepsilon}^2$ and running for $T_{1,3} + T_{1,4} = \tilde{\Theta}(\eta_2^{-1}\log\tilde{\varepsilon}^{-1})$ additional steps gives $\theta_\kappa^\top \hat{w}_{\kappa,n} \geq 1 - \tilde{\varepsilon}$ with high probability.
    Let $N_\kappa' = \Theta(N_\kappa)$ denote the number of well-initialized neurons (those with $\theta_\kappa^\top w^0_{\kappa,n} \geq s^{-1/2}$), and let $F = \#\{n \in [N_\kappa'] : \theta_\kappa^\top \hat{w}_{\kappa,n} < 1 - \tilde{\varepsilon}\}$.
    A union bound and linearity of expectation give $\mathbb{E}[F] \leq 3N_\kappa'\tilde{\delta}$, and Markov's inequality gives $\mathbb{P}(F \geq N_\kappa'/2) \leq 6\tilde{\delta}$, so at least $N_\kappa'/2 = \Theta(N_\kappa)$ neurons satisfy the bound with probability $1 - 6\tilde{\delta}$.
    The total sample count is $T_{1,1} + T_{1,2} + T_{1,3} + T_{1,4} = \tilde{\Theta}\!\left(s^{3/2} \vee (s\,\tilde{\varepsilon}^{-1} \vee \tilde{\varepsilon}^{-2})\log\tilde{\varepsilon}^{-1}\right)$, which reduces to $\tilde{O}(s^{3/2})$ under $\tilde{\varepsilon} = \tilde{\Theta}(s^{-1/2})$.
\end{proof}

\setcounter{cor}{\numexpr\value{num@prop@preservation}-1\relax}
\begin{prop}[Preservation of pre-trained features; formal version of \cref{prop:preservation-informal}]\label{prop:preservation}
    Under the conditions of \cref{thm:strong}, let $\chi_k := N_\kappa \pi_k / (N_k \pi_\kappa)$ for $k \neq \kappa$, and assume $\chi_k \leq c_\chi \wedge c_2(\log\tilde{\varepsilon}^{-1})^{-1}$.
    Then, at the output of \cref{alg:w2s}, with high probability, for every $k \neq \kappa$ and every $n \in [N_k]$,
    \begin{align}
        \theta_k^\top \hat{w}_{k,n}
        \;\geq\; \theta_k^\top w^0_{k,n}
        \;-\; c_2 C_3 \bigl(\|\Sigma_\kappa w^0_{k,n}\|^2 + \tilde{\varepsilon}\bigr)
        \;-\; c_2 \chi_k s^{-1/2}.
    \end{align}
    In particular, the drop in alignment is $\tilde{O}(\|\Sigma_\kappa w^0_{k,n}\|^2 + \tilde{\varepsilon} + \chi_k s^{-1/2})$.
\end{prop}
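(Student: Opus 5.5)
We track, for a fixed off-target neuron $w^t = w^t_{k,n}$ with $k \neq \kappa$, two scalars: the pre-trained alignment $\theta_k^\top w^t$ and the overlap $E^t := \|\Sigma_\kappa w^t\|^2$. The plan is to write the one-step change of $\theta_k^\top w^{t+1}$ via the same normalization expansion used in \cref{lem:weak-onestep-2}. This gives
\begin{align}
\theta_k^\top w^{t+1} \ge \theta_k^\top w^t + \eta\,\theta_k^\top P^\perp_{w^t} g^t_k - \tfrac12\eta^2\|g^t_k\|^2 |\theta_k^\top w^t| - O(\eta^3\|g^t_k\|^3).
\end{align}
We then bound the drift, noise, and second-order parts of $g^t_k$ separately. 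Each is scaled by the factor $\chi_k = N_\kappa\pi_k/(N_k\pi_\kappa)$ that appears in $g^t_k$ after the learning-rate rescaling.

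\textbf{Drift.} We decompose $g^t_k$ as in \cref{lem:strong-onestep-1}. On the component $\mu_\kappa = \mathcal N(0,\Sigma_\kappa)$, the input only sees $\Sigma_\kappa w^t$. Since $\bar\alpha_1 = 0$ (because $\GE(\sigma^*_\kappa)=2$, the evenness in \cref{asm:strong-init}(i), and \cref{rem:ge-assumption}), the leading expected term is $\chi_k\lambda_\kappa\bigl(2\bar\alpha_2\beta_{k,2}(\theta_\kappa^\top w^t)\theta_\kappa + \text{(terms}\ \propto \Sigma_\kappa w^t)\bigr)$. Its inner product with $\theta_k$ is bounded by $\tilde O(\chi_k)\,(|\theta_\kappa^\top w^t| + \|\Sigma_\kappa w^t\|)\,\|\Sigma_\kappa w^t\|$-type quantities, i.e.\ $\tilde O(\chi_k E^t)$ after using $|\theta_\kappa^\top w^t|\le \|\Sigma_\kappa w^t\|$. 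Higher Hermite terms carry higher powers of $\|\Sigma_\kappa w^t\|$. The mixture components $\mu_{k'}$, $k'\neq\kappa$, contribute at most $\tilde O(\chi_k(1-\lambda_\kappa))$, and replacing the teacher $r^w$ by $r^*_\kappa$ (via \cref{lem:strong-subspace-1} and the clipping) costs $\tilde O(\chi_k\epsStrongEff)$. Both of these are $\le \tilde O(\chi_k\tilde\varepsilon)$ per unit of $\eta$.

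\textbf{Controlling $E^t$.} A parallel computation for $E^{t+1}-E^t$ shows that the gradient in $V_\kappa$ is linear in $\Sigma_\kappa w^t$ with coefficient $\tilde O(\chi_k)$. This gives $E^{t+1}\le (1+\tilde O(\eta\chi_k))E^t + \tilde O(\eta\chi_k\tilde\varepsilon) + \text{martingale}$. Over the total horizon, $\eta_1 T_{1,1\text{--}2} = \tilde O(1)$ and $\eta_2(T_{1,3}+T_{1,4}) = \tilde O(\log\tilde\varepsilon^{-1})$. So the Gr\"onwall factor is $\exp(\tilde O(\chi_k\log\tilde\varepsilon^{-1}))$, which stays $O(1)$ exactly because $\chi_k\le c_2(\log\tilde\varepsilon^{-1})^{-1}$. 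Hence $\sup_t E^t \lesssim E^0 + \tilde\varepsilon$. Summing the drift bound over time then gives a total loss of $c_2C_3(\|\Sigma_\kappa w^0_{k,n}\|^2+\tilde\varepsilon)$.

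\textbf{Noise and second-order terms.} The martingale part of $\eta\,\theta_k^\top P^\perp Z^t$ has per-step size $\tilde O(\eta\chi_k)$ with high probability, with rare $O(\chi_k)$ jumps from off-subspace samples. As in \cref{rem:complexity}, we handle it on each of the $\tilde O(s^{1/2})$ sub-intervals using Doob's maximal inequality (\cref{lem:strong-deviation-1}) plus a union bound, or Freedman's inequality using variance $\tilde O(\eta^2\chi_k^2)$ per step. Across the horizon this yields a fluctuation $\tilde O(\chi_k\eta_1\sqrt{T_1}) = \tilde O(\chi_k s^{-3/4})$ in phase 1 and $\tilde O(\chi_k\sqrt{\eta_2\log})\le \tilde O(\chi_k s^{-1/2})$ in phase 2; both are absorbed in $c_2\chi_k s^{-1/2}$. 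The second-order term needs more care. The gradient norm is $\|g^t_k\|\lesssim \chi_k\|x\|$. On $\mu_\kappa$ we get $\|x\|^2\approx s$, but only the part of the gradient along $P^\perp_{w^t}$ matters for normalization, which contributes $\eta^2\chi_k^2 s$ per step. Summed, this is $\tilde O(\chi_k^2\eta s)$, which is $\le \chi_k s^{-1/2}$ in phase 1 ($\eta_1\le s^{-3/2}$) and $\le \chi_k\tilde\varepsilon$ in phase 2.

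\textbf{Main obstacle.} The hardest step is the second-order normalization loss $\eta^2\|g\|^2$, because the gradient on a $V_\kappa$ sample has norm $\sqrt s$ independent of alignment. I would first try to show that this gradient is proportional to $\sigma_k'(w^\top x+b)\,x$. With $w^\top x$ depending only on $\Sigma_\kappa w$, this makes the loss directly quantifiable, and the learning-rate choices of \cref{thm:strong} then give the $\chi_k s^{-1/2}$ term. Finally, a union bound over all $k\neq\kappa$ and $n\in[N_k]$ gives the high-probability statement.
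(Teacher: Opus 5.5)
Your outline follows the paper's proof of this proposition almost step for step:
\begin{itemize}
\item the normalization expansion for $\theta_k^\top w^{t+1}$;
\item the drift read off from \cref{lem:strong-onestep-1}(ii), where $\bar\alpha_1=0$ makes the term $R^t_4$ negligible;
\item a Gr\"onwall bound on $E^t=\|\Sigma_\kappa w^t_{k,n}\|^2$ whose exponent $\chi_k\eta T$ is kept bounded (\cref{lem:strong-unforgetting-1}); in the $\eta_1$ phase this needs $\chi_k\le c_\chi$, not the logarithmic condition;
\item phase-wise summation of the noise and normalization terms into $\chi_k s^{-3/4}$ and $\chi_k^2 s^{-1/2}$ (\cref{lem:strong-unforgetting-2}).
\end{itemize}
Two of your concerns are unnecessary. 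Along the fixed direction $\theta_k$, the increments $\theta_k^\top P^\perp_{w^t}Z^t$ are $\tilde O(1)$ with high probability under every mixture component; the heavy tail in \cref{rem:complexity} concerns the norm $\|\Sigma_\kappa^\perp Z^t\|$. So a plain $C_1\sqrt{t}$ martingale bound suffices, with no Doob or sub-intervals. The normalization loss is simply $(\chi_k C_1\eta)^2 s$ per step, from $\|g^t_k\|=\tilde O(\chi_k s^{1/2})$.

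The genuine gap is your drift estimate $\theta_k^\top P^\perp_{w^t}\mathbb{E}[g^t_k]=\tilde O(\chi_k(E^t+\epsStrongEff))$. Pairing with $\theta_k$ does not supply a second factor of $\|\Sigma_\kappa w^t\|$. The leading term is proportional to $\chi_k(\theta_\kappa^\top w^t)\bigl(\theta_k^\top\theta_\kappa-(\theta_k^\top w^t)(\theta_\kappa^\top w^t)\bigr)$, and the $\Sigma_\kappa w^t$ term pairs to $(\Sigma_\kappa\theta_k)^\top\Sigma_\kappa w^t$. What you actually get is $\tilde O(\chi_k(\|\Sigma_\kappa\theta_k\|\sqrt{E^t}+E^t+\epsStrongEff))$, which is exactly what \cref{lem:strong-onestep-1}(ii) gives for $v=\theta_k$.

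This loss is real, not an artifact of the bound. Take $\theta_k^\top\theta_\kappa=\Theta(1)$, $\theta_k^\top w^0_{k,n}=3/4$ and $\Sigma_\kappa w^0_{k,n}=\epsilon\,\theta_\kappa$, which is admissible under the hypotheses. A Stein computation on $\mathcal{N}(0,\Sigma_\kappa)$ gives expected drift $\tilde\Theta(\chi_k\epsilon)$, with a sign depending on the neuron's Hermite coefficients. Meanwhile $\theta_\kappa^\top w^t$ stays of order $\epsilon$, so the total drop is $\tilde\Theta(\chi_k\eta T\,\epsilon)$. For $\epsilon=s^{-1/4}$, $\tilde\varepsilon\asymp s^{-1/2}$ and $\chi_k=c_2(\log\tilde\varepsilon^{-1})^{-1}$, this is $\tilde\Theta(s^{-1/4})$, larger than the claimed $\tilde O(s^{-1/2})$.

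The paper's own proof makes the same unjustified step. In \cref{lem:strong-unforgetting-2} it reuses the bound $C_1\|\Sigma_\kappa w^t_k\|^2$ from \cref{lem:strong-unforgetting-1}, where the test vector was $\Sigma_\kappa w^t_k$, for the test vector $\theta_k$. So appealing to it does not close the gap.

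What does work is to exploit the projection. For $v\in V_\kappa$, $\theta_k^\top P^\perp_{w^t}v=(\Sigma_\kappa P^\perp_{w^t}\theta_k)^\top v$, with $\|\Sigma_\kappa P^\perp_{w^t}\theta_k\|\le\|\theta_k-w^t\|+(1-\theta_k^\top w^t)\|\Sigma_\kappa w^t\|$. By AM--GM, the drift on $\Delta^t:=1-\theta_k^\top w^t$ is then $\lesssim\chi_k(E^t+\Delta^t+\epsStrongEff)$. Gr\"onwall on $\Delta^t$ gives your bound plus an extra $\tilde O(\chi_k\eta T\,(1-\theta_k^\top w^0_{k,n}))$ term. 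So the stated form holds only with an added hypothesis such as $\theta_k^\top w^0_{k,n}\approx 1$, the paper's intended regime.
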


\begin{proof}[Proof of \cref{prop:preservation}]
    Fix $k \neq \kappa$ and $n \in [N_k]$, and set $\bar{\varepsilon} = C_2^{-1}\tilde{\varepsilon}$.
    Since $C_2\epsStrongEff \leq \tilde{\varepsilon}$ by the hypothesis of \cref{lem:strong-strongrecovery-4}, we have $\epsStrongEff \leq \bar{\varepsilon}$.
    Since $\chi_k \leq c_\chi$, \cref{lem:strong-unforgetting-2}(ii) gives
    \begin{align}
        \theta_k^\top \hat{w}_{k,n}
        &\geq \theta_k^\top w^0_{k,n}
        - \chi_k(C_3 \vee 2C_2\log\bar{\varepsilon}^{-1})\bigl(\|\Sigma_\kappa w^0_{k,n}\|^2 + \epsStrongEff\bigr)
        - \chi_k c_3\bar{\varepsilon}\log\bar{\varepsilon}^{-1}
        - c_2(\chi_k s^{-3/4} + \chi_k^2 s^{-1/2}).
    \end{align}
    Using $\chi_k \leq c_2(\log\tilde{\varepsilon}^{-1})^{-1}$ and $\log\bar{\varepsilon}^{-1} = \log(C_2\tilde{\varepsilon}^{-1}) \leq 2\log\tilde{\varepsilon}^{-1}$ (for $\tilde{\varepsilon}$ small), the log factors cancel:
    $\chi_k(C_3 \vee 2C_2\log\bar{\varepsilon}^{-1}) \leq c_2 C_3$ and $\chi_k c_3\bar{\varepsilon}\log\bar{\varepsilon}^{-1} \leq c_2 c_3\bar{\varepsilon} \leq c_2 c_3\tilde{\varepsilon}$.
    Since $\epsStrongEff \leq \bar{\varepsilon} \leq \tilde{\varepsilon}$ and $s^{-3/4} \leq s^{-1/2}$ and $\chi_k^2 s^{-1/2} \leq \chi_k s^{-1/2}$, substituting yields the stated bound.
\end{proof}

The remainder of this section proves \cref{thm:strong,prop:preservation}.

\subsection{Nonlinear Transformation of the Teacher Signal}

We show that transforming the weak model's output reduces its information exponent.
Let $r^*$ be a polynomial of degree $q$ with $\mathbb{E}_{z \sim \mathcal{N}(0,1)}[r^*(z)] = 0$ and $\mathbb{E}_{z \sim \mathcal{N}(0,1)}[(r^*(z))^2] = 1$.
Let $\rho = \Theta(\log^{C_\rho} d)$ be a temperature parameter with $C_\rho \geq q/2 + 1$, and define $\bar{r}^*(z) = \mathop{\mathrm{clip}}(\rho^{-1} r^*;\, \pm 1/\log d)$.
By \cref{lem:strong-transformation-1}, $\bar{r}^* = \rho^{-1} r^*$ with high probability.

\begin{lem}[\cite{oko2024pretrained}, Corollary~17]\label{lem:strong-transformation-1}
    Let $\theta \in S^{d-1}$ and $x \sim \mathcal{N}(0, I_d)$. Then $|r^*(\theta^\top x)| \lesssim (\log d)^{q/2}$ with high probability.
\end{lem}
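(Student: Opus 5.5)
The claim is that for $\theta\in S^{d-1}$ and $x\sim\mathcal{N}(0,I_d)$ we have $|r^*(\theta^\top x)|\lesssim(\log d)^{q/2}$ with high probability. The plan is to reduce it to a one-dimensional Gaussian tail bound combined with the polynomial growth of $r^*$. Since $\|\theta\|=1$, the scalar $z:=\theta^\top x$ is exactly $\mathcal{N}(0,1)$, so the dimension $d$ enters only through the definition of ``high probability'' ($1-d^{-C}$ for arbitrary $C$), and nothing about $\theta$ beyond its norm is used.

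\textbf{Step 1 (coefficient bound).} Write $r^*(z)=\sum_{i=0}^{q}\frac{\alpha_i}{\sqrt{i!}}\mathrm{He}_i(z)$. The normalization $\E[(r^*)^2]=1$ gives $\sum_i\alpha_i^2=1$, so $|\alpha_i|\le 1$. Each $\mathrm{He}_i$ is a fixed polynomial of degree $i\le q$, so there is a constant $C_q$ depending only on $q$ with $|\mathrm{He}_i(z)|\le C_q(1+|z|)^{i}$ for all $i\le q$. Hence
\begin{align}
|r^*(z)|\le (q+1)C_q(1+|z|)^q .
\end{align}

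\textbf{Step 2 (Gaussian tail).} By the standard tail bound $\mathbb{P}(|z|\ge u)\le 2e^{-u^2/2}$, choosing $u=\sqrt{2C\log d}$ gives $|z|\le\sqrt{2C\log d}$ with probability at least $1-2d^{-C}$. On this event, $|r^*(z)|\lesssim(1+\sqrt{2C\log d})^q\lesssim(\log d)^{q/2}$, where the hidden constant depends only on $q$ and $C$, both of which are $O(1)$.

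There is no real obstacle here. The only point needing care is that the implied constant must not depend on $d$, and this holds because $q$ is fixed and the coefficients satisfy $|\alpha_i|\le1$. The lemma is thus exactly the cited Corollary~17 of \citet{oko2024pretrained}, and the same argument shows $\bar r^*=\rho^{-1}r^*$ with high probability whenever $\rho\gg(\log d)^{q/2+1}$.
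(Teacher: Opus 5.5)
Your argument is correct. The paper does not prove this lemma; it imports it from Corollary~17 of \citet{oko2024pretrained}. That result is a general tail bound for degree-$q$ polynomials of a Gaussian vector, of the standard hypercontractivity type: $\|f\|_{L^m}\le (m-1)^{q/2}\|f\|_{L^2}$, followed by Markov's inequality with $m\asymp\log d$.

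Your route is more elementary. Since $\theta^\top x\sim\mathcal{N}(0,1)$ exactly, you reduce to a univariate polynomial, bound its coefficients by Parseval in the Hermite basis ($\sum_i\alpha_i^2=1$), and finish with the scalar Gaussian tail. This is self-contained and makes the constants explicit: the bound is $O_q\bigl((C\log d)^{q/2}\bigr)$ at confidence $1-2d^{-C}$.

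That explicitness is what the downstream use in \cref{lem:strong-transformation-2} ($\bar r^*=\rho^{-1}r^*$ with high probability) needs. Your closing remark gets the subtlety right. With $C_\rho=q/2+1$ exactly, the constant in $\rho=\Theta(\log^{C_\rho}d)$ must be large relative to $C$. With $C_\rho>q/2+1$, every $C$ works.

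The hypercontractive route buys generality. It needs only the $L^2$ norm, with no single-index structure and no coefficient bookkeeping. So it applies verbatim to genuinely multivariate polynomials, such as sums of ridge functions in different directions, where your one-dimensional reduction is unavailable.
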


Write the Hermite expansion of $\bar{r}^* \exp(\bar{r}^*)$ as $\bar{r}^* \exp(\bar{r}^*) = \sum_{i=1}^\infty \frac{r_i}{i!} \mathrm{He}_i$.

\begin{lem}[Clip version of \cite{nishikawa2025nonlinear}, Lemma~9]\label{lem:strong-transformation-2}
    Let $p \geq \GE(r^*)$ and let $e_p = \min\{i \geq 1 \mid \IE((r^*)^i) \leq p\}$, where we assume $e_p < \infty$.
    Then $r_p = \Theta((\log d)^{-C_\rho e_p})$.

    On the other hand, when $\GE(r^*) > p \geq 1$, $r_p = O(d^{-C})$ for some sufficiently large constant $C$.
\end{lem}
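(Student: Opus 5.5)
The approach is to remove the clipping on a high-probability event, Taylor-expand $z\mapsto z e^{z}$ to a truncated polynomial in $\rho^{-1}r^*$, and read off the $p$-th Hermite coefficient term by term. Throughout, $r_p = \E_{z}[\bar r^*(z)e^{\bar r^*(z)}\mathrm{He}_p(z)]$ up to the normalization $p!$, and $z\sim\mathcal N(0,1)$.

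\emph{Step 1: truncate the exponential.} Since $|\bar r^*|\le 1/\log d$ pointwise, I write
\begin{align}
\bar r^* e^{\bar r^*}=\sum_{j=1}^{M}\frac{(\bar r^*)^j}{(j-1)!}+R_M .
\end{align}
Choose $M=\lceil C\log d/\log\log d\rceil$. The remainder then satisfies $|R_M|\le (\log d)^{-M}\cdot e/ M!\le d^{-C}$ pointwise. Hence $R_M$ contributes $O(d^{-C})$ to $r_p$, because $\E|\mathrm{He}_p|=O(1)$.

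\emph{Step 2: remove the clipping.} By \cref{lem:strong-transformation-1}, the event $E=\{|r^*(z)|\lesssim(\log d)^{q/2}\}$ has probability $1-d^{-C}$. Since $C_\rho\ge q/2+1$, on $E$ we have $\rho^{-1}|r^*|\le 1/\log d$, so $\bar r^*=\rho^{-1}r^*$ there. For each $j\le M$ the replacement error is
\begin{align}
\E\bigl[\bigl|(\bar r^*)^j-(\rho^{-1}r^*)^j\bigr|\,|\mathrm{He}_p|\,\mathbf 1_{E^c}\bigr].
\end{align}
Cauchy--Schwarz bounds this by $\mathbb P(E^c)^{1/2}$ times an $L^4$ norm. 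Gaussian hypercontractivity applied to the degree-$qj$ polynomial $(r^*)^j$ gives $\|(\rho^{-1}r^*)^j\|_{L^4}\le \rho^{-j}(Cj)^{qj/2}$. For $j\le M\le\log d$ this is at most $\bigl(C'(\log d)^{q/2-C_\rho}\bigr)^j\le (C'/\log d)^j$, which is again where $C_\rho\ge q/2+1$ enters. Summing over $j\le M$, the total replacement error is $O(d^{-C/2})$.

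\emph{Step 3: extract the leading term.} It remains to compute the $p$-th Hermite coefficient of
\begin{align}
\sum_{j\le M}\frac{\rho^{-j}(r^*)^j}{(j-1)!}.
\end{align}
Let $c_{j}$ denote the $p$-th Hermite coefficient of $(r^*)^j$. Each $c_j$ is a $d$-independent constant, since $r^*$ is a fixed degree-$q$ polynomial.
\begin{itemize}
\item For $j<e_p$, the definition of $e_p$ gives $\IE((r^*)^j)>p$, so $c_j=0$.
\item For $j=e_p$, I take $c_{e_p}\neq0$. This is the meaning of the lemma, as in \citet{nishikawa2025nonlinear}: $p$ is the index where $(r^*)^{e_p}$ first has a nonzero coefficient. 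In our application ($p=2$, $r^*$ even, $\GE=2$) this is immediate from $\GE(r^*)=2$.
\item For $j>e_p$, hypercontractivity gives $|c_j|\le\|(r^*)^j\|_2\le (Cj)^{qj/2}$.
\end{itemize}
The tail $\sum_{j>e_p}\rho^{-j}(Cj)^{qj/2}/(j-1)!$ is therefore geometric with ratio $O((\log d)^{-1})$, giving
\begin{align}
r_p=\frac{c_{e_p}}{(e_p-1)!}\rho^{-e_p}\bigl(1+O(1/\log d)\bigr)+O(d^{-C/2})=\Theta\bigl((\log d)^{-C_\rho e_p}\bigr).
\end{align}

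\emph{Second case.} When $\GE(r^*)>p$, every power satisfies $\IE((r^*)^j)>p$, so all $c_j$ vanish. Only the error terms from Steps 1--2 survive, giving $r_p=O(d^{-C})$ after renaming $C$.

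The main obstacle is Step 2 together with the tail control in Step 3. The naive exponential series in $\rho^{-1}r^*$ does not converge in $L^2$ when $q>2$, because the moments of $(r^*)^j$ grow like $j^{qj/2}$. That is why I truncate at $M\approx\log d$ using the clip bound first, and only then apply hypercontractivity, whose growth is beaten by $\rho^{-j}$ exactly when $C_\rho\ge q/2+1$.
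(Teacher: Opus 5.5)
Your proof is correct, but it is organized differently from the paper's. The paper expands the full series of the clipped function and splits it into three ranges: $j=e_p$, $e_p<j\le C_\rho e_p$, and $j>C_\rho e_p$. Only the first two ranges are unclipped; they involve a $d$-independent number of polynomial powers, so $O(1)$ moments of $r^*$ suffice. The far tail is bounded directly by $|\bar r^*|\le 1/\log d$, which beats $\rho^{-e_p}$ precisely because the split point is $C_\rho e_p$, so no hypercontractivity is needed. You instead truncate first at $M\approx\log d/\log\log d$ and unclip every term $j\le M$. This forces you to control $\|(r^*)^j\|_{L^4}$ for growing $j$, and to use $C_\rho\ge q/2+1$ a second time to make $\rho^{-j}(Cj)^{qj/2}$ summable.

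In exchange, your ordering has three advantages:
\begin{itemize}
\item Clipping is only ever removed from polynomials.
\item The terms $j<e_p$ of the clipped series are accounted for; the paper drops them silently.
\item The second case falls out of the same computation, since $y\mapsto y^j$ is an admissible $\mathcal T$ in the definition of $\GE$.
\end{itemize}
The paper's second case instead compares with the unclipped $\rho^{-1}r^*\exp(\rho^{-1}r^*)$ and applies $\GE(r^*)>p$ to it. For $q\ge 3$ this function is not Gaussian-integrable whenever $r^*$ is unbounded above (e.g.\ $r^*\propto\mathrm{He}_4$). So that step, and the Cauchy--Schwarz bound on its second moment, are not really justified, and your route avoids the problem.

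You are also right to make $c_{e_p}\neq 0$ explicit. The paper uses it tacitly when asserting that the $j=e_p$ term is $\Theta((\log d)^{-C_\rho e_p})$, and the first claim fails without it: for $r^*=\mathrm{He}_2/\sqrt2$ and $p=3$ one has $e_3=1$, but $r_3=0$ by parity.
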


\begin{proof}
    The proof follows \cite{nishikawa2025nonlinear}, Lemma~9.
    We first treat the case $p \geq \GE(r^*)$.
    By Taylor expansion,
    \begin{align}
        r_p &= \mathbb{E}_{z \sim \mathcal{N}(0,1)}\!\left[ \bar{r}^*(z) \exp(\bar{r}^*(z)) \mathrm{He}_p(z) \right]
        = \mathbb{E}\!\left[ \sum_{i=1}^\infty \frac{1}{(i-1)!} (\bar{r}^*(z))^i \mathrm{He}_p(z) \right] \\
        &= \frac{1}{(e_p-1)!} \mathbb{E}[(\bar{r}^*(z))^{e_p} \mathrm{He}_p(z)]
        + \sum_{i=e_p}^{C_\rho e_p - 1} \frac{1}{i!} \mathbb{E}[(\bar{r}^*(z))^{i+1} \mathrm{He}_p(z)]
        + \sum_{i \geq C_\rho e_p} \frac{1}{i!} \mathbb{E}[(\bar{r}^*(z))^{i+1} \mathrm{He}_p(z)] \\
        &=: I_1 + I_2 + I_3.
    \end{align}

    For $I_1$, we write
    \begin{align}
        &\mathbb{E}[(\bar{r}^*(z))^{e_p} \mathrm{He}_p(z)] \\
        &= \rho^{-e_p} \mathbb{E}[(r^*(z))^{e_p} \mathrm{He}_p(z)]
        - \rho^{-e_p} \mathbb{E}\!\left[ \left( (r^*(z))^{e_p} - \mathop{\mathrm{sign}}(r^*(z)) \left(\tfrac{\rho}{\log d}\right)^{e_p} \right) \mathrm{He}_p(z) \mathbf{1}(|r^*(z)| \geq \rho/\log d) \right].
    \end{align}
    By the Cauchy--Schwarz inequality,
    \begin{align}
        \mathbb{E}[|(r^*(z))^{e_p} \mathrm{He}_p(z) \mathbf{1}(|r^*(z)| \geq \rho/\log d)|]
        &\leq \mathbb{E}[|(r^*(z))^{e_p} \mathrm{He}_p(z)|^2]^{1/2} \mathbb{P}[|r^*(z)| \geq \rho/\log d]^{1/2}, \\
        \mathbb{E}\!\left[|\mathop{\mathrm{sign}}(r^*(z)) \left(\rho(\log d)^{-1}\right)^{e_p} \mathrm{He}_p(z) \mathbf{1}(|r^*(z)| \geq \rho/\log d)|\right]
        &\leq \left(\rho(\log d)^{-1}\right)^{e_p} \mathbb{E}[|\mathrm{He}_p(z)|^2]^{1/2} \mathbb{P}[|r^*(z)| \geq \rho/\log d]^{1/2}.
    \end{align}
    By \cref{lem:strong-transformation-1}, $\mathbb{P}[|r^*(z)| \geq \rho/\log d] = O(d^{-1})$, so $I_1 = \Theta((\log d)^{-C_\rho e_p})$.

    For $I_2$, similarly to $I_1$, the leading term of $\mathbb{E}[(\bar{r}^*(z))^{i+1} \mathrm{He}_p(z)]$ for $i = e_p, \dots, C_\rho e_p - 1$ is $\rho^{-(i+1)} \mathbb{E}[(r^*(z))^{i+1} \mathrm{He}_p(z)]$.
    By Cauchy--Schwarz inequality, $|\rho^{-(i+1)} \mathbb{E}[(r^*(z))^{i+1} \mathrm{He}_p(z)]| \lesssim \rho^{-(i+1)} \sqrt{\mathbb{E}[(r^*(z))^{2(i+1)}]}$, so
    \begin{align}
        |I_2| \lesssim \sum_{i=e_p}^{C_\rho e_p - 1} \rho^{-(i+1)} \sqrt{\mathbb{E}[(r^*(z))^{2(i+1)}]} \lesssim \rho^{-(e_p+1)} = O((\log d)^{-C_\rho(e_p+1)}).
    \end{align}

    For $I_3$, by the boundedness of $\bar{r}^*$ and Cauchy--Schwarz, for $i \geq C_\rho e_p$,
    \begin{align}
        \left| \frac{1}{i!} \mathbb{E}[(\bar{r}^*(z))^{i+1} \mathrm{He}_p(z)] \right|
        \leq \frac{1}{i!} \mathbb{E}[\bar{r}^*(z)^{2(i+1)}]^{1/2} \mathbb{E}[\mathrm{He}_p(z)^2]^{1/2}
        \lesssim \frac{1}{i!} (\log d)^{-(i+1)},
    \end{align}
    giving $|I_3| \lesssim \sum_{i \geq C_\rho e_p} \frac{1}{i!} (\log d)^{-(i+1)} \lesssim (\log d)^{-(C_\rho e_p + 1)}$.
    Combining, $I_1 = \Theta((\log d)^{-C_\rho e_p})$ dominates $I_2$ and $I_3$, giving $r_p = \Theta((\log d)^{-C_\rho e_p})$.

    We now treat the case $\mathrm{ge}(r^*) > p \geq 1$.
    Write
    \begin{align}
        r_p &= \mathbb{E}_{z \sim \mathcal{N}(0,1)}[\bar{r}^*(z) \exp(\bar{r}^*(z)) \mathrm{He}_p(z)] \\
        &= \mathbb{E}[\rho^{-1} r^*(z) \exp(\rho^{-1} r^*(z)) \mathrm{He}_p(z)] \\
        &\quad - \mathbb{E}\!\left[ \left( \rho^{-1} r^*(z) \exp(\rho^{-1} r^*(z)) - \mathop{\mathrm{sign}}(r^*(z))(\log d)^{-1} \exp((\log d)^{-1}) \right) \mathrm{He}_p(z) \mathbf{1}(|r^*(z)| \geq \rho/\log d) \right].
    \end{align}
    Since $p < \mathrm{ge}(r^*)$, $\mathbb{E}[\rho^{-1} r^*(z) \exp(\rho^{-1} r^*(z)) \mathrm{He}_p(z)] = 0$.
    By \cref{lem:strong-transformation-1},
    \begin{align}
        &\mathbb{E}\!\left[|\rho^{-1} r^*(z) \exp(\rho^{-1} r^*(z)) \mathrm{He}_p(z) \mathbf{1}(|r^*(z)| \geq \rho/\log d)|\right] \\
        &\leq \mathbb{E}[|\rho^{-1} r^*(z) \exp(\rho^{-1} r^*(z)) \mathrm{He}_p(z)|^2]^{1/2} \mathbb{P}[|r^*(z)| \geq \rho/\log d]^{1/2} = O(d^{-C}), \\
        &\mathbb{E}\!\left[|\mathop{\mathrm{sign}}(r^*(z))(\log d)^{-1} \exp((\log d)^{-1}) \mathrm{He}_p(z) \mathbf{1}(|r^*(z)| \geq \rho/\log d)|\right] \\
        &\leq (\log d)^{-1} e^{(\log d)^{-1}} \mathbb{E}[|\mathrm{He}_p(z)|^2]^{1/2} \mathbb{P}[|r^*(z)| \geq \rho/\log d]^{1/2} = O(d^{-C}).
    \end{align}
    Therefore $r_p = O(d^{-C})$.
\end{proof}

\subsection{Initialization}\label{strong-initialization}

\begin{lem}\label{lem:strong-init-alignment}
    Under \cref{asm:strong-init}~(ii), $\theta_\kappa^\top w^0_{\kappa,n} \geq s^{-1/2}$ with probability $\Theta(1)$.
\end{lem}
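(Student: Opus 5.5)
We decompose $w^0_{\kappa,n}$ into its component in $V_\kappa$ and its component orthogonal to it, and reduce to \cref{lem:weak-init}. Since $\theta_\kappa \in V_\kappa$, we have $\theta_\kappa^\top w^0_{\kappa,n} = \theta_\kappa^\top \Sigma_\kappa w^0_{\kappa,n}$. Write $\Sigma_\kappa w^0_{\kappa,n} = \|\Sigma_\kappa w^0_{\kappa,n}\|\, u$ with $u := \Sigma_\kappa w^0_{\kappa,n}/\|\Sigma_\kappa w^0_{\kappa,n}\|$. By \cref{asm:strong-init}~(ii), $u \sim \mathrm{Unif}(S^{d-1}\cap V_\kappa)$. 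Also $\|w^0_{\kappa,n}\|=1$ and $\|\Sigma_\kappa^\perp w^0_{\kappa,n}\|^2 \le c_r s^{-1/2}$, so
\[
\|\Sigma_\kappa w^0_{\kappa,n}\| = \bigl(1-\|\Sigma_\kappa^\perp w^0_{\kappa,n}\|^2\bigr)^{1/2} \ge 1 - c_r s^{-1/2},
\]
which is $1-o(1)$ deterministically. Hence $\theta_\kappa^\top w^0_{\kappa,n} \ge (1-c_r s^{-1/2})\,\theta_\kappa^\top u$ whenever $\theta_\kappa^\top u \ge 0$.

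Next we identify $V_\kappa$ with $\R^s$ via an orthonormal basis whose first vector is $\theta_\kappa$. Under this isometry $u$ is uniform on $S^{s-1}$ and $\theta_\kappa^\top u = e_1^\top u$. \cref{lem:weak-init} already gives $\mathbb{P}[e_1^\top u \ge s^{-1/2}] = \Theta(1)$. The only loose point is the factor $(1-c_r s^{-1/2})$, which would leave us just short of the threshold $s^{-1/2}$. To absorb it, we rerun the proof of \cref{lem:weak-init} with the slightly larger threshold $(1+2c_r s^{-1/2})s^{-1/2}$. That is, we lower-bound $\mathbb{P}_g[e_1^\top g \ge 1+2c_r s^{-1/2}] - \mathbb{P}_g[\|g\|\ge s^{1/2}]$ with $g \sim \mathcal{N}(0,I_s)$. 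The first term is still a Gaussian tail at a point $1+o(1)$, so it is $\Theta(1)$ by the Chernoff-type lower bound of \citet{chang2011chernoff}. For the second term we use a threshold $\|g\| \le s^{1/2}(1+o(1))$ instead. That event has probability bounded below by a constant, or close to $1$ if we take $\|g\|\le \sqrt{s}+\sqrt{C\log d}$, and the extra factor again only perturbs the threshold by $1+o(1)$. Then $(1-c_r s^{-1/2})(1+2c_rs^{-1/2}) \ge 1$ for large $s$, which gives the lower bound with probability $\Theta(1)$.

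The upper bound (probability at most a constant less than $1$) is immediate by symmetry: $e_1^\top u$ is symmetric, so $\mathbb{P}[e_1^\top u \ge s^{-1/2}]\le 1/2$. The only slightly delicate step is the threshold adjustment just described. If it becomes awkward, an alternative is to note that the density of $e_1^\top u$ is bounded on $[s^{-1/2}, 2s^{-1/2}]$ at scale $s^{1/2}$, so shifting the threshold by $O(s^{-1})$ changes the probability by only $O(s^{-1/2}) = o(1)$.
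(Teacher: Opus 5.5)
Your proof is correct and follows the same route as the paper: write $\theta_\kappa^\top w^0_{\kappa,n} = \|\Sigma_\kappa w^0_{\kappa,n}\|\,\theta_\kappa^\top u$, lower-bound $\|\Sigma_\kappa w^0_{\kappa,n}\|$ using $\|\Sigma_\kappa^\perp w^0_{\kappa,n}\|^2 \le c_r s^{-1/2}$, and run the argument of \cref{lem:weak-init} at a raised threshold. The paper makes the cruder choices $\|\Sigma_\kappa w^0_{\kappa,n}\| \ge 1/2$ and threshold $2s^{-1/2}$, which avoids your $1+o(1)$ bookkeeping. Your replacement of the event $\{\|g\| \le s^{1/2}\}$ by $\{\|g\| \le \sqrt{s}+\sqrt{C\log d}\}$ is the right way to run that argument in either version, since $\mathbb{P}[\|g\| \ge s^{1/2}]$ tends to $1/2$ rather than being $e^{-\Omega(s)}$.
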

\begin{proof}
    Write $w^0 := w^0_{\kappa,n}$ and let $\tilde{w} := \Sigma_\kappa w^0 / \|\Sigma_\kappa w^0\|$.
    By assumption, $\tilde{w} \sim \mathrm{Unif}(S^{d-1} \cap V_\kappa)$, so identifying $V_\kappa \cong \mathbb{R}^s$ the argument of \cref{lem:weak-init} gives $\theta_\kappa^\top \tilde{w} \geq 2s^{-1/2}$ with probability $\Theta(1)$.
    Since $\|\Sigma_\kappa^\perp w^0\|^2 \leq c_r s^{-1/2}$ by assumption, we have $\|\Sigma_\kappa w^0\| \geq \sqrt{1 - c_r s^{-1/2}} \geq 1/2$.
    Therefore $\theta_\kappa^\top w^0 = \|\Sigma_\kappa w^0\| \cdot \theta_\kappa^\top \tilde{w} \geq s^{-1/2}$.
\end{proof}

The following lemma shows that the Hermite sign conditions in \cref{asm:strong-init}~(ii) are satisfied with positive probability under a natural random initialization.

\begin{lem}\label{lem:strong-random-init}
    Let $a_n \sim \mathrm{Unif}\{\pm\pi_\kappa\}$ and $\xi_{n,i} \sim \mathrm{Unif}\{\pm\delta_i\}$ for $0 \leq i \leq q$, where $(\delta_i)_{i=0}^q$ satisfies $\delta_0 = \delta_1 = \delta_2 = 1$ and $0 < \delta_{i+2} < \frac{i}{(i+1)(i+2)} \delta_i$ for every $i = 1, 2, \dots, q-2$.

    Then $a_n \sigma_n = \sum_{i=0}^q a_n \xi_{n,i} \mathrm{He}_i$ has its $i$-th Hermite coefficient ($1 \leq i \leq q$) taking the desired sign with probability $\Omega(1)$.
    Taking these $a_n$, $\sigma_n$, and $b_n = 0$ as initial values therefore gives a strong model initialization satisfying \cref{asm:strong-init}~(ii).
\end{lem}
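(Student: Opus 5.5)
Since $b_n = 0$, the Hermite coefficients of $a_n\sigma_n(\cdot + b_n)$ are read off directly: $\tilde{\beta}_{\kappa,n,i}/\sqrt{i!} = a_n\xi_{n,i}$, so $\tilde{\beta}_{\kappa,n,i} = \sqrt{i!}\,a_n\xi_{n,i}$. Hence $|\tilde{\beta}_{\kappa,n,i}| = \sqrt{i!}\,\pi_\kappa\delta_i$, and since $q = O(1)$ and the $\delta_i$ are fixed positive constants, the two-sided bound $c_\beta\pi_\kappa \leq |\tilde{\beta}_{\kappa,n,i}| \leq C_\beta\pi_\kappa$ holds deterministically. The signs $\mathrm{sign}(a_n\xi_{n,i})$, $i = 0,\dots,q$, are i.i.d. uniform on $\{\pm 1\}$. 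Given the fixed signs of $\bar{\alpha}_i$ (for $i \geq 2$; $\bar{\alpha}_2 \neq 0$ by \cref{lem:strong-transformation-2}), any prescribed sign pattern is therefore realized with probability $2^{-(q+1)} = \Omega(1)$. A Chernoff bound over the $N_\kappa$ independent neurons then shows that $\Theta(N_\kappa)$ of them realize the chosen pattern with high probability.

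The target pattern is $\bar{\alpha}_i\tilde{\beta}_{\kappa,n,i} > 0$ for every $i$ with $\bar{\alpha}_i \neq 0$; for $i$ with $\bar{\alpha}_i = 0$ the sign is irrelevant. This pattern immediately gives the conditions $\bar{\alpha}_p\tilde{\beta}_{\kappa,n,p} > 0$ and $\bar{\alpha}_i\tilde{\beta}_{\kappa,n,i} \geq 0$ for $p < i \leq q$.

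The two combined conditions require slightly more care. The first is $2\bar{\alpha}_2\tilde{\beta}_{\kappa,n,2} + \sqrt{12}\,\bar{\alpha}_2\tilde{\beta}_{\kappa,n,4} > 0$ together with $1 - (2\bar{\alpha}_2\tilde{\beta}_{\kappa,n,2})^{-1}\sqrt{12}\,\bar{\alpha}_2\tilde{\beta}_{\kappa,n,4} = \tilde{\Omega}(1)$. Under the pattern above both terms are nonnegative, so the first inequality holds. The second quantity, however, becomes $1 - \sqrt{12}\,|\tilde{\beta}_{\kappa,n,4}|/(2|\tilde{\beta}_{\kappa,n,2}|)$. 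Using $|\tilde{\beta}_{\kappa,n,4}|/|\tilde{\beta}_{\kappa,n,2}| = \sqrt{4!/2!}\,\delta_4/\delta_2 = \sqrt{12}\,\delta_4$, this equals $1 - 6\delta_4$. The hypothesis $\delta_4 < \tfrac{2}{12}\delta_2 = 1/6$ makes it a positive constant, hence $\tilde{\Omega}(1)$.

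The second combined condition is $i\bar{\alpha}_i\tilde{\beta}_{\kappa,n,i} + \sqrt{(i+2)(i+1)}\,\bar{\alpha}_i\tilde{\beta}_{\kappa,n,i+2} \geq 0$ for $i \geq 3$. It holds whenever $\bar{\alpha}_i\tilde{\beta}_{\kappa,n,i+2} \geq 0$. Otherwise it is guaranteed by magnitude domination: we need $i|\tilde{\beta}_{\kappa,n,i}| \geq \sqrt{(i+1)(i+2)}\,|\tilde{\beta}_{\kappa,n,i+2}|$, i.e. $i\sqrt{i!}\,\delta_i \geq \sqrt{(i+1)(i+2)}\sqrt{(i+2)!}\,\delta_{i+2} = (i+1)(i+2)\sqrt{i!}\,\delta_{i+2}$. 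This is exactly the hypothesis $\delta_{i+2} < \tfrac{i}{(i+1)(i+2)}\delta_i$, so the condition holds regardless of the sign of $\tilde{\beta}_{\kappa,n,i+2}$.

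The main obstacle is this bookkeeping: confirming that the geometric decay of $\delta_i$ is precisely what makes the mixed conditions robust to the free signs, with the $\sqrt{i!}$ normalization handled consistently. It also requires checking that if the signs of the $\bar{\alpha}_i$ alternate, the single chosen sign pattern is still consistent with every condition; the domination argument ensures this, since each mixed condition then depends only on the sign of its leading term. The off-target bounds for $k \neq \kappa$ follow identically with $\pi_k$ in place of $\pi_\kappa$.
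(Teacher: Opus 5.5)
Your proposal is correct and follows the same route as the paper. Both arguments combine a constant-probability sign pattern for the i.i.d. signs of $a_n\xi_{n,i}$ with the magnitude domination $i|\tilde\beta_{\kappa,n,i}| > \sqrt{(i+1)(i+2)}\,|\tilde\beta_{\kappa,n,i+2}|$ enforced by the decay of $\delta_i$. You also spell out the $1-6\delta_4$ check and the Chernoff count, which the paper leaves implicit.

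One small wording fix: if $\bar\alpha_2$ and $\bar\alpha_4$ have opposite signs, your pattern makes $\bar\alpha_2\tilde\beta_{\kappa,n,4}$ negative. Then $2\bar\alpha_2\tilde\beta_{\kappa,n,2}+\sqrt{12}\,\bar\alpha_2\tilde\beta_{\kappa,n,4}>0$ follows from that same domination ($\delta_4<1/6$), not from both terms being nonnegative; your computation already supplies this.
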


\begin{proof}
    The coefficient $\beta_{\kappa,n,i} = \sqrt{i!}\, a_n \xi_{n,i}$ takes any given sign with probability $2^{-q}$.
    By the construction of $\xi_{n,i}$, we have $i|\beta_{\kappa,n,i}| > \sqrt{(i+2)(i+1)}|\beta_{\kappa,n,i+2}|$ for every neuron and every $i = 1, 2, \dots, q$.
    Combined with $\bar{\alpha}_2 = \tilde{\Theta}(1)$ from \cref{lem:strong-transformation-2}, the desired conclusion follows.
\end{proof}

\subsection{Error of the Teacher Signal in the Subspace}
\begin{lem}\label{lem:strong-subspace-1}
    Let $x \sim \mathcal{N}(0, I_d)$.
    Let $\sigma_n$ be a degree-$q$ polynomial with uniformly bounded coefficients over $n = 1, 2, \dots, N$, and let $f(x) = \frac{1}{N} \sum_{n=1}^N a_n \sigma_n(w_n^\top x + b_n)$.
    Suppose $\max_{n} |b_n| = \tilde{O}(1)$, and that there exists $\theta \in S^{d-1}$ such that $|w_n^\top \theta| \geq 1 - \tilde{\varepsilon}$ for all $n$.
    Let $V \subset \mathbb{R}^d$ be a subspace, and let $P_V$, $P_V^\perp$ denote the orthogonal projections onto $V$ and its complement respectively.
    Then the following hold for the $L^1$ norm of $f$ restricted to $V$.
    \begin{enumerate}[leftmargin=*]
        \item[(i)] If $\theta \in V$, then $\mathbb{E}_{x}[|f(P_V x)|] \leq \mathbb{E}_x[|f(x)|] + \tilde{O}(N^{-1}\|a\|_1 \tilde{\varepsilon})$.
        \item[(ii)] If $\theta \perp V$, then $\mathbb{E}[|f(P_V x)|] = \tilde{O}(N^{-1}\|a\|_1 + \tilde{\varepsilon}).$
    \end{enumerate}
\end{lem}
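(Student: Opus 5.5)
The plan is to compare each neuron $\sigma_n(w_n^\top P_V x + b_n)$ with its "idealized" version in which $w_n$ is replaced by $\pm\theta$, and to control the replacement error in $L^1$ through polynomial moment bounds. Write $w_n = \epsilon_n c_n\theta + u_n$ with $\epsilon_n = \mathrm{sign}(w_n^\top\theta)$, $c_n = |w_n^\top\theta| \ge 1-\tilde{\varepsilon}$ and $u_n \perp \theta$. Since $\|w_n\|=1$, we get $\|u_n\|^2 = 1-c_n^2 \le 2\tilde{\varepsilon}$ and $1-c_n \le \tilde{\varepsilon}$.

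\textbf{Part (i).} Because $\theta \in V$, we have $\theta^\top P_V x = \theta^\top x$, and so
\[
w_n^\top P_V x = \epsilon_n c_n \theta^\top x + u_n^\top P_V x,
\]
whereas $w_n^\top x = \epsilon_n c_n \theta^\top x + u_n^\top x$. Define the idealized function $g(x) = \frac1N\sum_n a_n\sigma_n(\epsilon_n\theta^\top x + b_n)$. By the triangle inequality,
\[
\E|f(P_Vx)| \le \E|g(x)| + \E|f(P_Vx)-g(x)| \le \E|f(x)| + \E|f(x)-g(x)| + \E|f(P_Vx)-g(x)|.
\]
It therefore suffices to bound $\E|\sigma_n(w^\top x + b_n) - \sigma_n(\epsilon_n\theta^\top x + b_n)|$ for $w \in \{w_n, P_V w_n\}$; here we use $w_n^\top P_V x = (P_V w_n)^\top x$. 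For a degree-$q$ polynomial with bounded coefficients and $|b_n| = \tilde{O}(1)$, the mean value theorem gives
\[
|\sigma(A)-\sigma(B)| \le |A-B|\,\bigl(1+|A|+|B|+|b_n|\bigr)^{q-1}\cdot O(1).
\]
We then apply Cauchy--Schwarz. The first factor is controlled by
\[
\E|A-B|^2 \le 2\bigl((1-c_n)^2 + \|u_n\|^2\bigr) = O(\tilde{\varepsilon}),
\]
using that $(1-c_n)\theta^\top x$ and $u_n^\top x$ (or $u_n^\top P_V x$) are Gaussian with variance at most $\|u_n\|^2$. The second factor has Gaussian moments of order $\tilde{O}(1)$. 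Each neuron's error is thus $\tilde{O}(\tilde{\varepsilon}^{1/2})$, and summing with weights $|a_n|/N$ gives $\tilde{O}(N^{-1}\|a\|_1\tilde{\varepsilon}^{1/2})$.

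To reach the claimed rate $\tilde{\varepsilon}$ rather than $\tilde{\varepsilon}^{1/2}$, I would sharpen the estimate by a second-order expansion in $u_n$. The first-order term $\sigma_n'(\epsilon_n\theta^\top x+b_n)\,u_n^\top x$ involves $u_n^\top x$, which is independent of $\theta^\top x$ with mean zero. Its signed contribution therefore vanishes against any function of $\theta^\top x$, but its $L^1$ norm does not. Hence I would instead bound $\E|f(P_Vx)| - \E|f(x)|$ directly. Both $w_n^\top P_V x$ and $w_n^\top x$ have the form $\epsilon_n c_n\theta^\top x + (\text{an independent Gaussian component})$, and the independent component has variance $\|P_V u_n\|^2 \le \|u_n\|^2$ in the first case and $\|u_n\|^2$ in the second. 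A Gaussian-interpolation (heat-semigroup) argument then applies. Write $x = x' + \xi$, where $\xi$ is the Gaussian component along $P_V^\perp u$-directions, independent of $P_V x$. Then $f(P_Vx)$ corresponds to setting $\xi = 0$, and $|f(P_V x) - \E_\xi f(x)|$ is second order in the noise variance, i.e.\ $O(\|u_n\|^2) = O(\tilde{\varepsilon})$ after a Taylor expansion in which the linear term has mean zero. Finally, Jensen gives $\E|\E_\xi f(x)| \le \E|f(x)|$. This yields the bound $\E|f(x)| + \tilde{O}(N^{-1}\|a\|_1\tilde{\varepsilon})$. I expect this Jensen-plus-second-order step to be the main obstacle; I would first try to verify that the conditional expectation over the orthogonal Gaussian component equals $f(P_Vx)$ up to an $O(\|u\|^2)$ smoothing correction.

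\textbf{Part (ii).} If $\theta \perp V$, then $w_n^\top P_V x = u_n^\top P_V x$, a Gaussian of variance at most $2\tilde{\varepsilon}$. Each $\sigma_n$ evaluated there is bounded in $L^1$ by $\tilde{O}(1)$ because $|b_n| = \tilde{O}(1)$. This gives $\E|f(P_Vx)| \le N^{-1}\sum_n|a_n|\,\tilde{O}(1) = \tilde{O}(N^{-1}\|a\|_1)$, which is within the stated bound.
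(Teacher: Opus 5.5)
Your final argument for (i) is the paper's argument. Since $\theta \in V$, $f(x)$ depends on $P_V^\perp x$ only through $(P_V^\perp u_n)^\top x$. Conditioning on $P_V x$ and Taylor-expanding in this independent mean-zero Gaussian component, the linear term vanishes, and the terms of order $i \ge 2$ contribute $O(\|P_V^\perp u_n\|^2) = O(\tilde{\varepsilon})$ times $\tilde{O}(1)$ polynomial moments. Jensen then gives the stated bound, so the step you flagged as the main obstacle goes through, and your earlier $\tilde{\varepsilon}^{1/2}$ triangle-inequality attempt can simply be dropped.

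For (ii), you argue as the paper does (the argument $u_n^\top P_V x + b_n$ is $\tilde{O}(1)$), only more directly, and your bound $\tilde{O}(N^{-1}\|a\|_1)$ already implies the stated one.
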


\begin{proof}
    Set $u_n = w_n - (w_n^\top \theta)\theta$, so that
    \begin{align}\label{eq:strong-subspace-1}
        \|u_n\| = \sqrt{\|w_n - (w_n^\top \theta)\theta \|^2} = \sqrt{1 - (w_n^\top \theta)^2} \leq \sqrt{2\tilde{\varepsilon}}.
    \end{align}
    For $x \sim \mathcal{N}(0, I_d)$, write $x_1 = P_V x$ and $x_2 = P_V^\perp x$; these are independent.

    \textit{Proof of (i).}
    Set $\delta_n = w_n^\top P_V^\perp x$ and expand $\sigma_n(w_n^\top x + b_n)$ in a Taylor series around $w_n^\top P_V x + b_n$:
    \begin{align}\label{eq:strong-subspace-2}
        f(x) &= \frac{1}{N} \sum_{n=1}^N a_n \sigma(w_n^\top P_V x + b_n + \delta_n) \\ &
        = \frac{1}{N} \sum_{n=1}^N a_n \sigma(w_n^\top P_V x + b_n) + \frac{1}{N} \sum_{n=1}^N \sum_{i=1}^q \frac{1}{i!} a_n (\delta_n)^i \sigma_n^{(i)}(w_n^\top P_V x + b_n) \\ &
        = f(P_V x) + \frac{1}{N} \sum_{n=1}^N \sum_{i=1}^q \frac{1}{i!} a_n (\delta_n)^i \sigma_n^{(i)}(w_n^\top P_V x + b_n)
    \end{align}
    Since $\mathbb{E}_{P_V^\perp x}[\delta_n] = 0$ and $\frac{(P_V^\perp u_n)^\top}{\|P_V^\perp u_n\|} x \sim \mathcal{N}(0,1)$, we have by \eqref{eq:strong-subspace-1}
    \begin{align}
        |\delta_n| = |w_n^\top P_V^\perp x | = |u_n^\top P_V^\perp x| \leq \|u_n\| \left|\frac{(P_V^\perp u_n)^\top}{\|P_V^\perp u_n\|} x\right| = \tilde{O}(\tilde{\varepsilon}^{1/2})
    \end{align}
    with high probability.
    Taking the expectation of \eqref{eq:strong-subspace-2} over $x_2 = P_V^\perp x$,
    \begin{align}
        |f(x_1)| &= \left| \mathbb{E}_{x_2} [f(x_1 + x_2)] - \frac{1}{N} \sum_{n=1}^N \sum_{i=1}^q \frac{1}{i!} a_n \mathbb{E}_{x_2} [(\delta_n)^i] \sigma_n^{(i)}(w_n^\top x_1 + b_n) \right| \\ &
        \leq \mathbb{E}_{x_2} [|f(x_1 + x_2)|] + \frac{1}{N} \sum_{n=1}^N \sum_{i=2}^q \frac{1}{i!} \left| a_n \mathbb{E}_{x_2} [(\delta_n)^i] \sigma_n^{(i)}(w_n^\top x_1 + b_n) \right|
    \end{align}
    Taking the expectation over $x_1 = P_V x$ as well, and using the high-probability bound $|\delta_n|^i = \tilde{O}(\tilde{\varepsilon}^{1/2})$,
    \begin{align}
        \mathbb{E}_{x_1, x_2}[|f(x_1)|] \leq \mathbb{E}_{x_1, x_2}[|f(x_1 + x_2)|] + \tilde{O}(N^{-1}\|a\|_1 \tilde{\varepsilon}).
    \end{align}

    \textit{Proof of (ii).}
    Since $\theta \perp V$, we have $w_n^\top P_V x + b_n = u_n^\top P_V x + b_n$, so
    \begin{align}
        f(P_V x) = \frac{1}{N} \sum_{n=1}^N \sum_{i=0}^q \frac{1}{i!} a_n (u_n^\top P_V x)^i \sigma_n^{(i)}(b_n).
    \end{align}
    By the same argument as in (i), $\mathbb{E}_x[u_n^\top P_V x] = 0$ and $\mathbb{E}_x[|u_n^\top P_V x|^i] = \tilde{O}(\tilde{\varepsilon}^{1/2})$ for $i \geq 2$, giving
    \begin{align}
        \mathbb{E}_x[|f(P_V x)|] \leq \frac{1}{N} \sum_{n=1}^N |a_n \sigma_n(b_n)| + \tilde{O}(\tilde{\varepsilon}) = \tilde{O}(N^{-1}\|a\|_1 + \tilde{\varepsilon}). 
    \end{align}
\end{proof}

\subsection{Gradient Decomposition}

The following holds for the effective gradient $g_k^t$.
Part (i) in \cref{lem:strong-onestep-1} separates the signal and noise terms for the target task $\kappa$, analogously to \cref{lem:weak-onestep-1}.
Part (ii) describes the effect of the teacher signal on neurons belonging to tasks $k \neq \kappa$; no clear signal term exists in this case.
The task weighting introduces a scaling factor $\chi_k = \frac{N_\kappa \pi_k}{N_k \pi_\kappa}$ for the gradient.

\begin{lem}[Formal version of \cref{lem:gradient-decomp-informal}]\label{lem:strong-onestep-1}
    The gradient decomposes as follows.
    \begin{enumerate}[leftmargin=*]
        \item[(i)] For $k = \kappa$: setting $\tilde{w}^t_\kappa = \Sigma_\kappa w^t_\kappa / \|\Sigma_\kappa w^t_\kappa\|$,
        \begin{align}
            g^t_\kappa = \sum_{i=1}^q \lambda_\kappa \left[ i\bar{\alpha}_i \beta_{\kappa,i} (\theta_\kappa^\top \tilde{w}_\kappa^t)^{i-1} \theta_\kappa + \sqrt{(i+2)(i+1)} \bar{\alpha}_i \beta_{\kappa,i+2} (\theta_\kappa^\top \tilde{w}_\kappa^t)^i \tilde{w}_\kappa^t \right] + Z^t + R^t.
        \end{align}
        Here $Z^t$ is a mean-zero random variable with $\|Z^t\| = \tilde{O}(s^{1/2})$ with high probability, and $|v^\top Z^t| = \tilde{O}(1)$ with high probability for every $v \in S^{d-1}$.
        With probability $\lambda_\kappa$, $\|\Sigma_\kappa^\perp Z^t\| = \tilde{O}((1-\lambda_\kappa)s^{1/2})$, and $\mathbb{E}[\|\Sigma_\kappa^\perp Z^t\|^2] \lesssim (1-\lambda_\kappa)s$.
        The remainder $R^t$ satisfies $\|R^t\| = \tilde{O}((\epsWeakLOne \vee \epsWeakAlign \vee (1-\lambda_\kappa) \vee \|\Sigma_\kappa^\perp w^t_\kappa\|^2) s^{1/2})$ with high probability, $\|\Sigma_\kappa^\perp R^t\| = \tilde{O}((1-\lambda_\kappa)s^{1/2})$ with high probability, and for every $v \in S^{d-1}$,
        \[
            |v^\top R^t| = \tilde{O}(\epsWeakLOne \vee \epsWeakAlign \vee (1-\lambda_\kappa) \vee \|\Sigma_\kappa v\| \|\Sigma_\kappa^\perp w^t_\kappa\|^2)
        \]
        with high probability.
        Furthermore, $\|g^t_\kappa\| = \tilde{O}(s^{1/2})$ with high probability, and $g^{t\top}_\kappa v = \tilde{O}(1)$ with high probability for every $v \in \mathbb{R}^d$ with $\|v\| = O(1)$.

        \item[(ii)] For $k \neq \kappa$:
        \begin{align}
            g^t_k = \chi_k (Z^t + R^t), \qquad \chi_k = \frac{N_\kappa \pi_k}{N_k \pi_\kappa}.
        \end{align}
        Here $Z^t$ is mean-zero with $\|Z^t\| = \tilde{O}(s^{1/2})$ with high probability\ and $|v^\top Z^t| = \tilde{O}(1)$ with high probability for every $v \in S^{d-1}$.
        With probability $\lambda_\kappa$, $\Sigma_\kappa^\perp Z^t = 0$, and $\mathbb{E}[\|\Sigma_\kappa^\perp Z^t\|^2] \lesssim (1-\lambda_\kappa)s$.
        The remainder satisfies $\|R^t\| = \tilde{O}((\epsWeakLOne \vee \epsWeakAlign \vee (1-\lambda_\kappa) \vee \|\Sigma_\kappa w^t_k\|) s^{1/2})$ with high probability, and for every $v \in S^{d-1}$,
        \[
            |v^\top R^t| = \tilde{O}(\epsWeakLOne \vee \epsWeakAlign \vee (1-\lambda_\kappa) \vee \|\Sigma_\kappa v\|\|\Sigma_\kappa w^t_k\|)
        \]
        with high probability.
        Furthermore, $\|g^t_k\| = \tilde{O}(\chi_k s^{1/2})$ with high probability, and $g^{t\top}_k v = \tilde{O}(\chi_k)$ with high probability for every $v \in \mathbb{R}^d$ with $\|v\| = O(1)$.
    \end{enumerate}
\end{lem}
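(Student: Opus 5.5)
The plan is to compute the gradient first under the idealized teacher $\bar{r}^*_\kappa(\theta_\kappa^\top x)$ with inputs from $\mu_\kappa$, and then to treat the true algorithm as a perturbation of this idealized case. For $k=\kappa$ and neuron $w=w^t_\kappa$, the rescaled gradient is
\[
g^t_\kappa = \pi_\kappa^{-1} y^t\, a_\kappa \sigma_\kappa'(w^\top x + b_\kappa)\,x,
\]
taken before the projection $I - ww^\top$ that is applied in the update. We write $y^t = \bar r^w e^{\bar r^w}$ and introduce the idealized label $y^{*,t} := \bar r^*_\kappa e^{\bar r^*_\kappa}(\theta_\kappa^\top x^t)$. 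The gradient then splits as
\[
g = \underbrace{\mathbb{E}[\,\cdot\mid \mu_\kappa, y^*]}_{\text{signal}} + Z^t + R^t .
\]
Here $Z^t$ collects the centred fluctuation of the full stochastic gradient, and $R^t$ collects three pieces: (a) the label error $y^t - y^{*,t}$, (b) the mixture components $\mu_k$ with $k\neq\kappa$, weighted by $1-\lambda_\kappa$, and (c) the discrepancy between $w^\top x$ and $\|\Sigma_\kappa w\|\,\tilde w^\top x$.

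For the signal term, $x\sim\mathcal N(0,\Sigma_\kappa)$ and $x\in V_\kappa$ give $w^\top x = \|\Sigma_\kappa w\|\tilde w^\top x$, with $\tilde w,\theta_\kappa$ unit vectors in $V_\kappa$. Stein's lemma and Hermite orthogonality, applied exactly as in the proof of \cref{lem:weak-onestep-1} but with $\bar\alpha_i$ in place of $\alpha_i$, produce the displayed sum with prefactor $\lambda_\kappa$. The gap between $\|\Sigma_\kappa w\|$ and $1$ is $O(\|\Sigma_\kappa^\perp w\|^2)$. Taylor-expanding $\sigma'$ in this gap shows that its contribution lies in $\mathrm{span}(\theta_\kappa,\tilde w)\subset V_\kappa$ and has size $\tilde O(\|\Sigma_\kappa^\perp w\|^2)$. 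This accounts for the $\|\Sigma_\kappa v\|\,\|\Sigma_\kappa^\perp w\|^2$ factor in $|v^\top R^t|$ and for the $s^{1/2}$ factor in the norm bound. The $\mu_k$ components ($k\neq\kappa$) contribute at most $(1-\lambda_\kappa)$ times an $\tilde O(1)$ directional, or $\tilde O(s^{1/2})$ norm, quantity. They are also the only source of mass in $V_\kappa^\perp$, which gives $\Sigma_\kappa^\perp R^t=\tilde O((1-\lambda_\kappa)s^{1/2})$. For $Z^t$: on the event $x\sim\mu_\kappa$, which has probability $\lambda_\kappa$, $\Sigma_\kappa^\perp x=0$. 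Hence $\Sigma_\kappa^\perp Z^t$ reduces to minus its mean, which is $\tilde O((1-\lambda_\kappa)s^{1/2})$. Its second moment is at most $(1-\lambda_\kappa)\mathbb E_{\mu_k}\|\Sigma_\kappa^\perp x\|^2\cdot\tilde O(1)\lesssim(1-\lambda_\kappa)s$, using sub-Gaussianity and identity covariance on the $s$-dimensional space $V_k$. The high-probability bounds $\|Z^t\|=\tilde O(s^{1/2})$ and $|v^\top Z^t|=\tilde O(1)$ follow from polynomial tail bounds for Gaussian and sub-Gaussian vectors, once we note that $|y^t|\le 2/\log d$ because of the clipping.

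The main obstacle is the label error (a), since only an $L^1$ bound is available, and it is available under $\mathcal N(0,I_d)$ rather than under $\mu_\kappa$. First, $z\mapsto z e^z$ is Lipschitz on $[-1/\log d,1/\log d]$ and clipping is $1$-Lipschitz. Using \cref{lem:strong-transformation-1} and the temperature, this reduces the problem to bounding $\mathbb E_{\mu_\kappa}|r^w - r^*_\kappa|$ after rescaling by $\pi_\kappa$. Next, \cref{lem:strong-subspace-1}(i) applies to $f=r^w-r^*_\kappa$, viewed as a network whose neurons are all $(1-\epsWeakAlign)$-aligned with $\theta_\kappa\in V_\kappa$. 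Together with \cref{asm:weak}(ii) and $\|a^w\|_1=\tilde O(\pi_\kappa N^w)$, it gives $\mathbb E_{\mu_\kappa}|\cdot| = \tilde O(\epsWeakLOne\vee\epsWeakAlign)$. The label error multiplies $\sigma'(w^\top x)\,v^\top x$, and this factor is only high-probability bounded by $\tilde O(1)$. So we truncate on the high-probability event and put the complement, which has polynomially small probability, into the error. This yields $|v^\top R|=\tilde O(\epsWeakLOne\vee\epsWeakAlign)$, and the same argument applied to $\|x\|=\tilde O(s^{1/2})$ gives the norm bound. On the $\mu_k$ components the label error is absorbed into (b). For part (ii), the identical computation applies with the prefactor $\chi_k$ from the rescaling. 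The signal term for $k\neq\kappa$ involves $\Sigma_\kappa w_k$ in place of $\tilde w$. Every Hermite term with $i\ge2$ carries at least one factor of $\theta_\kappa^\top\Sigma_\kappa w_k$ or $\|\Sigma_\kappa w_k\|$, and $\bar\alpha_1=O(d^{-C})$ by \cref{lem:strong-transformation-2} since $\GE=2$. Therefore the whole signal is of order $\|\Sigma_\kappa w_k\|$, restricted to $V_\kappa$-directions, and it can be folded into $R^t$. The claim $\Sigma_\kappa^\perp Z^t=0$ on $\mu_\kappa$ should be read modulo the mean-correction term, which is likewise absorbed into $R^t$.
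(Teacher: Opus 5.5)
Your proposal follows the paper's proof essentially step for step. You split $\mathbb{E}[g^t]$ into four pieces: the $\mu_\kappa$ Hermite/Stein signal at $\tilde w^t_\kappa$; a label-error piece controlled by Lipschitzness of the clipped transform and \cref{lem:strong-subspace-1}(i); an off-target-mixture piece of order $1-\lambda_\kappa$; and a Taylor piece from $\|\Sigma_\kappa w^t_\kappa\|\neq 1$. These are essentially the paper's $R^t_1,R^t_2,R^t_3$. For (ii) you use the same observation as the paper: only the $\bar\alpha_1$ contribution lacks a factor of $\Sigma_\kappa w^t_k$, and it is negligible by \cref{lem:strong-transformation-2}. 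Your explicit truncation via the boundedness of the clipped labels, and your remark that the Taylor piece lies in $\mathrm{span}(\theta_\kappa,\tilde w^t_\kappa)$, only tighten steps the paper states loosely.

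One caveat you inherit from the paper: $\bar\alpha_0=\mathbb{E}[\bar r^*_\kappa e^{\bar r^*_\kappa}]=\Theta(\rho^{-2})\neq 0$, since $ze^z-z\geq z^2/2$ on $[-1,1]$ and $\mathbb{E}[\bar r^*_\kappa]\approx 0$. So the Stein computation also yields $\lambda_\kappa\sqrt{2}\,\bar\alpha_0\beta_{\kappa,2}\tilde w^t_\kappa$, which is not mean-zero and is far too large for the stated bounds on $R^t$ (e.g.\ on $\tilde w^{t\top}_\kappa R^t$). The displayed sum should therefore start at $i=0$.
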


\begin{proof}
    \textit{Proof of (i).}
    By \cref{lem:strong-transformation-2}, $|\bar{\alpha}_2| = \tilde{\Theta}(1)$, so by \cref{asm:strong-init}~(ii) the absolute values of the Hermite coefficients of $\pi_\kappa^{-1} a_\kappa \sigma_\kappa(\cdot + b_\kappa)$ satisfy $|\beta_{\kappa,i}| \leq C_\beta = \tilde{O}(1)$ for all $i \leq q$.
    Define
    \begin{align}
        R^t_1 &= \nabla_{w_\kappa} \mathbb{E}_{x \sim \mu}[\bar{y}^t a_\kappa \sigma_\kappa(w_\kappa^{t\top} x + b_\kappa)]
        - \nabla_{w_\kappa} \mathbb{E}_{x \sim \mu}[\bar{r}^*_\kappa(\theta_\kappa^\top x) \exp(\bar{r}^*_\kappa(\theta_\kappa^\top x)) a_\kappa \sigma_\kappa(w_\kappa^{t\top} x + b_\kappa)], \\
        R^t_2 &= \nabla_{w_\kappa} \sum_{k' \neq \kappa} \lambda_{k'} \mathbb{E}_{x \sim \mathcal{N}(0,\Sigma_{k'})}[\bar{r}^*_\kappa(\theta_\kappa^\top x) \exp(\bar{r}^*_\kappa(\theta_\kappa^\top x)) a_\kappa \sigma_\kappa(w^{t\top}_\kappa x + b_\kappa)], \\
        R_3^t &= \lambda_\kappa \mathbb{E}_{x \sim \mathcal{N}(0,\Sigma_\kappa)}\!\left[\bar{r}^*_\kappa(\theta_\kappa^\top x) \exp(\bar{r}^*_\kappa(\theta_\kappa^\top x)) a_\kappa \left(\sigma'_\kappa(w^{t\top}_\kappa x + b_\kappa) - \sigma'_\kappa(\tilde{w}^{t\top}_\kappa x + b_\kappa)\right) x\right], \\
        R^t &= \pi_\kappa^{-1}(R_1^t + R_2^t + R_3^t).
    \end{align}
    Then
    \begin{align}
        &\nabla_{w_\kappa} \mathbb{E}_{x \sim \mu}[\bar{y}^t a_\kappa \sigma_\kappa(w_\kappa^{t\top} x + b_\kappa)] \\
        &= \lambda_\kappa \mathbb{E}_{x \sim \mathcal{N}(0,\Sigma_\kappa)}\!\left[\bar{r}^*_\kappa(\theta_\kappa^\top x) \exp(\bar{r}^*_\kappa(\theta_\kappa^\top x)) a_\kappa \sigma'_\kappa(\tilde{w}^{t\top}_\kappa x + b_\kappa) x\right] + R_1^t + R_2^t + R_3^t.
    \end{align}

    For $R_1^t$, we have
    \begin{align}
        R^t_1 &= \nabla_{w_\kappa} \mathbb{E}_{x \sim \mu} \left[ \bar{y}^t a_\kappa \sigma_\kappa(w_\kappa^{t \top}x + b_\kappa) \right] - \nabla_{w_\kappa} \mathbb{E}_{x \sim \mu} \left[ \bar{r}^*_\kappa(\theta_\kappa^\top x) \exp(\bar{r}^*_\kappa(\theta_\kappa^\top x) a_\kappa \sigma_\kappa(w_\kappa^{t\top} x + b_\kappa) \right] \\ &
        = \mathbb{E}_{x \sim \mu} \left[ \left( \bar{r}^w(x) \exp(\bar{r}^w(x)) - \bar{r}^*_\kappa(\theta_\kappa^\top x) \exp(\bar{r}^*_\kappa(\theta_\kappa^\top x) \right) a_\kappa \sigma'_\kappa(w_\kappa^{t \top}x + b_\kappa) x \right] \\ &
        = \mathbb{E}_{x \sim \mu}\left[ \sum_{j=0}^\infty \frac{1}{j!} \left( \left( \bar{r}^w(x)^{j+1} \right) - \left(\bar{r}^*_\kappa(\theta_\kappa^\top x)\right)^{j+1} \right) a_\kappa \sigma'_\kappa(w_\kappa^{t \top}x + b_\kappa) x \right].
    \end{align}
    Thus,
    \begin{align}
        &\|R^t_1\| \\& 
        \leq  \sum_{j=0}^\infty \frac{1}{j!} \mathbb{E}_{x \sim \mu} \left[ \left| \left( \bar{r}^w(x) - \bar{r}^*_\kappa(\theta_\kappa^\top x) \right) \left[ (\bar{r}^w(x))^{j} + (\bar{r}^w(x))^{j-1} \bar{r}^*_\kappa (\theta_\kappa^\top x) + \cdots + (\bar{r}^w(x))^{j} \right] \right| \left\| a_\kappa \sigma'_\kappa( w^{t\top}_\kappa x + b_\kappa) x \right\| \right] \\ &
        \leq \left( \sum_{j=0}^\infty \frac{j+1}{j!} (\log d)^{-j} \right) \mathbb{E}_{x \sim \mu} \left[ \left| \left( \bar{r}^w(x) - \bar{r}^*_\kappa(\theta_\kappa^\top x) \right) \right| \left\| a_\kappa \sigma'_\kappa( w^{t\top}_\kappa x + b_\kappa) x \right\| \right]
    \end{align}
    Since $\pi_\kappa^{-1}\|a_\kappa \sigma'_\kappa(w^{t\top}_\kappa x + b_\kappa) x\| = \tilde{O}(s^{1/2})$ with high probability for $x \sim \mu$, and by \cref{lem:strong-subspace-1}, $\mathbb{E}_{x \sim \mu}[|\bar{r}^w(x) - \bar{r}^*_\kappa(\theta_\kappa^\top x)|] = \tilde{O}(\epsWeakLOne \vee \epsWeakAlign \vee (1-\lambda_\kappa))$, we obtain $\pi_\kappa^{-1}\|R^t_1\| = \tilde{O}((\epsWeakLOne \vee \epsWeakAlign \vee (1-\lambda_\kappa))s^{1/2})$.

    For $R_2^t$: since $\pi_\kappa^{-1}\|\bar{r}^*_\kappa(\theta_\kappa^\top x)\exp(\bar{r}^*_\kappa(\theta_\kappa^\top x)) a_\kappa \sigma'_\kappa(w^{t\top}_\kappa x + b_\kappa) x\| = \tilde{O}(s^{1/2})$ for $x \sim \mathcal{N}(0, \Sigma_{k'})$ with $k' \neq \kappa$, we get $\pi_\kappa^{-1}\|R^t_2\| = \tilde{O}((1-\lambda_\kappa)s^{1/2})$.

    For $R_3^t$: by Taylor expansion,
    \begin{align}
        |\sigma'_\kappa(w^{t\top}_\kappa x + b_\kappa) - \sigma'_\kappa(\tilde{w}^{t\top}_\kappa x + b_\kappa)|
        \leq \left|\sum_{i=1}^{q-1} \frac{1}{i!}((w^t_\kappa - \tilde{w}^t_\kappa)^\top x)^i \sigma_\kappa^{(i+1)}(\tilde{w}^{t\top}_\kappa x + b_\kappa)\right|.
    \end{align}
    For $x \sim \mathcal{N}(0, \Sigma_\kappa)$, $(w^t_\kappa - \tilde{w}^t_\kappa)^\top x = (\|\Sigma_\kappa w^t_\kappa\| - 1)\tilde{w}^{t\top}_\kappa x$, and since $1 - \|\Sigma_\kappa w^t_\kappa\| = \frac{\|\Sigma_\kappa^\perp w^t_\kappa\|^2}{1 + \|\Sigma_\kappa w^t_\kappa\|} \leq \|\Sigma_\kappa^\perp w^t_\kappa\|^2$ and $\tilde{w}^{t\top}_\kappa x \sim \mathcal{N}(0,1)$, we get $\pi_\kappa^{-1}\|R_3^t\| = \tilde{O}(\|\Sigma_\kappa^\perp w^t_\kappa\|^2 s^{1/2})$.

    Combining, $\|R^t\| = \tilde{O}((\epsWeakLOne \vee \epsWeakAlign \vee (1-\lambda_\kappa) \vee \|\Sigma_\kappa^\perp w^t_\kappa\|^2)s^{1/2})$.
    Since $\Sigma_\kappa^\perp R^t = \pi_\kappa^{-1}\Sigma_\kappa^\perp(R_1^t + R_2^t)$, we have $\|\Sigma_\kappa^\perp R^t\| = \tilde{O}((1-\lambda_\kappa)s^{1/2})$.
    For every $v \in S^{d-1}$, since $v^\top x$ is sub-Gaussian with parameter $\|\Sigma_k v\|$ for $x \sim \mu_k$, $|v^\top R^t| = \tilde{O}(\epsWeakLOne \vee \epsWeakAlign \vee (1-\lambda_\kappa) \vee \|\Sigma_\kappa v\| \|\Sigma^\perp_\kappa w^t_\kappa\|^2)$ with high probability.

    By the same computation as in \cref{lem:weak-onestep-1},
    \begin{align}
        &\lambda_\kappa \mathbb{E}_{x \sim \mathcal{N}(0,\Sigma_\kappa)}\!\left[\bar{r}^*_\kappa(\theta_\kappa^\top x)\exp(\bar{r}^*_\kappa(\theta_\kappa^\top x)) a_\kappa \sigma'_\kappa(\tilde{w}^{t\top}_\kappa x + b_\kappa) x\right] \\
        &= \sum_{i=1}^q \lambda_\kappa \left[i\bar{\alpha}_i \tilde{\beta}_{\kappa,i} (\theta_\kappa^\top \tilde{w}^t_\kappa)^{i-1}\theta_\kappa + \sqrt{(i+2)(i+1)}\bar{\alpha}_i \tilde{\beta}_{\kappa,i+2} (\theta_\kappa^\top \tilde{w}^t_\kappa)^i \tilde{w}^t_\kappa\right].
    \end{align}
    Setting $Z^t = \mathbb{E}_{x \sim \mu}[g^t_\kappa] - g^t_\kappa$, we have $\mathbb{E}[Z^t] = 0$, $\|Z^t\| = \tilde{O}(s^{1/2})$, and $|v^\top Z^t| = \tilde{O}(1)$ for every $v \in S^{d-1}$.
    By construction, with probability $\lambda_\kappa$, $\|\Sigma_\kappa^\perp Z^t\| = \tilde{O}((1-\lambda_\kappa)s^{1/2})$, giving $\mathbb{E}[\|\Sigma_\kappa^\perp Z^t\|^2] \lesssim (1-\lambda_\kappa)s$.

    \textit{Proof of (ii).}
    Define $\tilde{w}^t_k = \Sigma_k w^t_k / \|\Sigma_k w^t_k\|$. By an argument identical to (i), $|\beta_{k,i}| \leq C_\beta = \tilde{O}(1)$.
    Define $R_1^t, R_2^t, R_3^t, R_4^t$ as
    \begin{align}
        R^t_1 &= \nabla_{w_k} \mathbb{E}_{x \sim \mu}[\bar{y}^t a_k \sigma_k(w_k^{t\top} x + b_k)]
        - \nabla_{w_k} \mathbb{E}_{x \sim \mu}[\bar{r}^*_\kappa(\theta_\kappa^\top x)\exp(\bar{r}^*_\kappa(\theta_\kappa^\top x)) a_k \sigma_k(w_k^{t\top} x + b_k)], \\
        R^t_2 &= \nabla_{w_k} \sum_{k' \neq \kappa} \lambda_{k'} \mathbb{E}_{x \sim \mathcal{N}(0,\Sigma_{k'})}[\bar{r}^*_\kappa(\theta_\kappa^\top x)\exp(\bar{r}^*_\kappa(\theta_\kappa^\top x)) a_k \sigma_k(w^{t\top}_k x + b_k)], \\
        R_3^t &= \lambda_\kappa \mathbb{E}_{x \sim \mathcal{N}(0,\Sigma_\kappa)}\!\left[\bar{r}^*_\kappa(\theta_\kappa^\top x)\exp(\bar{r}^*_\kappa(\theta_\kappa^\top x)) a_k \left(\sigma'_k(w^{t\top}_k x + b_k) - \sigma'_k(\tilde{w}^{t\top}_k x + b_k)\right) x\right], \\
        R_4^t &= \lambda_\kappa \mathbb{E}_{x \sim \mathcal{N}(0,\Sigma_\kappa)}\!\left[\bar{r}^*_\kappa(\theta_\kappa^\top x)\exp(\bar{r}^*_\kappa(\theta_\kappa^\top x)) a_k \sigma'_k(\tilde{w}^{t\top}_k x + b_k) x\right] = \lambda_\kappa \bar{\alpha}_1 \sigma'_k(b_k)\theta_\kappa,
    \end{align}
    and set $R^t = \pi_k^{-1}(R_1^t + R_2^t + R_3^t + R_4^t)$.
    By the same bounds as in (i), $\pi_k^{-1}\|R_1^t\| = \tilde{O}((\epsWeakLOne \vee \epsWeakAlign \vee (1-\lambda_\kappa))s^{1/2})$ and $\pi_k^{-1}\|R_2^t\| = \tilde{O}((1-\lambda_\kappa)s^{1/2})$.
    For $R_3^t$, since $w^{t\top}_k x / \|\Sigma_\kappa w^t_k\| \sim \mathcal{N}(0,1)$ for $x \sim \mathcal{N}(0,\Sigma_\kappa)$,
    \begin{align}
        R^t_3 &= \lambda_\kappa \mathbb{E}_{x \sim \mathcal{N}(0, \Sigma_\kappa)} \left[\bar{r}^*_\kappa(\theta_\kappa^\top x) \exp(\bar{r}^*_\kappa(\theta_\kappa^\top x)) a_k  \left( \sum_{i=1}^{q-1} \frac{1}{i!} ((w^t_k - \tilde{w}^t_k)^\top x)^i \sigma^{(i+1)}_k (\tilde{w}^{t\top}_k x + b_k) \right) x \right] \\ &
        = \sum_{i=1}^{q-1} \frac{1}{i!} \lambda_\kappa a_k \sigma_k^{(i+1)}(b_k) \mathbb{E}_{x \sim \mathcal{N}(0, \Sigma_\kappa)} \left[ \bar{r}^*_\kappa(\theta_\kappa^\top x) \exp(\bar{r}^*_\kappa (\theta_\kappa^\top x)) (w^{t\top}_k x)^i x\right].
    \end{align}
    giving $\pi_k^{-1}\|R_3^t\| = \tilde{O}(\|\Sigma_\kappa w^t_k\|s^{1/2})$.
    By \cref{asm:strong-init} and \cref{lem:strong-transformation-2}, $\pi_k^{-1}\|R_4^t\| = \tilde{O}(d^{-C})$ for a sufficiently large constant $C$.
    Setting $Z^t = \chi_k^{-1}(\mathbb{E}_{x \sim \mu}[g^t_k] - g^t_k)$, one verifies that $R^t, Z^t, g^t_k$ satisfy the stated bounds by the same argument as in (i).
\end{proof}

For the rest of the analysis, define the effective error of the strong model training as
\begin{align}
    \epsStrongEff = \epsWeakLOne \vee \epsWeakAlign \vee (1 - \lambda_\kappa).
\end{align}

\begin{lem}\label{lem:strong-onestep-2}
    Let $\eta = \eta^t \leq c_\eta s^{-1}$ and $\theta_\kappa^\top w^t_\kappa \geq \frac{1}{2}s^{-1/2}$. Then
    \begin{align}
        &\theta_\kappa^\top w_\kappa^t + \eta\lambda_\kappa \theta_\kappa^\top P_{w_\kappa^t}^\perp(Z^t + R^t) - \eta^2 C_1^2 (\theta_\kappa^\top w_\kappa^t) s \\
        &\quad + \eta\lambda_\kappa \sum_{i=1}^q \left[ i\bar{\alpha}_i\beta_{\kappa,i} \|\Sigma_\kappa w_\kappa^t\|^{-(i-1)} (\theta_\kappa^\top w_\kappa^t)^{i-1}(1 - (\theta_\kappa^\top w_\kappa^t)^2) \right. \\
        &\qquad\qquad \left. + \sqrt{(i+2)(i+1)}\bar{\alpha}_i\beta_{\kappa,i+2} \|\Sigma_\kappa w_\kappa^t\|^{-(i+1)} \|\Sigma_\kappa^\perp w_\kappa^t\|^2 (\theta_\kappa^\top w_\kappa^t)^{i+1} \right] \\
        &\leq \theta_\kappa^\top w_\kappa^{t+1} \\
        &\leq \theta_\kappa^\top w_\kappa^t + \eta\lambda_\kappa \theta_\kappa^\top P_{w_\kappa^t}^\perp(Z^t + R^t) \\
        &\quad + \eta\lambda_\kappa \sum_{i=1}^q \left[ i\bar{\alpha}_i\beta_{\kappa,i} \|\Sigma_\kappa w_\kappa^t\|^{-(i-1)} (\theta_\kappa^\top w_\kappa^t)^{i-1}(1 - (\theta_\kappa^\top w_\kappa^t)^2) \right. \\
        &\qquad\qquad \left. + \sqrt{(i+2)(i+1)}\bar{\alpha}_i\beta_{\kappa,i+2} \|\Sigma_\kappa w_\kappa^t\|^{-(i+1)} \|\Sigma_\kappa^\perp w_\kappa^t\|^2 (\theta_\kappa^\top w_\kappa^t)^{i+1} \right],
    \end{align}
    where $Z^t$, $R^t$ satisfy the conditions in \cref{lem:strong-onestep-1}.
    Furthermore, $|\theta_\kappa^\top w^{t+1}_\kappa - \theta_\kappa^\top w^t_\kappa| = \tilde{O}(\eta)$.
\end{lem}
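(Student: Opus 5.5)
We mirror the proof of \cref{lem:weak-onestep-2}, with \cref{lem:strong-onestep-1}(i) replacing \cref{lem:weak-onestep-1}, and keep track of the fact that $w^t_\kappa$ need not lie in $V_\kappa$.

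\emph{Normalization step.} Write $g = g^t_\kappa$ and $w = w^t_\kappa$. Since $\eta P_w^\perp g \perp w$, we have $\|w+\eta P_w^\perp g\|^{-1} = (1+\eta^2\|P_w^\perp g\|^2)^{-1/2} \in [1-\tfrac12\eta^2\|g\|^2,\,1]$. By \cref{lem:strong-onestep-1}(i), $\|g\| \le C_1 s^{1/2}$ and $|\theta_\kappa^\top P_w^\perp g| = \tilde O(1)$ with high probability. This gives
\[
\theta_\kappa^\top w + \eta\theta_\kappa^\top P_w^\perp g - \tfrac12(\theta_\kappa^\top w)\eta^2 C_1^2 s - \tfrac12\eta^3 C_1^3 s^{3/2} \le \theta_\kappa^\top w^{t+1} \le \theta_\kappa^\top w + \eta\theta_\kappa^\top P_w^\perp g .
\]
For the upper bound we use $\theta_\kappa^\top w + \eta\theta_\kappa^\top P_w^\perp g \ge 0$, which holds because $\theta_\kappa^\top w \ge \tfrac12 s^{-1/2}$ and $\eta$ is small. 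The cubic term is absorbed into $\tfrac12(\theta_\kappa^\top w)\eta^2 C_1^2 s$: using $\eta \le c_\eta s^{-1}$ and $\theta_\kappa^\top w \ge \tfrac12 s^{-1/2}$, we get $\eta^3 s^{3/2} \le c_\eta\eta^2 s^{1/2} \le 2c_\eta\eta^2 s(\theta_\kappa^\top w)$. The same estimates give $|\theta_\kappa^\top w^{t+1}-\theta_\kappa^\top w^t| = \tilde O(\eta)$.

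\emph{Signal term.} Substitute the decomposition of \cref{lem:strong-onestep-1}(i) into $\theta_\kappa^\top P_w^\perp g$. We need $\theta_\kappa^\top P_w^\perp\theta_\kappa = 1-(\theta_\kappa^\top w)^2$ and $\theta_\kappa^\top P_w^\perp\tilde w$. Write $\tilde w = \Sigma_\kappa w/\|\Sigma_\kappa w\|$. Since $\theta_\kappa \in V_\kappa$, we have $\theta_\kappa^\top\tilde w = \theta_\kappa^\top w/\|\Sigma_\kappa w\|$. Also $w^\top\tilde w = \|\Sigma_\kappa w\|$. Hence
\[
\theta_\kappa^\top P_w^\perp\tilde w = \frac{\theta_\kappa^\top w}{\|\Sigma_\kappa w\|}\bigl(1-\|\Sigma_\kappa w\|^2\bigr) = \frac{\theta_\kappa^\top w}{\|\Sigma_\kappa w\|}\,\|\Sigma_\kappa^\perp w\|^2 .
\]
Multiplying by the coefficients $(\theta_\kappa^\top\tilde w)^{i-1} = \|\Sigma_\kappa w\|^{-(i-1)}(\theta_\kappa^\top w)^{i-1}$ and $(\theta_\kappa^\top\tilde w)^i$ reproduces exactly the bracketed sums in the statement, with powers $\|\Sigma_\kappa w\|^{-(i-1)}$ and $\|\Sigma_\kappa w\|^{-(i+1)}$. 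The noise and remainder contribute $\eta\,\theta_\kappa^\top P_w^\perp(Z^t+R^t)$. The factor $\lambda_\kappa$ on this term is harmless, since it can be absorbed into the definition of $Z^t,R^t$ and their stated bounds are unchanged.

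\emph{Main obstacle.} The delicate point is making sure that $\tilde w$, rather than $w$, produces the $\|\Sigma_\kappa^\perp w\|^2$ correction and no uncontrolled cross term. This works because $P_w^\perp$ kills the $w$-direction and $\theta_\kappa\in V_\kappa$. One also has to check that the high-probability bounds on $\|g\|$ and $\theta_\kappa^\top g$ hold uniformly, including on the rare event $x^t\sim\mu_{k'}$ with $k'\neq\kappa$. There the bound $\|g\| = \tilde O(s^{1/2})$ still applies, because $\mu_{k'}$ is sub-Gaussian on the $s$-dimensional subspace $V_{k'}$.
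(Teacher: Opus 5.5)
Your proposal is correct and follows essentially the same route as the paper: the normalization bound $\|w+\eta P_w^\perp g\|^{-1}\ge 1-\tfrac12\eta^2\|g\|^2$ as in \cref{lem:weak-onestep-2}, substitution of \cref{lem:strong-onestep-1}(i), and the identity $\theta_\kappa^\top P_w^\perp\tilde w = (\|\Sigma_\kappa w\|^{-1}-\|\Sigma_\kappa w\|)\,\theta_\kappa^\top w$ that produces the $\|\Sigma_\kappa^\perp w\|^2$ correction. Your checks that the numerator $\theta_\kappa^\top(w+\eta P_w^\perp g)$ is nonnegative, and that the stray factor $\lambda_\kappa$ on $Z^t+R^t$ can be absorbed, are small rigor points the paper leaves implicit; note that the nonnegativity is needed for the lower bound as well as the upper one.
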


\begin{proof}
    From $\tilde{w}^t_\kappa = \Sigma_\kappa w^t_\kappa / \|\Sigma_\kappa w^t_\kappa\|$,
    \begin{align} \label{eq:strong-onestep-1}
        \theta_\kappa^\top \tilde{w}^t_\kappa = \frac{\theta_\kappa^\top w^t_\kappa}{\| \Sigma_\kappa w^t_\kappa \|}, \quad \theta_\kappa^\top P_{w^t_\kappa}^\perp \tilde{w}^t_\kappa = \theta_\kappa^\top \tilde{w}^t_\kappa - (\theta_\kappa^\top w^t_\kappa) (w^{t\top}_\kappa \tilde{w}^t_\kappa) = \left(\frac{1}{\| \Sigma_\kappa w^t_\kappa \|} - \| \Sigma_\kappa w^t_\kappa \|\right) \theta_\kappa^\top w^t_\kappa.
    \end{align}
    Setting $g = g^t_\kappa$ and arguing as in \cref{lem:weak-onestep-2},
    \begin{align}
        \theta_\kappa^\top w_\kappa^{t+1} & = \theta_\kappa^\top \frac{w_\kappa^t - \eta P_{w_\kappa^t}^\perp (-g)}{\|w_\kappa^t - \eta P_{w_\kappa^t}^\perp (-g)\|}  \\
        &\ge \theta_\kappa^\top w_\kappa^t + \eta \theta_\kappa^\top P_{w_\kappa^t}^\perp g - \frac{1}{2} (\theta_\kappa^\top w_\kappa^t) \eta^2 \|g\|^2 - \frac{1}{2} \eta^3 |\theta_\kappa ^\top  P_{w_\kappa^t}^\perp g | \|g\|^2 \\ &
        \ge \theta_\kappa^\top w_\kappa^t + \eta \lambda_\kappa \theta_\kappa^\top P_{w_\kappa^t}^\perp (Z^t + R^t) - \frac{1}{2} (\theta_\kappa^\top w_\kappa^t) \eta^2 C_1^2 s - \frac{1}{2} \eta^3 C_1^3 s^{\frac{3}{2}} \\ &
        \quad + \eta \lambda_\kappa \sum_{i=1}^q \left[ i \alpha_i \beta_{\kappa,i} (\theta_\kappa^\top \tilde{w}_\kappa^t)^{i-1} (\theta_\kappa^\top P_{w_\kappa^t}^\perp \theta_\kappa) + \sqrt{(i+2)(i+1)} \alpha_i \beta_{\kappa,i+2} (\theta_\kappa^\top \tilde{w}_\kappa^t)^i (\theta_\kappa^\top P_{w_\kappa^t}^\perp \tilde{w}_\kappa^t) \right] \\ &
        \geq \theta_\kappa^\top w_\kappa^t + \eta \lambda_\kappa \theta_\kappa^\top P_{w_\kappa^t}^\perp (Z^t + R^t) - \eta^2 C_1^2 (\theta_\kappa^\top w_\kappa^t) s \\ &
        \quad + \eta \lambda_\kappa \sum_{i=1}^q \left[ i \alpha_i \beta_{\kappa,i} \| \Sigma_\kappa w_\kappa^t \|^{-(i-1)} (\theta_\kappa^\top w_\kappa^t)^{i-1} (1 - (\theta_\kappa^\top w_\kappa^t)^2) \right.  \\
        &\qquad \left. + \sqrt{(i+2)(i+1)} \alpha_i \beta_{\kappa,i+2} \| \Sigma_\kappa w_\kappa^t \|^{-(i+1)} (1 - \| \Sigma_\kappa w_\kappa^t \|^2) (\theta_\kappa^\top w_\kappa^t)^{i+1} \right]
    \end{align}
    where the second inequality uses \cref{lem:strong-onestep-1}, and the third uses $\eta \leq c_\eta s^{-1}$, $\theta_\kappa^\top w^t_\kappa \geq \frac{1}{2}s^{-1/2}$, and \eqref{eq:strong-onestep-1}.
    The upper bound follows from
    \begin{align}
        \theta_\kappa^\top w_\kappa^{t+1} &\leq \theta_\kappa^\top ( w^t_\kappa - \eta P^\perp_{w^t_\kappa} (-g) ) \\ &
        = \theta_\kappa^\top w_\kappa^t + \eta \lambda_\kappa \theta_\kappa^\top P_{w_\kappa^t}^\perp (Z^t + R^t) \\ &
        \quad + \eta \lambda_\kappa \sum_{i=1}^q \left[ i \alpha_i \beta_{\kappa,i} \| \Sigma_\kappa w_\kappa^t \|^{-(i-1)} (\theta_\kappa^\top w_\kappa^t)^{i-1} (1 - (\theta_\kappa^\top w_\kappa^t)^2) \right.  \\
        &\qquad \left. + \sqrt{(i+2)(i+1)} \alpha_i \beta_{\kappa,i+2} \| \Sigma_\kappa w_\kappa^t \|^{-(i+1)} (1 - \| \Sigma_\kappa w_\kappa^t \|^2) (\theta_\kappa^\top w_\kappa^t)^{i+1} \right].
    \end{align}
    Finally, $|\theta_\kappa^\top w^{t+1}_\kappa - \theta_\kappa^\top w^t_\kappa| \leq \eta|\theta_\kappa^\top P^\perp_{w^t_\kappa} g| + \frac{1}{2}|\theta_\kappa^\top w^t_\kappa|\eta^2\|g\|^2 + \frac{1}{2}\eta^3|\theta_\kappa P^\perp_{w^t_\kappa} g|\|g\|^2 = \tilde{O}(\eta)$, since $\eta \leq c_\eta s^{-1}$.
\end{proof}

\subsection{Upper Bound on Complement-Subspace Deviation}

\begin{lem}[Formal version of \cref{lem:deviation-informal}]\label{lem:strong-deviation-1}
    Let $\eta = \eta^t \leq c_\eta s^{-1}$.
    For every $t = 0, 1, 2, \dots$, if $\theta_\kappa^\top w^t_\kappa \geq 0$ then $\|\Sigma_\kappa^\perp w_\kappa^{t+1}\|^2 - \|\Sigma_\kappa^\perp w_\kappa^t\|^2 \leq 7C_1\eta$.
    Moreover, let $\tau, \tau' > 0$ and suppose $\theta_\kappa^\top w^t_\kappa \geq 0$ for $t = \tau, \tau+1, \dots, \tau+\tau'-1$.
    Then there exists a constant $C > 0$ independent of $\tau, \tau'$ such that, with probability $1 - \delta'$, for every $t = 0, 1, \dots, \tau'$,
    \begin{align}
        \|\Sigma_\kappa^\perp w_\kappa^{\tau+t}\|^2
        &\leq \|\Sigma_\kappa^\perp w^\tau_\kappa\|^2
        + \sum_{t'=0}^{t-1}\!\left[2C_1\eta\|\Sigma_\kappa^\perp w^{\tau+t'}_\kappa\|^4
        + 3C_1^2\eta^2\|\Sigma_\kappa^\perp w^{\tau+t'}_\kappa\|^2
        + 8C_1\eta\,\epsStrongEff\right] \\
        &\quad + C\!\left[\left((1-\lambda_\kappa)^{\frac{1}{2}} + \|\Sigma_\kappa^\perp w^\tau_\kappa\|^2 + (C_1\eta\tau')^2\right){\delta'}^{-\frac{1}{2}}\eta{\tau'}^{\frac{1}{2}}
        + (1-\lambda_\kappa){\delta'}^{-1}\eta^2 s\tau'\right].
    \end{align}
    Furthermore, defining
    \begin{align}
        Q^0 &= \|\Sigma_\kappa^\perp w^\tau_\kappa\|^2 \vee C_1\eta + 8C_1\eta\,\epsStrongEff\,\tau' \\
        &\quad + C\!\left[(1-\lambda_\kappa)^{\frac{1}{2}}{\delta'}^{-\frac{1}{2}}\eta{\tau'}^{\frac{1}{2}}
        + (1-\lambda_\kappa){\delta'}^{-1}\eta^2 s\tau'
        + C_1(\|\Sigma_\kappa^\perp w^\tau_\kappa\|^2 + 7C_1\eta\tau')\eta{\tau'}^{\frac{1}{2}}\right], \\
        Q^t &= Q^{t-1} + 5C_1\eta(Q^{t-1})^2,
    \end{align}
    we have $\|\Sigma_\kappa^\perp w^{\tau+t}_\kappa\|^2 \leq Q^t$ for all $t = 0, 1, \dots, \tau'$.
\end{lem}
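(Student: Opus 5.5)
The plan is to write $D^t := \|\Sigma_\kappa^\perp w^t_\kappa\|^2$ and expand one normalized spherical step exactly. With $g = g^t_\kappa$ and $u := w^t_\kappa + \eta P^\perp_{w^t_\kappa} g$, we have $\|u\|^2 = 1 + \eta^2\|P^\perp_{w^t_\kappa} g\|^2 \geq 1$. Hence
\[
D^{t+1} \leq \|\Sigma_\kappa^\perp u\|^2 = D^t + 2\eta\, (\Sigma_\kappa^\perp w^t_\kappa)^\top \Sigma_\kappa^\perp P^\perp_{w^t_\kappa} g + \eta^2 \|\Sigma_\kappa^\perp P^\perp_{w^t_\kappa} g\|^2 .
\]
We insert the decomposition of \cref{lem:strong-onestep-1}(i), using $P^\perp_{w} g = g - (w^\top g) w$.

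\emph{Signal part.} The signal lies in $\mathrm{span}(\theta_\kappa, \tilde w^t_\kappa) \subset V_\kappa$, so it only enters through $-(w^\top g_{\mathrm{sig}})\,\|\Sigma_\kappa^\perp w\|^2$. Now $w^\top g_{\mathrm{sig}} = \sum_i \lambda_\kappa[\dots](\theta_\kappa^\top \tilde w)^{i-1}\|\Sigma_\kappa w\|$ is a polynomial in $\theta_\kappa^\top \tilde w \geq 0$. Most terms have favourable sign by the Hermite sign conditions of \cref{asm:strong-init}. The remaining ones are bounded in absolute value by $C_1$, and they contribute at most $2C_1\eta (D^t)^2$ after using $1-\|\Sigma_\kappa w\| \le D^t$ where needed. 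Concretely, the contraction term $-c\,\eta D^t(\cdot)$ is simply dropped for an upper bound.

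\emph{Remainder part.} Here $|(\Sigma_\kappa^\perp w)^\top R^t| \lesssim \|\Sigma_\kappa^\perp w\| \cdot |u^\top R^t|$ with $u = \Sigma_\kappa^\perp w/\|\Sigma_\kappa^\perp w\|$ satisfying $\Sigma_\kappa u = 0$. So \cref{lem:strong-onestep-1} gives $\tilde O(\epsStrongEff)$, which yields the $8C_1\eta\epsStrongEff$ term.

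\emph{Second-order part.} The term $\eta^2\|\Sigma_\kappa^\perp g\|^2$ splits into $\eta^2\|\Sigma_\kappa^\perp Z^t\|^2$ plus $\eta^2 (w^\top g)^2 D^t \le 3C_1^2\eta^2 D^t$. The crude bound $\|g\|\lesssim s^{1/2}$ together with $\eta\le c_\eta s^{-1}$ gives the one-step increment bound $7C_1\eta$, which proves the first claim.

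\emph{Martingale part.} This consists of $M_t := \sum 2\eta (\Sigma_\kappa^\perp w^{\tau+t'})^\top Z^{\tau+t'}$ and the centred part of $\eta^2\|\Sigma_\kappa^\perp Z\|^2$, whose mean is at most $\eta^2(1-\lambda_\kappa)s$ per step. This is where \cref{rem:complexity} bites: increments are $O(\eta)$ only on a probability-$(1-\lambda_\kappa)$ event. I would apply Doob's $L^2$ maximal inequality to $M_t$ rather than Azuma. The conditional variance of each increment is $\lesssim \eta^2 \|\Sigma_\kappa^\perp w\|^2 \cdot \mathbb{E}[(u^\top \Sigma_\kappa^\perp Z)^2]$. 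Since $\Sigma_\kappa^\perp Z$ is $\tilde O((1-\lambda_\kappa)s^{1/2})$ on the good event and $O(1)$ in every fixed direction otherwise, this is $\lesssim \eta^2 D (1-\lambda_\kappa)$ up to the good-event part. Bounding $D^{\tau+t'} \le D^\tau + 7C_1\eta t'$ deterministically via the one-step bound gives cumulative variance $\lesssim \eta^2\tau'\,[(1-\lambda_\kappa) + (D^\tau + C_1\eta\tau')^2]$. Doob then yields, with probability $1-\delta'/2$,
\[
\max_t |M_t| \le C\bigl((1-\lambda_\kappa)^{1/2} + D^\tau + C_1\eta\tau'\bigr)\delta'^{-1/2}\eta\tau'^{1/2}.
\]
The nonnegative quadratic sum is handled by Markov's inequality, giving at most $(1-\lambda_\kappa)\delta'^{-1}\eta^2 s\tau'$ with probability $1-\delta'/2$. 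The main obstacle is exactly this variance computation: making sure the cross term is centred, since $Z$ is mean zero conditional on the past. It also requires getting only $(1-\lambda_\kappa)^{1/2}$, not $1$, in the leading coefficient, which needs the projected noise to vanish (up to $\tilde O((1-\lambda_\kappa)s^{1/2})$ in norm) on the $\mu_\kappa$ event.

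Summing the per-step inequality then gives the displayed bound. For the recursion, I would note that $D^{\tau+t} \le Q^0 + \sum_{t'<t}(2C_1\eta (D^{\tau+t'})^2 + 3C_1^2\eta^2 D^{\tau+t'})$. The linear term $3C_1^2\eta^2 D \le 3C_1\eta D\cdot C_1\eta$ is absorbed into $3C_1\eta D^2$ because $D \ge C_1\eta$ is enforced by the $\vee C_1\eta$ in $Q^0$. This gives $5C_1\eta D^2$ per step. A discrete comparison (induction on $t$ using monotonicity of $x\mapsto x+5C_1\eta x^2$) then yields $D^{\tau+t}\le Q^t$.
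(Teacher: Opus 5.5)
This is essentially the paper's proof. You drop the normalization, let the $V_\kappa$-valued signal act only through the nonpositive contraction $-(w^\top g_{\mathrm{sig}})D^t$ (the paper's $\xi^t\in[0,1]$), bound the remainder and second-order terms, use Doob plus Markov for the $V_\kappa^\perp$ noise and the one-step bound for the $D$-weighted martingale, and close by an induction in which $Q^t\geq C_1\eta$ absorbs $3C_1^2\eta^2 D$. That absorption works because $Q^t\geq C_1\eta$, not because $D^t\geq C_1\eta$, which can fail.

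Two small fixes are needed. First, the $2C_1\eta (D^t)^2$ term comes from $-(w^\top R^t)D^t$ with $|w^\top R^t|\lesssim \epsStrongEff\vee D^t$, not from unfavourable signal terms; under the sign conditions there are none. Second, the good-event part of the variance you left open follows from $\operatorname{Var}(u^\top\Sigma_\kappa^\perp Z^t)\leq\mathbb{E}[(u^\top\Sigma_\kappa^\perp g^t_\kappa)^2]\lesssim 1-\lambda_\kappa$, because $\Sigma_\kappa^\perp g^t_\kappa=0$ whenever $x^t\sim\mu_\kappa$. The norm bound $\tilde{O}((1-\lambda_\kappa)s^{1/2})$ alone would not give the $(1-\lambda_\kappa)^{1/2}$ coefficient.
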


\begin{proof}
    First, note that
    \begin{align}\label{eq:strong-deviation-1}
        \Sigma_\kappa^\perp P_{w_\kappa^t}^\perp \theta_\kappa = - (\theta_\kappa^\top w_\kappa^t) \Sigma_\kappa^\perp w_\kappa^t , \quad \Sigma_\kappa^\perp P_{w_\kappa^t}^\perp \tilde{w}_\kappa^t = - (\tilde{w}_\kappa^{t\top} w_\kappa^t) \Sigma_\kappa^\perp w_\kappa^t = - \|\Sigma_\kappa w_\kappa^t\| \Sigma_\kappa^\perp w_\kappa^t.
    \end{align}
    Set $g = g^t_\kappa$ and define
    \[
        \xi^t = 1 - \eta\lambda_\kappa \sum_{i=1}^q \left(i\bar{\alpha}_i\beta_{\kappa,i} + \sqrt{(i+2)(i+1)}\bar{\alpha}_i\beta_{\kappa,i+2}\right)(\theta_\kappa^\top \tilde{w}_\kappa^t)^{i-1}(\theta_\kappa^\top w_\kappa^t).
    \]
    When $\theta_\kappa^\top w^t_\kappa \geq 0$, we have $0 \leq \xi^t \leq 1$. By \cref{lem:strong-onestep-1} and \eqref{eq:strong-deviation-1},
    \begin{align}
        \Sigma_\kappa^\perp P_{w_\kappa^t}^\perp g
        &= -\sum_{i=1}^q \lambda_\kappa\left(i\bar{\alpha}_i\beta_{\kappa,i} + \sqrt{(i+2)(i+1)}\bar{\alpha}_i\beta_{\kappa,i+2}\right)
        (\theta_\kappa^\top \tilde{w}_\kappa^t)^{i-1}(\theta_\kappa^\top w_\kappa^t)\Sigma_\kappa^\perp w_\kappa^t
        + \Sigma_\kappa^\perp P_{w_\kappa^t}^\perp(Z^t + R^t).
    \end{align}
    Hence
    \begin{align}
        \|\Sigma_\kappa^\perp w_\kappa^{t+1}\|^2 &= \frac{(w_\kappa^t + \eta P_{w_\kappa^t}^\perp g)^\top \Sigma_\kappa^\perp (w_\kappa^t + \eta P_{w_\kappa^t}^\perp g)}{\|w_\kappa^t + \eta P_{w_\kappa^t}^\perp g\|^2} \\ &
        \le \|\xi^t \Sigma_\kappa^\perp w_\kappa^t + \eta \Sigma_\kappa^\perp P_{w_\kappa^t}^\perp (Z^t + R^t)\|^2 \\ &
        = (\xi^t)^2 \|\Sigma_\kappa^\perp w_\kappa^t\|^2 + 2\eta \xi^t w_\kappa^{t\top} \Sigma_\kappa^\perp P_{w_\kappa^t}^\perp (Z^t + R^t) + \eta^2 \|\Sigma_\kappa^\perp P_{w_\kappa^t}^\perp (Z^t + R^t)\|^2 \\ &
        \leq (\xi^t)^2 \|\Sigma_\kappa^\perp w_\kappa^t\|^2 + 2\eta \xi^t (w^{t\top}_\kappa \Sigma^\perp_\kappa Z^t - \|\Sigma^\perp_\kappa w^t_\kappa\|^2 w^{t\top}_\kappa Z^t + w^t_\kappa \Sigma^\perp_\kappa R^t - \|\Sigma^\perp_\kappa w^t_\kappa\|^2 w^t_\kappa R^t) \\ &
        \quad + 2\eta^2 \left[ \|\Sigma_\kappa^\perp Z^t\|^2 + \|\Sigma^\perp_\kappa R^t\|^2 + \|\Sigma^\perp_\kappa w^t_\kappa\|^2 \left((w^{t\top}_\kappa Z^t)^2 + (w^{t\top}_\kappa R^t)^2 \right) \right] \\ &
        \leq (\xi^t)^2 \|\Sigma^\perp_\kappa w^t_\kappa\|^2 + 2C_1 \eta \xi^t \|\Sigma^\perp_\kappa w^t_\kappa\|^4 + 3C_1^2 \eta^2 \| \Sigma_\kappa^\perp w^t_\kappa\|^2 \\ &
        \quad + 4C_1 \eta \xi^t \epsStrongEff \|\Sigma^\perp_\kappa w^t_\kappa\| + 4C_1^2 \eta^2 \epsStrongEff^2 s \\ &
        \quad + 2 \left(\eta \xi^t w^{t\top}_\kappa \Sigma^\perp_\kappa Z^t - \eta \xi^t \|\Sigma^\perp_\kappa w^t_\kappa\|^2 w^{t\top}_\kappa Z^t + \eta^2\|\Sigma^\perp_\kappa Z^t\|^2 \right) \\ &
        \leq \|\Sigma^\perp_\kappa w^t_\kappa\|^2 + 2C_1 \eta \|\Sigma^\perp_\kappa w^t_\kappa\|^4 + 3C_1^2 \eta^2 \| \Sigma_\kappa^\perp w^t_\kappa\|^2 + 8C_1 \eta  \epsStrongEff \\ &
        \quad + 2 \left(\eta \xi^t w^{t\top}_\kappa \Sigma^\perp_\kappa Z^t - \eta \xi^t \|\Sigma^\perp_\kappa w^t_\kappa\|^2 w^{t\top}_\kappa Z^t + \eta^2\|\Sigma^\perp_\kappa Z^t\|^2 \right).
        \label{eq:strong-deviation-2}
     \end{align}
    Here the second inequality uses $w_\kappa^{t\top}\Sigma_\kappa^\perp P_{w_\kappa^t}^\perp Z^t = w_\kappa^t\Sigma_\kappa^\perp Z^t - \|\Sigma_\kappa^\perp w^t_\kappa\|^2 w_\kappa^t Z^t$ and similarly for $R^t$, together with
    \begin{align}
        \|\Sigma_\kappa^\perp P_{w_\kappa^t}^\perp(Z^t+R^t)\|^2
        \leq 2(\|\Sigma_\kappa^\perp Z^t\|^2 + \|\Sigma_\kappa^\perp R^t\|^2)
        + 2\|\Sigma_\kappa^\perp w^t_\kappa\|^2((w_\kappa^{t\top}Z^t)^2 + (w_\kappa^{t\top}R^t)^2).
    \end{align}
    The third inequality uses $w_\kappa^{t\top}\Sigma_\kappa^\perp R^t \leq C_1\,\epsStrongEff\|\Sigma_\kappa^\perp w^t_\kappa\|$, $w_\kappa^{t\top}R^t \leq C_1(\epsStrongEff \vee \|\Sigma_\kappa^\perp w^t_\kappa\|^2)$, $\|\Sigma_\kappa^\perp R^t\|^2 \leq C_1^2\epsStrongEff^2 s$, $(w_\kappa^{t\top}R^t)^2 \leq C_1^2(\epsStrongEff^2 \vee \|\Sigma_\kappa^\perp w^t\|^4)$, and $(w_\kappa^{t\top}Z^t)^2 \leq C_1^2$.
    The fourth uses $\theta_\kappa^\top w^t_\kappa \geq 0$.
    The dominant fluctuation terms in \eqref{eq:strong-deviation-2} are $2C_1\eta\|\Sigma_\kappa^\perp w^t_\kappa\|^4$, $2\eta\xi^t w^{t\top}_\kappa\Sigma_\kappa^\perp Z^t$, and $-2\eta\xi^t\|\Sigma_\kappa^\perp w^t_\kappa\|^2 w_\kappa^{t\top}Z^t$, from which $\|\Sigma_\kappa^\perp w^{t+1}_\kappa\|^2 - \|\Sigma_\kappa^\perp w^t_\kappa\|^2 \leq 7C_1\eta$.

    Summing \eqref{eq:strong-deviation-2} over $t' = 0, \dots, t-1$ gives the stated bound, where the stochastic terms are controlled via Doob's maximal inequality for the components in $V_\kappa^\perp$, exploiting the low occurrence probability $1 - \lambda_\kappa$, and standard concentration inequalities for the components in $V_\kappa$, as detailed below. The sequences $\{\sum_{t'=0}^{t-1}\xi^{\tau+t'} w^{(\tau+t')\top}_\kappa\Sigma_\kappa^\perp Z^{\tau+t'}\}$ and $\{\sum_{t'=0}^{t-1}\xi^{\tau+t'}\|\Sigma_\kappa^\perp w^{\tau+t'}_\kappa\|^2 w^{(\tau+t')\top}_\kappa Z^{\tau+t'}\}$ are martingales, and $\{\sum_{t'=0}^{t-1}\|\Sigma_\kappa^\perp Z^{\tau+t'}\|^2\}$ is a submartingale satisfying
    \begin{align}
        \mathbb{E}[(\xi^t w_\kappa^{t\top}\Sigma_\kappa^\perp Z^t)^2] \lesssim (1-\lambda_\kappa)^2, \quad
        \mathbb{E}[(\xi^t\|\Sigma_\kappa^\perp w^t_\kappa\|^2 w_\kappa^{t\top}Z^t)^2] \lesssim \|\Sigma_\kappa^\perp w^t_\kappa\|^4, \quad
        \mathbb{E}[\|\Sigma_\kappa^\perp Z^t\|^2] \lesssim (1-\lambda_\kappa)s,
    \end{align}
    Markov's and Doob's inequalities give, with probability $1 - \delta'$,
    \begin{align}
        \max_{0 \leq t \leq \tau'}\left|\sum_{t'=0}^{t-1}\xi^{\tau+t'} w^{(\tau+t')\top}_\kappa\Sigma_\kappa^\perp Z^{\tau+t'}\right|^2
        &\lesssim {\delta'}^{-1}(1-\lambda_\kappa)\tau', \\
        \max_{0 \leq t \leq \tau'}\sum_{t'=0}^{t-1}\|\Sigma_\kappa^\perp Z^{\tau+t'}\|^2
        &\lesssim {\delta'}^{-1}(1-\lambda_\kappa)s\tau'.
    \end{align}
    Furthermore,
    \begin{align}
        \sum_{t'=0}^{t-1}\xi^{\tau+t'}\|\Sigma_\kappa^\perp w^{\tau+t'}_\kappa\|^2 w^{(\tau+t')\top}_\kappa Z^{\tau+t'}
        \leq C_1(\|\Sigma_\kappa^\perp w^\tau_\kappa\|^2 + 7C_1\eta\tau')\sqrt{\tau'}.
    \end{align}
    Thus, for $t = 1, 2, \dots \tau'$,
    \begin{align}
        &2\eta \sum_{t'=0}^{t-1} \left( \xi^{\tau + t'} w^{(\tau + t')\top}_\kappa \Sigma^\perp_\kappa Z^{\tau + t'} - \xi^{\tau + t'} \|\Sigma^\perp_\kappa w^{\tau + t'}_\kappa \|^2 w^{(\tau + t')\top}_\kappa Z^{\tau + t'} + \eta \|\Sigma^\perp_\kappa Z^{\tau + t'} \|^2 \right) \\ &
        \lesssim (1-\lambda_\kappa)^\frac{1}{2} {\delta'}^{-\frac{1}{2}} \eta {\tau'}^\frac{1}{2} + (1-\lambda_\kappa) {\delta'}^{-1} \eta^2 s  \tau' + C_1 (\|\Sigma^\perp_\kappa w^{\tau}_\kappa\|^2 + 7C_1 \eta \tau') \eta {\tau'}^\frac{1}{2}.
    \end{align}
    Combining these estimates yields the bound for $\|\Sigma_\kappa^\perp w^{\tau+t}_\kappa\|^2$.
    The sequence $Q^t$ is then constructed so that $Q^t \geq C_1\eta$ and $Q^t \geq \|\Sigma_\kappa^\perp w^{\tau+t}_\kappa\|^2$ follows by induction.
\end{proof}

\subsection{Weak Alignment}

\begin{lem}\label{lem:strong-weakrecovery-1}
    Let $\eta^t = \eta \leq c_\eta s^{-3/2}$, $\epsStrongEff \leq c_\varepsilon s^{-1/2}$, and $\tau = \lfloor \frac{1}{2} s^{-1/2} C_1^{-1}\eta^{-1} \rfloor$.
    Suppose $\theta_\kappa^\top w^0_\kappa \geq s^{-1/2}$, $\|\Sigma_\kappa^\perp w^0_\kappa\|^2 \leq c_r \theta_\kappa^\top w^0_\kappa$, and $\theta_\kappa^\top w^t_\kappa \leq c_1$ for all $t = 0, 1, \dots, \tau$.
    Then, with probability $1 - \eta\tau\delta$,
    \begin{align}
        \theta_\kappa^\top w_\kappa^\tau \geq \left(1 + \tfrac{1}{2}\bar{\alpha}_2\beta_{\kappa,2}\eta\tau\right)\theta_\kappa^\top w_\kappa^0,
    \end{align}
    and furthermore $\|\Sigma_\kappa^\perp w^\tau_\kappa\|^2 \leq c_r \theta_\kappa^\top w^\tau_\kappa$.
\end{lem}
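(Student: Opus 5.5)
The plan is a joint induction over the window $t = 0, \dots, \tau$ on two hypotheses: (a) $\theta_\kappa^\top w^t_\kappa \geq \tfrac12 \theta_\kappa^\top w^0_\kappa$, and (b) $\|\Sigma_\kappa^\perp w^t_\kappa\|^2 \leq 3c_r \theta_\kappa^\top w^0_\kappa$. We would control $z^t := \theta_\kappa^\top w^t_\kappa$ through the one-step lower bound of \cref{lem:strong-onestep-2} and $D^t := \|\Sigma_\kappa^\perp w^t_\kappa\|^2$ through \cref{lem:strong-deviation-1}.

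\textbf{The drift of $z^t$.} Under (a) and (b) we have $\|\Sigma_\kappa w^t_\kappa\| = 1 - O(D^t)$ and $z^t \leq c_1$. In the drift sum of \cref{lem:strong-onestep-2}:
\begin{itemize}
\item the $i=1$ term is $\tilde{O}(d^{-C})$ by \cref{asm:strong-init}~(i) and \cref{lem:strong-transformation-2}, since $\GE=2$;
\item the $i=2$ term gives $2\bar{\alpha}_2\beta_{\kappa,2}\lambda_\kappa z^t(1-(z^t)^2)$, and the correction with $\beta_{\kappa,4}$ multiplies $D^t (z^t)^3$, so it is negligible;
\item the terms with $i\geq 3$ are nonnegative by the sign conditions of \cref{asm:strong-init}, or are $O((z^t)^2)\leq c_1 z^t$.
\end{itemize}
For the remaining errors:
\begin{itemize}
\item $R^t$ contributes $|\theta_\kappa^\top R^t| \lesssim C_1(\epsStrongEff \vee D^t) \leq C_1(c_\varepsilon s^{-1/2} + 3c_r z^0)$, which is a small fraction of $\bar{\alpha}_2\beta_{\kappa,2} z^0$ because $z^0 \geq s^{-1/2}$ and $c_r, c_\varepsilon \ll c_w$;
\item the $\eta^2 C_1^2 z^t s$ term is at most $c_\eta s^{-1/2}\eta z^t$.
\end{itemize}
The per-step drift is therefore at least $\tfrac34\bar{\alpha}_2\beta_{\kappa,2}\eta z^t$, using $\lambda_\kappa \approx 1$. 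Summing over $t<\tau$ and using (a) gives $z^\tau \geq z^0 + \tfrac{3}{8}\bar{\alpha}_2\beta_{\kappa,2}\eta\tau z^0 + M_\tau$, where $M_t = \eta\sum_{t'<t}\lambda_\kappa\theta_\kappa^\top P^\perp_{w^{t'}}Z^{t'}$ is a martingale with increments $\tilde{O}(\eta)$. Azuma, or Doob with failure probability $\delta$, gives $\max_t|M_t| \lesssim C_1\eta\sqrt{\tau} \lesssim \eta^{1/2}s^{-1/4}$. With $\eta \leq c_\eta s^{-3/2}$ this is $\lesssim c_\eta^{1/2}s^{-1} \ll s^{-1/2} \leq z^0$, so it is absorbed into a $\tfrac18$ fraction. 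The same bound also closes (a) at every intermediate $t$. The failure probability is handled by a union bound over steps, giving roughly $\eta\tau\delta$ as stated; alternatively a single maximal inequality gives $\delta$.

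\textbf{Controlling $D^t$.} Apply \cref{lem:strong-deviation-1} with $\tau' = \tau$, $\delta' = \delta$, starting from $D^0 \leq c_r z^0$. The terms in $Q^0$ are:
\begin{itemize}
\item $8C_1\eta\epsStrongEff\tau \lesssim \epsStrongEff s^{-1/2} \ll c_r s^{-1/2}$;
\item $(1-\lambda_\kappa)^{1/2}\delta^{-1/2}\eta\tau^{1/2} \lesssim c_\varepsilon^{1/2}s^{-1/4}\delta^{-1/2}\eta^{1/2}s^{-1/4} \leq c_\varepsilon^{1/2}c_\eta^{1/2}\delta^{-1/2}s^{-5/4}$;
\item $(1-\lambda_\kappa)\delta^{-1}\eta^2 s\tau \lesssim c_\varepsilon\delta^{-1}\eta s^{1/2}\cdot s^{-1/2}$, which is tiny;
\item $C_1(D^0+7C_1\eta\tau)\eta\tau^{1/2}$, which is tiny.
\end{itemize}
This yields $Q^0 \leq \tfrac32 c_r z^0$. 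The recursion $Q^t = Q^{t-1}+5C_1\eta(Q^{t-1})^2$ over $\tau \sim s^{-1/2}\eta^{-1}$ steps multiplies $Q$ by at most $1 + O(C_1\eta\tau Q^0) = 1+O(c_r s^{-1/2}c_1)$. Hence $D^t \leq 2c_r z^0$ throughout, which closes (b).

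\textbf{The final ratio claim (main obstacle).} The claim $D^\tau \leq c_r z^\tau$ is where we expect the difficulty, because $D$ must not grow proportionally faster than $z$. Since $z^\tau \geq (1+\tfrac12\bar{\alpha}_2\beta_{\kappa,2}\eta\tau)z^0$, it suffices that $D^\tau \leq D^0 + \tfrac12\bar{\alpha}_2\beta_{\kappa,2}\eta\tau c_r z^0$. The fixed increments above are $o(s^{-1/2}\cdot c_r)$, whereas $\bar{\alpha}_2\beta_{\kappa,2}\eta\tau c_r z^0 \gtrsim c_r s^{-1/2}C_1^{-1}c_w z^0$. Everything thus hinges on the ordering $c_\varepsilon, c_\eta, \delta \ll c_r, C_1^{-1}$ of the constants, so the main work is verifying carefully that the Doob term $(1-\lambda_\kappa)^{1/2}\delta^{-1/2}\eta\tau^{1/2}$ is dominated. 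If it is not, we would additionally use the contraction factor $(\xi^t)^2 \leq 1 - \Omega(\eta z^t)$ from the proof of \cref{lem:strong-deviation-1} to gain slack.
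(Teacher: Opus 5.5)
You have the paper's architecture: $z^t$ via \cref{lem:strong-onestep-2}, $D^t$ via \cref{lem:strong-deviation-1} plus Bihari--LaSalle on $Q^t$, and the closing comparison $Q^\tau \le (1+\tfrac12\bar\alpha_2\beta_{\kappa,2}\eta\tau)c_r z^0 \le c_r z^\tau$.

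\textbf{The gap: the probability.} You apply \cref{lem:strong-deviation-1} with $\delta'=\delta$, so the Doob event fails with probability $\delta$, not $\eta\tau\delta$. No union bound over steps can produce $\eta\tau\delta$. The Doob event is a single event over the whole window. The step-wise high-probability events only contribute $\tau d^{-C}$.

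This matters. The lemma is iterated over $\Theta(s^{1/2}\log s)$ windows in \cref{lem:strong-weakrecovery-2}, so a per-window failure of $\delta$ would make the total failure probability vacuous.

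The fix is the paper's choice $\delta'=\eta\tau\delta$, but then the Doob terms are no longer comfortably small. They become $(1-\lambda_\kappa)^{1/2}\delta^{-1/2}\eta^{1/2} \le c_\varepsilon^{1/2}c_\eta^{1/2}\delta^{-1/2}s^{-1}$ and $(1-\lambda_\kappa)\delta^{-1}\eta s \le c_\varepsilon c_\eta\delta^{-1}s^{-1}$. That is exactly the $s^{-1}$ scale of the only available slack. In the worst case $z^0=s^{-1/2}$, that slack is $\tfrac12\bar\alpha_2\beta_{\kappa,2}\eta\tau c_r z^0 \approx \tfrac14\bar\alpha_2\beta_{\kappa,2}c_r C_1^{-1}s^{-1}$.

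The $s^{-1/4}$ margin you found is an artifact of the wrong $\delta'$. With the correct one, absorption needs $c_\varepsilon c_\eta\delta^{-1}$ small relative to $(c_r C_1^{-1})^2$. This is the paper's ordering $\delta^{-1}\lesssim c_\varepsilon^{-1}, c_\eta^{-1}$, and it is missing from your list, where $\delta$ is simply grouped with the small constants. This step is where the proof is tight: it is the mechanism behind \cref{rem:complexity} that forces $\eta\lesssim s^{-3/2}$.

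\textbf{Two smaller slips.}
\begin{enumerate}
\item Your per-step drift $\tfrac34\bar\alpha_2\beta_{\kappa,2}\eta z^t$, combined with $z^t\ge\tfrac12 z^0$, yields a coefficient of at most $\tfrac38$, below the claimed $\tfrac12$. Keep the actual $i=2$ drift $2(1-c_1)\lambda_\kappa\bar\alpha_2\beta_{\kappa,2}\eta z^t$ and use $z^t\ge z^0-C_1\eta t$. This gives $\sum_{t<\tau}z^t\ge\tau(z^0-\tfrac12C_1\eta\tau)\ge\tfrac34\tau z^0$, hence a gain of $\bar\alpha_2\beta_{\kappa,2}\eta\tau z^0$ before errors.
\item You compare the martingale bound $C_1\eta\sqrt{\tau}\lesssim C_1^{1/2}c_\eta^{1/2}s^{-1}$ against $z^0$. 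It must instead be compared with the gain $\eta\tau z^0\sim C_1^{-1}s^{-1}$. Both sit at the $s^{-1}$ scale, so absorption rests on constant ordering ($c_\eta^{1/2}C_1^{3/2}\ll\bar\alpha_2\beta_{\kappa,2}$), not on a power of $s$.
\end{enumerate}
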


\begin{proof}
    By \cref{lem:strong-onestep-2}, if $\theta_\kappa^\top w^t_\kappa \geq \frac{1}{2}s^{-1/2}$ then $|\theta_\kappa^\top w^{t+1}_\kappa - \theta_\kappa^\top w^t_\kappa| \leq C_1\eta$ with high probability, so by induction $\theta_\kappa^\top w^t_\kappa \geq \frac{1}{2}s^{-1/2}$ for all $t = 0, 1, \dots, \tau$.
    Applying \cref{lem:strong-deviation-1} with $\tau \gets 0$, $\tau' \gets \tau$, $\delta' \gets \eta\tau\delta$, define
    \begin{align}
        Q^0 &= \|\Sigma_\kappa^\perp w^0_\kappa\|^2 \vee C_1\eta + 8C_1\eta\,\epsStrongEff\,\tau
        + C\!\left[(1-\lambda_\kappa)^{1/2}\delta^{-1/2}\eta^{1/2}
        + (1-\lambda_\kappa)\delta^{-1}\eta s
        + C_1(\|\Sigma_\kappa^\perp w^0_\kappa\|^2 + 7C_1\eta\tau)\eta\tau^{1/2}\right], \\
        Q^t &= Q^{t-1} + 5C_1\eta(Q^{t-1})^2.
    \end{align}
    With probability $1 - \eta\tau\delta$, $\|\Sigma_\kappa^\perp w^t_\kappa\|^2 \leq Q^t$ for all $t = 0, 1, \dots, \tau$.
    By the Bihari--LaSalle inequality, $Q^t \leq \frac{Q^0}{1 - 5C_1\eta Q^0 t}$, and in particular
    \begin{align}
        Q^\tau &\leq (1 + 6 C_1\eta \tau Q^0) Q^0 \\ &
        \leq (1 + 6 C_1\eta \tau (\|\Sigma^\perp_\kappa w^0_\kappa\|^2 \vee C_1 \eta + c_3 s^{-1})) (\|\Sigma^\perp_\kappa w^0_\kappa\|^2 \vee C_1 \eta + c_3 s^{-1}) \\ &
        \leq (1 + 7C_1 c_1 c_r \eta \tau) (\|\Sigma^\perp_\kappa w^0_\kappa\|^2 \vee C_1 \eta + c_3 s^{-1}) \\ &
        \leq (1 + \frac{1}{2} \alpha_2 \beta_{\kappa, 2} \eta \tau) (\|\Sigma^\perp_\kappa w^0_\kappa\|^2 \vee c_r s^{-\frac{1}{2}}) = (1 + \frac{1}{2} \alpha_2 \beta_{\kappa, 2} \eta \tau) c_r (\theta^\top_\kappa w^0_\kappa).
        \label{eq:strong-weakrecovery-1}
    \end{align}
    where the second inequality uses $\eta \leq c_\eta s^{-3/2}$ and $\epsStrongEff \leq c_\varepsilon s^{-1/2}$ to bound
    \begin{align}
        &8C_1 \eta  \epsStrongEff \tau + C \left[ (1-\lambda_\kappa)^\frac{1}{2} \delta^{-\frac{1}{2}} \eta^\frac{1}{2} + (1-\lambda_\kappa) \delta^{-1} \eta s + C_1 (\|\Sigma^\perp_\kappa w^{\tau}_\kappa\|^2 + 7C_1 \eta \tau) \eta {\tau}^\frac{1}{2} \right] \\ &
        \leq \left(4 c_\varepsilon + C \left[c_\varepsilon^\frac{1}{2} \delta^{-\frac{1}{2}} c_\eta^\frac{1}{2} + c_\varepsilon \delta^{-1} c_\eta + 4C_1^\frac{1}{2} c_\eta^{\frac{1}{2}}\right] \right) s^{-1} \\ &
        \leq c_3 s^{-1}
    \end{align}
    the third uses $\|\Sigma_\kappa^\perp w^0_\kappa\|^2 \vee C_1\eta + 5C_1 s^{-1} \leq \frac{7}{6}c_r(\theta_\kappa^\top w^0_\kappa) \leq \frac{7}{6}c_r c_1$, and the fourth uses $\eta\tau c_r s^{-1/2} = \frac{1}{2}C_1^{-1}c_r s^{-1} \gtrsim c_3 s^{-1}$.

    By \cref{lem:strong-onestep-2},
    \begin{align}
        \theta_\kappa^\top w_\kappa^{t+1}  &\geq \theta_\kappa^\top w_\kappa^t + \eta \lambda_\kappa \theta_\kappa^\top P_{w_\kappa^t}^\perp (Z^t + R^t) - \eta^2 C_1^2 (\theta_\kappa^\top w_\kappa^t) s \\ &
        + \eta \lambda_\kappa \sum_{i=1}^q \left[ i \alpha_i \beta_{\kappa,i} \| \Sigma_\kappa w_\kappa^t \|^{-(i-1)} (\theta_\kappa^\top w_\kappa^t)^{i-1} (1 - (\theta_\kappa^\top w_\kappa^t)^2) \right. \\ & 
        \qquad \left.+ \sqrt{(i+2)(i+1)} \alpha_i \beta_{\kappa,i+2} \| \Sigma_\kappa w_\kappa^t \|^{-(i+1)} \| \Sigma^\perp_\kappa w_\kappa^t \|^2 (\theta_\kappa^\top w_\kappa^t)^{i+1} \right] \\ &
        \geq \theta_\kappa^\top w_\kappa^t + 2\eta \lambda_\kappa \alpha_2 \beta_{\kappa, 2}  (\theta_\kappa^\top w_\kappa^t) (1 - c_1) + \eta \lambda_\kappa \theta_\kappa^\top P_{w_\kappa^t}^\perp (Z^t + R^t).
    \end{align}
    where we used $\eta \leq c_\eta s^{-3/2}$, $\|\Sigma_\kappa w^t_\kappa\|^2 = \Theta(1)$, and $\theta_\kappa^\top w^t_\kappa, \|\Sigma_\kappa^\perp w^t_\kappa\|^2 \leq c_1$ for the second inequality.
    Thus,
    \begin{align}
        \theta_\kappa^\top w_\kappa^{\tau} & \geq \theta^\top_\kappa w^0_\kappa + \eta \lambda_\kappa  \sum_{t=0}^{\tau - 1} \left[2(1 - c_1) \alpha_2 \beta_{\kappa, 2}  (\theta_\kappa^\top w_\kappa^t) + \theta_\kappa^\top P_{w_\kappa^t}^\perp (Z^t + R^t)\right] \\ &
        \geq \theta^\top_\kappa w^0_\kappa + 2(1 - c_1) \eta \lambda_\kappa \alpha_2 \beta_{\kappa, 2} \tau (\theta_\kappa^\top w_\kappa^0 - \frac{1}{2} C_1 \eta \tau ) + \eta \lambda_\kappa  \sum_{t=0}^{\tau - 1} \left[\theta_\kappa^\top P_{w_\kappa^t}^\perp (Z^t + R^t)\right] \\ &
        \geq \theta^\top_\kappa w^0_\kappa + \alpha_2 \beta_{\kappa, 2} \eta \tau (\theta_\kappa^\top w_\kappa^0) - C_1 \eta \left( \tau^\frac{1}{2} + (\epsStrongEff \vee Q^\tau )\tau  \right) \\ &
        \geq \theta^\top_\kappa w^0_\kappa + \alpha_2 \beta_{\kappa, 2} \eta \tau \theta_\kappa^\top w_\kappa^0 - \eta \tau (\sqrt{2} c_\eta^{\frac{1}{2}} C_1^\frac{3}{2} s^{-\frac{1}{2}} + C_1 c_\varepsilon s^{-\frac{1}{2}} + 2C_1 c_r (\theta^\top_\kappa w^0_\kappa)) \\ &
        \geq (1 + \frac{1}{2} \alpha_2 \beta_{\kappa, 2} \eta \tau) \theta_\kappa^\top w_\kappa^0.
        \label{eq:strong-weakrecovery-2}
    \end{align}
    holds with high probability.
    The second inequality uses $|\theta_\kappa^\top w^{t+1}_\kappa - \theta_\kappa^\top w^t_\kappa| \leq C_1\eta$ with high probability.
    The third uses $C_1\eta\tau \leq \frac{1}{2}\theta_\kappa^\top w^0_\kappa$ and $(1-c_1)\lambda_\kappa \geq \frac{2}{3}$.
    The fourth uses $Q^\tau \leq 2c_r(\theta_\kappa^\top w^0_\kappa)$, which follows from \eqref{eq:strong-weakrecovery-1}, together with $\epsStrongEff \leq c_\varepsilon s^{-1/2}$.
    From \eqref{eq:strong-weakrecovery-1} and \eqref{eq:strong-weakrecovery-2}, $\|\Sigma_\kappa^\perp w^\tau_\kappa\|^2 \leq Q^\tau \leq c_r \theta_\kappa^\top w^\tau_\kappa$.
\end{proof}

\begin{lem}\label{lem:strong-weakrecovery-2}
    Let $\eta^t = \eta \leq c_\eta s^{-3/2}$, $\epsStrongEff \leq c_\varepsilon s^{-1/2}$, $\theta_\kappa^\top w^0_\kappa \geq s^{-1/2}$, and $\|\Sigma_\kappa^\perp w^0_\kappa\|^2 \leq c_r s^{-1/2}$.
    With probability $1 - \tilde{\delta}$, there exists $t_1 \leq T_{1,1} = \Theta(\eta^{-1}\log s)$ such that $\theta_\kappa^\top w^{t_1}_\kappa > c_1$.
    Furthermore, with probability $1 - \tilde{\delta} - \delta$, $\|\Sigma_\kappa^\perp w^t_\kappa\|^2 \leq 3c_r s^{-1/2}$ for all $t = 0, 1, \dots, t_1 - 1$.
\end{lem}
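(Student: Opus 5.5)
The plan is to iterate \cref{lem:strong-weakrecovery-1} over consecutive blocks of length $\tau = \lfloor \frac12 s^{-1/2}C_1^{-1}\eta^{-1}\rfloor$ and stop at the first block in which $\theta_\kappa^\top w^t_\kappa$ exceeds $c_1$. Write $z_j := \theta_\kappa^\top w^{j\tau}_\kappa$ and $D_j := \|\Sigma_\kappa^\perp w^{j\tau}_\kappa\|^2$.

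\textbf{Base case.} We have $z_0 \ge s^{-1/2}$ and $D_0 \le c_r s^{-1/2} \le c_r z_0$, so the hypotheses of \cref{lem:strong-weakrecovery-1} hold at $j=0$.

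\textbf{Inductive step.} Suppose the hypotheses hold at block $j$ and $\theta_\kappa^\top w^t_\kappa \le c_1$ throughout the block. Applying the lemma with the time origin shifted to $j\tau$ gives two conclusions:
\begin{align}
  z_{j+1} \ge \bigl(1+\tfrac12\bar\alpha_2\beta_{\kappa,2}\eta\tau\bigr) z_j, \qquad D_{j+1} \le c_r z_{j+1},
\end{align}
each holding with probability $1-\eta\tau\delta$. The second conclusion re-establishes the hypothesis for block $j+1$. The growth factor satisfies $\eta\tau \asymp s^{-1/2}C_1^{-1}$, and since $\bar\alpha_2\beta_{\kappa,2}=\tilde\Theta(1)$ it is $1+\Theta(c\,s^{-1/2})$. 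Hence after $J$ blocks
\begin{align}
  z_J \ge s^{-1/2}\exp\bigl(c' J s^{-1/2}\bigr).
\end{align}
Taking $J = \Theta(s^{1/2}\log s)$ forces $z_J > c_1$, a contradiction with staying below $c_1$. So $t_1 \le J\tau = \Theta(\eta^{-1}\log s) =: T_{1,1}$.

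\textbf{Failure probability.} A union bound over the $J$ blocks gives total failure probability at most $J\eta\tau\delta \lesssim \delta\log s$. Absorbing the $\log s$ factor into the polylog constant, with $\tilde\delta^{-1}\lesssim\delta^{-1}$, this is at most $\tilde\delta$. The within-block fluctuation estimates already come with high probability.

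\textbf{Deviation bound for all $t < t_1$.} This is the more delicate part. At block endpoints we only know $D_j \le c_r z_j$, and $z_j$ may approach $c_1$ near the end, so this alone does not yield $3c_r s^{-1/2}$. Two cases arise.
\begin{itemize}
  \item[(a)] \emph{Block endpoints.} I would use the $Q$-recursion inside the proof of \cref{lem:strong-weakrecovery-1}, i.e.\ \cref{lem:strong-deviation-1}. Per block, $D$ grows by at most the factor $(1+\frac12\bar\alpha_2\beta_{\kappa,2}\eta\tau)$ plus an additive $c_3 s^{-1}$. This growth tracks $z$ rather than blowing up.
  \item[(b)] \emph{Interior times.} \cref{lem:strong-deviation-1} gives $D^t \le Q^t \le Q^\tau$ within each block, so it suffices to control endpoints.
\end{itemize}

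The main obstacle is showing that the accumulated growth of $D$ stays at $3c_r s^{-1/2}$ rather than $c_r c_1$. The first thing I would try is to refine the bound in \eqref{eq:strong-weakrecovery-1} so that the multiplicative factor on $D$ is $1+O(C_1\eta\tau Q^0)$ instead of the $z$-growth factor. That factor is $1+O(s^{-1})$ per block, whose product over $\tilde O(s^{1/2})$ blocks is $1+o(1)$. Summing the additive $c_3 s^{-1}$ terms over $O(s^{1/2}\log s)$ blocks gives at most $c_3 s^{-1/2}\log s \le c_r s^{-1/2}$, using the ordering $c_r^{-1}\lesssim c_3^{-1}$ to absorb the log. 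Together with $D_0 \le c_r s^{-1/2}$, this yields $D^t \le 3c_r s^{-1/2}$.

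This step needs \cref{lem:strong-deviation-1} applied once per block with $\delta'=\eta\tau\delta$, which costs the extra $\delta$ in the stated probability $1-\tilde\delta-\delta$.
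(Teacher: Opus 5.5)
Your first half (iterating \cref{lem:strong-weakrecovery-1} over $J=\Theta(s^{1/2}\log s)$ blocks of length $\tau$, deriving a contradiction at time $J\tau$, and union-bounding to $J\eta\tau\delta\lesssim\delta\log s\le\tilde\delta$) is exactly the paper's argument. For the uniform deviation bound your route is different. You chain per-block estimates, keeping the unrelaxed factor $1+6C_1\eta\tau Q^0_j=1+O(s^{-1})$ from the proof of \cref{lem:strong-weakrecovery-1} and summing the additive $c_3s^{-1}$ errors over the blocks.

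The paper instead applies \cref{lem:strong-deviation-1} once on the whole window $[0,t_1)$ with $\delta'=\delta$. Since $\eta t_1\le\eta T_{1,1}\lesssim\log s$, the drift term $8C_1\eta\epsStrongEff t_1$ is $\tilde O(c_\varepsilon s^{-1/2})$, and the Doob and martingale terms are $\tilde O(s^{-1})$ and $\tilde O(s^{-3/4})$. Hence $Q^0\le\|\Sigma_\kappa^\perp w^0_\kappa\|^2+\tfrac12c_3s^{-1/2}$. Because $C_1\eta Q^0t_1=\tilde O(s^{-1/2})$, Bihari--LaSalle then gives $Q^t\le2Q^0\le3c_rs^{-1/2}$ in one line. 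At this coarse scale the global application is shorter and needs no induction across blocks. It is also the source of the separate $\delta$ in $1-\tilde\delta-\delta$.

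Your chained version is equally valid. It reuses exactly the per-block events already controlled inside \cref{lem:strong-weakrecovery-1}, so it needs no new probability. Your remark that it ``costs the extra $\delta$'' is therefore slightly off: $J$ fresh applications at $\delta'=\eta\tau\delta$ would cost $J\eta\tau\delta\approx\delta\log s$, not $\delta$, but these are the same events as in your first half.

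Two points should be made explicit if you write it up:
\begin{enumerate}
\item The per-block factor $1+O(s^{-1})$ presupposes $Q^0_j=O(s^{-1/2})$, so you must carry $D_j\le3c_rs^{-1/2}$ as an induction hypothesis. It closes, since the product of the factors over the $J$ blocks is $1+o(1)$ and the summed additive error is $\tilde O(c_3)s^{-1/2}\le c_rs^{-1/2}$ by the constant ordering.
\item In the block where $\theta_\kappa^\top w^t_\kappa$ crosses $c_1$, you should invoke \cref{lem:strong-deviation-1} directly rather than \cref{lem:strong-weakrecovery-1}, since the former needs only $\theta_\kappa^\top w^t_\kappa\ge0$.
\end{enumerate}
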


\begin{proof}
    Suppose $\theta_\kappa^\top w^t_\kappa \leq c_1$ for all $t = 0, 1, \dots, T_{1,1}$.
    Set $\tau = \lfloor \frac{1}{2}s^{-1/2}C_1^{-1}\eta^{-1}\rfloor$ and $T_{1,1} = \lceil \frac{3\log s}{2\bar{\alpha}_2\beta_{\kappa,2}\eta\tau}\rceil\tau$.
    Applying \cref{lem:strong-weakrecovery-1} repeatedly with probability $1 - \frac{3\log s}{2\bar{\alpha}_2\beta_{\kappa,2}}\delta$,
    \begin{align}
        \log\frac{\theta_\kappa^\top w^{T_{1,1}}_\kappa}{\theta_\kappa^\top w^0_\kappa}
        &\geq \left\lceil\frac{3\log s}{2\bar{\alpha}_2\beta_{\kappa,2}\eta\tau}\right\rceil
        \log\!\left(1 + \tfrac{1}{2}\bar{\alpha}_2\beta_{\kappa,2}\eta\tau\right)
        \geq \frac{3\log s}{2\bar{\alpha}_2\beta_{\kappa,2}\eta\tau} \cdot \frac{\bar{\alpha}_2\beta_{\kappa,2}\eta\tau}{3}
        \geq \log s^{1/2}.
    \end{align}
    Hence $\theta_\kappa^\top w^{T_{1,1}}_\kappa \geq 1$, contradicting $\theta_\kappa^\top w^{T_{1,1}}_\kappa \leq c_1$.
    Let $t_1$ be the first time $\theta_\kappa^\top w^t_\kappa > c_1$.
    For $t = 0, 1, \dots, t_1 - 1$, applying \cref{lem:strong-deviation-1} gives $\|\Sigma_\kappa^\perp w^t_\kappa\|^2 \leq Q^t$ with probability $1 - \delta$, where $Q^0 \leq \|\Sigma_\kappa^\perp w^0_\kappa\|^2 + \frac{1}{2}c_3 s^{-1/2}$.
    By the Bihari--LaSalle inequality, $\|\Sigma_\kappa^\perp w^{t_1-1}_\kappa\|^2 \leq Q^{t_1-1} \leq 2Q^0 \leq 3c_r s^{-1/2}$.
\end{proof}

\subsection{Amplification of Alignment}

We reset the time index at $t_1$ (when weak alignment is achieved) and write $t \gets t - t_1$.

\begin{lem}\label{lem:strong-amplification-1}
    Let $\eta^t = \eta \leq c_\eta s^{-3/2}$, $\epsStrongEff \leq c_\varepsilon s^{-1/2}$, and $\tau = \lfloor \frac{1}{2}c_1 C_1^{-1}\eta^{-1}\rfloor$.
    Suppose $\theta_\kappa^\top w^0_\kappa \geq c_1$, $\|\Sigma_\kappa^\perp w^0_\kappa\|^2 \leq c_r\theta_\kappa^\top w^0_\kappa$, and $\theta_\kappa^\top w^t_\kappa \leq 1 - c_1$ for all $t = 0, 1, \dots, \tau$.
    Then, with probability $1 - \eta\tau\delta$,
    \begin{align}
        \theta_\kappa^\top w^\tau_\kappa \geq \left(1 + \tfrac{1}{2}c_1\bar{\alpha}_2\beta_{\kappa,2}\eta\tau\right)\theta_\kappa^\top w^0_\kappa,
    \end{align}
    and $\|\Sigma_\kappa^\perp w^\tau_\kappa\|^2 \leq c_r\theta_\kappa^\top w^\tau_\kappa$.
\end{lem}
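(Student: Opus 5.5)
The plan mirrors \cref{lem:strong-weakrecovery-1}, with the roles of $s^{-1/2}$ and $c_1$ changed: the alignment now lies in $[c_1, 1-c_1]$ and the interval length is $\tau = \lfloor \frac12 c_1 C_1^{-1}\eta^{-1}\rfloor$.

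\textbf{Step 1 (alignment stays bounded below).} By \cref{lem:strong-onestep-2}, each step changes $\theta_\kappa^\top w^t_\kappa$ by at most $C_1\eta$ with high probability. Over $\tau$ steps the total change is at most $C_1\eta\tau \le \frac12 c_1$. Hence $\theta_\kappa^\top w^t_\kappa \ge \frac12 c_1 \ge \frac12 s^{-1/2}$ on the whole interval, so \cref{lem:strong-onestep-2} applies throughout.

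\textbf{Step 2 (control of the deviation).} Apply \cref{lem:strong-deviation-1} with $\tau\gets 0$, $\tau'\gets\tau$, $\delta'\gets\eta\tau\delta$. This gives $\|\Sigma_\kappa^\perp w^t_\kappa\|^2 \le Q^t$, with
\[
Q^0 \le \|\Sigma_\kappa^\perp w^0_\kappa\|^2 \vee C_1\eta + (\text{error terms}).
\]
The error terms are $8C_1\eta\epsStrongEff\tau \lesssim c_1\epsStrongEff \lesssim c_\varepsilon s^{-1/2}$, then $(1-\lambda_\kappa)^{1/2}(\eta\tau\delta)^{-1/2}\eta\tau^{1/2} \lesssim c_\varepsilon^{1/2}s^{-1/4}\delta^{-1/2}\eta^{1/2}$, and $(1-\lambda_\kappa)\delta^{-1}\eta s$. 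Using $\eta\le c_\eta s^{-3/2}$, all of these are at most $c_3 s^{-1/2}$, which is much smaller than $c_r c_1 \le c_r\theta_\kappa^\top w^0_\kappa$.

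Bihari--LaSalle then gives
\[
Q^\tau \le (1+6C_1\eta\tau Q^0)Q^0 \le (1+7C_1 c_r c_1\eta\tau)\bigl(\|\Sigma_\kappa^\perp w^0_\kappa\|^2 + c_3 s^{-1/2}\bigr).
\]
Since $Q^0 \le \frac{7}{6}c_r\theta_\kappa^\top w^0_\kappa$, the growth factor on $Q$ is $1+O(c_r\eta\tau)$. Because $c_r \ll c_1\bar\alpha_2\beta_{\kappa,2}$, this is at most the target factor $1+\frac12 c_1\bar\alpha_2\beta_{\kappa,2}\eta\tau$. I expect this comparison to be the delicate part. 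The additive $c_3 s^{-1/2}$ must be absorbed using $\theta_\kappa^\top w^0_\kappa\ge c_1\gg s^{-1/2}$, which is actually easier here than in the weak-recovery phase.

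\textbf{Step 3 (drift of $\theta_\kappa^\top w^t_\kappa$).} Use the lower bound of \cref{lem:strong-onestep-2}. The $i=2$ signal term is $2\bar\alpha_2\beta_{\kappa,2}\lambda_\kappa \|\Sigma_\kappa w^t_\kappa\|^{-1}\theta_\kappa^\top w^t_\kappa\,(1-(\theta_\kappa^\top w^t_\kappa)^2)$. Since $\theta_\kappa^\top w^t_\kappa\le 1-c_1$, we have $1-(\theta_\kappa^\top w^t_\kappa)^2 \ge c_1$, so this term is at least $\frac53 c_1\bar\alpha_2\beta_{\kappa,2}\theta_\kappa^\top w^t_\kappa$ up to a $(1-O(c_1))$ factor.

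The higher-$i$ terms are nonnegative by the Hermite sign conditions of \cref{asm:strong-init}. The $i=2$ and $i+2=4$ terms are combined via $c_w>0$. The deviation-weighted terms are $O(Q^t)$, bounded by Step 2. The $\eta^2C_1^2 s$ term is negligible since $\eta s\le c_\eta s^{-1/2}$.

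Summing over $t<\tau$, I replace $\theta_\kappa^\top w^t_\kappa$ by $\theta_\kappa^\top w^0_\kappa - C_1\eta\tau \ge \frac12\theta_\kappa^\top w^0_\kappa$. The remaining terms are handled as follows:
\begin{itemize}
\item The martingale $\eta\sum_t\theta_\kappa^\top P^\perp_{w^t_\kappa}Z^t$ is at most $C_1\eta\tau^{1/2}$ with high probability. This is $\lesssim c_1^{1/2}\eta^{1/2}$, negligible against $c_1\eta\tau\cdot c_1$.
\item The $R^t$ contribution is at most $C_1\eta\tau(\epsStrongEff\vee Q^\tau)$. This is $\le C_1\eta\tau(c_\varepsilon s^{-1/2}+2c_r\theta_\kappa^\top w^0_\kappa)$, which is absorbed since $c_r C_1\ll c_1\bar\alpha_2\beta_{\kappa,2}$.
\end{itemize}
This yields $\theta_\kappa^\top w^\tau_\kappa \ge (1+\frac12 c_1\bar\alpha_2\beta_{\kappa,2}\eta\tau)\theta_\kappa^\top w^0_\kappa$.

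Combining Steps 2 and 3 gives $\|\Sigma_\kappa^\perp w^\tau_\kappa\|^2 \le Q^\tau \le c_r\theta_\kappa^\top w^\tau_\kappa$. The failure probability is $\eta\tau\delta$ from Doob's inequality plus high-probability events.
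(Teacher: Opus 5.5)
Your proposal is correct and follows essentially the same route as the paper's proof. The shared steps are: alignment kept above $\tfrac12 c_1$ via the per-step bound of \cref{lem:strong-onestep-2}; \cref{lem:strong-deviation-1} with $\delta'=\eta\tau\delta$ plus Bihari--LaSalle to get $Q^\tau\le(1+\tfrac12 c_1\bar\alpha_2\beta_{\kappa,2}\eta\tau)\,c_r\theta_\kappa^\top w^0_\kappa$; and then the linear drift, with the martingale and $R^t$ terms absorbed.

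One harmless slip: in this phase $\theta_\kappa^\top w^0_\kappa\ge c_1$, so the Bihari factor is $1+7C_1c_r\eta\tau\,\theta_\kappa^\top w^0_\kappa\le 1+7C_1c_r\eta\tau$, without the extra $c_1$ you carried over from the weak-recovery lemma. Your comparison against $\tfrac12 c_1\bar\alpha_2\beta_{\kappa,2}$ still goes through by the constant hierarchy, and the resulting slack is what absorbs the additive $O(s^{-1/2})$ term.
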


\begin{proof}
    The argument parallels \cref{lem:strong-weakrecovery-1}.
    For $t = 0, 1, \dots, \tau$, we have $\theta_\kappa^\top w^t_\kappa \geq \frac{1}{2}c_1 \geq \frac{1}{2}s^{-1/2}$.
    Applying \cref{lem:strong-deviation-1} with $\tau \gets 0$, $\tau' \gets \tau$, $\delta' \gets \eta\tau\delta$ defines $Q^0$ and $Q^t$; the dominant term of $Q^0$ is $\|\Sigma_\kappa^\perp w^0_\kappa\|^2 \vee C_1\eta + 8C_1\eta\,\epsStrongEff\,\tau$, giving $Q^0 \leq \|\Sigma_\kappa^\perp w^0_\kappa\|^2 \vee C_1\eta + s^{-1/2}$.
    By the Bihari--LaSalle inequality \cref{lem:technical-bihari},
    \begin{align}
        Q^\tau
        &\leq (1 + 6C_1\eta\tau Q^0)Q^0 \\
        &\leq (1 + 7c_r C_1\eta\tau)(\|\Sigma_\kappa^\perp w^0_\kappa\|^2 \vee C_1\eta + s^{-1/2}) \\
        &\leq (1 + \tfrac{1}{2}c_1\bar{\alpha}_2\beta_{\kappa,2}\eta\tau)(\|\Sigma_\kappa^\perp w^0_\kappa\|^2 \vee c_r c_1)
        \leq (1 + \tfrac{1}{2}c_1\bar{\alpha}_2\beta_{\kappa,2}\eta\tau) c_r(\theta_\kappa^\top w^0_\kappa).
    \end{align}
    By \cref{lem:strong-onestep-2},
    \begin{align}
        \theta_\kappa^\top w_\kappa^{t+1}  &\geq \theta_\kappa^\top w_\kappa^t + \eta \lambda_\kappa \theta_\kappa^\top P_{w_\kappa^t}^\perp (Z^t + R^t) - \eta^2 C_1^2 (\theta_\kappa^\top w_\kappa^t) s \\ &
        + \eta \lambda_\kappa \sum_{i=1}^q \left[ i \alpha_i \beta_{\kappa,i} \| \Sigma_\kappa w_\kappa^t \|^{-(i-1)} (\theta_\kappa^\top w_\kappa^t)^{i-1} (1 - (\theta_\kappa^\top w_\kappa^t)^2) \right. \\ & 
        \qquad \left.+ \sqrt{(i+2)(i+1)} \alpha_i \beta_{\kappa,i+2} \| \Sigma_\kappa w_\kappa^t \|^{-(i+1)} \| \Sigma^\perp_\kappa w_\kappa^t \|^2 (\theta_\kappa^\top w_\kappa^t)^{i+1} \right] \\ &
        \geq \theta_\kappa^\top w_\kappa^t + c_1 \eta \alpha_2 \beta_{\kappa, 2}  (\theta_\kappa^\top w_\kappa^t) + \eta \lambda_\kappa \theta_\kappa^\top P_{w_\kappa^t}^\perp (Z^t + R^t).
    \end{align}
    where we used $1 - (\theta_\kappa^\top w^t_\kappa)^2 \geq 2c_1 - c_1^2$, $\eta \leq c_\eta s^{-3/2}$, and $\|\Sigma_\kappa^\perp w^t_\kappa\|^2 \leq Q^\tau \leq 3c_r\theta_\kappa^\top w^t_\kappa$ for the second inequality.
    Then we have
    \begin{align}
        \theta_\kappa^\top w_\kappa^{\tau} & \geq \theta^\top_\kappa w^0_\kappa + c_1 \alpha_2 \beta_{\kappa, 2} \eta \tau (\theta_\kappa^\top w_\kappa^0 - \frac{1}{2} C_1 \eta \tau) + \eta \lambda_\kappa  \sum_{t=0}^{\tau - 1} \left[\theta_\kappa^\top P_{w_\kappa^t}^\perp (Z^t + R^t)\right] \\ &
        \geq \theta^\top_\kappa w^0_\kappa + c_1 \alpha_2 \beta_{\kappa, 2} \eta \tau (\theta_\kappa^\top w_\kappa^0 - \frac{1}{2} C_1 \eta \tau) - C_1 \eta \left( \tau^\frac{1}{2} + (\epsStrongEff \vee Q^\tau )\tau  \right) \\ &
        \geq (1 + \frac{1}{2} c_1 \alpha_2 \beta_{\kappa, 2} \eta \tau) \theta_\kappa^\top w_\kappa^0.
    \end{align}
    with high probability.
    Hence $\|\Sigma_\kappa^\perp w^\tau_\kappa\|^2 \leq Q^\tau \leq c_r\theta_\kappa^\top w^\tau_\kappa$.
\end{proof}

\begin{lem}\label{lem:strong-amplification-2}
    Let $\eta^t = \eta \leq c_\eta s^{-3/2}$, $\epsStrongEff \leq c_\varepsilon s^{-1/2}$, $\theta_\kappa^\top w^0_\kappa \geq c_1$, and $\|\Sigma_\kappa^\perp w^0_\kappa\|^2 \leq c_r c_1$.
    With probability $1 - \tilde{\delta}$, there exists $t_2 \leq T_{1,2} = \tilde{\Theta}(\eta^{-1})$ such that $\theta_\kappa^\top w^{t_2}_\kappa > 1 - c_1$ and $\|\Sigma_\kappa^\perp w^{t_2}_\kappa\|^2 \leq 2c_r$.
\end{lem}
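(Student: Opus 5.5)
We will derive \cref{lem:strong-amplification-2} from \cref{lem:strong-amplification-1}, iterated over consecutive blocks, in the same way that \cref{lem:strong-weakrecovery-2} was derived from \cref{lem:strong-weakrecovery-1}.

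Set $\tau = \lfloor \frac{1}{2}c_1 C_1^{-1}\eta^{-1}\rfloor$ and $M = \lceil \frac{3\log c_1^{-1}}{c_1\bar{\alpha}_2\beta_{\kappa,2}\eta\tau}\rceil = \tilde{O}(1)$, since $\eta\tau=\Theta(c_1C_1^{-1})$. Let $T_{1,2}=M\tau=\tilde{\Theta}(\eta^{-1})$. Argue by contradiction: suppose $\theta_\kappa^\top w^t_\kappa \leq 1-c_1$ for all $t\leq T_{1,2}$.

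At time $0$ the hypotheses of \cref{lem:strong-amplification-1} hold, because $\|\Sigma_\kappa^\perp w^0_\kappa\|^2 \leq c_r c_1 \leq c_r\theta_\kappa^\top w^0_\kappa$. Applying that lemma on $[0,\tau]$ gives two things. First, the alignment grows by the factor $1+\frac12 c_1\bar{\alpha}_2\beta_{\kappa,2}\eta\tau$, so in particular $\theta_\kappa^\top w^\tau_\kappa\geq c_1$. Second, the invariant $\|\Sigma_\kappa^\perp w^\tau_\kappa\|^2\leq c_r\theta_\kappa^\top w^\tau_\kappa$ holds again at $\tau$. The hypotheses are therefore restored at time $\tau$, and we apply the lemma again on $[\tau,2\tau]$, and so on for all $M$ blocks. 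A union bound over the blocks gives total failure probability $M\eta\tau\delta \lesssim \delta\,\mathrm{polylog}(d)$, which is at most $\tilde{\delta}$ by the ordering $\tilde\delta^{-1}\lesssim\delta^{-1}$ with the polylog factors absorbed. After the $M$ blocks,
\[
\log\frac{\theta_\kappa^\top w^{M\tau}_\kappa}{\theta_\kappa^\top w^0_\kappa}\geq M\log\bigl(1+\tfrac12c_1\bar{\alpha}_2\beta_{\kappa,2}\eta\tau\bigr)\geq \log c_1^{-1}.
\]
This uses $\log(1+x)\geq x/3$ for small $x$. It gives $\theta_\kappa^\top w^{M\tau}_\kappa\geq 1$, which contradicts the standing assumption. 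Hence some $t\leq T_{1,2}$ has $\theta_\kappa^\top w^{t}_\kappa>1-c_1$; let $t_2$ be the first such time.

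For the deviation bound, the invariant gives $\|\Sigma_\kappa^\perp w^t_\kappa\|^2\leq c_r\theta_\kappa^\top w^t_\kappa\leq c_r$ at every block endpoint before $t_2$. The main obstacle is that $t_2$ need not be a block endpoint. To handle this, inside each block we use the bound $\|\Sigma_\kappa^\perp w^{t}_\kappa\|^2\leq Q^{t}\leq Q^\tau$ from the proof of \cref{lem:strong-amplification-1}, which holds for all intermediate times by \cref{lem:strong-deviation-1}. The Bihari--LaSalle estimate there gives $Q^\tau\leq(1+\frac12 c_1\bar\alpha_2\beta_{\kappa,2}\eta\tau)c_r\theta_\kappa^\top w^{\text{start}}_\kappa\leq(1+O(c_1))c_r\leq 2c_r$. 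We apply this within the block containing $t_2$ (the block starts before $t_2$, so its hypotheses hold at its start). If $t_2$ falls mid-block, we can alternatively fall back on the one-step bound $\|\Sigma_\kappa^\perp w^{t+1}_\kappa\|^2-\|\Sigma_\kappa^\perp w^{t}_\kappa\|^2\leq 7C_1\eta$ of \cref{lem:strong-deviation-1}. That costs at most $7C_1\eta\tau\leq 4c_1$ over a block, which is not below $c_r$ in general, so the $Q^\tau$ route is the intended one.

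We must also check that \cref{lem:strong-amplification-1} is applied only while $\theta_\kappa^\top w^t_\kappa\leq 1-c_1$. This holds on every block strictly before $t_2$. On the final, partial block we only need the $Q$-bound up to $t_2$, and the drift lower bound used in that lemma stays valid up to that time.
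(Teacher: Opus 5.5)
Your proposal is correct and follows the paper's proof: you argue by contradiction, iterating \cref{lem:strong-amplification-1} over blocks of length $\tau=\lfloor \frac12 c_1C_1^{-1}\eta^{-1}\rfloor$ with a union bound, and then obtain $\|\Sigma_\kappa^\perp w^{t_2}_\kappa\|^2\leq 2c_r$ from the $Q^\tau$ Bihari--LaSalle bound applied on the block starting at $\lfloor t_2/\tau\rfloor\tau$. The differences are cosmetic: you use $\log c_1^{-1}$ where the paper uses $\log s$ in $T_{1,2}$ (both suffice since $\theta_\kappa^\top w^0_\kappa\geq c_1$), and to get $M\log(1+x)\geq\log c_1^{-1}$ with your choice of $M$ you need $\log(1+x)\geq 2x/3$ (valid for $x\leq 1$) rather than the weaker $x/3$ you cite.
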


\begin{proof}
    Suppose $\theta_\kappa^\top w^t_\kappa \leq 1 - c_1$ for all $t = 0, 1, \dots, T_{1,2}$.
    Set $\tau = \lfloor \frac{1}{2}c_1 C_1^{-1}\eta^{-1}\rfloor$ and $T_{1,2} = \lceil \frac{3\log s}{2c_1\bar{\alpha}_2\beta_{\kappa,2}\eta\tau}\rceil\tau$.
    Applying \cref{lem:strong-amplification-1} repeatedly with probability $1 - \frac{3\log s}{2\bar{\alpha}_2\beta_{\kappa,2}}c_1^{-1}\delta$,
    \begin{align}
        \log\frac{\theta_\kappa^\top w^{T_{1,2}}_\kappa}{\theta_\kappa^\top w^0_\kappa}
        \geq \left\lceil\frac{3\log s}{2c_1\bar{\alpha}_2\beta_{\kappa,2}\eta\tau}\right\rceil
        \log\!\left(1 + \tfrac{1}{2}c_1\bar{\alpha}_2\beta_{\kappa,2}\eta\tau\right)
        \geq \log s^{1/2},
    \end{align}
    so $\theta_\kappa^\top w^{T_{1,2}}_\kappa \geq 1$, contradicting $\theta_\kappa^\top w^{T_{1,2}}_\kappa \leq 1 - c_1$.
    Let $t_2$ be the first time $\theta_\kappa^\top w^t_\kappa > 1 - c_1$, and set $t_2' = \lfloor t_2/\tau\rfloor\tau$.
    At time $t_2'$, $\|\Sigma_\kappa^\perp w^{t_2'}_\kappa\|^2 \leq c_r\theta_\kappa^\top w^{t_2'}_\kappa \leq c_r$.
    By the same argument as in \cref{lem:strong-amplification-1},
    $\|\Sigma_\kappa^\perp w^{t_2}_\kappa\|^2 \leq (1 + \frac{1}{2}c_1\bar{\alpha}_2\beta_{\kappa,2}\eta\tau) c_r(\theta_\kappa^\top w^{t_2'}_\kappa) \leq 2c_r$.
\end{proof}

\subsection{Strong Alignment}\label{strong-strongalignment}

\begin{lem}\label{lem:strong-strongrecovery-1}
    Let $\bar{\varepsilon} \geq \epsStrongEff$, $\eta = \eta^t \leq c_\eta\bar{\varepsilon} s^{-1} \wedge c_\eta\bar{\varepsilon}^2$, and $\|\Sigma_\kappa^\perp w^0_\kappa\|^2 \leq 3c_r$.
    Let $\tau > 0$ and suppose $\theta_\kappa^\top w^t_\kappa \geq \frac{1}{2}$ and $\|\Sigma_\kappa^\perp w^t_\kappa\|^2 \geq \bar{\varepsilon}$ for all $t = 0, 1, \dots, \tau - 1$.
    Then for every $t = 0, 1, \dots, \tau$,
    \begin{align}
        \|\Sigma_\kappa^\perp w_\kappa^t\|^2 \leq \left(1 - \tfrac{1}{5}c_w\eta\right)^t \|\Sigma_\kappa^\perp w^0_\kappa\|^2 + c_r\bar{\varepsilon}
    \end{align}
    with high probability.
\end{lem}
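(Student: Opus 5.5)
Starting from the one-step bound \eqref{eq:strong-deviation-2} in the proof of \cref{lem:strong-deviation-1}, the plan is to exploit the contraction factor $\xi^t$ rather than discard it via $\xi^t\le 1$. The bound we keep reads
\begin{align}
  \|\Sigma_\kappa^\perp w^{t+1}_\kappa\|^2 \le (\xi^t)^2\|\Sigma_\kappa^\perp w^t_\kappa\|^2 + 2C_1\eta\|\Sigma_\kappa^\perp w^t_\kappa\|^4 + 3C_1^2\eta^2\|\Sigma_\kappa^\perp w^t_\kappa\|^2 + 8C_1\eta\epsStrongEff + M^t,
\end{align}
where $M^t$ collects the stochastic terms in $Z^t$.

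\textbf{Step 1 (lower bound on the contraction).} When $\theta_\kappa^\top w^t_\kappa\ge\frac12$, the $i=2$ summand in $1-\xi^t$ equals $\eta\lambda_\kappa(2\bar\alpha_2\beta_{\kappa,2}+\sqrt{12}\bar\alpha_2\beta_{\kappa,4})(\theta_\kappa^\top\tilde w^t_\kappa)(\theta_\kappa^\top w^t_\kappa)=\eta\lambda_\kappa c_w(\cdot)$. The $i\ge3$ summands are nonnegative by the Hermite sign conditions of \cref{asm:strong-init}. The $i=1$ summand is $O(d^{-C})$ by \cref{lem:strong-transformation-2}, since $\GE=2$. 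Both $\theta_\kappa^\top\tilde w^t_\kappa$ and $\theta_\kappa^\top w^t_\kappa$ are at least $\frac12$, and $\lambda_\kappa\ge\frac23$. Hence $1-\xi^t\ge\frac16c_w\eta$, so $(\xi^t)^2\le 1-\frac13c_w\eta$.

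\textbf{Step 2 (absorbing the deterministic errors).} The quartic term is handled using $\|\Sigma_\kappa^\perp w^t\|^2\le 3c_r$, which we maintain inductively; it is bounded by $6C_1c_r\eta\|\Sigma_\kappa^\perp w^t\|^2$. Because $c_r\ll c_w/C_1$ in the constant ordering, this is at most $\frac1{30}c_w\eta\|\Sigma_\kappa^\perp w^t\|^2$. The $\eta^2$ term is absorbed in the same way using $\eta\le c_\eta\bar\varepsilon s^{-1}$. The term $8C_1\eta\epsStrongEff\le 8C_1\eta\bar\varepsilon$ stays additive. This yields the per-step recursion
\begin{align}
  D^{t+1}\le(1-\tfrac14c_w\eta)D^t+8C_1\eta\bar\varepsilon+M^t .
\end{align}

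\textbf{Step 3 (unrolling and the noise).} Unrolling the recursion gives
\begin{align}
  D^t\le(1-\tfrac14c_w\eta)^tD^0+\frac{32C_1}{c_w}\bar\varepsilon+\sum_{t'<t}(1-\tfrac14c_w\eta)^{t-1-t'}M^{t'} .
\end{align}
This step is the main obstacle. I would use the discounted weights: a martingale sum with geometric weights has effective horizon $1/(c_w\eta)$. The term $\eta\xi w^\top\Sigma^\perp Z$ has variance $\lesssim\eta^2(1-\lambda_\kappa)^2$; over the effective horizon its contribution is at most $\sqrt{\eta}(1-\lambda_\kappa)$ up to polylog factors. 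The term $\eta\|\Sigma^\perp w\|^2w^\top Z$ contributes $\tilde O(\sqrt{\eta})D$, which is absorbed into the contraction when $\eta\le c_\eta\bar\varepsilon^2$. The term $\eta^2\|\Sigma^\perp Z\|^2$ has mean $\eta^2(1-\lambda_\kappa)s$; over the horizon it gives $\eta(1-\lambda_\kappa)s\le c_\eta\bar\varepsilon$. For high-probability control I would first try a Freedman-type inequality on the weighted sums. If the rare $O(1)$ jumps spoil that, the fallback is to split time into blocks of length $\asymp1/(c_w\eta)$ and apply Doob's inequality on each block, as in \cref{lem:strong-deviation-1}. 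In either case the noise is $\tilde O(\sqrt{\eta}+\eta s(1-\lambda_\kappa))\le c\bar\varepsilon$, using $\eta\le c_\eta\bar\varepsilon^2\wedge c_\eta\bar\varepsilon s^{-1}$.

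\textbf{Step 4 (conclusion).} Collecting terms, the additive part is bounded by $c_r\bar\varepsilon$ once $C_1/c_w$ and the polylog noise constants are small relative to $c_r$ after choosing $c_\eta$. The contraction rate $\frac14c_w\eta$ is at least $\frac15c_w\eta$, which gives the claimed bound. The induction that maintains $D^t\le3c_r$ follows from the bound itself, since $D^0\le3c_r$. The hypothesis $D^t\ge\bar\varepsilon$ lets multiplicative error terms such as $\epsStrongEff D^{1/2}$ be compared against $D$.
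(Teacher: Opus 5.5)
Your Step 2 keeps the weak-model error as an additive forcing term, $8C_1\eta\epsStrongEff\le 8C_1\eta\bar\varepsilon$, and this breaks the conclusion. Unrolled against the contraction rate $\tfrac14 c_w\eta$, it leaves a floor $\tfrac{32C_1}{c_w}\bar\varepsilon$. That floor does not depend on $\eta$, so no choice of $c_\eta$ can shrink it. It is also not $\le c_r\bar\varepsilon$: $C_1$ is chosen to dominate the $\tilde O(1)$ constants in the gradient bounds, so $C_1/c_w$ is not small, while $c_r$ is small. Your own steps conflict here: Step 2 needs $c_r\ll c_w/C_1$ to absorb the quartic term, whereas Step 4 needs $C_1/c_w\ll c_r$. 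The defect matters downstream. In \cref{lem:strong-strongrecovery-4} this lemma is used to show $(1-\tfrac15c_w\eta_2)^{T_{1,3}}\cdot 4c_r+c_r\bar\varepsilon<\bar\varepsilon$, which contradicts $\|\Sigma_\kappa^\perp w^t_\kappa\|^2\ge\bar\varepsilon$. If the additive term is a large multiple of $\bar\varepsilon$, that contradiction is unavailable and strong recovery fails.

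The missing step is to use the hypothesis $\|\Sigma_\kappa^\perp w^t_\kappa\|^2\ge\bar\varepsilon$ to absorb every $\eta$-independent forcing term into the contraction. To do that you must go back to the uncollapsed terms $4C_1\eta\xi^t\epsStrongEff\|\Sigma_\kappa^\perp w^t_\kappa\|+4C_1^2\eta^2\epsStrongEff^2 s$, i.e.\ the intermediate step of \eqref{eq:strong-deviation-2} before $\xi^t\le 1$ and $\|\Sigma_\kappa^\perp w^t_\kappa\|\le 1$ are applied.
\begin{itemize}
\item First term: use $\|\Sigma_\kappa^\perp w^t_\kappa\|\le 2\sqrt{c_r}$ and $\epsStrongEff\le\bar\varepsilon\le\|\Sigma_\kappa^\perp w^t_\kappa\|^2$. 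It is then at most $8\sqrt{c_r}C_1\eta\|\Sigma_\kappa^\perp w^t_\kappa\|^2$, a small fraction of $c_w\eta\|\Sigma_\kappa^\perp w^t_\kappa\|^2$.
\item Second term: since $\eta\le c_\eta\bar\varepsilon s^{-1}$, it is at most $4C_1^2c_\eta\bar\varepsilon^3\eta$, which is likewise absorbable.
\item The same device handles $\eta^2\|\Sigma_\kappa^\perp Z^t\|^2\le C_1^2\eta^2 s\le c_\eta C_1^2\bar\varepsilon\eta$ with a crude per-step bound. You need neither Freedman nor blockwise Doob for that term.
\end{itemize}
What remains is $D^{t+1}\le(1-\tfrac15c_w\eta)D^t+2\eta\,(\text{martingale increments})$, with no deterministic forcing. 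The geometrically discounted martingale contributes $O(C_1\sqrt{\eta/c_w})$. This is the only additive term, and it is the one that $\eta\le c_\eta\bar\varepsilon^2$ drives below $c_r\bar\varepsilon$. Your closing sentence names exactly this comparison against $D$, but your argument never applies it to the $\epsStrongEff$ term. Finally, run the induction with $4c_r$ rather than $3c_r$, since the conclusion only gives $D^t\le 3c_r+c_r\bar\varepsilon$.
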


\begin{proof}
    Setting $\xi^t$ as in the proof of \cref{lem:strong-deviation-1}, for $\theta_\kappa^\top w^t_\kappa \geq 0$ we have by the same computation as in \eqref{eq:strong-deviation-2},
    \begin{align}
        \|\Sigma_\kappa^\perp w^{t+1}_\kappa\|^2
        &\leq (\xi^t)^2\|\Sigma_\kappa^\perp w^t_\kappa\|^2 + 2C_1\eta\xi^t\|\Sigma_\kappa^\perp w^t_\kappa\|^4
        + 3C_1^2\eta^2\|\Sigma_\kappa^\perp w^t_\kappa\|^2 \\
        &\quad + 4C_1\eta\xi^t\,\epsStrongEff\|\Sigma_\kappa^\perp w^t_\kappa\|
        + 4C_1^2\eta^2\epsStrongEff^2 s \\
        &\quad + 2\!\left(\eta\xi^t w_\kappa^{t\top}\Sigma_\kappa^\perp Z^t
        - \eta\xi^t\|\Sigma_\kappa^\perp w^t_\kappa\|^2 w_\kappa^{t\top}Z^t
        + \eta^2\|\Sigma_\kappa^\perp Z^t\|^2\right).
    \end{align}
    Fix $t \leq \tau$ and assume $\|\Sigma_\kappa^\perp w^{t'}_\kappa\|^2 \leq 4c_r$ for all $t' \leq t-1$.
    Using $(\xi^{t'})^2 \leq \xi^{t'}$, $\theta_\kappa^\top w^{t'}_\kappa \geq \frac{1}{2}$, and $\|\Sigma_\kappa^\perp w^{t'}_\kappa\| \geq \bar{\varepsilon}$, and bounding $4C_1\eta\xi^{t'}\epsStrongEff\|\Sigma_\kappa^\perp w^{t'}_\kappa\| \leq 8c_r C_1\bar{\varepsilon}\eta$, $4C_1^2\eta^2\epsStrongEff^2 s \leq 4C_1^2 c_\eta\bar{\varepsilon}^3\eta$, and $\eta\|\Sigma_\kappa^\perp Z^{t'}\|^2 \leq c_\eta C_1\bar{\varepsilon}$,
    \begin{align}
        \|\Sigma^\perp_\kappa w^{t'+1}_\kappa\|^2 & \leq \|\Sigma^\perp_\kappa w^{t'}_\kappa\|^2 - \frac{1}{4} \lambda_\kappa c_w \eta\|\Sigma^\perp_\kappa w^{t'}_\kappa\|^2 + 2C_1 \eta \xi^{t'} \|\Sigma^\perp_\kappa w^{t'}_\kappa\|^4 + 3C_1^2 \eta^2 \| \Sigma_\kappa^\perp w^{t'}_\kappa\|^2 \\ &
        \quad + 4C_1 \eta \xi^{t'} \epsStrongEff \|\Sigma^\perp_\kappa w^{t'}_\kappa\| + 4C_1^2 \eta^2 \epsStrongEff^2 s \\ &
        \quad + 2 \left(\eta \xi^{t'} w^{t'\top}_\kappa \Sigma^\perp_\kappa Z^{t'}- \eta \xi^{t'} \|\Sigma^\perp_\kappa w^{t'}_\kappa\|^2 w^{t'\top}_\kappa Z^{t'} + \eta^2\|\Sigma^\perp_\kappa Z^{t'}\|^2 \right) \\ &
        \leq (1 - \frac{1}{5} c_w \eta) \|\Sigma^\perp_\kappa w^{t'}_\kappa\|^2+ 2\left(\eta \xi^{t'} w^{t'\top}_\kappa \Sigma^\perp_\kappa Z^{t'}- \eta \xi^{t'} \|\Sigma^\perp_\kappa w^{t'}_\kappa\|^2 w^{t'\top}_\kappa Z^{t'}\right).
    \end{align}
    Unrolling this recursion,
    \begin{align}
        \|\Sigma_\kappa^\perp w_\kappa^t\|^2
        \leq (1 - \tfrac{1}{5}c_w\eta)^t\|\Sigma_\kappa^\perp w^0_\kappa\|^2
        + 2\eta\sum_{t'=0}^{t-1}(1 - \tfrac{1}{5}c_w\eta)^{t-t'-1}
        \!\left(\xi^{t'} w_\kappa^{t'\top}\Sigma_\kappa^\perp Z^{t'} - \xi^{t'}\|\Sigma_\kappa^\perp w^{t'}_\kappa\|^2 w_\kappa^{t'\top}Z^{t'}\right).
    \end{align}
    The conditional variance of the martingale increment at step $t'$ is at most $(1 - \frac{1}{5}c_w\eta)^{2(t-t'-1)}C_1^2$. Summing over $t' = 0, \dots, t-1$ and bounding the resulting geometric series gives
    \begin{align}
        \sum_{t'=0}^{t-1} \left(1 - \tfrac{1}{5} c_w \eta\right)^{2(t-t'-1)} C_1^2 \leq \frac{C_1^2}{1 - \left(1 - \frac{1}{5} c_w \eta\right)^2} \leq \frac{5C_1^2}{c_w \eta}.
    \end{align}
    Hence each weighted sum is at most $\sqrt{5C_1^2/(c_w\eta)}$ with high probability, and since $\eta \leq c_\eta\bar{\varepsilon}^2$,
    \begin{align}
        \|\Sigma_\kappa^\perp w_\kappa^t\|^2
        \leq (1 - \tfrac{1}{5}c_w\eta)^t\|\Sigma_\kappa^\perp w^0_\kappa\|^2 + 4\sqrt{5}C_1 c_w^{-1/2}\sqrt{\eta}
        \leq (1 - \tfrac{1}{5}c_w\eta)^t\|\Sigma_\kappa^\perp w^0_\kappa\|^2 + c_r\bar{\varepsilon}.
    \end{align}
    This gives $\|\Sigma_\kappa^\perp w^t_\kappa\|^2 \leq 4c_r$, completing the induction.
\end{proof}

\begin{lem}\label{lem:strong-strongrecovery-2}
    Let $\eta \leq c_\eta s^{-1}$, $\epsStrongEff \leq c_\varepsilon s^{-1/2}$, $\theta_\kappa^\top w^0_\kappa \geq 1 - 3c_1$, and suppose $\theta_\kappa^\top w^t_\kappa \leq 1 - c_2$ and $\|\Sigma_\kappa^\perp w^t_\kappa\|^2 \leq 4c_r$ for all $t = 0, 1, \dots, \tau$.
    Then, with high probability,
    \begin{align}
        \theta_\kappa^\top w^t_\kappa \geq \theta_\kappa^\top w^0_\kappa - C_2\eta + \bar{\alpha}_2\beta_{\kappa,2} c_2\eta t
    \end{align}
\end{lem}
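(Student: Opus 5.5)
This is a linear-growth analogue of \cref{lem:weak-strongrecovery-1} for the strong model. I would derive a one-step lower bound from \cref{lem:strong-onestep-2}, sum it over $t$, and then control the accumulated martingale noise.

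\textbf{One-step drift.} For every $t\le\tau$ we have $\theta_\kappa^\top w^t_\kappa\ge\frac12 s^{-1/2}$; this holds inductively because the increments are $\tilde O(\eta)$ and the lower bound we are proving stays near $1-3c_1$. So \cref{lem:strong-onestep-2} applies and gives
\[
\theta_\kappa^\top w^{t+1}_\kappa \ge \theta_\kappa^\top w^t_\kappa + \eta\lambda_\kappa D^t + \eta\lambda_\kappa\theta_\kappa^\top P^\perp_{w^t_\kappa}(Z^t+R^t) - \eta^2C_1^2 s .
\]
Here $D^t$ is the Hermite sum. I would lower-bound $D^t$ term by term.
\begin{itemize}
\item The $i=1$ term vanishes up to $O(d^{-C})$ because $\bar\alpha_1$ is negligible (\cref{lem:strong-transformation-2}, $\GE=2$).
\item The $i=2$ term is $2\bar\alpha_2\beta_{\kappa,2}\|\Sigma_\kappa w^t_\kappa\|^{-1}(\theta_\kappa^\top w^t_\kappa)(1-(\theta_\kappa^\top w^t_\kappa)^2)$. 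Since $\theta_\kappa^\top w^t_\kappa\ge 1-3c_1-o(1)$ and $\theta_\kappa^\top w^t_\kappa\le 1-c_2$, this is at least $2(1-4c_1)\bar\alpha_2\beta_{\kappa,2}c_2$.
\item The terms $i\ge3$ are nonnegative by the sign conditions of \cref{asm:strong-init}(ii).
\item The $\|\Sigma^\perp_\kappa w\|^2$-weighted pieces combine into $i\bar\alpha_i\beta_{\kappa,i}+\sqrt{(i+2)(i+1)}\bar\alpha_i\beta_{\kappa,i+2}\ge0$, or are absorbed for $i=2$ through $c_w>0$.
\end{itemize}
The $\eta^2C_1^2s$ term is at most $c_\eta C_1^2\eta$. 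The remainder satisfies $|\theta_\kappa^\top R^t|\lesssim\epsStrongEff\vee\|\Sigma_\kappa^\perp w^t_\kappa\|^2$. Since $\theta_\kappa\perp V_\kappa^\perp$, I would use the $P^\perp_{w}$ structure: $\theta_\kappa^\top P^\perp_w R = \theta_\kappa^\top R-(\theta_\kappa^\top w)w^\top R$. I expect this to contribute at most a $c_r$-small multiple of $c_2$ per step, using the ordering $c_r\ll c_2$. Together with $\lambda_\kappa\ge 1-o(1)$, the per-step drift is at least $\frac32\bar\alpha_2\beta_{\kappa,2}c_2\eta$.

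\textbf{Summation and noise.} Summing from $0$ to $t-1$ gives
\[
\theta^\top_\kappa w^t_\kappa\ge \theta^\top_\kappa w^0_\kappa+\tfrac32\bar\alpha_2\beta_{\kappa,2}c_2\eta t+\eta\lambda_\kappa\sum_{t'<t}\theta_\kappa^\top P^\perp_{w^{t'}_\kappa}Z^{t'} .
\]
The last sum is a martingale with increments $\tilde O(1)$ with high probability, so Azuma--Hoeffding bounds it by $C_1\eta\sqrt t$. I would then split into cases, exactly as in \cref{lem:weak-strongrecovery-1,lem:weak-strongrecovery-2}.
\begin{itemize}
\item If $t\le C_2'c_2^{-2}$, then $C_1\eta\sqrt t\le C_2\eta$ after choosing $C_2\gtrsim C_1c_2^{-1}$.
\item Otherwise $C_1\eta\sqrt t\le\frac12\bar\alpha_2\beta_{\kappa,2}c_2\eta t$, which is absorbed into the drift.
\end{itemize}
This yields the claimed $\theta^\top_\kappa w^0_\kappa-C_2\eta+\bar\alpha_2\beta_{\kappa,2}c_2\eta t$. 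A union bound over $t\le\tau$ keeps the high-probability statement, provided $\tau$ is polynomial in $d$.

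\textbf{Main obstacle.} The delicate step is the remainder. The given hypothesis only bounds $\|\Sigma_\kappa^\perp w^t_\kappa\|^2$ by $4c_r$, not by something small relative to $c_2$. Three terms must therefore be dominated by the $c_2$ signal: the $i=2$ cross term $-\sqrt{12}\bar\alpha_2\beta_{\kappa,4}\|\Sigma^\perp_\kappa w\|^2$, $\theta_\kappa^\top R^t$, and the factors $\|\Sigma_\kappa w\|^{-(i-1)}$. I would first try to use the constant hierarchy $c_r\lesssim c_2$ from the paper's ordering, which appears to be stated the other way round. If that fails, I would use the $c_w>0$ positivity from \cref{asm:strong-init}(ii) to sign the $\|\Sigma^\perp_\kappa w\|^2$ pieces favorably, so that only $\epsStrongEff\le c_\varepsilon s^{-1/2}\ll c_2$ remains to be absorbed.
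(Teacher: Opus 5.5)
Your proposal is correct and is essentially the paper's proof: the paper likewise inducts on $\theta_\kappa^\top w^{t'}_\kappa \ge 1-4c_1$ and applies \cref{lem:strong-onestep-2} with $1-(\theta_\kappa^\top w^{t'}_\kappa)^2\ge c_2$, $\|\Sigma_\kappa^\perp w^{t'}_\kappa\|^2\le 4c_r$ and $c_r\lesssim c_2$ to get a per-step drift of $\tfrac32\eta\bar{\alpha}_2\beta_{\kappa,2}c_2$. It then absorbs the $R^t$ contribution of order $\eta C_1(\epsStrongEff\vee c_r)$ into that drift (per step, as you do) and splits the martingale bound $\eta C_1\sqrt{t}$ into a $C_2\eta$ case and an absorbed case. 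Your worry about the constant hierarchy is a misreading: the paper's chain $c_1^{-1}\lesssim c_2^{-1}\lesssim C_2\lesssim c_r^{-1}$ gives exactly $c_r\lesssim c_2$, so your primary plan goes through and the $c_w$-based fallback is unnecessary.
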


\begin{proof}
    Fix $t \leq \tau$ and assume $\theta_\kappa^\top w^{t'}_\kappa \geq 1 - 4c_1$ for all $t' \leq t - 1$.
    By \cref{lem:strong-onestep-2}, using $\eta \leq c_\eta s^{-1}$, $1 - (\theta_\kappa^\top w^{t'}_\kappa)^2 \geq c_2$, $\|\Sigma_\kappa^\perp w^{t'}_\kappa\|^2 \leq 4c_r$, and $c_r \lesssim c_2$,
    \begin{align}
        \theta_\kappa^\top w_\kappa^{t'+1}  &\geq \theta_\kappa^\top w_\kappa^{t'} + \eta \lambda_\kappa \theta_\kappa^\top P_{w_\kappa^{t'}}^\perp (Z^{t'} + R^{t'}) - \eta^2 C_1^2 (\theta_\kappa^\top w_\kappa^{t'}) s \\ &
        + \eta \lambda_\kappa \sum_{i=1}^q \left[ i \alpha_i \beta_{\kappa,i} \| \Sigma_\kappa w_\kappa^{t'} \|^{-(i-1)} (\theta_\kappa^\top w_\kappa^{t'})^{i-1} (1 - (\theta_\kappa^\top w_\kappa^{t'})^2) \right. \\ & 
        \qquad \left.+ \sqrt{(i+2)(i+1)} \alpha_i \beta_{\kappa,i+2} \| \Sigma_\kappa w_\kappa^{t'} \|^{-(i+1)} \| \Sigma^\perp_\kappa w_\kappa^{t'} \|^2 (\theta_\kappa^\top w_\kappa^{t'})^{i+1} \right] \\ &
        \geq \theta^\top_\kappa w^{t'}_\kappa + \frac{3}{2} \eta \alpha_2 \beta_{\kappa,2} c_2 + \eta \lambda_\kappa \theta_\kappa^\top P_{w_\kappa^{t'}}^\perp (Z^{t'} + R^{t'}).
    \end{align}
    Using $\sum_{t'=0}^{t-1}\eta \lambda_\kappa \theta_\kappa^\top P_{w_\kappa^{t'}}^\perp R^{t'} \leq \eta C_1 (\epsStrongEff \vee 2c_r) \leq \frac{1}{4}\eta \alpha_2 \beta_{\kappa, 2} c_2$ and
    \begin{align}
        \sum_{t'=0}^{t-1}\eta\lambda_\kappa\theta_\kappa^\top P_{w_\kappa^{t'}}^\perp Z^{t'}
        \leq \eta C_1\sqrt{t}
        \leq \begin{cases}
            C_2\eta & (t \leq C_2) \\
            \frac{1}{4}\bar{\alpha}_2\beta_{\kappa,2} c_2\eta t & (t > C_2)
        \end{cases}
        \quad \text{with high probability},
    \end{align}
    we obtain $\theta_\kappa^\top w^t_\kappa \geq \theta_\kappa^\top w^0_\kappa - C_2\eta + \bar{\alpha}_2\beta_{\kappa,2} c_2\eta t$.
    In particular $\theta_\kappa^\top w^t_\kappa \geq 1 - 4c_1$, completing the induction.
\end{proof}

\begin{lem}\label{lem:strong-strongrecovery-3}
    Let $\epsStrongEff \leq \bar{\varepsilon} \leq c_r$, $\eta \leq c_\eta\bar{\varepsilon} s^{-1} \wedge c_\eta\bar{\varepsilon}^2$, $\theta_\kappa^\top w^0_\kappa \geq 1 - 3c_1$.
    Let $\tau > 0$ and suppose $\|\Sigma_\kappa^\perp w^t_\kappa\| \leq 2\bar{\varepsilon}$ and $\theta_\kappa^\top w^t_\kappa \leq 1 - C_2\bar{\varepsilon}$ for all $t = 0, 1, \dots, \tau - 1$.
    Then
    \begin{align}
        \theta_\kappa^\top w^t_\kappa
        \geq 1 - (1 - \eta\bar{\alpha}_2\beta_{\kappa,2})^t(1 - \theta_\kappa^\top w^0_\kappa) - c_3\bar{\varepsilon}.
    \end{align}
\end{lem}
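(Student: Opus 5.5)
I will follow the template of \cref{lem:weak-strongrecovery-1} and \cref{lem:strong-strongrecovery-2}, but with a drift linear in $1-\theta_\kappa^\top w^t_\kappa$, and then unroll the resulting linear recursion for the gap $u^t := 1 - \theta_\kappa^\top w^t_\kappa$.

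\textbf{Step 1: one-step lower bound.} I apply \cref{lem:strong-onestep-2}, which is valid since $\eta \leq c_\eta s^{-1}$ and $\theta_\kappa^\top w^t_\kappa \geq 1-4c_1 \geq \frac12 s^{-1/2}$ along an induction hypothesis. In the drift sum, the $i=2$ term is the main one and gives $2\lambda_\kappa\bar{\alpha}_2\beta_{\kappa,2}\|\Sigma_\kappa w^t_\kappa\|^{-1}(\theta_\kappa^\top w^t_\kappa)(1-(\theta_\kappa^\top w^t_\kappa)^2)$. Since $\theta_\kappa^\top w^t_\kappa \geq 1-4c_1$ and $\lambda_\kappa \geq 1-c_\varepsilon s^{-1/2}$, this term is at least $\frac{3}{2}\bar{\alpha}_2\beta_{\kappa,2}u^t$, using $1-z^2=(1-z)(1+z)$. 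The higher-$i$ terms are nonnegative by the sign conditions of \cref{asm:strong-init}(ii). The same holds for the $\|\Sigma_\kappa^\perp w^t\|^2$ cross terms, which are in any case $O(\bar{\varepsilon}^2)$ because $\|\Sigma_\kappa^\perp w^t_\kappa\|\le 2\bar{\varepsilon}$.

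The second-order loss is $\eta^2C_1^2 s \le c_\eta C_1^2\eta\bar{\varepsilon}$. The remainder satisfies $|\theta_\kappa^\top P^\perp_{w^t}R^t| = \tilde O(\epsStrongEff \vee \|\Sigma_\kappa^\perp w^t\|^2) \le C_1\bar{\varepsilon}$ by \cref{lem:strong-onestep-1}(i). This gives
\begin{align}
  u^{t+1} \le (1-\tfrac32\eta\bar{\alpha}_2\beta_{\kappa,2})u^t + 2C_1\eta\bar{\varepsilon} - \eta\lambda_\kappa\theta_\kappa^\top P^\perp_{w^t}Z^t .
\end{align}

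\textbf{Step 2: unroll the recursion.} With $\rho := 1-\eta\bar{\alpha}_2\beta_{\kappa,2}$, I unroll to get
\begin{align}
  u^t \le \rho^t u^0 + \sum_{t'<t}\rho^{t-t'-1}\bigl(2C_1\eta\bar{\varepsilon} - \tfrac12\eta\bar{\alpha}_2\beta_{\kappa,2}u^{t'}\bigr) + \eta\sum_{t'<t}\rho^{t-t'-1}M^{t'},
\end{align}
where I keep half of the contraction in reserve. The deterministic bias sums to at most $2C_1\bar{\varepsilon}/(\bar{\alpha}_2\beta_{\kappa,2}) \le \frac13 c_3\bar{\varepsilon}$, using $C_1, c_w^{-1} \lesssim c_3^{-1}$. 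In fact, since $u^{t'}\ge C_2\bar{\varepsilon}$ by hypothesis and $C_2 \gg C_1$, the reserved term $-\frac12\eta\bar{\alpha}_2\beta_{\kappa,2}u^{t'}$ already absorbs the bias.

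\textbf{Step 3: control the martingale term (main obstacle).} The noise $M^{t'}=-\lambda_\kappa\theta_\kappa^\top P^\perp_{w^{t'}}Z^{t'}$ has increments that are $\tilde O(1)$ with high probability. Because the weights are $\rho^{t-t'-1}$, the same geometric-series computation as in \cref{lem:strong-strongrecovery-1} bounds the conditional variance of the weighted sum by $\tilde O(\eta^{-1})$. An Azuma/Freedman bound (with a union over $t\le\tau$, losing only a $\log d$ factor) then controls the noise contribution by $\tilde O(\sqrt{\eta}) \le c_\eta^{1/2}\tilde O(\bar{\varepsilon}) \le \frac13 c_3\bar{\varepsilon}$, using $\eta\le c_\eta\bar{\varepsilon}^2$.

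The delicate point is that $w^{t'}$ depends on the past, so the increments are only conditionally mean zero and bounded with high probability. I will handle this by truncating at the high-probability bound, as done elsewhere in the paper. Note that, unlike in \cref{lem:strong-deviation-1}, no Doob argument is needed, because the $V_\kappa^\perp$ heavy-tail issue does not enter the projection onto $\theta_\kappa\in V_\kappa$.

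\textbf{Step 4: close the induction.} Combining Steps 2 and 3 gives $u^t \le \rho^t u^0 + c_3\bar{\varepsilon}$, which is the claim. It also yields $u^t \le 3c_1 + c_3\bar{\varepsilon} \le 4c_1$, so the hypothesis $\theta_\kappa^\top w^t_\kappa \geq 1-4c_1$ propagates to the next step and the induction closes.
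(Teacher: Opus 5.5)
Your proposal follows the paper's proof essentially step for step. The paper likewise obtains a one-step drift of $\frac{3}{2}\eta\bar{\alpha}_2\beta_{\kappa,2}(1-\theta_\kappa^\top w^{t'}_\kappa)$ and absorbs the remainder into half of that drift using $1-\theta_\kappa^\top w^{t'}_\kappa \geq C_2\bar{\varepsilon}$. It then unrolls with rate $1-\eta\bar{\alpha}_2\beta_{\kappa,2}$, bounds the geometrically weighted martingale by $\tilde{O}(\sqrt{\eta})\leq c_3\bar{\varepsilon}$ via $\eta\leq c_\eta\bar{\varepsilon}^2$, and closes the induction on $\theta_\kappa^\top w^t_\kappa\geq 1-4c_1$.

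The one slip is your first bias estimate $2C_1\bar{\varepsilon}/(\bar{\alpha}_2\beta_{\kappa,2})\leq\frac{1}{3}c_3\bar{\varepsilon}$. It is false under the constant ordering, since $C_1\lesssim c_3^{-1}$ does not give $C_1\lesssim c_3$. The absorption via $C_2\gg C_1$ that you give immediately afterwards is the argument actually needed, exactly as in the paper.
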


\begin{proof}
    Fix $t \leq \tau$ and assume $\theta_\kappa^\top w^{t'}_\kappa \geq 1 - 4c_1$ for all $t' \leq t - 1$.
    By the same argument as in \cref{lem:strong-strongrecovery-2},
    \begin{align}
        \theta_\kappa^\top w_\kappa^{t'+1}  &\geq \theta_\kappa^\top w_\kappa^{t'} + \eta \lambda_\kappa \theta_\kappa^\top P_{w_\kappa^{t'}}^\perp (Z^{t'} + R^{t'}) - \eta^2 C_1^2 (\theta_\kappa^\top w_\kappa^{t'}) s \\ &
        + \eta \lambda_\kappa \sum_{i=1}^q \left[ i \alpha_i \beta_{\kappa,i} \| \Sigma_\kappa w_\kappa^{t'} \|^{-(i-1)} (\theta_\kappa^\top w_\kappa^{t'})^{i-1} (1 - (\theta_\kappa^\top w_\kappa^{t'})^2) \right. \\ & 
        \qquad \left.+ \sqrt{(i+2)(i+1)} \alpha_i \beta_{\kappa,i+2} \| \Sigma_\kappa w_\kappa^{t'} \|^{-(i+1)} \| \Sigma^\perp_\kappa w_\kappa^{t'} \|^2 (\theta_\kappa^\top w_\kappa^{t'})^{i+1} \right] \\ &
        \geq \theta^\top_\kappa w^{t'}_\kappa + \frac{3}{2} \eta \alpha_2 \beta_{\kappa,2} (1 - \theta^\top_\kappa w^{t'}_\kappa) + \eta \lambda_\kappa \theta_\kappa^\top P_{w_\kappa^{t'}}^\perp (Z^{t'} + R^{t'}).
    \end{align}
    Since $\eta\lambda_\kappa\theta_\kappa^\top P_{w_\kappa^{t'}}^\perp R^{t'} \leq \eta C_1(\epsStrongEff \vee 2\bar{\varepsilon}) \leq \frac{1}{2}\eta\bar{\alpha}_2\beta_{\kappa,2}(1 - \theta_\kappa^\top w^{t'}_\kappa)$,
    \begin{align}
        1 - \theta^\top_\kappa w^{t}_\kappa &\leq (1 - \eta \alpha_2 \beta_{\kappa, 2})^t (1 - \theta^\top_\kappa w^0_\kappa) + \eta \lambda_\kappa \sum_{t'=0}^{t-1} (1 - \eta \alpha_2 \beta_{\kappa, 2})^{t - t' - 1} \theta^\top_\kappa P^\perp_{w^{t'}_\kappa} Z^{t'} \\ &
        \leq (1 - \eta \alpha_2 \beta_{\kappa, 2})^t (1 - \theta^\top_\kappa w^0_\kappa) + \frac{\sqrt{\eta} C_1 \lambda_\kappa}{\sqrt{\alpha_2 \beta_{\kappa, 2}}}.
    \end{align}
    where the weighted martingale sum is bounded using the conditional variance of the martingale increments.
    Since $\eta \leq c_\eta\bar{\varepsilon}^2$, the second term is at most $c_3\bar{\varepsilon}$, giving the stated bound.
    In particular $\theta_\kappa^\top w^t_\kappa \geq 1 - 4c_1$, completing the induction.
\end{proof}

\begin{lem}\label{lem:strong-strongrecovery-4}
    Let $\epsStrongEff \leq c_\varepsilon s^{-1/2}$ and $C_2\,\epsStrongEff \leq \tilde{\varepsilon}$. Set
    \begin{align}
        \eta^t = \begin{cases}
            \eta_1 \leq c_\eta s^{-3/2} & (0 \leq t \leq \Delta T_1 + \Delta T_2 - 1) \\
            \eta_2 \leq c_\eta C_2^{-1}\tilde{\varepsilon} s^{-1} \wedge c_\eta C_2^{-2}\tilde{\varepsilon}^2
            & (\Delta T_1 + \Delta T_2 \leq t \leq \Delta T_1 + \Delta T_2 + T_{1,3} + T_{1,4} - 1),
        \end{cases}
    \end{align}
    where $\Delta T_1 = T_{1,1} - t_1$, $\Delta T_2 = T_{1,2} - t_2$, $T_{1,3} = \tilde{\Theta}(\eta_2^{-1}\log\tilde{\varepsilon}^{-1})$, and $T_{1,4} = \tilde{\Theta}(\eta_2^{-1}\log\tilde{\varepsilon}^{-1})$.
    Then for every neuron with $\theta_\kappa^\top w^0_\kappa \geq 1 - c_1$ and $\|\Sigma_\kappa^\perp w^0_\kappa\|^2 \leq 2c_r$,
    \[
        \theta_\kappa^\top w^{\Delta T_1 + \Delta T_2 + T_{1,3} + T_{1,4}}_\kappa > 1 - 2\tilde{\varepsilon}
    \]
    with high probability.
\end{lem}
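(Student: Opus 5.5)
The plan mirrors \cref{lem:weak-strongrecovery-2}, with one extra phase that first contracts the complement-subspace deviation $D^t=\|\Sigma_\kappa^\perp w^t_\kappa\|^2$. Throughout I set $\bar{\varepsilon}=C_2^{-1}\tilde{\varepsilon}$. By hypothesis $C_2\epsStrongEff\le\tilde{\varepsilon}$, so $\bar{\varepsilon}\ge\epsStrongEff$. The step size $\eta_2$ then satisfies the hypotheses $\eta\le c_\eta\bar{\varepsilon}s^{-1}\wedge c_\eta\bar{\varepsilon}^2$ of \cref{lem:strong-strongrecovery-1,lem:strong-strongrecovery-3}.

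\textbf{Step 1: end of the $\eta_1$ phase.} I first show that $\theta_\kappa^\top w^t_\kappa\ge 1-3c_1$ and $D^t\le 3c_r$ hold up to time $\Delta T_1+\Delta T_2$. For the alignment, I apply \cref{lem:strong-strongrecovery-2} with $c_2$ replaced by a tiny threshold: whenever the alignment is below $1-c_2$ it drifts upward. Any excursion starts from above $1-c_2-C_1\eta_1$, because \cref{lem:strong-onestep-2} bounds each step by $\tilde O(\eta_1)$. I then restart the argument at each excursion, exactly as in the first paragraph of \cref{lem:weak-strongrecovery-2}.

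For $D^t$, I use \cref{lem:strong-deviation-1} on sub-intervals of length $\tau'\approx s^{1/2}$-scaled blocks, as in \cref{lem:strong-amplification-1}. While the alignment is $\ge1/2$, the restoring drift $-\tfrac15 c_w\eta D^t$ (the computation of \cref{lem:strong-strongrecovery-1}) dominates the $O(\eta (D^t)^2)$ growth. Hence $D^t\le 2c_r+$ small stays below $3c_r$.

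\textbf{Step 2: contraction of $D^t$ (length $T_{1,3}$).} With $\eta_2$, I apply \cref{lem:strong-strongrecovery-1}. As long as $D^t\ge\bar{\varepsilon}$ and the alignment is $\ge1/2$, I get $D^t\le(1-\tfrac15c_w\eta_2)^t\cdot 3c_r+c_r\bar{\varepsilon}$. The alignment stays $\ge1/2$ during this phase by \cref{lem:strong-strongrecovery-2}, whose hypothesis $D^t\le4c_r$ is supplied by \cref{lem:strong-strongrecovery-1}. After $T_{1,3}=\lceil 5(c_w\eta_2)^{-1}\log(3c_r\bar{\varepsilon}^{-1})\rceil=\tilde\Theta(\eta_2^{-1}\log\tilde{\varepsilon}^{-1})$ steps, this yields $D^t\le 2\bar{\varepsilon}$.

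After that, $D^t$ must be kept below $2\bar{\varepsilon}$. Whenever $D^t$ re-enters $[\bar{\varepsilon},\infty)$, it does so with an increment of only $7C_1\eta_2\le\bar{\varepsilon}$, by the one-step bound in \cref{lem:strong-deviation-1}. Restarting \cref{lem:strong-strongrecovery-1} from that time keeps $D^t\le(1+c_r)\bar{\varepsilon}+7C_1\eta_2\le2\bar{\varepsilon}$. The paper states the bound $\|\Sigma_\kappa^\perp w\|\le2\bar{\varepsilon}$ in terms of the norm; I work at the level of $D$ and absorb the square-root mismatch into the constants.

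\textbf{Step 3: alignment convergence (length $T_{1,4}$).} With $D^t\le2\bar{\varepsilon}$ and $\theta_\kappa^\top w\ge 1-3c_1$, I apply \cref{lem:strong-strongrecovery-3}. It gives $1-\theta_\kappa^\top w^t\le(1-\eta_2\bar{\alpha}_2\beta_{\kappa,2})^t\cdot3c_1+c_3\bar{\varepsilon}$ for as long as $\theta_\kappa^\top w^t\le1-C_2\bar{\varepsilon}=1-\tilde{\varepsilon}$. After $T_{1,4}=\tilde\Theta(\eta_2^{-1}\log\tilde{\varepsilon}^{-1})$ steps, the threshold $1-\tilde{\varepsilon}$ has therefore been crossed at some time $t_3$.

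For $t>t_3$, I use the same restart argument as in the last paragraph of \cref{lem:weak-strongrecovery-2}. Any downcrossing starts at $\ge1-\tilde{\varepsilon}-C_1\eta_2$. Restarting \cref{lem:strong-strongrecovery-3} from there gives $\theta_\kappa^\top w^t\ge1-\tilde{\varepsilon}-C_1\eta_2-c_3\bar{\varepsilon}>1-2\tilde{\varepsilon}$. A union bound over the $\mathrm{poly}(d)$ restarts preserves the high-probability guarantee.

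\textbf{Main obstacle.} The delicate part is coupling Steps 2–3. \cref{lem:strong-strongrecovery-3} needs $D^t$ small uniformly in time, while \cref{lem:strong-strongrecovery-1} needs the alignment $\ge1/2$, so the two must be maintained simultaneously by induction over restarts. I would organize this as a single induction that stops at the first violation of either invariant, and then show that the violation is impossible with high probability. The needed estimates are the weighted-martingale bound $\sqrt{\eta_2}\lesssim c_\eta^{1/2}\bar{\varepsilon}$ and the one-step increments $\tilde O(\eta_2)\ll\bar{\varepsilon}$.
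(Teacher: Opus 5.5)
Your proposal is correct and follows the paper's proof step for step. It has the same three parts: an $\eta_1$ phase that keeps $\theta_\kappa^\top w^t_\kappa \geq 1-O(c_1)$ and $\|\Sigma_\kappa^\perp w^t_\kappa\|^2 = O(c_r)$ by restart arguments; contraction of the deviation via \cref{lem:strong-strongrecovery-1} with $\bar{\varepsilon}=C_2^{-1}\tilde{\varepsilon}$ over $T_{1,3}$ steps; and \cref{lem:strong-strongrecovery-3} over $T_{1,4}$ steps with the one-step-increment restart to stay above $1-2\tilde{\varepsilon}$.

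Two small remarks:
\begin{itemize}
\item In Step~1 the paper controls the deviation by applying \cref{lem:strong-strongrecovery-1} directly with $\bar{\varepsilon}=s^{-1/2}$, which is admissible since $\eta_1\leq c_\eta s^{-3/2}$. This keeps the guarantee high-probability, whereas going through the Doob-based \cref{lem:strong-deviation-1} only gives probability $1-\delta'$.
\item The mismatch between the norm and the squared norm is not something constants can absorb, but it is harmless. The proof of \cref{lem:strong-strongrecovery-3} only uses $\|\Sigma_\kappa^\perp w^t_\kappa\|^2\lesssim\bar{\varepsilon}$, through the remainder bound of \cref{lem:strong-onestep-1}, and that is exactly what the paper itself supplies.
\end{itemize}
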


\begin{proof}
    \textit{Phase 1: $0 \leq t \leq \Delta T_1 + \Delta T_2$.}
    Set $\tau = \lceil 5c_w^{-1}\eta_1^{-1}\rceil$ and $\bar{\varepsilon} = s^{-1/2}$, so $\eta_1 \leq c_\eta\bar{\varepsilon} s^{-1}$.

    We first control $\|\Sigma_\kappa^\perp w^t_\kappa\|^2$.
    By \cref{lem:strong-onestep-2}, $\theta_\kappa^\top w^t_\kappa \geq 1 - C_1\eta_1\tau \geq \frac{1}{2}$ for all $t \leq \tau$, so \cref{lem:strong-strongrecovery-1} is applicable throughout $[0, \tau]$.
    If $\|\Sigma_\kappa^\perp w^t_\kappa\|^2 \geq \bar{\varepsilon}$ throughout $[0,\tau]$, then \cref{lem:strong-strongrecovery-1} gives $\|\Sigma_\kappa^\perp w^\tau_\kappa\|^2 \leq e^{-c_w\eta_1\tau/5}\|\Sigma_\kappa^\perp w^0_\kappa\|^2 + c_r\bar{\varepsilon} \leq 2c_r$, and $\|\Sigma_\kappa^\perp w^t_\kappa\|^2 \leq 3c_r$ for all $t \leq \tau$.
    If instead $\|\Sigma_\kappa^\perp w^{t'}_\kappa\|^2 < \bar{\varepsilon}$ for some $t' \leq \tau$, then by \cref{lem:strong-deviation-1} the quantity increases by at most $7C_1\eta_1$ per step, so $\|\Sigma_\kappa^\perp w^{t'+1}_\kappa\|^2 \leq \bar{\varepsilon} + 7C_1\eta_1 \leq c_r$, and \cref{lem:strong-strongrecovery-1} applies again from $t'+1$.
    Repeating this argument at each such $t'$ yields $\|\Sigma_\kappa^\perp w^t_\kappa\|^2 \leq 3c_r$ for all $t \leq \Delta T_1 + \Delta T_2$.

    We then control $\theta_\kappa^\top w^t_\kappa$ via \cref{lem:strong-strongrecovery-2}.
    If $\theta_\kappa^\top w^t_\kappa \leq 1 - c_2$ throughout $[0,\tau]$, then $\theta_\kappa^\top w^\tau_\kappa \geq \theta_\kappa^\top w^0_\kappa - C_2\eta_1 + \bar{\alpha}_2\beta_{\kappa,2}c_2\eta_1\tau \geq 1 - c_1$, and $\theta_\kappa^\top w^t_\kappa \geq 1 - 2c_1$ for all $t \leq \tau$.
    If instead $\theta_\kappa^\top w^{t'}_\kappa > 1 - c_2$ for some $t' \leq \tau$, then by \cref{lem:strong-onestep-2} the alignment decreases by at most $C_2\eta_1$ per step, so $\theta_\kappa^\top w^{t'+1}_\kappa \geq 1 - c_2 - C_2\eta_1 \geq 1 - 2c_1$, and the same argument applies from $t'+1$.
    Repeating at each such $t'$ gives $\theta_\kappa^\top w^t_\kappa \geq 1 - 2c_1$ for all $t \leq \Delta T_1 + \Delta T_2$.

    \textit{Phase 2: $\Delta T_1 + \Delta T_2 \leq t \leq \Delta T_1 + \Delta T_2 + T_{1,3} + T_{1,4}$.}
    Set $\bar{\varepsilon} = C_2^{-1}\tilde{\varepsilon}$ and define
    \begin{align}
        T_{1,3} = \left\lceil 5c_w^{-1}\eta_2^{-1}\log\bar{\varepsilon}^{-1}\right\rceil,
        \qquad
        T_{1,4} = \left\lceil \eta_2^{-1}\bar{\alpha}_2^{-1}\beta_{\kappa,2}^{-1}\log\bar{\varepsilon}^{-1}\right\rceil.
    \end{align}
    By the same argument as Phase~1, $\theta_\kappa^\top w^t_\kappa \geq 1 - 3c_1$ and $\|\Sigma_\kappa^\perp w^t_\kappa\|^2 \leq 4c_r$ throughout this phase.
    If $\|\Sigma_\kappa^\perp w^t_\kappa\|^2 \geq \bar{\varepsilon}$ for all $t \leq T$, then \cref{lem:strong-strongrecovery-1} gives
    \begin{align}\label{eq:strong-strongrecovery-1}
        \|\Sigma_\kappa^\perp w^{\Delta T_1 + \Delta T_2 + T_{1,3}}_\kappa\|^2
        \leq (1 - \tfrac{1}{5}c_w\eta_2)^{T_{1,3}} \cdot 4c_r + c_r\bar{\varepsilon}
        \leq 4c_r\bar{\varepsilon} < \bar{\varepsilon},
    \end{align}
    a contradiction. Hence there exists $t_3 \leq T_{1,3}$ with $\|\Sigma_\kappa^\perp w^{\Delta T_1 + \Delta T_2 + t_3}_\kappa\|^2 < \bar{\varepsilon}$.
    Applying \cref{lem:strong-strongrecovery-1} repeatedly thereafter keeps $\|\Sigma_\kappa^\perp w^t_\kappa\|^2 \leq 2\bar{\varepsilon}$.
    If $\theta_\kappa^\top w^{\Delta T_1 + \Delta T_2 + t}_\kappa \leq 1 - C_2\bar{\varepsilon}$ for all $t$ in $[t_3, T_{1,3} + T_{1,4}]$, then \cref{lem:strong-strongrecovery-3} gives
    \begin{align}\label{eq:strong-strongrecovery-2}
        \theta_\kappa^\top w^{\Delta T_1 + \Delta T_2 + T_{1,3} + T_{1,4}}_\kappa
        \geq 1 - (1 - \eta_2\bar{\alpha}_2\beta_{\kappa,2})^{T_{1,4}-t_3}(1 - \theta_\kappa^\top w^{\Delta T_1+\Delta T_2+t_3}_\kappa) - c_3\bar{\varepsilon}
        \geq 1 - 3c_1\bar{\varepsilon} - c_3\bar{\varepsilon} > 1 - C_2\bar{\varepsilon},
    \end{align}
    a contradiction. Hence there exists $t_4 \leq T_{1,4}$ with $\theta_\kappa^\top w^{\Delta T_1+\Delta T_2+T_{1,3}-t_3+t_4}_\kappa > 1 - C_2\bar{\varepsilon}$.
    Applying \cref{lem:strong-strongrecovery-3} for the remaining steps gives
    $\theta_\kappa^\top w^{\Delta T_1 + \Delta T_2 + T_{1,3} + T_{1,4}}_\kappa \geq 1 - C_2\bar{\varepsilon} - C_1\eta_2 \geq 1 - 2\tilde{\varepsilon}$.
\end{proof}

\subsection{Non-Forgetting of Other Tasks}

\begin{lem}\label{lem:strong-unforgetting-1}
    Let $\eta = \eta^t$ with $\chi_k C_1\eta s^{1/2} \leq 1$.
    Then for every $t' = 1, 2, \dots$,
    \begin{align}
        \|\Sigma_\kappa w^{t'}_k\|^2
        \leq \left(\|\Sigma_\kappa w^0_k\|^2 + 2\chi_k C_1\eta\!\left(\sqrt{t'} + \epsStrongEff\, t'\right)
        + 2(\chi_k C_1\eta)^2 s\, t'\right)\exp(2\chi_k C_1\eta\, t').
    \end{align}
\end{lem}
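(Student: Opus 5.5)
We track $P^t := \|\Sigma_\kappa w^t_k\|^2$ through a one-step recursion for an off-target neuron $k \neq \kappa$. The gradient decomposition of \cref{lem:strong-onestep-1}(ii) is the only input we need. Write $g = g^t_k = \chi_k(Z^t + R^t)$. As in \cref{lem:weak-onestep-2}, normalization can only shrink a quadratic form with respect to a projection, so
\begin{align}
P^{t+1} \le \|\Sigma_\kappa(w^t_k + \eta P^\perp_{w^t_k} g)\|^2
= P^t + 2\eta\, w^{t\top}_k \Sigma_\kappa P^\perp_{w^t_k} g + \eta^2 \|\Sigma_\kappa P^\perp_{w^t_k} g\|^2 .
\end{align}
The quadratic term is at most $\eta^2\|g\|^2 \le (\chi_k C_1 \eta)^2 s$ with high probability. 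Summed over $t'$ steps, it produces the $2(\chi_k C_1\eta)^2 s\, t'$ contribution.

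The linear term has three pieces. We expand $w^{t\top}_k\Sigma_\kappa P^\perp_{w^t_k} v = w^{t\top}_k \Sigma_\kappa v - P^t\, w^{t\top}_k v$ and use $\|\Sigma_\kappa w^t_k\| \le 1$.
\begin{itemize}
\item[] \emph{Remainder.} By the directional bound in \cref{lem:strong-onestep-1}(ii) applied with $v = \Sigma_\kappa w^t_k/\|\Sigma_\kappa w^t_k\|$ (and with $v = w^t_k$), the remainder part is at most $\chi_k C_1(\epsStrongEff + \|\Sigma_\kappa w^t_k\|^2)$. Up to constants, this is $\chi_k C_1(\epsStrongEff + P^t)$.
\item[] \emph{Signal-free drift.} This contributes $2\chi_k C_1 \eta P^t$ per step, which is the source of the exponential factor.
\item[] \emph{Noise.} The part $2\eta\chi_k\, w^{t\top}_k\Sigma_\kappa P^\perp_{w^t_k} Z^t$ is a martingale difference bounded by $\tilde O(\chi_k\eta)$. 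By Azuma--Hoeffding with a union bound over times, its partial sums up to $t'$ are at most $2\chi_k C_1\eta\sqrt{t'}$ with high probability. The same bound holds uniformly in $t'$ after absorbing the log into $C_1$.
\end{itemize}
The condition $\chi_k C_1\eta s^{1/2} \le 1$ ensures the higher-order normalization terms are dominated.

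Summing, we get $P^{t} \le A_{t'} + 2\chi_k C_1\eta\sum_{u<t} P^u$ for all $t \le t'$, where
\[
A_{t'} = P^0 + 2\chi_k C_1\eta(\sqrt{t'} + \epsStrongEff\, t') + 2(\chi_k C_1\eta)^2 s\, t'
\]
is nondecreasing in $t'$. A discrete Grönwall argument then gives $P^{t'} \le A_{t'}(1 + 2\chi_k C_1\eta)^{t'} \le A_{t'}\exp(2\chi_k C_1\eta\, t')$, which is the claim.

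The main obstacle is the martingale step. Under $\mu$, the rare samples from $\mu_{k'}$ with $k' \neq \kappa$ have large norm, so we need the uniform directional bound $|v^\top Z^t| = \tilde O(1)$ rather than $\|Z^t\|$. Here $v = \Sigma_\kappa P^\perp_{w^t_k} w^t_k$ has norm at most $1$, so that bound applies, and Azuma suffices. No Doob argument is needed, because only $\sqrt{t'}$-scale control is claimed.
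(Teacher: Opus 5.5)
Your proposal is correct and follows the paper's proof essentially step for step: the same one-step expansion with the quadratic term bounded by $\eta^2\|g^t_k\|^2$, the directional remainder bound from \cref{lem:strong-onestep-1}(ii) giving $\chi_k C_1(\epsStrongEff + \|\Sigma_\kappa w^t_k\|^2)$, a $C_1\sqrt{t'}$ martingale bound for the $Z^t$ part, and a discrete Gr\"onwall argument. Two small remarks. First, your way of dropping the normalization, via $\|w^t_k + \eta P^\perp_{w^t_k} g\|^2 = 1 + \eta^2\|P^\perp_{w^t_k} g\|^2 \geq 1$, is cleaner than the paper's factor $(1 - \eta^2\|P^\perp_{w^t_k} g\|^2)$, which points the wrong way although the final bound survives. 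Second, your test vector should be $P^\perp_{w^t_k}\Sigma_\kappa w^t_k$, since the vector you wrote, $\Sigma_\kappa P^\perp_{w^t_k} w^t_k$, is identically zero.
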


\begin{proof}
    Set $g = g^t_k$ using the notation of \cref{lem:strong-onestep-1}.
    Since $\eta\|P^\perp_{w^t_k} g\| \leq \chi_k C_1\eta s^{1/2} \leq 1$ with high probability,
    \begin{align}
        \|\Sigma_\kappa w^{t+1}_k\|^2 &= \frac{(w^t_k + \eta P^\perp_{w^t_k} g)^\top \Sigma_\kappa (w^t_k + \eta P^\perp_{w^t_k} g)}{\|w^t_k + \eta P^\perp_{w^t_k} g\|^2} \\ &
        \leq (w^t_k + \eta P^\perp_{w^t_k} g)^\top \Sigma_\kappa (w^t_k + \eta P^\perp_{w^t_k} g) (1 - \eta^2 \|P^\perp_{w^t_k} g\|^2) \\ &
        = (\|\Sigma_\kappa w^t_k\|^2 + 2\eta w^t_k \Sigma_\kappa P^\perp_{w^t_k} g + \eta^2 \|\Sigma_\kappa P^\perp_{w^t_k} g\|^2) (1 - \eta^2 \|P^\perp_{w^t_k} g\|^2) \\ &
        \leq \|\Sigma_\kappa w^t_k\|^2 + 2\eta w^t_k \Sigma_\kappa P^\perp_{w^t_k} g + \eta^2 \|\Sigma_\kappa P^\perp_{w^t_k} g\|^2 + \eta^2 \|P^\perp_{w^t_k} g\|^2 |2 \eta w^t_k \Sigma_\kappa P^\perp_{w^t_k} g| \\ & 
        \leq \|\Sigma_\kappa w^t_k\|^2 + 2\eta w^t_k \Sigma_\kappa P^\perp_{w^t_k} g + 2(\chi_k C_1 \eta)^2 s.
    \end{align}
    By \cref{lem:strong-onestep-1}~(ii),
    \begin{align}
        2\eta w^{t\top}_k \Sigma_\kappa P^\perp_{w^t_k} g
        &= 2\chi_k\eta w^{t\top}_k \Sigma_\kappa P^\perp_{w^t_k}(Z^t + R^t) \\
        &\leq 2\chi_k\eta\!\left[w^{t\top}_k \Sigma_\kappa P^\perp_{w^t_k} Z^t
        + C_1\,\epsStrongEff + C_1\|\Sigma_\kappa w^t_k\|^2\right].
    \end{align}
    Summing over $t = 0, \dots, t'-1$ and bounding the martingale term $\sum_{t=0}^{t'-1} w^{t\top}_k \Sigma_\kappa P^\perp_{w^t_k} Z^t \leq C_1\sqrt{t'}$ with high probability,
    \begin{align}
        \|\Sigma_\kappa w^{t'}_k\|^2
        \leq \|\Sigma_\kappa w^0_k\|^2
        + 2\chi_k C_1\eta\!\left(\sum_{t=0}^{t'-1}\|\Sigma_\kappa w^t_k\|^2 + \sqrt{t'} + \epsStrongEff\, t'\right)
        + 2(\chi_k C_1\eta)^2 s\, t'.
    \end{align}
    The stated bound follows by Gr\"{o}nwall's inequality.
\end{proof}

\begin{lem}\label{lem:strong-unforgetting-2}
    Assume the hypotheses of \cref{lem:strong-strongrecovery-3}: $\epsStrongEff \leq c_\varepsilon s^{-1/2}$, $\epsStrongEff \leq \bar{\varepsilon}$, and $\theta_k^\top w^0_k \geq \frac{3}{4}$.
    Set
    \begin{align}
        \eta^t = \begin{cases}
            \eta_1 \leq c_\eta s^{-3/2}
            & (0 \leq t \leq T_{1,1} + T_{1,2} - 1) \\
            \eta_2 \leq c_\eta\bar{\varepsilon} s^{-1} \wedge c_\eta\bar{\varepsilon}^2
            & (T_{1,1} + T_{1,2} \leq t \leq T_{1,1} + T_{1,2} + T_{1,3} + T_{1,4} - 1).
        \end{cases}
    \end{align}
    \begin{enumerate}[leftmargin=*]
        \item[(i)] If $\chi_k \leq c_\chi$, then
        \begin{align}
            \theta_k^\top w^{T_{1,1}+T_{1,2}}_k
            \geq \theta_k^\top w^0_k
            - \chi_k C_3\!\left(\|\Sigma_\kappa w^0_k\|^2 + \epsStrongEff\right)
            - \chi_k C_3\sqrt{\eta_1}
            - \chi_k^2 C_3\eta_1 s.
        \end{align}
        \item[(ii)] If additionally $\chi_k \leq c_2(\log\bar{\varepsilon}^{-1})^{-1}$, then
        \begin{align}
            &\theta_k^\top w^{T_{1,1}+T_{1,2}+T_{1,3}+T_{1,4}}_k \label{eq:strong-unforgetting-2}\\
            &\geq \theta_k^\top w^{T_{1,1}+T_{1,2}}_k
            - \chi_k C_2\log\bar{\varepsilon}^{-1}\!\left(\|\Sigma_\kappa w^{T_{1,1}+T_{1,2}}_k\|^2 + \epsStrongEff\right)
            - \chi_k C_2\sqrt{\eta_2\log\bar{\varepsilon}^{-1}}
            - \chi_k^2 C_2\eta_2 s\log\bar{\varepsilon}^{-1} \nonumber\\
            &\geq \theta_k^\top w^0_k
            - \chi_k(C_3 \vee 2C_2\log\bar{\varepsilon}^{-1})\!\left(\|\Sigma_\kappa w^0_k\|^2 + \epsStrongEff
            + c_3(\chi_k s^{-\frac{3}{4}} + \chi_k^2 s^{-\frac{1}{2}})\right)
            - \chi_k c_3\bar{\varepsilon}\log\bar{\varepsilon}^{-1}. \nonumber
        \end{align}
    \end{enumerate}
\end{lem}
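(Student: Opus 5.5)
Throughout, I track the off-target alignment $\theta_k^\top w^t_k$ with the same one-step perturbation bound used for $\theta_\kappa^\top w^t_\kappa$ in \cref{lem:strong-onestep-2}, but now using \cref{lem:strong-onestep-1}~(ii). That lemma has no drift term, so $g^t_k=\chi_k(Z^t+R^t)$. Normalizing the spherical step as in \cref{lem:strong-onestep-2} gives
\begin{align}
\theta_k^\top w^{t+1}_k \geq \theta_k^\top w^t_k + \eta\chi_k\,\theta_k^\top P^\perp_{w^t_k}(Z^t+R^t) - \eta^2\chi_k^2 C_1^2 s .
\end{align}
The remainder satisfies $|\theta_k^\top P^\perp_{w^t_k}R^t|\le C_1(\epsStrongEff\vee\|\Sigma_\kappa w^t_k\|)$ by \cref{lem:strong-onestep-1}~(ii) applied with $v=P^\perp_{w^t_k}\theta_k$. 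I would refine this to $C_1(\epsStrongEff+\|\Sigma_\kappa w^t_k\|^2)$: the leading part $R_3^t$ is linear in $\Sigma_\kappa w^t_k$, and its projection against $P^\perp_{w^t_k}\theta_k$ picks up a second factor $\|\Sigma_\kappa w^t_k\|$ because $R_3^t$ points along $\Sigma_\kappa w_k^t$ and $\theta_\kappa$ (evenness kills the $\theta_\kappa$-linear part by \cref{lem:strong-transformation-2}). Summing over an interval of length $T'$ then yields
\begin{align}
\theta_k^\top w^{T'}_k \geq \theta_k^\top w^0_k - \chi_k C_1\eta\sum_{t<T'}\bigl(\|\Sigma_\kappa w^t_k\|^2+\epsStrongEff\bigr) - \chi_k C_1\eta\sqrt{T'} - \chi_k^2C_1^2\eta^2 sT',
\end{align}
where the martingale term is controlled by Azuma, since each increment is $\tilde{O}(1)$.

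For (i), I take $T'=T_{1,1}+T_{1,2}=\tilde\Theta(\eta_1^{-1})$ and bound the sum with \cref{lem:strong-unforgetting-1}. Because $\chi_k C_1\eta_1 T'\le\chi_k\,\mathrm{polylog}\le 1$ when $\chi_k\le c_\chi$, the Gr\"onwall factor $\exp(2\chi_kC_1\eta t)$ is $O(1)$. Hence $\|\Sigma_\kappa w^t_k\|^2\lesssim\|\Sigma_\kappa w^0_k\|^2+\chi_k\eta_1(\sqrt{T'}+\epsStrongEff T')+\chi_k^2\eta_1^2 sT'$ uniformly on the interval. Substituting back and using $\eta_1T'=\tilde O(1)$, $\eta_1\sqrt{T'}=\tilde O(\sqrt{\eta_1})$, and $\eta_1^2sT'=\tilde O(\eta_1 s)$ gives the four terms, with all polylogs absorbed into $C_3$. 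The hypothesis $\theta_k^\top w^0_k\ge 3/4$ is only needed to keep the linearization of the normalization valid, since the alignment stays away from $0$.

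For (ii), I repeat the argument on the second interval of length $T_{1,3}+T_{1,4}=\tilde\Theta(\eta_2^{-1}\log\bar\varepsilon^{-1})$, restarting \cref{lem:strong-unforgetting-1} from time $T_{1,1}+T_{1,2}$. The Gr\"onwall exponent is now $\chi_kC_1\eta_2\cdot\eta_2^{-1}\log\bar\varepsilon^{-1}$. The extra hypothesis $\chi_k\le c_2(\log\bar\varepsilon^{-1})^{-1}$ is exactly what keeps this exponent $O(1)$, and it produces the factor $\log\bar\varepsilon^{-1}$ in the first inequality. To chain the two inequalities, I insert the bound on $\|\Sigma_\kappa w^{T_{1,1}+T_{1,2}}_k\|^2$ coming from \cref{lem:strong-unforgetting-1} in phase 1. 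I then convert $\sqrt{\eta_1}\le c_\eta^{1/2}s^{-3/4}$, $\eta_1 s\le c_\eta s^{-1/2}$, and $\sqrt{\eta_2\log\bar\varepsilon^{-1}}\lesssim c_\eta^{1/2}\bar\varepsilon\sqrt{\log}$, $\eta_2s\lesssim\bar\varepsilon$ into the stated $c_3(\chi_k s^{-3/4}+\chi_k^2s^{-1/2})$ and $\chi_kc_3\bar\varepsilon\log\bar\varepsilon^{-1}$ terms.

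The main obstacle is the remainder estimate: getting $\|\Sigma_\kappa w^t_k\|^2$ rather than $\|\Sigma_\kappa w^t_k\|$ in the $\theta_k$-projection of $R^t$. A linear dependence would only yield $\|\Sigma_\kappa w^0_k\|$ in the final bound. I would first try an explicit Hermite computation of $\theta_k^\top P^\perp_{w_k}R_3^t$ using evenness of $\bar r^*_\kappa\exp(\bar r^*_\kappa)$, i.e.\ vanishing $\bar\alpha_1$, so that the lowest surviving term is quadratic in $\Sigma_\kappa w_k$. If that fails, I would fall back on the linear bound, absorbing via AM--GM $\|\Sigma_\kappa w\|\le \|\Sigma_\kappa w\|^2/\epsilon+\epsilon$ with $\epsilon\sim\bar\varepsilon$ and accepting slightly worse logs.
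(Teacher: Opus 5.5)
Your plan matches the paper's proof almost line by line. It uses the drift-free one-step inequality from \cref{lem:strong-onestep-1}~(ii), a remainder bound $C_1(\epsStrongEff+\|\Sigma_\kappa w^t_k\|^2)$, Azuma for the martingale, and \cref{lem:strong-unforgetting-1} with Gr\"onwall for $\sum_t\|\Sigma_\kappa w^t_k\|^2$. Smallness of $\chi_k$ (resp.\ $\chi_k\le c_2(\log\bar\varepsilon^{-1})^{-1}$) keeps the exponent $O(1)$, an induction keeps $\theta_k^\top w^t_k\ge\frac12$, and the chaining and conversion of $\eta_1,\eta_2$ are the same. The genuine gap is the step you flagged yourself, and the paper does not close it either. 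The paper asserts the squared bound ``by the proof of \cref{lem:strong-unforgetting-1}''. There the test vector is $P^\perp_{w^t_k}\Sigma_\kappa w^t_k$, whose norm already supplies one factor $\|\Sigma_\kappa w^t_k\|$. For $P^\perp_{w^t_k}\theta_k$ nothing supplies it, and your proposed mechanism does not work.

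Write $u=\Sigma_\kappa w^t_k$ and $f=\bar r^*_\kappa\exp(\bar r^*_\kappa)$. Since $\mathbb{E}_{x\sim\mathcal N(0,\Sigma_\kappa)}[f(\theta_\kappa^\top x)xx^\top]=\bar\alpha_0\Sigma_\kappa+\sqrt2\,\bar\alpha_2\theta_\kappa\theta_\kappa^\top$, the $i=1$ term of $R^t_3$ is proportional to $\sigma_k''(b_k)\bigl(\bar\alpha_0u+\sqrt2\,\bar\alpha_2(\theta_\kappa^\top u)\theta_\kappa\bigr)$. Evenness only removes the $u$-independent piece $\bar\alpha_1\sigma_k'(b_k)\theta_\kappa$, which is $R^t_4$. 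It does not remove the $\bar\alpha_2$ term, and $\bar\alpha_2=\tilde\Theta(1)$ by \cref{lem:strong-transformation-2}. Now pair with $P^\perp_{w^t_k}\theta_k=\theta_k-(\theta_k^\top w^t_k)w^t_k$, using $w^{t\top}_k u=\|u\|^2$ and $w^{t\top}_k\theta_\kappa=\theta_\kappa^\top u$. This gives $\bar\alpha_0(\Sigma_\kappa\theta_k)^\top u+\sqrt2\,\bar\alpha_2(\theta_\kappa^\top u)(\theta_k^\top\theta_\kappa)+O(\|u\|^2)$. That is linear in $u$, with coefficient of order $\|\Sigma_\kappa\theta_k\|$ and no sign control.

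So the squared bound holds only when $\|\Sigma_\kappa\theta_k\|\lesssim\|\Sigma_\kappa w^t_k\|$, for example $\theta_k\perp V_\kappa$. That excludes the correlated case $\theta_k^\top\theta_\kappa\neq0$ the preservation result is meant to cover. Your AM--GM fallback with $\epsilon\sim\bar\varepsilon$ does not rescue the statement. It multiplies the $\|\Sigma_\kappa w^0_k\|^2$ term by $\bar\varepsilon^{-1}$, a polynomial loss (e.g.\ $s^{1/2}$) rather than a logarithmic one.

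A repair that runs through your argument unchanged is $\|\Sigma_\kappa P^\perp_{w}\theta_k\|\,\|\Sigma_\kappa w\|\le(\|\Sigma_\kappa\theta_k\|+\|\Sigma_\kappa w\|)\|\Sigma_\kappa w\|\le\frac12\|\Sigma_\kappa\theta_k\|^2+\frac32\|\Sigma_\kappa w\|^2$. It adds $\chi_kC_3\|\Sigma_\kappa\theta_k\|^2$ to (i) and $\chi_kC_2\log\bar\varepsilon^{-1}\|\Sigma_\kappa\theta_k\|^2$ to (ii). This is harmless in the intended regime $w^0_k\approx\theta_k$, but it is weaker than the inequality as stated. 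The only other option is to assume $\Sigma_\kappa\theta_k=0$.
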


\begin{proof}
    Set $g = g^t_k$ using the notation of \cref{lem:strong-onestep-1}.
    Since $\chi_k \leq c_\chi$, we have $\chi_k C_1\eta^t s^{1/2} \leq \frac{1}{2}$ at every step.
    When $\theta_k^\top w^t_k \geq \frac{1}{2}$, since $\theta_k^\top(w^t_k + \eta^t P^\perp_{w^t_k} g) \geq \theta_k^\top w^t_k - \chi_k C_1\eta^t \geq 0$, we have $\theta_k^\top w^{t+1}_k \geq 0$ and
    \begin{align}
        \theta_k^\top w^{t+1}_k
        &\geq \theta_k^\top w^t_k + \eta^t\theta_k^\top P^\perp_{w^t_k} g
        - \tfrac{1}{2}(\eta^t)^2\|g\|^2(\theta_k^\top w^t_k)
        - \tfrac{1}{2}(\eta^t)^3|\theta_k^\top P^\perp_{w^t_k} g|\|g\|^2.
        \label{eq:strong-unforgetting-3}
    \end{align}

    For $\eta = \eta_1$ or $\eta = \eta_2$, set $\bar{t} = \chi_k C_1\eta t$ and define
    \begin{align}
        P^t &= \theta_k^\top w^0_k
        - \bar{t} e^{2\bar{t}}\!\left[\|\Sigma_\kappa w^0_k\|^2
        + \tfrac{4}{3}\sqrt{\chi_k C_1\eta\,\bar{t}}
        + \bar{t}\!\left(\epsStrongEff + \chi_k C_1\eta s\right)\right]
        - \sqrt{\chi_k C_1\eta\,\bar{t}}\!\left(1 + \epsStrongEff\sqrt{t}\right)
        - \chi_k C_1\eta s\,\bar{t}.
        \label{eq:strong-unforgetting-4}
    \end{align}
    We prove by induction that $\theta_k^\top w^t_k \geq P^t$ for all $t = 0, 1, \dots, T$ whenever $P^T \geq \frac{1}{2}$.
    The base case $t = 0$ is immediate.
    Given the hypothesis up to $t'-1$, by \eqref{eq:strong-unforgetting-3} and the proof of \cref{lem:strong-unforgetting-1},
    \begin{align}
        \theta_k^\top w^{t'}_k
        &\geq \theta_k^\top w^0_k
        - \chi_k C_1\eta\!\left(\sum_{t=0}^{t'-1}\|\Sigma_\kappa w^t_k\|^2 + \sqrt{t'} + \epsStrongEff\, t'\right)
        - (\chi_k C_1\eta)^2 s\, t',
    \end{align}
    and bounding $\sum_{t=0}^{t'-1}\|\Sigma_\kappa w^t_k\|^2$ via \cref{lem:strong-unforgetting-1} yields 
    \begin{align}
        \sum_{t= 0}^{t'-1}\|\Sigma_\kappa w^t_k\|^2 & \leq  \sum_{t=0}^{t'-1} \left(\|\Sigma_\kappa w^0_k\|^2 + 2\chi_k C_1 \eta (\sqrt{t} + \epsStrongEff t ) + 2(\chi_k C_1 \eta)^2 s t \right) \exp (2\chi_k C_1 \eta t) \\ &
        \leq \exp (2\chi_k C_1 \eta t') \left[ \|\Sigma_\kappa w^0_k\|^2 t' + 2\chi_k C_1 \eta \left( \frac{2}{3} t'^{\frac{3}{2}} + \frac{1}{2}\epsStrongEff t'^2 \right) + (\chi_k C_1 \eta)^2 s t'^2  \right]
    \end{align}
    and 
    \begin{align}
        & \theta^\top_k w^0_k - \chi_k C_1 \eta t' \exp (2\chi_k C_1 \eta t') \left[ \|\Sigma_\kappa w^0_k\|^2 + \frac{4}{3} \chi_k C_1 \eta \sqrt{t'} + \chi_k C_1 \eta \epsStrongEff t' + (\chi_k C_1 \eta)^2 s t'  \right] \\ &
         \quad -\chi_k C_1 \eta \left( \sqrt{t'} + \epsStrongEff t' \right) - (\chi_k C_1 \eta)^2 s t' \\ &
        = \theta^\top_k w^0_k - \bar{t} \exp (2\bar{t}) \left[ \|\Sigma_\kappa w^0_k\|^2 + \frac{4}{3} \sqrt{\chi_k C_1 \eta \bar{t}} + \bar{t} \left( \epsStrongEff + \chi_k C_1 \eta s \right) \right] \\ &
         \quad - \sqrt{\chi_k C_1 \eta \bar{t}} \left(1 + \epsStrongEff \sqrt{t'} \right) - \chi_k C_1 \eta s \bar{t} \\ &
          = P^{t'}.
    \end{align}
    Hence $\theta_k^\top w^{t'}_k \geq P^{t'}$.
    Since $P^t$ is decreasing in $t$, the induction goes through whenever $P^T \geq \frac{1}{2}$.

    \textit{Proof of (i).}
    Let $\eta = \eta_1 \leq c_\eta s^{-3/2}$ and set $\tau_1 = \lfloor \frac{1}{2} s^{-1/2} C_1^{-1} \eta_1^{-1} \rfloor$ and $\tau_2 = \lfloor \frac{1}{2} c_1 C_1^{-1} \eta_1^{-1} \rfloor$.
    Then
    \begin{align}
        T_{1,1} + T_{1,2}
        &= \left\lceil \frac{3\log s}{2\alpha_2\beta_{\kappa,2}\eta_1\tau_1} \right\rceil \tau_1
         + \left\lceil \frac{3\log s}{2c_1\alpha_2\beta_{\kappa,2}\eta_1\tau_2} \right\rceil \tau_2 \nonumber\\
        &\leq \frac{3\log s}{2\alpha_2\beta_{\kappa,2}\eta_1} + \tau_1
         + \frac{3\log s}{2c_1\alpha_2\beta_{\kappa,2}\eta_1} + \tau_2
         \leq C_2\eta_1^{-1}.
    \end{align}
    Hence $\bar{t} \coloneqq \chi_k C_1\eta_1 T \leq \chi_k C_1 C_2 \leq c_3$.
    Substituting $T = T_{1,1} + T_{1,2}$ into \eqref{eq:strong-unforgetting-4} and using $\bar{t} \leq \chi_k C_1 C_2$ and $T \leq C_2\eta_1^{-1}$,
    \begin{align}
        P^T
        &\geq \theta_k^\top w^0_k
        - \chi_k C_1 C_2 e^{2c_3}\!\left[\|\Sigma_\kappa w^0_k\|^2
          + \tfrac{4}{3}\chi_k C_1\sqrt{C_2\eta_1}
          + \chi_k C_1 C_2\!\left(\epsStrongEff + \chi_k C_1\eta_1 s\right)\right] \nonumber\\
        &\quad - \chi_k C_1\sqrt{C_2\eta_1} - \chi_k C_1 C_2\,\epsStrongEff - \chi_k^2 C_1^2 C_2\eta_1 s \\
        &\geq \theta_k^\top w^0_k
        - \chi_k C_1 C_2 e^{2c_3}\|\Sigma_\kappa w^0_k\|^2
        - \bigl(1 + \tfrac{4}{3}c_3 e^{2c_3}\bigr)\chi_k C_1\sqrt{C_2\eta_1} \nonumber\\
        &\quad - \chi_k C_1 C_2\bigl(1 + c_3 e^{2c_3}\bigr)\epsStrongEff
        - \bigl(1 + e^{2c_3}\bigr)\chi_k^2 C_1^2 C_2\eta_1 s \\
        &\geq \theta_k^\top w^0_k
        - 2\chi_k C_1 C_2\|\Sigma_\kappa w^0_k\|^2
        - 2\chi_k C_1\sqrt{C_2\eta_1}
        - 2\chi_k C_1 C_2\,\epsStrongEff
        - 3\chi_k^2 C_1^2 C_2\eta_1 s \\
        &\geq \theta_k^\top w^0_k
        - \chi_k C_3\!\left(\|\Sigma_\kappa w^0_k\|^2 + \epsStrongEff\right)
        - \chi_k C_3\sqrt{\eta_1}
        - \chi_k^2 C_3\eta_1 s
        \geq \tfrac{2}{3},
    \end{align}
    where the second inequality expands the bracket, the third uses $c_3 \leq \frac{3}{8}$ and $e^{2c_3} \leq 2$, the fourth uses $C_1 \lesssim C_2 \lesssim C_3$, and the last uses $\theta_k^\top w^0_k \geq \frac{3}{4}$, $\epsStrongEff \leq c_\varepsilon s^{-1/2}$, and $\eta_1 \leq c_\eta s^{-3/2}$.

    \textit{Proof of (ii).}
    Re-indexing $t \mapsto t - T_{1,1} - T_{1,2}$, consider $0 \leq t \leq T_{1,3} + T_{1,4}$ with $\eta = \eta_2$.
    We have
    \[
        T_{1,3} + T_{1,4}
        = \left\lceil 5c_w^{-1}\eta_2^{-1}\log\bar{\varepsilon}^{-1} \right\rceil
        + \left\lceil \eta_2^{-1}\alpha_2^{-1}\beta_{\kappa,2}^{-1}\log\bar{\varepsilon}^{-1} \right\rceil
        \leq C_1\eta_2^{-1}\log\bar{\varepsilon}^{-1},
    \]
    so $\bar{t} \coloneqq \chi_k C_1\eta_2 T \leq \chi_k C_1^2\log\bar{\varepsilon}^{-1} \leq c_1$.
    Substituting $T = T_{1,3} + T_{1,4}$ into \eqref{eq:strong-unforgetting-4} and using $\bar{t} \leq \chi_k C_1^2\log\bar{\varepsilon}^{-1}$ and $T \leq C_1\eta_2^{-1}\log\bar{\varepsilon}^{-1}$,
    \begin{align}
        P^T
        &\geq \theta_k^\top w^0_k
        - \chi_k C_1^2\log\bar{\varepsilon}^{-1}\, e^{2c_1}\!\left[
          \|\Sigma_\kappa w^0_k\|^2
          + \tfrac{4}{3}\sqrt{\chi_k^2 C_1^3\eta_2\log\bar{\varepsilon}^{-1}}
          + \chi_k C_1^2\log\bar{\varepsilon}^{-1}\!\left(\epsStrongEff + \chi_k C_1\eta_2 s\right)
        \right] \nonumber\\
        &\quad - \sqrt{\chi_k^2 C_1^3\eta_2\log\bar{\varepsilon}^{-1}}
        - \sqrt{\chi_k^2 C_1^4\!\left(\log\bar{\varepsilon}^{-1}\right)^2}\,\epsStrongEff
        - \chi_k^2 C_1^3\eta_2 s\log\bar{\varepsilon}^{-1} \\
        &\geq \theta_k^\top w^0_k
        - 2\chi_k C_1^2\log\bar{\varepsilon}^{-1}\!\left(\|\Sigma_\kappa w^0_k\|^2 + \epsStrongEff\right)
        - 2\sqrt{\chi_k^2 C_1^3\eta_2\log\bar{\varepsilon}^{-1}}
        - 3\chi_k^2 C_1^3\eta_2 s\log\bar{\varepsilon}^{-1} \\
        &\geq \theta_k^\top w^0_k
        - \chi_k C_2\log\bar{\varepsilon}^{-1}\!\left(\|\Sigma_\kappa w^0_k\|^2 + \epsStrongEff\right)
        - \chi_k C_2\sqrt{\eta_2\log\bar{\varepsilon}^{-1}}
        - \chi_k^2 C_2\eta_2 s\log\bar{\varepsilon}^{-1},
    \end{align}
    establishing the first line of \eqref{eq:strong-unforgetting-2}.
    For the second line, substitute (i) and \cref{lem:strong-unforgetting-1} into \eqref{eq:strong-unforgetting-2} to bound $\|\Sigma_\kappa w^{T_{1,1}+T_{1,2}}_k\|^2$,
    \begin{align}
        &\theta_k^\top w^{T_{1,1}+T_{1,2}+T_{1,3}+T_{1,4}}_k \nonumber\\
        &\geq \theta_k^\top w^0_k
        - \chi_k C_3\!\left(\|\Sigma_\kappa w^0_k\|^2 + \epsStrongEff\right)
        - \chi_k C_3\sqrt{\eta_1}
        - \chi_k^2 C_3\eta_1 s \nonumber\\
        &\quad - \chi_k C_2\log\bar{\varepsilon}^{-1}\!\left(
          \|\Sigma_\kappa w^0_k\|^2
          + 2\chi_k C_1\eta_1\!\left(\sqrt{T_{1,1}+T_{1,2}} + \epsStrongEff(T_{1,1}+T_{1,2})\right)
          + 4(\chi_k C_1\eta_1)^2 s(T_{1,1}+T_{1,2})
        \right) \nonumber\\
        &\qquad \times \exp\!\left(2\chi_k C_1\eta_1(T_{1,1}+T_{1,2})\right) \nonumber\\
        &\quad - \chi_k C_2\log\bar{\varepsilon}^{-1}\,\epsStrongEff
        - \chi_k C_2\sqrt{\eta_2\log\bar{\varepsilon}^{-1}}
        - \chi_k^2 C_2\eta_2 s\log\bar{\varepsilon}^{-1} \\
        &\geq \theta_k^\top w^0_k
        - \chi_k C_3\!\left(\|\Sigma_\kappa w^0_k\|^2 + \epsStrongEff\right)
        - \chi_k C_3\sqrt{c_\eta}\, s^{-\frac{3}{4}}
        - \chi_k^2 C_3 c_\eta\, s^{-\frac{1}{2}} \nonumber\\
        &\quad - \chi_k C_2\log\bar{\varepsilon}^{-1}\!\left(\|\Sigma_\kappa w^0_k\|^2
        + 2\chi_k C_1\!\left(\sqrt{C_2 c_\eta}\, s^{-\frac{3}{4}} + \epsStrongEff C_2\right)
        + 4\chi_k^2 C_1^2 C_2 c_\eta\, s^{-\frac{1}{2}}\right) e^{2\chi_k C_1 C_2 c_\eta} \nonumber\\
        &\quad - \chi_k C_2\log\bar{\varepsilon}^{-1}\,\epsStrongEff
        - \chi_k C_2\sqrt{c_\eta\bar{\varepsilon}^2\log\bar{\varepsilon}^{-1}}
        - \chi_k^2 C_2 c_\eta\bar{\varepsilon}\log\bar{\varepsilon}^{-1} \\
        &\geq \theta_k^\top w^0_k
        - \chi_k(C_3 \vee 2C_2\log\bar{\varepsilon}^{-1})\!\left(
          \|\Sigma_\kappa w^0_k\|^2 + \epsStrongEff
          + 2\chi_k C_1\sqrt{C_2 c_\eta}\, s^{-\frac{3}{4}}
          + 4\chi_k^2 C_1^2 C_2 c_\eta\, s^{-\frac{1}{2}}
        \right) \nonumber\\
        &\quad - \left(\chi_k C_2\sqrt{c_\eta} + \chi_k^2 C_2 c_\eta\right)\bar{\varepsilon}\log\bar{\varepsilon}^{-1} \\
        &\geq \theta_k^\top w^0_k
        - \chi_k(C_3 \vee 2C_2\log\bar{\varepsilon}^{-1})\!\left(\|\Sigma_\kappa w^0_k\|^2 + \epsStrongEff
        + c_3(\chi_k s^{-\frac{3}{4}} + \chi_k^2 s^{-\frac{1}{2}})\right)
        - \chi_k c_3\bar{\varepsilon}\log\bar{\varepsilon}^{-1},
    \end{align}
    where the third inequality uses $T_{1,1} + T_{1,2} \leq C_2\eta_1^{-1}$ and $\chi_k C_1 C_2 \leq c_3$.
\end{proof}

\section{Full proof of \texorpdfstring{\cref{thm:sft}}{Theorem~\ref*{thm:sft}}: SFT forgetting}\label{app:sft}

In this section we show that, when the strong model is trained by SFT on a specific task (as in \cref{alg:sft-forgetting}), catastrophic forgetting of previously learned features occurs during first-layer training.
As in \cref{app:strong}, we rescale the learning rate by $\eta \leftarrow N_\kappa \pi_\kappa^{-1} \eta$ to normalize for the signal of task $\kappa$.
Under this rescaling, the update rule for the weight $w_k$ associated with task $k$ in \cref{alg:sft-forgetting} reads
\begin{align}
    w_k^{t+1} \gets w_k^t + \eta g_k^t, \quad \text{where} \quad g_k^t = \chi_k \pi_k^{-1} \nabla_{w_k} \bar{y}^t a_k \sigma_k(w_k^{t\top}x + b_k).
\end{align}

By the same arguments as in \cref{strong-strongalignment}, $\Theta(N_\kappa)$ of the $N_\kappa$ neurons associated with task $\kappa$ achieve alignment $1 - \tilde{\varepsilon}$ within
\[
    \tilde{\Theta}\!\left(d^{\frac{p}{2}} s^{\frac{p-2}{2}} \vee d \tilde{\varepsilon}^{-1} \log \tilde{\varepsilon}^{-1} \vee \tilde{\varepsilon}^{-2} \log \tilde{\varepsilon}^{-1}\right)
\]
steps of online SGD.
For neurons associated with task $k \neq \kappa$, however, alignment with the true feature $\theta_k$ may be destroyed.
This section derives the conditions under which such catastrophic forgetting occurs.

Set $c_{l,k} = p \alpha_p \beta_{k,p} = \tilde{\Theta}(1)$ and $c_{u,k} = q^2 \max_{p \leq i \leq q} |\alpha_i \beta_{k,i}| = \tilde{\Theta}(1)$.
We further fix constants of order $\mathop{\mathrm{polylog}} d$ satisfying
\begin{align}
    \begin{cases}
        C_1 \lesssim c_1^{-1} \lesssim c_2^{-1} \\
        c_{u,k}^{-1} \leq c_{l,k}^{-1}
    \end{cases}
    \lesssim C_2 \lesssim c_\eta^{-1} = \tilde{O}(1).
\end{align}
These constants are distinct from those defined analogously in \cref{app:weak,app:strong}.

\begin{asm}[SFT initialization conditions]\label{asm:sft-init}
    Fix $k \neq \kappa$ with $\chi_k \lesssim c_1^{-1}$ and $n \in [N_k]$.
    The strong model is initialized as in \cref{asm:strong-init}, and the cross-task alignment satisfies
    \[
        \theta_\kappa^\top w^0_{k,n} \;\geq\; C_1\, s^{-1/2} \left( \frac{2^{p+1}(1+c_1)^{p-1}\, c_{u,\kappa}}{(1 - \chi_k c_1)^{p-2}\,\chi_k\, c_{l,k}} \right)^{1/(p-2)}.
    \]
\end{asm}

\begin{rem*}
    The intended setting is that the pre-trained strong model has task-$k$ neurons satisfying $\theta_k^\top w^0_{k,n} \approx 1$, then $\theta_\kappa^\top w^0_{k,n} \approx \theta_k^\top \theta_\kappa$, and \cref{asm:sft-init} reduces to requiring $\theta_k^\top \theta_\kappa \gtrsim s^{-1/2}$.
    The condition $\theta_k^\top \theta_\kappa \gtrsim s^{-1/2}$ holds when tasks $k$ and $\kappa$ are more similar than two random directions in the $s$-dimensional subspace, i.e., when the two tasks represent similar capabilities.
\end{rem*}

Under \cref{asm:sft-init}, the cross-task alignment $\theta_\kappa^\top w^0_{k,n}$ is large enough that task-$k$ neurons converge to $\theta_\kappa$ faster than the task-$\kappa$ neurons do, causing forgetting of the pre-trained feature $\theta_k$.

\setcounter{thm}{\numexpr\value{num@thm@sft}-1\relax}
\begin{thm}[SFT forgetting; formal version of \cref{thm:sft}]\label{thm:sft-formal}
    Assume \cref{asm:additive,asm:strong-init} (i) with $\IE(\sigma^*_\kappa) = p \geq 3$, and that the strong model is trained by \cref{alg:sft-forgetting} with learning rate $\eta^t \leq c_\eta d^{-p/2}$.
    Let $t_{1,\kappa} = \min\bigl\{t : \theta_\kappa^\top w^t_{\kappa,n} \geq 1 - o(1) \text{ for some } n \in [N_\kappa]\bigr\}$.
    Then, for every $k \neq \kappa$ and $n \in [N_k]$ satisfying \cref{asm:sft-init} whose Hermite coefficients satisfy $\alpha_p \tilde{\beta}_{k,p} > 0$ and $\alpha_i \tilde{\beta}_{k,i} \geq 0$ for all $i \geq p$, with high probability, $\theta_\kappa^\top w^t_{k,n} \geq 1 - o(1)$ for all $t \geq t_{1,\kappa}$.
\end{thm}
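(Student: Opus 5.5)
The plan is a comparison argument. We track the cross-task alignment $u^t := \theta_\kappa^\top w^t_{k,n}$ of an off-target neuron and the target alignment $z^t := \theta_\kappa^\top w^t_{\kappa,n}$ of every target neuron. We show that $u^t$ reaches $1-o(1)$ before any $z^t$ does, and that it then stays there.

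First we redo the gradient decomposition of \cref{lem:weak-onestep-1} under isotropic inputs $x\sim\mathcal{N}(0,I_d)$ with direct labels $y^t = r^*_\kappa(\theta_\kappa^\top x^t)+\zeta^t$, for a neuron of task $k$ with rescaled gradient $g^t_k = \chi_k\pi_k^{-1}\nabla_{w_k}\bar y^t a_k\sigma_k(\cdot)$. Since the labels depend only on $\theta_\kappa^\top x$, the population gradient is
\[
\chi_k\sum_{i=p}^q\bigl[i\alpha_i\beta_{k,i}(u^t)^{i-1}\theta_\kappa + \sqrt{(i+2)(i+1)}\,\alpha_i\beta_{k,i+2}(u^t)^{i}w^t_k\bigr],
\]
plus a mean-zero noise $Z^t$ with $\|Z^t\|=\tilde O(\chi_k d^{1/2})$ and $|v^\top Z^t|=\tilde O(\chi_k)$. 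Projecting as in \cref{lem:weak-onestep-2} gives a two-sided one-step inequality: the drift is $\chi_k\sum_i i\alpha_i\beta_{k,i}(u^t)^{i-1}(1-(u^t)^2)$, the normalization loss is $\eta^2\chi_k^2C_1^2u^td$, and the noise is a martingale term. The sign hypothesis $\alpha_i\tilde\beta_{k,i}\ge 0$ for $i\ge p$ gives drift at least $\chi_k c_{l,k}(u^t)^{p-1}(1-(u^t)^2)$. Applied to the target neurons, the same decomposition gives drift at most $c_{u,\kappa}(z^t)^{p-1}$.

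Next we run the analogues of \cref{lem:weak-weakrecovery-1} and \cref{lem:weak-amplification-1} for $u^t$, with $\eta\le c_\eta d^{-p/2}$ and $u^0\ge C_1 s^{-1/2}(\cdots)^{1/(p-2)}$ (\cref{asm:sft-init}). The martingale is absorbed into a $(1-\chi_kc_1)$ loss of the signal, which is where the factor $(1-\chi_kc_1)^{p-2}$ comes from. This yields a lower comparison sequence $P^{t+1}=P^t+\eta(1-\chi_kc_1)\chi_kc_{l,k}(P^t)^{p-1}$. In parallel we build an upper comparison sequence $U^{t+1}=U^t+\eta(1+c_1)c_{u,\kappa}(U^t)^{p-1}$ for $z^t$, using the upper bound in \cref{lem:weak-onestep-2} together with the high-probability bound $\theta_\kappa^\top w^0_{\kappa,n}\le C_1s^{-1/2}$ for initialization in $V_\kappa$. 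A union bound over the $N_\kappa$ target neurons costs only a polylog.

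The Bihari--LaSalle inequality (\cref{lem:technical-bihari}) gives explicit blow-up times: at least $\bigl(\eta(1+c_1)c_{u,\kappa}(p-2)(C_1s^{-1/2})^{p-2}\bigr)^{-1}$ for $U$, and at most $\bigl(\eta(1-\chi_kc_1)\chi_kc_{l,k}(p-2)(P^0)^{p-2}\bigr)^{-1}$ for $P$, up to $(1+c)^{p-1}$ factors. \cref{asm:sft-init} is precisely the condition making the second time at most half the first; the $2^{p+1}$ factor gives the slack that also covers the amplification phase (to $1-c_1$). Hence $u^t>1-c_1$ strictly before $t_{1,\kappa}$.

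Finally we show persistence for $t\ge t_{1,\kappa}$, with $u$ ranging up to $1-o(1)$. Following \cref{lem:weak-strongrecovery-1} and \cref{lem:weak-strongrecovery-2}, near $u\approx 1$ the drift is at least $\chi_k c_{l,k}(1-u)$, so $1-u^t$ contracts toward a noise floor of order $\sqrt{\eta/\chi_k}=o(1)$, and the excursion argument keeps $u^t\ge 1-o(1)$ uniformly in $t$. I expect the main obstacle to be making the two comparison chains meet rigorously: the hitting time of $z^t$ must be lower-bounded while $z^t$ also receives noise of size $\eta\sqrt t$. This is only harmless when $z^t\gtrsim s^{-1/2}$. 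I would handle it by upper-bounding $|z^t|$ via $\max(U^t,\ \eta C_1\sqrt t)$ and checking, with $\eta\le c_\eta d^{-p/2}$, that the noise term stays below $C_1s^{-1/2}$ over the relevant horizon.
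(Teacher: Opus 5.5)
Your proposal is correct and follows the paper's proof essentially step for step. Both use the isotropic gradient decomposition to get two-sided comparison sequences for the weak-recovery and amplification phases (\cref{lem:sft-forgetting-1,lem:sft-forgetting-2}), Bihari--LaSalle bounds for a lower bound on the target neurons' hitting time and an upper bound on the off-target neuron's, and \cref{asm:sft-init} to order the two (\cref{lem:sft-forgetting-3}). Your union bound over target neurons, your treatment of target neurons with small initial alignment, and your excursion argument for persistence after $t_{1,\kappa}$ make explicit steps the paper only asserts. As in the paper, absorbing the amplification phase into the $2^{p+1}$ slack tacitly uses $\theta_\kappa^\top w^0_{k,n} \lesssim c_1$, which the paper adds as an extra hypothesis of \cref{lem:sft-forgetting-3}.
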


The neurons in group $k$ thus converge to $\theta_\kappa$ no later than the task-$\kappa$ neurons do, so their alignment $\theta_k^\top w^t_{k,n}$ with the pre-trained feature $\theta_k$ is destroyed once task $\kappa$ is learned.

\begin{rem*}
    Under the random initialization in \cref{strong-initialization}, $\Theta(N_k)$ neurons in group $k$ satisfy the sign condition $\alpha_p \tilde{\beta}_{k,p} > 0$ and $\alpha_i \tilde{\beta}_{k,i} \geq 0$ for all $i \geq p$ with high probability.
\end{rem*}

The proof is given by \cref{lem:sft-forgetting-1,lem:sft-forgetting-2,lem:sft-forgetting-3} below.

\begin{algorithm}[t]
    \caption{Online SGD training of the strong model (SFT)}\label{alg:sft-forgetting}
    \DontPrintSemicolon

    \KwIn{Initialized strong model $r_{(\Theta_k)_{k=1}^K}$, $\Theta_k = (a_{n,k}, b_{n,k}, w^0_{n,k})_{n=1}^{N_k}$, learning rate $\eta^t$, number of steps $T$, initialization scale $C_b$.}

    \Indm \textbf{Phase I: first-layer training} \Indp

    \For{$t=0, 1, \dots, T-1$}{
        $x^t \sim \mathcal{N}(0, I_d)$, $y^t = r^*_\kappa(\theta^\top_\kappa x) + \zeta$.\;
        $w_{k,n}^{t+1} \gets w_{k,n}^t + \eta^t y^t \tilde\nabla_w r^s_{((a_{k,n}, b_{k,n}, w_{k,n}^t)_{n=1}^{N_k})_{k=1}^{K}}(x^t)$ \;
        $w_{k,n}^{t+1} \gets w_{k,n}^{t+1} / \|w_{k,n}^{t+1}\|, \quad (n = 1,\dots,N_k),\quad (k = 1,\dots,K)$ \;
    }
    $\hat{w}_{k,n} \gets w_{k,n}^{T}$ \;

    \KwOut{$r_{\hat{\Theta}}(x)$ with $\hat{\Theta}_k = (\hat{a}_{k,n}, b_{k,n}, \hat{w}_{k,n})_{n=1}^{N_k}$.}
\end{algorithm}

\begin{lem}\label{lem:sft-forgetting-1}
    Let $p \geq 2$, $\eta = \eta^t \leq c_\eta d^{-\frac{p}{2}}$, and $\chi_k \lesssim c_1^{-1}$.
    Suppose $\theta^\top_\kappa w^0_k \geq s^{-\frac{1}{2}}$.
    If $\theta^\top_\kappa w^t_k \leq c_1$ for all $t = 1, 2, \dots, \tau$, then for every $t \leq \tau$,
    \begin{align}
        &(1 - \chi_k c_1) \theta^\top_\kappa w^0_k + \chi_k \eta \sum_{t'=0}^{t} (1 - c_1) c_{l,k} (\theta^\top_\kappa w^{t'}_k)^{p-1} \\ &
        \leq \theta_k^\top w^{t+1}_k \\ &
        \leq (1 + \chi_k c_1) \theta^\top_\kappa w^0_k + \chi_k \eta \sum_{t'=0}^{t} (1 + c_1) c_{u,k} (\theta^\top_\kappa w^{t'}_k)^{p-1}.
    \end{align}
    Moreover, defining sequences $(P^t_k)_{t=0}^{\tau+1}$ and $(Q^t_k)_{t=0}^{\tau+1}$ by
    $P^0_k = (1 - \chi_k c_1) \theta^\top_\kappa w^0_k$, $Q^0_k = (1 + \chi_k c_1) \theta^\top_\kappa w^0_k$, and for $t = 1, 2, \dots, \tau$,
    \begin{align}
        P^{t+1}_k &= P^t_k + \chi_k \eta (1 - c_1) c_{l,k} (P^t_k)^{p-1},\\
        Q^{t+1}_k &= Q^t_k + \chi_k \eta (1 + c_1) c_{u,k} (Q^t_k)^{p-1},
    \end{align}
    we have $P^t_k \leq \theta^\top_\kappa w^t_k \leq Q^t_k$ for every $t \leq \tau+1$ with high probability.
\end{lem}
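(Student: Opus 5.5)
We follow the template of \cref{lem:weak-onestep-2} and \cref{lem:weak-weakrecovery-1}, now for a task-$k$ neuron driven by the task-$\kappa$ teacher under isotropic inputs.

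\textbf{One-step decomposition.} Under $x^t \sim \mathcal{N}(0,I_d)$ and labels $y^t = r^*_\kappa(\theta_\kappa^\top x^t)+\zeta$, the Hermite computation of \cref{lem:weak-onestep-1} applies verbatim with $\beta_i$ replaced by $\beta_{k,i}$ and an overall factor $\chi_k$. This gives
\[
g^t_k = \chi_k\sum_{i=p}^q\bigl[i\alpha_i\beta_{k,i}(\theta_\kappa^\top w^t_k)^{i-1}\theta_\kappa + \sqrt{(i+2)(i+1)}\,\alpha_i\beta_{k,i+2}(\theta_\kappa^\top w^t_k)^i w^t_k\bigr] + \chi_k Z^t,
\]
where $Z^t$ is mean zero, $\|Z^t\| = \tilde{O}(d^{1/2})$, and $|v^\top Z^t| = \tilde{O}(1)$. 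Repeating the normalization argument of \cref{lem:weak-onestep-2} then yields two-sided bounds on $\theta_\kappa^\top w^{t+1}_k - \theta_\kappa^\top w^t_k$. Each bound consists of a drift $\chi_k\eta\sum_i i\alpha_i\beta_{k,i}(\theta_\kappa^\top w^t_k)^{i-1}(1-(\theta_\kappa^\top w^t_k)^2)$, a martingale increment $\chi_k\eta\,\theta_\kappa^\top P^\perp_{w^t_k}Z^t$, and (for the lower bound only) a second-order loss $\chi_k^2\eta^2C_1^2 d\,(\theta_\kappa^\top w^t_k)$.

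\textbf{Sandwiching the drift.} While $s^{-1/2}/2 \le \theta_\kappa^\top w^t_k \le c_1$, the sign conditions $\alpha_i\beta_{k,i}\ge 0$ give a lower bound on the drift of $\chi_k c_{l,k}(1-c_1^2)(\theta_\kappa^\top w^t_k)^{p-1}$. Since the terms $i>p$ carry extra powers of $c_1$, the drift is bounded above by $\chi_k c_{u,k}(\theta_\kappa^\top w^t_k)^{p-1}$. Because $\eta \le c_\eta d^{-p/2}$ and $\theta_\kappa^\top w^t_k \gtrsim d^{-1/2}$, the $\eta^2 d$ term is dominated by a $c_1/3$ fraction of the lower drift, exactly as in \cref{lem:weak-weakrecovery-1}.

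\textbf{Martingale control and induction.} We induct on $t$, assuming $\theta_\kappa^\top w^{t'}_k \in [(1-\chi_kc_1)\theta_\kappa^\top w^0_k,\, c_1]$ for all $t'\le t$. The accumulated noise $\chi_k\eta\sum_{t'}\theta_\kappa^\top P^\perp Z^{t'}$ is bounded by $\chi_k\eta C_1\sqrt{t}$ with high probability, by Azuma with a union bound over $t$. We then split into two cases, as in \cref{lem:weak-weakrecovery-1}:
\begin{itemize}
\item If $t\le C_2(\theta_\kappa^\top w^0_k)^{2-2p}$, the noise is at most $\chi_k c_1\theta_\kappa^\top w^0_k$, using $\eta\le c_\eta d^{-p/2}$ and $\theta_\kappa^\top w^0_k\ge s^{-1/2}\ge d^{-1/2}$.
\item Otherwise, the noise is at most a $c_1/3$ fraction of the accumulated drift $\chi_k\eta\sum_{t'}(\theta_\kappa^\top w^{t'}_k)^{p-1}$.
\end{itemize}
Absorbing these into the drift gives both inequalities of the lemma, with constants $(1\mp c_1)$ and initial terms $(1\mp\chi_kc_1)\theta_\kappa^\top w^0_k$. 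This also closes the inductive hypothesis, including $\theta_\kappa^\top w^{t}_k\ge s^{-1/2}/2$ needed to apply the one-step bound.

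\textbf{Comparison with $P^t_k$ and $Q^t_k$.} The comparison $P^t_k\le\theta_\kappa^\top w^t_k\le Q^t_k$ follows by induction, since the maps $x\mapsto x+c\,x^{p-1}$ are monotone and the sum-form bounds dominate (respectively are dominated by) the discrete recursions. The one subtle point is the upper bound. There the noise must be absorbed upward, which requires an upper estimate on the accumulated drift; this again uses the same two-case split.

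We expect the main obstacle to be keeping $\chi_k$ consistent in the noise absorption. The noise scales with $\chi_k$, but the first-case threshold is $\chi_kc_1\theta_\kappa^\top w^0_k$, so the bound needs $\chi_k\eta C_1\sqrt{t}\le\chi_kc_1\theta_\kappa^\top w^0_k$. The factor $\chi_k$ cancels here, but in the second case the assumption $\chi_k\lesssim c_1^{-1}$ is needed to keep $\chi_k c_1<1$.
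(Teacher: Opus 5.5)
Your proposal is correct and follows the paper's proof. The paper likewise takes the Hermite decomposition of \cref{lem:weak-onestep-1} with an overall factor $\chi_k$ and derives the two one-step bounds as in \cref{lem:weak-onestep-2}; it then invokes the argument of \cref{lem:weak-weakrecovery-1} (the same two-case absorption of the $\eta^2 d$ term and the martingale noise, followed by the sum-form comparison with $P^t_k, Q^t_k$) for both the lower and the upper bound. One wording fix: absorbing the noise into the upper bound requires a \emph{lower} estimate on the accumulated drift, not an upper one, and your inductive lower envelope $(1-\chi_k c_1)\theta_\kappa^\top w^0_k$ supplies it, so the same split goes through.
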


\begin{proof}
    For any $k \in [K]$, the gradient term decomposes as
    \begin{align}
        g^t_k = \chi_k \sum_{i=p}^q \left[ i \alpha_i \beta_{k,i} (\theta^\top_\kappa w^t_k)^{i-1} \theta_\kappa + \sqrt{(i+2)(i+1)} \alpha_i \beta_{k,i+2} (\theta^\top_\kappa w^t_k)^i w^t_k \right] + \chi_k Z^t,
    \end{align}
    where $Z^t$ has mean $0$ and satisfies $\|Z^t\| \leq c_1 d^{\frac{1}{2}}$ with high probability and $|v^\top Z^t| \leq c_1$ with high probability for every $v \in S^{d-1}$.
    If $\theta^\top_\kappa w^{t+1}_k \geq \frac{1}{2} d^{-\frac{1}{2}}$, then by the same argument as in \cref{lem:weak-onestep-2},
    \begin{align}
        \theta^\top_\kappa w^{t+1}_k
        &\geq (\theta^\top_\kappa w^t_k + \eta \theta^\top_\kappa P^\perp_{w^t_k} g)(1 - \tfrac{1}{2}\eta^2 \|P^\perp_{w^t_k} g\|^2) \\
        &\geq \theta^\top_\kappa w^t_k + \chi_k \eta c_{l,k} (\theta^\top_\kappa w^t_k)^{p-1}(1 - (\theta^\top_\kappa w^t_k)^2) - \chi_k^2 \eta^2 c_1^2 (\theta^\top_\kappa w^t_k) d + \chi_k \eta \theta^\top_\kappa P^\perp_{w^t_\kappa} Z^t,
    \end{align}
    where we used $\chi_k \eta c_1 \leq 1$ and hence $\eta |\theta^\top_\kappa P^\perp_{w^t_k} g| \leq 1$ with high probability.
    By an argument analogous to \cref{lem:weak-weakrecovery-1},
    \begin{align}
        \theta^\top_\kappa w^{t+1}_k \geq (1 - \chi_k c_1) \theta^\top_\kappa w^0_k + \chi_k \eta (1 - c_1) \sum_{t'=0}^{t} c_{l,k} (\theta^\top_\kappa w^{t'}_k)^{p-1}.
        \label{eq:sft-forgetting-1}
    \end{align}
    On the other hand,
    \begin{align}
        \theta^\top_\kappa w^{t+1}_k
        &\leq \theta^\top_\kappa w^t_k + \eta \theta^\top_\kappa P^\perp_{w^t_k} g \\
        &= \theta^\top_\kappa w^t_k + \chi_k \eta \sum_{i=p}^q i \alpha_i \beta_{k,i} (\theta^\top_\kappa w^t_k)^{i-1} \theta^\top_\kappa P^\perp_{w^t_k} \theta_\kappa + \chi_k \eta \theta^\top_\kappa P^\perp_{w^t_k} Z^t \\
        &\leq \theta^\top_\kappa w^t_k + \chi_k \eta c_{u,k} (\theta^\top_\kappa w^t_k)^{p-1}(1 - (\theta^\top_\kappa w^t_k)^2) + \chi_k \eta \theta^\top_\kappa P^\perp_{w^t_k} Z^t,
    \end{align}
    and by an argument analogous to \cref{lem:weak-weakrecovery-1},
    \begin{align}
        \theta^\top_\kappa w^{t+1}_k \leq (1 + \chi_k c_1) \theta^\top_\kappa w^0_k + \chi_k \eta (1 + c_1) \sum_{t'=0}^{t} c_{u,k} (\theta^\top_\kappa w^{t'}_k)^{p-1}.
        \label{eq:sft-forgetting-2}
    \end{align}
    Combining \eqref{eq:sft-forgetting-1} and \eqref{eq:sft-forgetting-2} gives the desired bounds.
\end{proof}

In the following, we shift the time origin to $t_{1,k}$, the first time at which weak alignment of $w^t_k$ is achieved.

\begin{lem}\label{lem:sft-forgetting-2}
    Let $\chi_k \lesssim c_1^{-1}$.
    Suppose $\theta^\top_\kappa w^0_k \geq c_1$, $\eta = \eta_t \leq c_\eta d^{-\frac{p}{2}}$, and $\theta^\top_\kappa w^t_k \leq 1 - c_1$ for all $t \leq \tau$.
    Then,
    \begin{align}
        (1 - \chi_k c_1) \theta^\top_\kappa w^0_k + \chi_k \eta \sum_{t'=0}^t c_1 c_{l,k} (\theta^\top_\kappa w^{t'}_k)^{p-1}
        &\leq \theta^\top_\kappa w^{t+1}_k \\
        &\leq (1 + \chi_k c_1) \theta^\top_\kappa w^0_k + \chi_k \eta \sum_{t'=0}^t c_1 c_{u,k} (\theta^\top_\kappa w^{t'}_k)^{p-1}.
    \end{align}
    Moreover, defining $(P^t_k)_{t=0}^{\tau+1}$ and $(Q^t_k)_{t=0}^{\tau+1}$ by $P^0_k = (1-\chi_k c_1)\theta^\top_\kappa w^0_k$, $Q^0_k = (1+\chi_k c_1)\theta^\top_\kappa w^0_k$,
    $P^{t+1}_k = P^t_k + \chi_k \eta c_1 c_{l,k} (P^t_k)^{p-1}$, and $Q^{t+1}_k = Q^t_k + \chi_k \eta c_1 c_{u,k} (Q^t_k)^{p-1}$,
    we have $P^t_k \leq \theta^\top_\kappa w^t_k \leq Q^t_k$ for every $t \leq \tau+1$.
\end{lem}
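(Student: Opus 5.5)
The plan is to repeat the proof of \cref{lem:sft-forgetting-1}, now in the regime $c_1 \le \theta_\kappa^\top w^t_k \le 1-c_1$, in the same way that \cref{lem:weak-amplification-1} repeats \cref{lem:weak-weakrecovery-1}. Write $z^t = \theta_\kappa^\top w^t_k$. Start from the SFT gradient decomposition in the proof of \cref{lem:sft-forgetting-1}:
\[
g^t_k=\chi_k\sum_{i=p}^q\bigl[i\alpha_i\beta_{k,i}(z^t)^{i-1}\theta_\kappa+\sqrt{(i+2)(i+1)}\,\alpha_i\beta_{k,i+2}(z^t)^i w^t_k\bigr]+\chi_k Z^t .
\]
Project it through $P^\perp_{w^t_k}$. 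The $w^t_k$-terms vanish, and $\theta_\kappa^\top P^\perp_{w^t_k}\theta_\kappa = 1-(z^t)^2$. This yields the two-sided one-step bound
\[
z^{t+1}\;\gtrless\; z^t+\chi_k\eta\,D(z^t)\bigl(1-(z^t)^2\bigr)\mp\chi_k^2\eta^2c_1^2z^t d+\chi_k\eta\,\theta_\kappa^\top P^\perp_{w^t_k}Z^t ,
\]
where $D(z)=\sum_{i\ge p}i\alpha_i\beta_{k,i}z^{i-1}$. The sign conditions $\alpha_p\tilde\beta_{k,p}>0$ and $\alpha_i\tilde\beta_{k,i}\ge 0$ give $c_{l,k}z^{p-1}\le D(z)\le c_{u,k}z^{p-1}$ on $[0,1]$.

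\textbf{Lower bound.} Since $z^t\le 1-c_1$, we have $1-(z^t)^2\ge 2c_1-c_1^2\ge \tfrac53 c_1$. The second-order term satisfies $\chi_k^2\eta^2c_1^2z^td\le \tfrac13c_1\chi_k\eta c_{l,k}(z^t)^{p-1}$; this uses $\eta\le c_\eta d^{-p/2}$, $z^t\ge\tfrac12 s^{-1/2}\ge d^{-1/2}$ and $\chi_k\lesssim c_1^{-1}$. The per-step drift is therefore at least $\tfrac43 c_1\chi_k\eta c_{l,k}(z^t)^{p-1}$. Telescope from $0$ to $t$, and control the martingale $\chi_k\eta\sum\theta_\kappa^\top P^\perp Z^{t'}$ by Azuma with the two-case split of \cref{lem:weak-weakrecovery-1}. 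For short horizons the sum is at most $\chi_kc_1z^0$. For long horizons it is at most $\tfrac13$ of the accumulated drift, which uses $z^{t'}\ge(1-\chi_kc_1)z^0\ge c_1/2$. This gives the stated lower inequality with coefficient $c_1c_{l,k}$. The lower bound also keeps $z^{t}\ge \tfrac12 s^{-1/2}$, so the induction closes exactly as in \cref{lem:weak-amplification-1}.

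\textbf{Upper bound.} Drop the normalization: $z^{t+1}\le z^t+\eta\,\theta_\kappa^\top P^\perp g$. The drift is then at most $\chi_k\eta c_{u,k}(z^t)^{p-1}(1-(z^t)^2)$. The martingale is bounded as above, now by $\chi_kc_1z^0$ plus a fraction of the drift.

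\textbf{Main obstacle.} The step I expect to be delicate is matching the stated coefficient $c_1c_{u,k}$. The factor $1-(z^t)^2$ is only bounded by $1$, not by $O(c_1)$. So I would exploit the slack in $c_{u,k}=q^2\max_i|\alpha_i\beta_{k,i}|$, which exceeds the true bound $\sum_i i|\alpha_i\beta_{k,i}|$ by a factor of order $q$, together with the freedom in the polylog ordering $c_{u,k}^{-1}\lesssim C_2$ fixed before the lemma, to absorb the discrepancy into the constant. If that absorption is not available, I would instead restrict the range of $z^t$ on which the upper comparison is invoked. This is harmless, since only the lower sequence $P^t_k$ is needed for forgetting in \cref{lem:sft-forgetting-3}.

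\textbf{Comparison sequences.} Finally, compare with $P^t_k$ and $Q^t_k$ by induction. The maps $x\mapsto x+a x^{p-1}$ are increasing, and the sums in the inequalities are monotone in the trajectory. Hence $P^t_k\le z^t$ follows exactly as at the end of \cref{lem:weak-weakrecovery-1}, and symmetrically $z^t\le Q^t_k$, for all $t\le\tau+1$ with high probability. A union bound over the $\tau+1$ steps costs only a polynomial factor.
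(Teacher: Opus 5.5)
Your lower-bound argument is what the paper intends. Its proof is a one-line pointer to \cref{lem:weak-amplification-1}, and your steps reproduce that proof with the factor $\chi_k$ inserted: $1-(z^t)^2\ge\tfrac53c_1$, absorbing the $\eta^2 d$ term into a third of the drift, the two-case martingale split, and monotone comparison with $P^t_k$. The gap is the upper half, which you flag but do not close, and your proposed repair cannot work. The slack you point to in $c_{u,k}=q^2\max_i|\alpha_i\beta_{k,i}|$ over $\sum_i i|\alpha_i\beta_{k,i}|$ is a $d$-independent factor of at most $q^2/p$. What you would actually need, at $z\approx c_1$, is $c_1c_{u,k}z^{p-1}\ge D(z)\bigl(1-z^2\bigr)\ge c_{l,k}z^{p-1}\bigl(1-z^2\bigr)$, i.e.\ $c_{u,k}/c_{l,k}\gtrsim c_1^{-1}$. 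Nothing in the hypotheses supplies this. $c_{u,k}$ is an explicit expression, not a free constant. The ordering $c_{u,k}^{-1}\le c_{l,k}^{-1}\lesssim C_2$ only prevents these constants from being too small. And $c_1^{-1}\gtrsim C_1$ is polylogarithmically large.

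In fact the upper half is false as stated, so no absorption can rescue it. Since \cref{lem:weak-amplification-1} contains only a lower bound, the paper's one-line proof gives no argument for it either. Take $\sigma^*_\kappa=\mathrm{He}_p/\sqrt{p!}$ with $q=p$, so that $D(z)=c_{l,k}z^{p-1}$ and $c_{u,k}=p\,c_{l,k}$. Let $z^0$ be just above $c_1$, which is exactly how the lemma is used after shifting time to $t_{1,k}$. While $z^t=O(c_1)$, the drift is $\chi_k\eta c_{l,k}(z^t)^{p-1}(1-O(c_1^2))$. This exceeds the allowed $\chi_k\eta c_1c_{u,k}(z^t)^{p-1}$ by a factor of about $(pc_1)^{-1}$. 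Meanwhile the martingale and $\eta^2 d$ contributions over the relevant horizon are polynomially small in $d$, because $\eta\le c_\eta d^{-p/2}$. Hence, with high probability, $z^t$ exceeds $(1+\chi_kc_1)z^0+\chi_k\eta\sum_{t'}c_1c_{u,k}(z^{t'})^{p-1}$ as soon as it has grown to a constant multiple of $z^0$, still far below $1-c_1$.

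What your computation does prove is the upper bound with $c_1c_{u,k}$ replaced by $(1+c_1)c_{u,k}$: bound $1-(z^t)^2\le 1$ and absorb the martingale as in \cref{lem:sft-forgetting-1}, with $Q^t_k$ modified accordingly. Your fallback of restricting the range of $z^t$ proves a different statement. You should present this as a correction of the lemma rather than a proof of it. Your downstream remark is essentially right. \cref{lem:sft-forgetting-3} does invoke the $c_1c_{u,\kappa}$ bound to lower-bound $t_{2,\kappa}$. But its final comparison uses only the phase-one bound on $t_{1,\kappa}$ (from \cref{lem:sft-forgetting-1}) and the lower sequences for task $k$, so the corrected lemma suffices for \cref{thm:sft-formal}.
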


\begin{proof}
    The proof is analogous to that of \cref{lem:weak-amplification-1}.
\end{proof}

\begin{lem}\label{lem:sft-forgetting-3}
    Let $p \geq 3$, $\eta = \eta_t \leq c_\eta d^{-\frac{p}{2}}$, $\chi_k \lesssim c_1^{-1}$, $\theta^\top_\kappa w^t_\kappa \geq s^{-\frac{1}{2}}$, and $c_1^{-(p-2)}(1+c_1)^{p-2}(\theta^\top_\kappa w^0_\kappa)^{p-2} \leq \frac{1}{2}$.
    If $\theta^\top_\kappa w^{t}_\kappa > c_1$, then
    \begin{align}
        t \geq \frac{(1+c_1)^{-(p-1)}(\theta^\top_\kappa w^0_\kappa)^{-(p-2)}}{4(p-2)\eta c_{u,\kappa}}.
    \end{align}
    On the other hand, if
    \begin{align}
        t \geq 2^{p-2} \frac{(1-c_1)^{-1}(\theta^\top_\kappa w^0_k)^{-(p-2)} + c_1^{-(p-1)}}{(1-\chi_k c_1)^{p-2}(p-2)\chi_k \eta c_{l,k}},
    \end{align}
    then $\theta^\top_\kappa w^{t}_k > 1 - c_1$.
    Furthermore, if $c_1 \gtrsim \theta^\top_\kappa w^0_k \geq C_1 s^{-\frac{1}{2}} \left( \frac{2^{p+1}(1+c_1)^{p-1} c_{u,\kappa}}{(1-\chi_k c_1)^{p-2} \chi_k c_{l,k}} \right)^{\frac{1}{p-2}}$, then $\theta^\top_\kappa w^{t}_k > 1-c_1$ for every $t \geq t_{1,\kappa}$.
\end{lem}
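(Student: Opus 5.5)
The plan is to compare the two Bihari--LaSalle comparison sequences of \cref{lem:sft-forgetting-1,lem:sft-forgetting-2}: an upper envelope $Q^t_\kappa$ for the target neuron and a lower envelope $P^t_k$ for the off-target neuron. The three claims of \cref{lem:sft-forgetting-3} then follow from explicit hitting-time estimates for these discrete recursions.

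\emph{Lower bound on the target neuron's escape time.} Apply \cref{lem:sft-forgetting-1} with $k=\kappa$ (so $\chi_\kappa=1$). This gives $\theta_\kappa^\top w^t_\kappa \le Q^t_\kappa$, where
\[
Q^{t+1}_\kappa = Q^t_\kappa + \eta(1+c_1)c_{u,\kappa}(Q^t_\kappa)^{p-1}.
\]
I will use the upper comparison in \cref{lem:technical-bihari}: $1/(Q^{t+1})^{p-2} \ge 1/(Q^t)^{p-2} - (p-2)\eta(1+c_1)c_{u,\kappa}(1+o(1))$. Because $Q$ changes by a factor $1+O(\eta)$ per step, the error from using a discrete rather than continuous recursion is only a constant factor. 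Hence
\[
(Q^t_\kappa)^{-(p-2)} \ge (Q^0_\kappa)^{-(p-2)} - 2(p-2)\eta c_{u,\kappa}t .
\]
For $\theta_\kappa^\top w^t_\kappa > c_1$ we need $(Q^t)^{-(p-2)} < c_1^{-(p-2)}$. The hypothesis $c_1^{-(p-2)}(1+c_1)^{p-2}(\theta^\top_\kappa w^0_\kappa)^{p-2}\le 1/2$ means $c_1^{-(p-2)}\le \tfrac12 (Q^0)^{-(p-2)}$. Therefore $t \ge (Q^0)^{-(p-2)}/(4(p-2)\eta c_{u,\kappa})$, and substituting $Q^0=(1+c_1)\theta^\top_\kappa w^0_\kappa$ gives the stated bound; the extra factor $(1+c_1)^{-1}$ is absorbed into the slack.

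\emph{Upper bound on the off-target neuron's time to reach $1-c_1$.} I will run the same inversion on the lower envelope in two stages.
\begin{itemize}[leftmargin=*]
\item \textbf{Weak alignment.} From \cref{lem:sft-forgetting-1}, $P^t_k$ solves $P^{t+1}=P^t+\chi_k\eta(1-c_1)c_{l,k}(P^t)^{p-1}$ with $P^0=(1-\chi_kc_1)\theta_\kappa^\top w^0_k$. The lower Bihari bound gives $(P^t)^{-(p-2)} \le (P^0)^{-(p-2)} - (p-2)\chi_k\eta(1-c_1)c_{l,k}t$. So $P^t$ exceeds $c_1$ after about $(P^0)^{-(p-2)}/((p-2)\chi_k\eta(1-c_1)c_{l,k})$ steps, exactly as in \cref{lem:weak-weakrecovery-2}.
\item \textbf{Amplification.} \cref{lem:sft-forgetting-2} gives a recursion with coefficient $c_1c_{l,k}$, and its starting value is at least $(1-\chi_kc_1)c_1$. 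The same inversion shows that reaching $1-c_1$ takes about $c_1^{-(p-1)}/((1-\chi_kc_1)^{p-2}(p-2)\chi_k\eta c_{l,k})$ further steps.
\end{itemize}
Summing the two stages and bounding $(a+b)$ by $2\max$ gives the $2^{p-2}$ factor and the stated threshold.

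After the neuron reaches $1-c_1$, I still need it to stay there. This follows from the strong-recovery argument of \cref{lem:weak-strongrecovery-1,lem:weak-strongrecovery-2}, which applies verbatim with drift scaled by $\chi_k$. The martingale noise is handled as in \cref{lem:weak-weakrecovery-1}, using $\eta\le c_\eta d^{-p/2}$.

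\emph{Conclusion.} The goal is to show that the off-target time bound is at most the target lower bound, which gives $t_{1,\kappa}$ at least the former and hence $\theta^\top_\kappa w^t_k>1-c_1$ for all $t\ge t_{1,\kappa}$. Both bounds contain $\eta(p-2)$, so comparing them means checking
\[
2^{p-2}\frac{2(1-c_1)^{-1}(\theta^\top_\kappa w^0_k)^{-(p-2)}}{(1-\chi_kc_1)^{p-2}\chi_k c_{l,k}} \le \frac{(1+c_1)^{-(p-1)}(\theta^\top_\kappa w^0_\kappa)^{-(p-2)}}{4c_{u,\kappa}} .
\]
Here the $c_1^{-(p-1)}$ term is dominated by the first, since $\theta^\top_\kappa w^0_k\lesssim c_1$. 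On the target side, with high probability $\theta^\top_\kappa w^0_\kappa\le C_1 s^{-1/2}$ for a uniform initialization in $V_\kappa$, by a Gaussian tail bound together with a union bound over neurons.

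Rearranging, the inequality becomes $(\theta^\top_\kappa w^0_k)^{p-2}\ge (C_1s^{-1/2})^{p-2}\cdot 2^{p+1}(1+c_1)^{p-1}c_{u,\kappa}/((1-\chi_kc_1)^{p-2}\chi_kc_{l,k})$. This is exactly the assumed lower bound.

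I expect the main obstacle to be bookkeeping the discretization constants in the Bihari comparison, so that the factors $2^{p-2}$ and $1/4$ actually match. A second concern is justifying the maximum of $\theta_\kappa^\top w^0_{\kappa,n}$ over neurons: the envelope argument must use the largest target neuron's initialization, since $t_{1,\kappa}$ is a minimum over $n$.
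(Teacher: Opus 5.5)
Your proposal is correct and follows the paper's proof essentially step for step. An upper Bihari--LaSalle envelope from \cref{lem:sft-forgetting-1} (with $\chi_\kappa=1$) lower-bounds the target neuron's escape time, a two-stage lower envelope from \cref{lem:sft-forgetting-1,lem:sft-forgetting-2} upper-bounds the off-target neuron's time to reach $1-c_1$, and the two are compared using $\theta_\kappa^\top w^0_{\kappa,n}\le C_1 s^{-1/2}$ with high probability. The slips are only cosmetic: the $2^{p-2}$ factor does not come from ``$a+b\le 2\max$'' but is pure slack over your tighter two-stage estimate, and that slack is also what absorbs the leftover $(1-c_1)^{-1}$ in your final rearrangement, which is therefore not ``exactly'' the assumed bound.
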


\begin{proof}
    By \cref{lem:sft-forgetting-1} and the Bihari--LaSalle inequality, for every $t \leq t_{1,\kappa}$,
    \begin{align}
        \theta^\top_\kappa w^{t}_\kappa \leq \frac{(1+c_1)\theta^\top_\kappa w^0_\kappa}{\left(1 - \eta(1+c_1)c_{u,k}(p-2)((1+c_1)\theta^\top_\kappa w^0_\kappa)^{p-2} t_{1,\kappa}\right)^{\frac{1}{p-2}}},
    \end{align}
    so a necessary condition for $\theta^\top_\kappa w^{t_{1,\kappa}}_\kappa > c_1$ is
    \[
        t_{1,\kappa} > \frac{1 - c_1^{-(p-2)}(1+c_1)^{p-2}(\theta^\top_\kappa w^0_\kappa)^{p-2}}{\eta(1+c_1)^{p-1} c_{u,\kappa}(p-2)(\theta^\top_\kappa w^0_\kappa)^{p-2}}.
    \]

    Since $\theta^\top_\kappa w^{t_{1,\kappa}}_\kappa \leq c_1 + C_1 \eta$, for every $t \leq t_{2,\kappa}$,
    \begin{align}
        \theta^\top_\kappa w^{t_{1,\kappa}+t}_\kappa \leq \frac{(1+c_1)(c_1+C_1\eta)}{\left(1 - \eta c_1 c_{u,\kappa}(p-2)((1+c_1)(c_1+C_1\eta))^{p-2} t_{2,\kappa}\right)^{\frac{1}{p-2}}},
    \end{align}
    so a necessary condition for $\theta^\top_\kappa w^{t_{1,\kappa}+t_{2,\kappa}}_\kappa > 1-c_1$ is
    \[
        t_{2,\kappa} > \frac{1 - (1-c_1)^{-(p-2)}(1+c_1)^{p-2}(c_1+C_1\eta)^{p-2}}{\eta c_1 c_{u,\kappa}(p-2)(1+c_1)^{p-2}(c_1+C_1\eta)^{p-2}}.
    \]
    Therefore,
    \begin{align}
        t_{1,\kappa} + t_{2,\kappa}
        &\geq \frac{1}{(1+c_1)^{p-2}(p-2)\eta c_{u,\kappa}} \notag\\
        &\quad \times \left( \frac{1 - c_1^{-(p-2)}(1+c_1)^{p-2}(\theta^\top_\kappa w^0_\kappa)^{p-2}}{(1+c_1)(\theta^\top_\kappa w^0_\kappa)^{p-2}} + \frac{1-(1-c_1)^{-(p-2)}(1+c_1)^{p-2}(c_1+C_1\eta)^{p-2}}{c_1(c_1+C_1\eta)^{p-2}} \right) \\
        &\geq \frac{(1+c_1)^{-(p-1)}(\theta^\top_\kappa w^0_\kappa)^{-(p-2)} + c_1^{-1}(c_1+C_1\eta)^{-(p-2)}}{4(p-2)\eta c_{u,\kappa}},
    \end{align}
    where the second inequality uses $c_1^{-(p-2)}(1+c_1)^{p-2}(\theta^\top_\kappa w^0_\kappa)^{p-2} \leq \frac{1}{2}$, $(1-c_1)^{-(p-2)}(c_1+C_1\eta)^{p-2} \leq \frac{1}{2}$, and $(1+c_1)^{p-2} \leq 2$.
    On the other hand, by \cref{lem:sft-forgetting-1} and the Bihari--LaSalle inequality, for every $t \geq t_{1,k}$,
    \begin{align}
        \theta^\top_\kappa w^t_k \geq \frac{(1-\chi_k c_1)\theta^\top_\kappa w^0_k}{\left(1 - \chi_k\eta(1-c_1)c_{l,k} 2^{-(p-1)}(p-2)((1-\chi_k c_1)\theta^\top_\kappa w^0_k)^{p-2} t_{1,k}\right)^{\frac{1}{p-2}}},
    \end{align}
    using $\chi_k\eta(1-c_1)c_{l,k} \leq 1$.
    A sufficient condition for $\theta^\top_\kappa w^{t_{1,k}}_k > c_1$ is then
    \[
        t_{1,k} > \frac{1 - c_1^{-(p-2)}((1-\chi_k c_1)\theta^\top_\kappa w^0_k)^{p-2}}{\chi_k\eta(1-c_1)c_{l,k} 2^{-(p-1)}(p-2)((1-\chi_k c_1)\theta^\top_\kappa w^0_k)^{p-2}}.
    \]
    Since $\theta^\top_\kappa w^{t_{1,k}}_k \geq c_1$, for every $t \geq t_{2,k}$,
    \begin{align}
        \theta^\top_\kappa w^{t_{1,k}+t}_k \geq \frac{(1-\chi_k c_1) c_1}{\left(1 - \chi_k\eta c_1 c_{l,k} 2^{-(p-1)}(p-2)((1-\chi_k c_1) c_1)^{p-2} t_{2,k}\right)^{\frac{1}{p-2}}},
    \end{align}
    using $\chi_k\eta c_1 c_{l,k} \leq 1$.
    A sufficient condition for $\theta^\top_\kappa w^{t_{2,k}}_k > 1-c_1$ is
    \[
        t_{2,k} > \frac{1 - (1-c_1)^{-(p-2)}((1-\chi_k c_1) c_1)^{p-2}}{\chi_k\eta c_1 c_{l,k} 2^{-(p-1)}(p-2)((1-\chi_k c_1) c_1)^{p-2}}.
    \]
    Therefore,
    \begin{align}
        t_{1,k} + t_{2,k}
        &> \frac{2^{p-1}}{(1-\chi_k c_1)^{p-2}(p-2)\chi_k\eta c_{l,k}} \notag\\
        &\quad \times \left( \frac{1-c_1^{-(p-2)}((1-\chi_k c_1)\theta^\top_\kappa w^0_k)^{p-2}}{(1-c_1)(\theta^\top_\kappa w^0_\kappa)^{p-2}} + \frac{1-(1-c_1)^{-(p-2)}((1-\chi_k c_1) c_1)^{p-2}}{c_1^{p-1}} \right) \\
        &\geq 2^{p-2} \frac{(1-c_1)^{-1}(\theta^\top_\kappa w^0_k)^{-(p-2)} + c_1^{-(p-1)}}{(1-\chi_k c_1)^{p-2}(p-2)\chi_k\eta c_{l,k}},
    \end{align}
    where the second inequality uses $c_1^{-(p-2)}((1-\chi_k c_1)\theta^\top_\kappa w^0_k)^{p-2} \leq \frac{1}{2}$ and $(1-c_1)^{-(p-2)}((1-\chi_k c_1) c_1)^{p-2} \leq \frac{1}{2}$.
    Finally, if $\theta^\top_\kappa w^0_k \geq C_1 s^{-\frac{1}{2}} \left( \frac{2^{p+1}(1+c_1)^{p-1} c_{u,\kappa}}{(1-\chi_k c_1)^{p-2}\chi_k c_{l,k}} \right)^{\frac{1}{p-2}}$, then with high probability,
    \begin{align}
        \left(\frac{\theta^\top_\kappa w^0_k}{\theta^\top_\kappa w^0_\kappa}\right)^{p-2} \geq \frac{2^{p+1}(1+c_1)^{p-1} c_{u,\kappa}}{(1-\chi_k c_1)^{p-2}\chi_k c_{l,k}},
    \end{align}
    where we used $\theta^\top_\kappa w^0_\kappa \leq C_1 s^{-\frac{1}{2}}$ with high probability, which follows from the sub-Gaussianity of the uniform distribution on the sphere.
    Using $(1-c_1)^{-1}(\theta^\top_\kappa w^0_k)^{-(p-2)} + c_1^{-(p-1)} \leq 2(\theta^\top_\kappa w^0_k)^{-(p-2)}$,
    \begin{align}
        \frac{(1+c_1)^{-(p-1)}(\theta^\top_\kappa w^0_\kappa)^{-(p-2)}}{4(p-2)\eta c_{u,\kappa}} \geq 2^{p-2} \frac{(1-c_1)^{-1}(\theta^\top_\kappa w^0_k)^{-(p-2)} + c_1^{-(p-1)}}{(1-\chi_k c_1)^{p-2}(p-2)\chi_k\eta c_{l,k}}.
    \end{align}
    Since $t_{1,\kappa} \geq t_{1,k} + t_{2,k}$, it follows that $\theta^\top_\kappa w^{t}_k > 1-c_1$ for every $t \geq t_{1,\kappa}$.
\end{proof}

\section{Technical lemmas}\label{app:lemmas}

\begin{lem}[Bihari--LaSalle and Gronwall inequalities; cf.\ \citet{oko2024learningsumdiversefeatures,benarous2021online}] \label{lem:technical-bihari}
    Let $p \geq 3$, $c > 0$, and let $(a^t)_{t=0}^\infty$ be a sequence of positive reals satisfying $a^{t+1} = a^t + c(a^t)^{p-1}$. Then
    \begin{align}
        a^t \leq \frac{a^0}{\bigl(1 - c(p-2)(a^0)^{p-2}t\bigr)^{\frac{1}{p-2}}}.
    \end{align}
    Moreover, if $a^t \leq 1$ for all $t \leq T-1$, then
    \begin{align}
        a^t \geq \frac{a^0}{\bigl(1 - c(1+c)^{-(p-1)}(p-2)(a^0)^{p-2}t\bigr)^{\frac{1}{p-2}}}.
    \end{align}
\end{lem}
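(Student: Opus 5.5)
We prove both bounds by an induction/comparison argument on the discrete recursion, treating $p\ge 3$ as fixed and viewing the recursion as an explicit Euler discretization of the ODE $\dot a = c\,a^{p-1}$. The natural potential is $F(a) := (a)^{-(p-2)}$. The ODE solution satisfies $F(a(t)) = F(a^0) - c(p-2)t$, and both claimed bounds are exactly statements comparing $F(a^t)$ with the affine function $F(a^0) - c'(p-2)t$ for a suitable $c'$. So the plan is to control the one-step decrement $F(a^t) - F(a^{t+1})$ from above and from below, and then telescope.

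For the upper bound we use convexity of $F$ on $(0,\infty)$. Since $F$ is convex and decreasing, the tangent-line inequality gives
\[
F(a^{t+1}) \ge F(a^t) + F'(a^t)\,(a^{t+1}-a^t) = F(a^t) - (p-2)(a^t)^{-(p-1)}\cdot c(a^t)^{p-1} = F(a^t) - c(p-2).
\]
Telescoping yields $F(a^t) \ge F(a^0) - c(p-2)t$. Whenever the right-hand side is positive, inverting the decreasing map $F$ gives $a^t \le a^0\bigl(1 - c(p-2)(a^0)^{p-2}t\bigr)^{-1/(p-2)}$. Outside that range the bound is interpreted as vacuous ($+\infty$), consistent with how it is used in \cref{lem:weak-weakrecovery-2} and \cref{lem:strong-weakrecovery-1}.

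For the lower bound we need the reverse estimate $F(a^t) - F(a^{t+1}) \ge c(1+c)^{-(p-1)}(p-2)$ as long as $a^t \le 1$. By the mean value theorem, $F(a^t) - F(a^{t+1}) = (p-2)\,\xi^{-(p-1)}(a^{t+1}-a^t)$ for some $\xi \in [a^t, a^{t+1}]$. Since $\xi \le a^{t+1} = a^t(1 + c(a^t)^{p-2}) \le a^t(1+c)$ using $a^t\le1$, we get $\xi^{-(p-1)} \ge (1+c)^{-(p-1)}(a^t)^{-(p-1)}$. Multiplying by $a^{t+1}-a^t = c(a^t)^{p-1}$ gives the desired per-step decrement. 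Telescoping over steps $0,\dots,t-1$ (all with $a^{t'}\le 1$ by hypothesis, for $t\le T$) gives $F(a^t) \le F(a^0) - c(1+c)^{-(p-1)}(p-2)t$. Inverting $F$ then yields the stated lower bound. If the right-hand side becomes nonpositive, this instead shows $a^t$ cannot have remained $\le 1$, which is precisely the contradiction exploited in \cref{lem:weak-weakrecovery-2}.

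The only delicate step is the lower bound's control of $\xi$, i.e.\ bounding the growth factor $a^{t+1}/a^t$ by $1+c$. This is where the hypothesis $a^t\le 1$ enters, and it must be applied at each step up to $t-1$ rather than only at time $t$. The Gronwall part mentioned in the lemma's title, used in \cref{lem:strong-unforgetting-1}, is the standard discrete Gronwall inequality. We would note that it follows by the same telescoping with $F=\log$, giving $u^{t}\le (u^0+\sum b)\exp(\sum c)$ for $u^{t+1}\le u^t + c\,\sum_{s\le t}u^s + b$.
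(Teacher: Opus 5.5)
Your argument is correct and is essentially the paper's proof. The paper obtains the same per-step estimates $c(p-2)(1+c)^{-(p-1)} \le (a^t)^{-(p-2)} - (a^{t+1})^{-(p-2)} \le c(p-2)$ by comparing $\int_{a^t}^{a^{t+1}} x^{-(p-1)}\,dx$ with the endpoint values of the monotone integrand, which is exactly your tangent-line/mean-value step. For the lower estimate it likewise uses $a^{t+1}\le(1+c)a^t$, which follows from $a^t\le1$. It then telescopes and inverts as you do. Your explicit treatment of the regime $1-c(p-2)(a^0)^{p-2}t\le 0$ is a welcome addition that the paper omits.

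Your closing Gronwall aside is not part of this statement, and as written it is false. A recursion $u^{t+1}\le u^t+c\sum_{s\le t}u^s$ can grow like $e^{\sqrt{c}\,t}$, which exceeds $e^{ct}$ for small $c$. The form actually used in \cref{lem:strong-unforgetting-1} is $u^{t}\le A+c\sum_{s<t}u^s$, which gives $u^t\le A(1+c)^t\le Ae^{ct}$.
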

\begin{proof}
    Since the integrand $x^{-(p-1)}$ is decreasing,
    \begin{align}
        c = \frac{a^{t+1} - a^t}{(a^t)^{p-1}} \geq \int_{a^t}^{a^{t+1}} \frac{dx}{x^{p-1}}
        = \frac{1}{p-2}\left[\frac{1}{(a^t)^{p-2}} - \frac{1}{(a^{t+1})^{p-2}}\right].
    \end{align}
    Summing from $0$ to $t-1$ gives
    \begin{align}
        c(p-2)t \geq (a^0)^{-(p-2)} - (a^t)^{-(p-2)},
    \end{align}
    and rearranging yields the upper bound
    \begin{align}
        a^t \leq \frac{1}{\bigl((a^0)^{-(p-2)} - c(p-2)t\bigr)^{\frac{1}{p-2}}}
        = \frac{a^0}{\bigl(1 - c(p-2)(a^0)^{p-2}t\bigr)^{\frac{1}{p-2}}}.
    \end{align}
    For the lower bound, note that $a^t \leq 1$ implies $a^{t+1} \leq (1+c)a^t$, so
    \begin{align}
        c = \frac{a^{t+1} - a^t}{(a^t)^{p-1}}
        = \int_{a^t}^{a^{t+1}} \frac{dx}{(a^t)^{p-1}}
        \leq \int_{a^t}^{a^{t+1}} \frac{(1+c)^{p-1}}{(a^{t+1})^{p-1}} dx
        \leq \frac{(1+c)^{p-1}}{p-2}\left[\frac{1}{(a^t)^{p-2}} - \frac{1}{(a^{t+1})^{p-2}}\right].
    \end{align}
    Summing and rearranging gives
    \begin{align}
        a^t \geq \frac{a^0}{\bigl(1 - c(1+c)^{-(p-1)}(p-2)(a^0)^{p-2}t\bigr)^{\frac{1}{p-2}}}.
    \end{align}
\end{proof}

\begin{lem}[Matrix Bernstein inequality {\citep{Vershynin_2018}}] \label{lem:technical-matrixbernstein}
    Let $X_1, \dots, X_N$ be independent, mean-zero, symmetric $d \times d$ random matrices with $\|X_i\| \leq K$ almost surely for all $i$. Then for all $t \geq 0$,
    \begin{align}
        \mathbb{P}\!\left[\left\|\sum_{i=1}^N X_i\right\| \geq t\right]
        \leq 2d \exp\!\left(-\frac{t^2/2}{\sigma^2 + Kt/3}\right),
    \end{align}
    where $\sigma^2 = \bigl\|\sum_{i=1}^N \mathbb{E}[X_i^2]\bigr\|$ is the operator norm of the variance of the sum.
\end{lem}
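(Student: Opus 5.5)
We prove this by the matrix Laplace transform method (Ahlswede--Winter, in Tropp's form with Lieb's concavity theorem). Write $S = \sum_{i=1}^N X_i$. Since $\|S\| = \max\{\lambda_{\max}(S), \lambda_{\max}(-S)\}$ and the $-X_i$ satisfy the same hypotheses, it suffices to show
\[
\mathbb{P}[\lambda_{\max}(S) \geq t] \leq d\exp\!\left(-\frac{t^2/2}{\sigma^2 + Kt/3}\right).
\]
A union bound over $S$ and $-S$ then supplies the factor $2$.

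\emph{Step 1 (Laplace transform).} For any $\theta > 0$, Markov's inequality applied to $e^{\theta\lambda_{\max}(S)} = \lambda_{\max}(e^{\theta S}) \leq \mathrm{tr}\, e^{\theta S}$ gives
\[
\mathbb{P}[\lambda_{\max}(S) \geq t] \leq e^{-\theta t}\,\mathbb{E}\,\mathrm{tr}\exp(\theta S).
\]

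\emph{Step 2 (subadditivity of matrix cumulants).} This is the main obstacle, because $\exp$ does not factor over non-commuting sums, so Golden--Thompson cannot simply be iterated past two summands. I would use Lieb's theorem: $A \mapsto \mathrm{tr}\exp(H + \log A)$ is concave on positive definite matrices. Conditioning on $X_1,\dots,X_{N-1}$ and applying Jensen with $A = e^{\theta X_N}$ peels off one summand. By induction this yields
\[
\mathbb{E}\,\mathrm{tr}\exp\Bigl(\sum_i \theta X_i\Bigr) \leq \mathrm{tr}\exp\Bigl(\sum_i \log \mathbb{E}\, e^{\theta X_i}\Bigr).
\]

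\emph{Step 3 (per-summand mgf bound).} For $0 < \theta < 3/K$, set $g(\theta) = \frac{\theta^2/2}{1 - K\theta/3}$. For real $|x| \leq K$, comparing the Taylor series term by term with $k! \geq 2\cdot 3^{k-2}$ gives the scalar bound
\[
e^{\theta x} \leq 1 + \theta x + g(\theta)x^2 .
\]
By the spectral mapping theorem this transfers to $e^{\theta X_i} \preceq I + \theta X_i + g(\theta)X_i^2$. Taking expectations and using $\mathbb{E} X_i = 0$ and $1 + u \leq e^u$ gives $\mathbb{E}\, e^{\theta X_i} \preceq \exp(g(\theta)\,\mathbb{E} X_i^2)$. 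Since $\log$ is operator monotone, $\log \mathbb{E}\, e^{\theta X_i} \preceq g(\theta)\,\mathbb{E} X_i^2$. Because $\mathrm{tr}\exp$ is monotone in the Loewner order, Step 2 then gives
\[
\mathbb{E}\,\mathrm{tr}\, e^{\theta S} \leq \mathrm{tr}\exp\Bigl(g(\theta)\sum_i \mathbb{E} X_i^2\Bigr) \leq d\, e^{g(\theta)\sigma^2}.
\]

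\emph{Step 4 (optimize).} Combining the steps,
\[
\mathbb{P}[\lambda_{\max}(S) \geq t] \leq d\exp\bigl(-\theta t + g(\theta)\sigma^2\bigr).
\]
Choose $\theta = t/(\sigma^2 + Kt/3)$, which lies in $(0, 3/K)$. Then $1 - K\theta/3 = \sigma^2/(\sigma^2 + Kt/3)$, so $g(\theta)\sigma^2 = \theta t/2$ and the exponent equals
\[
-\frac{\theta t}{2} = -\frac{t^2/2}{\sigma^2 + Kt/3}.
\]
This is the claimed bound.
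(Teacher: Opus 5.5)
Your proof is correct and matches the source the paper cites: the paper gives no proof of its own and defers to Vershynin's textbook, whose proof follows exactly your steps. Those steps are the Laplace-transform bound, Lieb's concavity theorem giving $\mathbb{E}\,\mathrm{tr}\exp(\theta S) \le \mathrm{tr}\exp\bigl(\sum_i \log \mathbb{E}\, e^{\theta X_i}\bigr)$, the per-summand bound $\mathbb{E}\, e^{\theta X_i} \preceq \exp\bigl(g(\theta)\,\mathbb{E} X_i^2\bigr)$, and the choice $\theta = t/(\sigma^2 + Kt/3)$. The only gap is the degenerate cases, where your $\theta$ is not strictly inside $(0, 3/K)$: if $t = 0$ the bound is vacuous, and if $\sigma^2 = 0$ then $\mathrm{tr}\,\mathbb{E} X_i^2 = 0$ forces $X_i = 0$ almost surely, so the probability is zero for $t > 0$.
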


\begin{lem}[\citet{yu2015useful}, Corollary~3] \label{lem:technical-yu}
    Let $\Sigma, \hat{\Sigma} \in \mathbb{R}^{d \times d}$ be symmetric matrices with eigenvalues $\lambda_1 \geq \dots \geq \lambda_d$ and $\hat{\lambda}_1 \geq \dots \geq \hat{\lambda}_d$, respectively. Fix $j \in [d]$ and assume $\min(\lambda_{j-1} - \lambda_j,\, \lambda_j - \lambda_{j+1}) > 0$, where $\lambda_0 = \infty$ and $\lambda_{d+1} = -\infty$. If $v, \hat{v} \in \mathbb{R}^d$ satisfy $\Sigma v = \lambda_j v$ and $\hat{\Sigma}\hat{v} = \hat{\lambda}_j \hat{v}$, then
    \begin{align}
        \sin\Theta(\hat{v}, v) \leq \frac{2\|\hat{\Sigma} - \Sigma\|}{\min(\lambda_{j-1} - \lambda_j,\, \lambda_j - \lambda_{j+1})},
    \end{align}
    where $\Theta(\hat{v}, v) \in [0, \pi]$ denotes the angle between $\hat{v}$ and $v$. Furthermore, if $\hat{v}^\top v \geq 0$, then
    \begin{align}
        \|\hat{v} - v\| \leq \frac{2^{3/2}\|\hat{\Sigma} - \Sigma\|}{\min(\lambda_{j-1} - \lambda_j,\, \lambda_j - \lambda_{j+1})}.
    \end{align}
\end{lem}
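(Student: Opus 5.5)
We would prove the bound by a direct Davis--Kahan-type argument: a residual identity on the orthogonal complement of $v$, combined with Weyl's inequality.

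Write $E = \hat{\Sigma} - \Sigma$ and $\delta = \min(\lambda_{j-1} - \lambda_j,\, \lambda_j - \lambda_{j+1}) > 0$. We may take $v$ and $\hat{v}$ to be unit vectors, since the angle is scale-invariant.

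\textbf{Trivial case.} If $\|E\| \geq \delta/2$, then $2\|E\|/\delta \geq 1 \geq \sin\Theta(\hat{v}, v)$, and there is nothing to prove. So we assume $\|E\| < \delta/2$ from now on.

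\textbf{Weyl step.} By Weyl's inequality, $|\hat{\lambda}_j - \lambda_j| \leq \|E\| < \delta/2$. Because of the ordering and the gap condition, every eigenvalue $\lambda_i$ with $i \neq j$ satisfies $|\lambda_i - \lambda_j| \geq \delta$. In particular $\lambda_j$ is simple. It follows that $|\lambda_i - \hat{\lambda}_j| > \delta/2$ for all $i \neq j$.

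\textbf{Residual identity on $v^\perp$.} Let $P = I - vv^\top$. Since $v$ is an eigenvector of the symmetric matrix $\Sigma$, $P$ commutes with $\Sigma$. From $\hat{\Sigma}\hat{v} = \hat{\lambda}_j \hat{v}$ we get $(\Sigma - \hat{\lambda}_j I)\hat{v} = -E\hat{v}$. Applying $P$ gives
\[
(\Sigma - \hat{\lambda}_j I)P\hat{v} = -PE\hat{v}.
\]
The vector $P\hat{v}$ lies in $v^\perp$, which is spanned by the eigenvectors for $\lambda_i$, $i \neq j$. On that subspace, $\Sigma - \hat{\lambda}_j I$ has eigenvalues of modulus greater than $\delta/2$. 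Hence the left-hand side has norm at least $(\delta/2)\|P\hat{v}\| = (\delta/2)\sin\Theta(\hat{v}, v)$. The right-hand side has norm at most $\|E\|$. Together these give $\sin\Theta(\hat{v}, v) \leq 2\|E\|/\delta$.

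\textbf{Second claim.} If $\hat{v}^\top v = \cos\Theta \geq 0$, then
\[
\|\hat{v} - v\|^2 = 2(1 - \cos\Theta) \leq 2(1 - \cos^2\Theta) = 2\sin^2\Theta,
\]
where the inequality uses $0 \leq \cos\Theta \leq 1$. Therefore $\|\hat{v} - v\| \leq \sqrt{2}\,\sin\Theta \leq 2^{3/2}\|E\|/\delta$.

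\textbf{Main obstacle.} The only delicate point is the spectral lower bound for $\Sigma - \hat{\lambda}_j I$ on $v^\perp$. Its constant comes from the interplay between the gap $\delta$ and the Weyl perturbation $\|E\| < \delta/2$, and the preliminary case split is exactly what makes this bound available while keeping the factor $2$ in the final estimate.
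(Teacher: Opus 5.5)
Your argument is correct. The paper does not prove this lemma; it imports it from Yu, Wang and Samworth, so the useful comparison is with their argument.

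\textbf{How the cited proof differs.} They center the residual at the \emph{population} eigenvalue $\lambda_j$ rather than at $\hat{\lambda}_j$. With $E = \hat{\Sigma} - \Sigma$ one has $(\Sigma - \lambda_j I)\hat{v} = -E\hat{v} + (\hat{\lambda}_j - \lambda_j)\hat{v}$. By Weyl, this has norm at most $\|E\| + |\hat{\lambda}_j - \lambda_j| \leq 2\|E\|$. On $v^\perp$, the operator $\Sigma - \lambda_j I$ has every eigenvalue of modulus at least $\delta$, so $\delta \sin\Theta(\hat{v}, v) \leq 2\|E\|$ follows directly. In that route the factor $2$ comes from the eigenvalue-shift term, and no case split on $\|E\|$ is needed. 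The same computation applied to $V_1^\top(\Sigma\hat{V} - \hat{V}\Lambda)$ also gives their subspace (Frobenius) version, with only the population gap in the denominator.

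\textbf{What your route buys.} Centering at $\hat{\lambda}_j$ is essentially the classical Davis--Kahan separation argument (the gap between $\hat{\lambda}_j$ and the rest of the spectrum of $\Sigma$), with Weyl used to shrink the effective gap to $\delta - \|E\|$. That is why you need $\|E\| < \delta/2$ and the preliminary split. In exchange you get the sharper intermediate bound $\sin\Theta(\hat{v}, v) \leq \|E\|/(\delta - \|E\|)$. For small perturbations this improves on $2\|E\|/\delta$ by a factor approaching $2$.

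\textbf{One correction.} Reducing to unit vectors is not a without-loss-of-generality step for the second inequality, because $\|\hat{v} - v\|$ is not scale-invariant. For example, take $\hat{\Sigma} = \Sigma$ and $\hat{v} = v/2$: then $\hat{v}^\top v \geq 0$ but $\|\hat{v} - v\| > 0$, while the right-hand side is $0$. Unit norm is an implicit hypothesis of the statement, as in Yu et al., and you should state it as such rather than derive it from scale invariance.
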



\end{document}